\newtheorem{thm}{Theorem}[section]
\newtheorem{prop}{Proposition}[section]
\newtheorem{cor}[thm]{Corollary}
\newtheorem{lem}[thm]{Lemma}
\newtheorem{defn}[thm]{Definition}
\newtheorem{rem}[thm]{Remark}
\newtheorem{ex}[thm]{Example}
\numberwithin{equation}{section}
\def\Q{{\mathbb Q}}
\def\Z{{\mathbb Z}}
\def\R{{\mathbb R}}
\def\P{{\mathbb P}}
\def\cA{{\mathcal A}}
\def\cB{{\mathcal B}}
\def\cC{{\mathcal C}}
\def\cD{{\mathcal D}}
\def\cE{{\mathcal E}}
\def\cF{{\mathcal F}}
\def\cG{{\mathcal G}}
\def\cH{{\mathcal H}}
\def\cI{{\mathcal I}}
\def\cL{{\mathcal L}}
\def\cM{{\mathcal M}}
\def\cO{{\mathcal O}}
\def\cP{{\mathcal P}}
\def\cR{{\mathcal R}}
\def\cS{{\mathcal S}}
\def\cV{{\mathcal V}}
\def\bP{{\mathbb P}}
\def\fB{{\mathfrak B}}
\def\fM{{\mathfrak M}}
\def\fT{{\mathfrak T}}
\def\fF{{\mathfrak F}}
\def\fC{{\mathfrak C}}
\DeclareMathOperator*{\Hom}{Hom}
\title{Syntax-semantics interface: an algebraic model}
\author{Matilde Marcolli, Robert C.~Berwick, Noam Chomsky}
\date{2023}
\address{Departments of Mathematics and of Computing and Mathematical Sciences, 
California Institute of Technology, Pasadena, CA 91125, USA}
\email{matilde@caltech.edu}
\address{Institute for Data, Systems, and Society,
Massachusetts Institute of Technology, 
Cambridge MA 02141, USA}
\email{berwick@csail.mit.edu}
\address{Department of Linguistics, 
University of Arizona, Tucson, AZ 85721, USA}
\email{noamchomsky@email.arizona.edu}
\email{chomsky@mit.edu}
\begin{document}
\maketitle

\begin{abstract}
We extend our formulation of Merge and Minimalism in terms of Hopf algebras to an
algebraic model of a syntactic-semantic interface. We show that methods adopted in
the formulation of renormalization (extraction of meaningful physical values) in 
theoretical physics are relevant to describe the extraction of meaning from syntactic 
expressions. We show how this formulation relates to computational models of
semantics and we answer some recent controversies about implications for generative
linguistics of the current functioning of large language models. 
\end{abstract}

\setcounter{tocdepth}{3}
\tableofcontents

\section{Introduction: modelling the syntax-semantics interface}

The modelling of the generative process of syntax, based on the core computational
structure of Merge, within the setting of the Minimalist Model, satisfies the
following fundamental properties:
\begin{enumerate}
\item a concise conceptual framework;
\item a precise mathematical formulation;
\item good explanatory power.
\end{enumerate}
For the most recent formulation of Minimalism,  the first and third property are articulated 
in \cite{Chomsky19}, \cite{Chomsky21}, \cite{Chomsky23} and in the upcoming
\cite{CSBFHKMS}. While the requirement of the existence of precise mathematical models has
been traditionally associated to sciences like physics, the fact that syntax is
essentially a computational process suggests
that a mathematical formulation should also be a desirable requirement in linguistics, and
more specifically in the modelling of I-language.
We presented a detailed mathematical model of syntactic Merge in our previous papers 
\cite{MCB} and \cite{MBC}. 

\smallskip

In comparison with syntax, the modeling of semantics is presently in a less
satisfactory state from the point of view of the same three properties listed above.
Some main approaches to semantics include forms of compositional semantics \cite{Pietro}, \cite{Pietroski},
truth-conditional semantics, semiring semantics \cite{Goodman}, and vector-space models 
(the latter especially in computational linguistics). General views of logic-oriented approaches
to semantics, which we will not be discussing in this paper, can be seen, for instance, 
in \cite{Scharp}, \cite{Tennant}, \cite{Zard}.
Each of these viewpoints has limitations of a different nature. 

\smallskip

Our purpose here is not to carry out a comprehensive comparative
analysis and criticism of contemporary models of semantics.  Rather,
we want to approach the problem of modeling the syntactic-semantic
interface on the basis of a list of abstract properties, and
articulate a possible mathematical setting that such properties
suggest. We can then compare existing models with the specific
structure that we identify. We will show that one can remain, to some
extent, agnostic about specific models of semantics, beyond some basic
requirements, and still retain a fundamental functioning model of the
interaction with syntax. This reflects a view of the syntax-semantics
interface that is primarily syntax-driven.

\smallskip

As in the case of our mathematical formulation of Merge in terms of
Hopf algebras, our guiding principle will be an analogy with
conceptually similar structures that arise in theoretical physics. In
particular, in the context of fundamental physical interactions
described by the quantum field theory, fundamental problem is the
assignment of ``meaningful" physical values to the computation of the
expectation values of the theory. This can be compared with the
assignment of meaning--semantics--to syntactic objects. More
precisely, assignment of meaning the quantum field theory setting
consists of the extraction of a finite (meaningful) part from Feynman
integrals that are in general divergent (produce meaningless
infinities). This process is known in theoretical physics as {\em
  renormalization}. While the renormalization problem and procedures
leading to satisfactory solutions for it have been known to physicists since
the development of quantum electrodynamics in the 1950s and 60s (see
\cite{BoShi}), a complete understanding of the underlying mathematical
structure is much more recent, (see \cite{CoKr}, \cite{CoMa}).

Even more recently, it has been shown shown that the same
mathematical formalism can be applied in the theory of computation, in
order to extract, in a similar way, computable ``subfunctions" from
non-decidable problems (undecidability being the analog in the theory
of computation of the unphysical infinities); see
\cite{Man1}, \cite{Man2}, and also \cite{DelMar}.

Assuming the conceptual standpoint that Internal or I-language is, in
essence, a computational process, the extension of the mathematical
framework of renormalization to the theory of computation suggests the
existence of a similar possible manifestation in
linguistics as well.  In the case of linguistics, one does not have to
deal with divergences (of a physical or computational nature); rather
one has to carry out a consistent assignment of meaning to syntactic
objects produced by the Merge mechanism, and reject inconsistencies
and impossibilities.  In the rest of this paper we plan to turn this
heuristic comparison into a precise formulation.

\smallskip

There are several reasons why developing such a mathematical model of
the syntax-semantics interface is desirable. Aside from general
principles based on the three ``good properties" of theoretical
modeling stated above, there are other possible applications of
interest.  For instance, a significant ongoing debate and controversy
has ensued from the recent development of large language models, with
various claims of incompatibility with, or ``disproval'' of the
generative linguistics framework itself. Since such theories, in our
view, ultimately describe computational processes (albeit most likely
also in our view of a different nature from those governing language
in human brains), a viable computational and mathematical setting is
required, where a specific comparative analysis can be carried out,
and such claims can be addressed.

\medskip
\subsection{Some conceptual requirements for a syntax-semantics interface} \label{RequireListSec}

We start our analysis by setting out a simple list of what we regard as desirable
properties of a model of the syntax-semantic interface. 

\begin{enumerate}
\item Autonomy of syntax
\item Syntax supports semantic interpretation 
\item Semantic interpretation is, to a large extent, independent of externalization  
\item Compositionality 
\end{enumerate}

The first requirement, the autonomy of syntax, expresses that the
computational generative process of syntax described by Merge is independent
of semantics. The second requirement can be seen as positing that the
syntax-semantic interface proceeds {\em from} syntax {\em to} semantics (a
syntax-first view), while syntax itself is not semantic in nature. The third
claim separates the interaction of the core computational mechanism of
syntax with a Conceptual-Intentional system, which gives rise to the 
syntax-semantic interface, from the interaction with 
an Articulatory-Perceptual or Sensory-Motor system, which includes
the process of externalization. While it is reasonable to assume a certain
level of interaction between these two mechanisms, with ``independence of
externalization" we emphasize that
semantic interpretation depends primarily on {\em structural relations} and 
proximity in the syntactic structure rather than on linear proximity of
words in a sentence. The compositional property is meant here simply as a
requirement of consistency across syntactic sub-structures. 

\smallskip

We also add another general principle that we will try to incorporate in our
model and that may be at odds with some of the traditional approaches to
semantics (such as the truth value based approaches). We propose the following fundamental
distinction between the roles of syntax and semantics in language
\begin{itemize}
\item Syntax is a computational process.
\item Semantics is {\em not} a computational process and is in
essence grounded on a notion of topological proximity.
\end{itemize}
The first statement is clear in the context of generative linguistics, and
in particular in the setting of Minimalism, where the computational
process is run by the fundamental operation Merge. The second assertion
requires some contextual clarification. Saying 
that semantics is only endowed with a notion of proximity of a topological nature 
does {\em not} mean that it is not possible, or desirable, to consider models of semantics where
additional structure is present, but rather that these additional properties (metric, linear,
semiring structures, for instance) only play a role to instantiate or
quantify proximity relations. The compositionality of semantics
does not require positing an additional computational structure on
semantics itself: the computational structure of syntax suffices to induce it.
In this view, semantics is not really a part of language itself, 
but rather an autonomous structure that deals with proximity classifications. 

\medskip
\subsection{Syntax}\label{SyntaxMergeSec}

On the syntax side of the syntax-semantic interface we assume the
formulation of free symmetric Merge presented in \cite{MCB}. This
accounts for the properties (1) and (3) in our list of \S
\ref{RequireListSec}: it provides a computational model of syntax that
is independent of semantics, and where the interface with semantics
takes place at the level of the free symmetric Merge, without
requiring prior externalization. Free symmetric Merge generates
syntactic objects, described by binary rooted trees without any
assigned planar embedding.  Thus, our choice of modeling the
syntax-semantic interface starting from the level of free symmetric
Merge as the syntactic part of the interface, has the effect of
ensuring that the interface of syntax and semantics (also sometimes
called the Conceptual-Intentional system) is parallel and separate
from the channel connecting the output of Merge to externalization
(the so-called Articulatory-Perceptual system), although interactions
between these two channels can be incorporated in the model (and will
be discussed in \S \ref{AssocSec} of this paper).

\smallskip

Summarizing the setting of \cite{MCB}, syntax is represented by the following data:
\begin{itemize}
\item a (finite) set $\cS\cO_0$ of {\em lexical items and syntactic features};
\item the set of {\em syntactic objects} $\cS\cO$, identified with the set $\fT_{\cS\cO_0}$ of
binary rooted trees (with no planar structure) with leaves labeled by $\cS\cO_0$,
generated as the free, non-associative, 
commutative magma over  the set $\cS\cO_0$, 
\begin{equation}\label{SOmagma}
 \cS\cO={\rm Magma}_{na,c}(\cS\cO_0, \fM)=\fT_{\cS\cO_0} \, ; 
\end{equation} 
\item the set of {\em accessible terms} of a syntactic object $T\in \fT_{\cS\cO_0}$, given by the
set of all the full subtrees $T_v\subset T$ with root a non-root vertex $v\in V(T)$; 
\item the commutative Hopf algebra of workspaces given by the vector space
$\cV(\fF_{\cS\cO_0})$ spanned by the set $\fF_{\cS\cO_0}$ of (finite) binary
rooted forests with leaves decorated by elements of the set $\cS\cO_0$, with
product given by the disjoint union $\sqcup$ and coproduct determined by
\begin{equation}\label{coprod}
 \Delta(T)=T\otimes 1 + 1 \otimes T + \sum_v F_{\underline{v}} \otimes T/F_{\underline{v}} \, , 
 \end{equation} 
with $F_{\underline{v}}= T_{v_q}\sqcup \cdots \sqcup T_{v_n}$ a collection
of accessible terms;
\item the action of Merge on workspaces 
$$ \fM = \sqcup \circ (\fB  \otimes {\rm id}) \circ \Delta $$
where $\cB$ is the grafting of components of a forest to a common root vertex,
or for a fixed pair of syntactic objects $S,S'$
\begin{equation}\label{MergeSS}
 \fM_{S,S'} = \sqcup \circ (\fB  \otimes {\rm id}) \circ \delta_{S,S'} \circ \Delta \, , 
\end{equation} 
where $\delta_{S,S'}$ selects matching pairs in the workspace (see \cite{MCB}
for a more detailed description).
\end{itemize}

We will use the notation $\cH =(\cV(\fF_{\cS\cO_0}),\sqcup,\Delta, S)$ for the
Hopf algebra described above. Note that since the Hopf algebra is graded,
the antipode $S$ is defined inductively using the coproduct, so that we can
equivalently just specify the bialgebra part of the structure,  $\cH =(\cV(\fF_{\cS\cO_0}),\sqcup,\Delta)$.

\medskip
\subsubsection{Remark on the Hopf algebra coproduct}\label{QuotSec}

We pointed out in \cite{MCB} that there are two possible ways of interpreting the
quotient $T/T_v$ (or more generally $T/F_{\underline{v}}$) in the
coproduct \eqref{coprod}, either as contraction of $T_v$ to its root vertex or as
deletion of $T_v$ (and taking the unique maximal binary tree determined
by the complement). In \cite{MCB} we argued that, if one wants to avoid
having to introduce labeling algorithms for the internal vertices of the trees,
and only have the leaves labelled by lexical items and syntactic features 
in $\cS\cO_0$, then the second procedure is preferable.

\smallskip

However, when it comes to interfacing syntax with semantics, it is
in fact better to retain the root vertex $v$ of $T_v$ in the quotient $T/T_v$
as that provides so-called {\em traces}, the empty categories left behind by ``movement'' 
implemented by so-called Internal Merge. 
As is familiar from the long historical discussion of what is called
reconstruction, the trace is needed for semantic parsing,
so in the context we consider here we will be using the quotient $T/T_v$
where $T_v$ is contracted to its root, marked as trace. Similarly, in
quotients $T/F_{\underline{v}}$ each tree component $T_{v_i}$ of the forest 
$F_{\underline{v}}$ is contracted to its root vertex $v_i$.\footnote{We thank Martin Everaert and Riny Huijbregts for this observation.}

\smallskip
\subsubsection{A comment about Tree Adjoining Grammars -- TAGs} \label{TAGsec}

Since readers of our previous papers \cite{MCB}, \cite{MBC} have
occasionally asked this question, we add here a very brief
clarification on the difference between the algebraic structure of
Merge described in \cite{MCB} and that of tree adjoining grammars
(TAGs). In the setting of TAGs, one considers a generative process
that depends on an initial choice of a given finite set of
``elementary trees" with vertex labels. In TAGs, in general, trees are
not necessarily assumed to be binary. There are two composition rules:
one composition operation (substitution rule) consists of grafting the
root of a tree to the leaf of another tree; a second composition
operation (a so-called adjoining rule) inserts at an internal vertex of a tree
with a label $x$ another tree with root labeled by $x$, and one of the
leaves also labeled by $x$. The adjoining rule can be obtained as a
suitable composition of grafting of roots to leaves, so the basic
generative operations of TAGs are the {\em operad compositions} of
rooted trees. Namely, if $\cO(n)$ denotes the set of trees in a given
TAG with $n$ leaves, then there are composition maps
\begin{equation}\label{operadi}
 \circ_i : \cO(n)\times \cO(m) \to \cO(n+m-1) 
\end{equation} 
that plug the root of a tree in $\cO(n)$ to the $i$-th leaf of a tree
in $\cO(m)$ resulting in a tree in $\cO(n+m-1)$.  Such operations,
subject to associativity and unitarity conditions, define the
algebraic structure of an {\em operad}.  Label matching conditions may
require partial operads, but we will not discuss this here.

\smallskip

In order to compare the TAG formalism with the algebraic formulation
of Merge of \cite{MCB}, one should note that there is an important
relation between the two as well as important differences: the latter
show that these two formalisms do {\em not} constitute the same
algebraic structure.  This is why, in our view, it is algebraic
structure that is essential to the line of work presented here, rather
than the formal language theoretic notion of weak generative capacity
(such as mild-context sensitivity).\footnote{In other words, this is
  not to deny that notions of generative capacity might be useful to
  illuminate one or another aspect of human language; simply that the
  algebraic approach presented here does not draw on this more
  familiar tradition.}

\smallskip

The relation between TAGs and Merge arises from the fact that, in the
Merge formalism of \cite{MCB}, recalled in \S \ref{SyntaxMergeSec}
above, the syntactic objects $T\in \cS\cO=\fT_{\cS\cO_0}$ are
generated as elements of the free non-associative commutative magma
\eqref{SOmagma} on the Merge operation $\fM$. This {\em does} have an
equivalent operad formulation, in terms of the quadratic operad freely
generated by the single commutative binary operation $\fM$, see
\cite{Holt}. Thus, there exists an equivalent way of formulating the
generative process for the syntactic objects in terms of operad
compositions \eqref{operadi}, that makes this generative process
appear similar to TAGs. However, the main difference between the two
lies in the fact that the Merge formalism does not {\em just} consist
of the generation of syntactic objects through the magma operation,
but also of the action of Merge on so-called workspaces, given by
forests $F\in \fT_{\cS\cO_0}$.

The action of Merge on workspaces is not determined {\em only} by the
operad underlying the $\cS\cO$ magma, but also requires the additional
datum of the Hopf algebra structure on workspaces. This makes it
possible to incorporate not just so-called External Merge, that is
involved in the magma $\cS\cO$, but also Internal Merge, that requires
an additional coproduct operation.

It is important to observe here that the operad underlying
$\cS\cO$ also determines a Hopf algebra, the associative, commutative
Hopf algebra on the vector space $\cV(\fF_{\cS\cO_0})$ spanned by
forests of binary rooted trees, used in \cite{MCB} to formulate the
action of Merge on workspaces.  However, this is {\em not} the same as the {\em
  non-associative}, commutative Hopf algebra structure induced by the
operad on the vector space $\cV(\fT_{\cS\cO_0})$ spanned by binary
rooted trees as in TAGs; see \cite{Holt2}, \cite{Holt3}.  This is the
key algebraic difference between TAGs and Merge.  The introduction of
workspaces and the action of Merge on workspaces thus amounts to a key
innovation in the modern Minimalist account.

\medskip
\subsection{Abstract head functions}

In order to formulate more precisely property (2) of our list in \S
\ref{RequireListSec}, we start by considering the role of the notion
of {\em head} in syntax and semantics.  In a syntactic tree, as is
familiar in general the syntactic category of the head determines the
category of the phrase (verb for Verb Phrase, etc.; here we use the
traditional terminology of ``verb phrase'' even though this is
actually described by a set).   Moreover, the syntactic head determines the
``type'' of objects described; hence it can be regarded as part of the
mechanism that interfaces syntax with semantics.

\smallskip

As we discussed in \cite{MBC}, one can define an abstract {\em head function}
on binary rooted trees $T$ (with no assigned planar structure) in the following way.

\begin{defn}\label{headfunc} {\rm 
A {\em head function} is a function $h$ defined on a subdomain ${\rm Dom}(h)\subset \fT_{\cS\cO_0}$,
that assigns to a $T\in {\rm Dom}(h)$ a map $h: T \mapsto h_T$, 
\begin{equation}\label{headLT}
h_T: V^o(T)\to L(T) 
\end{equation}
from the set $V^o(T)$
of non-leaf vertices of $T$ to the set $L(T)$ of leaves of $T$,
with the property that if $T_v \subseteq T_w$ and
$h_T(w)\in L(T_v)\subseteq L(T_w)$, then $h_T(w)=h_T(v)$.
We write $h(T)$ for the value of $h_T$ at the root of $T$. }
\end{defn}

\smallskip

This notion summarizes the main properties of the syntactic head,
though of course one can have many more abstract head functions that
do not correspond to the actual syntactic head. 

\smallskip

To see this note that our notion of head function of Definition~\ref{headfunc} can
be directly derived from the formulation of the notion of head and 
projection given by Chomsky in \S 4 of \cite{Chomsky-bare}.
The equivalence of these formulations follows immediately by observing 
that in \S 4 of  \cite{Chomsky-bare} the syntactic head
is characterized by the following inductive properties:
\begin{enumerate}
\item For $T=\fM(\alpha,\beta)$, with $\alpha,\beta\in \cS\cO_0$, the head $h(T)$ should be one or the
other of the two items $\alpha,\beta$. The item that becomes the head $h(T)$ is said to {\em project}.
\item In further projections the head is obtained as the 
``head from which they ultimately project, restricting the term head to terminal elements".
\item Under Merge operations $T=\fM(T_1,T_2)$ one of the two syntactic objects $T_1,T_2\in \cS\cO$
projects and its head becomes the head $h(T)$. The label of the structure $T$ formed by Merge is the head of
the constituent that projects. 
\end{enumerate}

\smallskip

\begin{lem}\label{headprojection}
The three properties listed above are equivalent to Definition~\ref{headfunc}.
\end{lem}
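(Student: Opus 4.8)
The plan is to read the statement as a bridge between two descriptions of the same object. Definition~\ref{headfunc} works inside a fixed tree $T$ and constrains $h_T$ by a single within-tree coherence condition; properties (1)--(3) instead describe $h$ recursively along the magma structure of $\fT_{\cS\cO_0}$, with the understood base case $h(\alpha)=\alpha$ for $\alpha\in\cS\cO_0$, which is exactly what ``restricting the term head to terminal elements'' in (2) encodes. The two descriptions are linked by the identity $h_T(v)=h(T_v)$ for every non-leaf vertex $v$, i.e.\ the value of $h_T$ at $v$ is the head of the full subtree $T_v$; equivalently, a head function restricts compatibly to full subtrees. One takes $\mathrm{Dom}(h)$ closed under full subtrees, which is consistent with the recursive shape of (1)--(3). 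Granting this, characterizing head functions reduces to characterizing the root values $T\mapsto h(T)$, and it suffices to show that the within-tree coherence condition of Definition~\ref{headfunc}, together with $h_T(v)=h(T_v)$, is equivalent to ``$h(\fM(T_1,T_2))\in\{h(T_1),h(T_2)\}$ with the projecting-child reading (3) and the cherry base case (1)''.

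For the implication from (1)--(3) to Definition~\ref{headfunc}, I would start from a family satisfying $h(\alpha)=\alpha$ and $h(\fM(T_1,T_2))=h(T_{\mathrm{proj}})$ with $T_{\mathrm{proj}}\in\{T_1,T_2\}$ the projecting child, and simply set $h_T(v):=h(T_v)$ on $V^o(T)$. An induction on the number of leaves gives $h(T_v)\in L(T_v)\subseteq L(T)$, so $h_T\colon V^o(T)\to L(T)$ is well defined. The coherence condition is then a short induction on the number of edges from $w$ down to $v$: if $T_v\subseteq T_w$ and $h_T(w)=h(T_w)\in L(T_v)$, write $T_w=\fM(A,B)$ with $T_v\subseteq A$; since $L(A)\cap L(B)=\emptyset$ while $h(T_w)\in\{h(A),h(B)\}$ with $h(A)\in L(A)$ and $h(B)\in L(B)$, necessarily $h(T_w)=h(A)$; if $A=T_v$ we are done, and otherwise $A=T_{v_A}$ for a vertex $v_A$ with strictly fewer edges to $v$ and $h(T_{v_A})=h(A)=h(T_w)\in L(T_v)$, so the inductive hypothesis gives $h(T_{v_A})=h(T_v)$, whence $h_T(w)=h_T(v)$.

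For the converse, I would unwind the within-tree axiom. Property (1) is immediate: if $T=\fM(\alpha,\beta)$ with $\alpha,\beta\in\cS\cO_0$, the root is the only non-leaf vertex and $L(T)=\{\alpha,\beta\}$, so $h(T)=h_T(\mathrm{root})\in\{\alpha,\beta\}$. For (3), let $T=\fM(T_1,T_2)$ with root $r$; then $h(T)=h_T(r)$ lies in $L(T_1)\sqcup L(T_2)$, say $h_T(r)\in L(T_1)$. If $T_1$ is a single leaf then $h(T)=h(T_1)$ by the base case; otherwise $T_1=T_{v_1}$ for the non-root vertex $v_1$, and since $h_T(r)\in L(T_{v_1})$ with $T_{v_1}\subseteq T_r$, the axiom forces $h_T(r)=h_T(v_1)=h(T_{v_1})=h(T_1)$, using the coherence $h_T(v_1)=h(T_{v_1})$; hence $T_1$ projects and its head is the label of $T$. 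Property (2) then follows by iterating (3) along the projection chain from the root, the within-tree axiom being precisely what guarantees the head is unchanged along that chain until it reaches a terminal element.

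The only genuinely nontrivial computation is the short induction in the coherence check above; the point requiring the most care is conceptual rather than technical, namely pinning down the bridge $h_T(v)=h(T_v)$ — that a head function in the sense of Definition~\ref{headfunc} restricts to full subtrees by evaluation at their roots — so that the single-tree condition and the Merge-recursive conditions (1)--(3) are manifestly conditions on the same data. Once that identification is fixed, both implications are routine.
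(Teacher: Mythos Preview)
Your proposal is correct and follows essentially the same route as the paper's proof: both directions hinge on the identification $h_T(v)=h(T_v)$, the forward direction checks the coherence condition by observing that $h_T(w)\in L(T_v)$ forces the $T_v$-side to project (you make the induction along the path from $w$ to $v$ explicit, while the paper states this in one line), and the converse unwinds $h(T)\in L(T_1)\sqcup L(T_2)$ together with the coherence axiom to recover (1)--(3). Your version is slightly more careful in isolating the bridge $h_T(v)=h(T_v)$ and in treating the leaf case separately, but the argument is the same.
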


\begin{proof}
First observe that the three properties from \S 4 of \cite{Chomsky-bare} listed
above determine a function $h_T: V^o(T)\to L(T)$ from the set $V^o(T)$
of non-leaf vertices of $T$ to the set $L(T)$ of leaves of $T$. The function
is defined by ``following the head" determined by the three listed properties.
In other words, the root vertex of the tree carries a label, which by the listed
requirements is obtained as ``the head from which it ultimately projects", which
is assumed to be ``a terminal element". This means that we are assigning to
the root vertex a label $h(T)$ that is one of the items in $\cS\cO_0$ 
attached to the leaves $L(T)$. Similarly, for any other internal vertex
$v$ of $T$, one can view the subtree (accessible term) $T_v$ as the
Merge of two subtrees $T_v=\fM(T_{v_1}, T_{v_2})$ where 
$T_{v_i}$ are the two subtrees with roots at the vertices below $v$. The
same listed properties then ensures that we are mapping $v$ to a leaf $\ell(v)\in L(T_v)$
which agrees with either the head of $T_{v_1}$ or the head of $T_{v_2}$. Moreover,
this also ensures that the property of Definition~\ref{headfunc} is satisfied by the
function $h_T: V^o(T)\to L(T)$ obtained in this way. Indeed, suppose given 
$T_v \subseteq T_w$. If the function determined by the three properties above
satisfies $h_T(w)\in L(T_v)$ then it means that it is $T_v$ that projects, according
to the definition of \cite{Chomsky-bare}, hence $h_T(w)=h_T(v)$. This shows that
the definition of head in \S 4 of \cite{Chomsky-bare}  implies the one given in
Definition~\ref{headfunc}.

Conversely, suppose that we have an abstract head function as in 
Definition~\ref{headfunc}. We can see that it has to satisfy the three
properties of \S 4 of \cite{Chomsky-bare} in the following way. The
first property is immediate from the fact that $h_T: V^o(T)\to L(T)$ is a
function, which means that, if we consider any subtree of $T$ consisting
of two leaves with a common vertex above them, that is $T_v=\fM(\alpha, \beta)$,
then $h_T(v)$ has to be either $\alpha$ or $\beta$. To see that the second and third properties
also hold, consider first the full tree $T$. Since this is a binary rooted tree it is uniquely
describable in the form $T=\fM(T_1,T_1)$ for two other binary rooted trees $T_1,T_2$. 
Since the function $h_T$ takes values in the set $L(T)=L(T_1)\sqcup L(T_2)$, the head
$h(T)$ is in either $L(T_1)$ or in $L(T_2)$. Suppose it is in $L(T_1)$. The other case is
analogous. Then by Definition~\ref{headfunc} we have $h(T)=h(T_1)$, where we write
$h(T_v):=h_T(v)$. Continuing in the same way for each successive nodes, with the
corresponding unique decompositions $T_v=\fM(T_{v,1}, T_{v,2})$, we obtain, for
each internal vertex a path to a leaf, which follows the head, and provides the 
``head from which it ultimately projects" as desired in the second property
listed above, while at each step the third property holds. 
\end{proof} 

\smallskip

\begin{rem}\label{remheadsT}{\rm
There are two important remarks to make regarding the two equivalent
formulations of Definition~\ref{headfunc} and Lemma~\ref{headprojection}.
As we discussed in \S 4.2 of \cite{MBC}, a consistent definition (compatible with
the Merge operation) of a head function $h$ does not extend to the entire $\cS\cO$
but is defined on some domain ${\rm Dom}(h) \subset \cS\cO$, so 
the identification between the descriptions of Definition~\ref{headfunc} and of
\S 4 of \cite{Chomsky-bare} also holds on such domain.
Moreover, as we also discussed in \S 4.2 of \cite{MBC}, on a given $T\in \cS\cO$ 
there are $2^{\# V^o(T)}$ choices of a head function (which are in bijective
correspondence with the choices of a planar structure for $T$). This is why
we are saying above that, on a given $T$, there are more abstract head
functions than just the one that corresponds to the syntactic head 
(when the latter is well defined).  This does not matter as for most of the
arguments we are using that involve a head function $h$, the formal property
of Definition~\ref{headfunc} is the only characterization required. In terms of
explicit linguistics examples, one can think of the usual syntactic head as
presented in \cite{Chomsky-bare}.
}\end{rem}


\smallskip

As shown in   \cite{MBC}, it follows directly from the definition
that assigning a head function $h_T$ to a tree $T$ is equivalent to
assigning a planar embedding $\pi_{h_T}$ (every head function determines
a planar embedding and conversely). 

\smallskip

Thus, we can equivalently think of an assignment
\begin{equation}\label{headTs}
 h: T \mapsto h_T 
\end{equation} 
of a head function to every tree $T \in \fT_{\cS\cO_0}$
as a function
\begin{equation}\label{headDom}
 h : {\rm Dom}(h)\subset \fT_{\cS\cO_0} \to \Sigma^*[\cS\cO_0] 
\end{equation} 
to the set of all finite ordered sequences, of arbitrary length, 
in the alphabet $\cS\cO_0$, given by
$$ h(T) = L(T^{\pi_{h_T}}) \, , $$
where $T^{\pi_{h_T}}$ is the planar embedding of $T$ determined by
the head function, and $L(T^{\pi_{h_T}})$ is its ordered set of leaves.
Since it is equivalent to describe $h(T)$ as the ordered set $L(T^{\pi_{h_T}})$
or as a single leaf (the head) in $L(T)$, we will switch between these
two descriptions without changing the notation. 

\smallskip

 We have shown in \cite{MBC} that one does not have a 
 well-defined head function on the entire $\fT_{\cS\cO_0}$,
 hence we write here $h$ as defined on some domain 
${\rm Dom}(h)\subset \fT_{\cS\cO_0}$. The obstacle
to the extension of a head function to the entire set
$\fT_{\cS\cO_0}$ derives from the well-known issue of  {\em exocentric}
constructions, e.g., the traditional division of sentences into
Subjects and Predicates, namely cases of syntactic objects 
$T\in \cS\cO=\fT_{\cS\cO_0}$ that are obtained as the result of
External Merge $T=\fM(T',T'')$ where even if a
head function is well defined on $T'$ and $T''$, there
is no good way of comparing $h(T')$ and $h(T'')$ to
decide which one should become the head of 
$T=\fM(T',T'')$.  Abstract heads are thus partially defined functions.

\smallskip

It is interesting to observe here that this fact makes abstract heads amenable
to treatment according to the renormalization model used in the theory of
computation, where the source of ``meaningless infinities"
arises from what lies outside of the domain
where a function is computable, \cite{Man1}, \cite{Man2}.
We will indeed use this approach to construct a very simple
illustrative model of our proposed view of the
syntax-semantics interface in \S \ref{HeadIdRenSec}.

\medskip

\subsection{Algebraic Renormalization: a short summary}\label{AlgRenSec}

The physical procedure of {\em renormalization} can be formulated in
algebraic terms (see \cite{CoKr}, \cite{CoMa}, \cite{EbFKr}, \cite{EbFM}) 
using Hopf algebras and Rota--Baxter algebras. In this formulation, the procedure
describes a very general form of Birkhoff factorization, which separates out
an initial (unrenormalized) mapping into two parts of a convolution product,
with one term describing the desirable ({\em meaningful}) part and one term describing 
the {\em meaningless} part that needs to be removed (divergences in the case
of Feynman integrals in quantum field theory). 

\smallskip

The mathematical setting that describes renormalization in physics, which 
we summarize here, may seem far-fetched as a model for linguistics,
but the point here is that mathematical structures exist as flexible templates
for the description of certain types of universal fundamental processes
in nature, which are likely to manifest themselves in similar mathematical
form in a variety of different contexts. 

\smallskip

The Hopf algebra datum $\cH=(\cV, \cdot, \Delta, S)$, a vector space with
compatible multiplication, comultiplication (with unit and counit) and antipode, 
takes care of describing the underlying combinatorial data and their
generative process. In the case of quantum field theory these are the
Feynman graphs with their subgraphs. The Feynman graphs of a given
quantum field theory can be described as a generative process in two
different ways: one in terms of graph grammars (see \cite{MarPort}),
which is similar to the older formal languages approach in generative
linguistics, another in terms of a Hopf algebra  (see
\cite{CoKr}, \cite{CoMa}, \cite{EbFKr}, \cite{EbFM}). The comparison
between these two generative descriptions of Feynman graphs shows
direct similarities with what happens in the case of syntax, with
the difference between the old formal languages
approach and the new Merge approach in generative linguistics,
where syntactic objects and the workspaces with the action of Merge
can also be described in terms of Hopf algebras, as in \cite{MCB}.

\smallskip

The Hopf algebra structure is central to the renormalization process
and the coproduct operation is the key part of the structure that
is responsible for implementing the renormalization procedure, as
we will recall below. The other algebraic datum, the Rota-Baxter algebra
$(\cR,R)$ represents what in physics is called a ``regularization scheme". 
This is the choice of a model space where the factorization into
meaningful and meaningless parts takes place. There is an
important conceptual difference between these two algebraic 
objects $\cH$ and $\cR$, in the sense that $\cH$ is essentially
intrinsic to the process while $\cR$ is an accessory choice,
and in principle many different regularization schemes can be adopted
to achieve the same desired renormalization. In terms of our
linguistic model, one should think of this choice of a 
regularization scheme $\cR$ as the choice of {\em some} model
of semantics. As in the case of regularization in physics, we
view the specifics of such a model as accessories to the
interface we are describing, while we view the role of the
syntactic structures encoded in $\cH$ as the essential part.
This again reflects the view of a primarily syntax-driven
interface between syntax and semantics. 

\smallskip

In our context, this reflects the fact that there are several approaches to
the construction of possible models of semantics, which
are, in our view, not entirely satisfactory and not entirely compatible.
However, we argue that 
this is not as serious an obstacle as it might first
appear, in the sense that this is very much the situation
also with regularization schemes in the physics of
renormalization (where one has dimensional, cutoff,
zeta function regularizations, etc.) and yet one can still extract a viable
procedure of assignment of meaningful physical values,
consistently across the choices of regularization.
We will argue that indeed, a viable model of
the interface between syntax and semantics rests
upon specific formulations of semantics only through some
very simple abstract properties that can be satisfied
within different models.

\smallskip

We have here briefly recalled the detailed definition of the Hopf algebra
structure in \cite{MBC}; for details we refer the readers to our
discussion there. Recall that the
datum $\cH=(\cV, \cdot, \Delta, S)$ is assumed to be
a commutative, associative, coassociative, graded, connected Hopf
algebra, but it is in general {\em not} cocommutative.
We will fix this to be $\cH=(\cV(\cF_{\cS\cO_0}), \sqcup, \Delta)$, with
$\cF_{\cS\cO_0}$ the set of finite binary rooted forests (with no
planar structure), and
with $\Delta$ as in \eqref{coprod}, the grading via the number of leaves, 
and with the unique inductively defined antipode $S$.

\smallskip

For the Rota-Baxter part of the structure, we can distinguish two
cases, the algebra and the semiring case. The algebra case is
the one that was orginally introduced in the physics setting. 

\begin{defn}\label{minusBRdef}{\rm 
A Rota-Baxter  algebra $(\cR,R)$ of weight $-1$ is a
commutative associative algebra $\cR$ together with a
linear operator $R: \cR \to \cR$ satisfying the identity
$$ R(a)R(b)=R(aR(b))+R(R(a)b) - R(ab) \, , $$
for all $a,b\in \cR$. }
\end{defn}

The prototype example (relevant
to physics) is a Laurent series with the operator $R$ of projection
onto their polar (divergent) part. 

\smallskip

The case of a semiring (where addition is no
longer invertible), more closely related to settings like the
theory of computation, was introduced in \cite{MarTe}. 

\begin{defn}\label{semiRBdef}{\rm 
A Rota-Baxter semiring of weight $+1$
is a semiring $\cR$ together with a Rota-Baxter operator $R$ of weight $+1$.
This is an additive (with respect to the semiring addition) map 
$R: \cR \to \cR$ satisfying
$$  R(a)\odot R(b)= R(a\odot R(b)) \boxdot R(R(a)\odot b) \boxdot R(a\odot b)\, , $$
with $(\boxdot, \odot)$ the semiring addition and multiplication operations.
A Rota-Baxter semiring of weight  $-1$ similarly satisfies the identity
$$  R(a)\odot R(b) \boxdot R(a\odot b) = R(a\odot R(b)) \boxdot R(R(a)\odot b)\, . $$}
\end{defn}

\smallskip

Note that since semiring addition is not invertible, in this case we cannot move the term
$R(a\odot b)$ to the other side of the identity. The purpose of the 
Rota-Baxter operator $R$ is to project onto the ``part of interest'' (for
example, divergencies in physics). We will discuss in \S \ref{SemSpaceSec}
how to adapt Rota-Baxter data of the form $(\cR,R)$ to semantic models.

\smallskip

\begin{defn}\label{charHR}{\rm 
A character of a commutative Hopf algebra $\cH$ with values in a
commutative algebra $\cR$ is a map
$$ \phi: \cH \to \cR $$
which is assumed to be a {\em morphism of algebras}, hence it
satisfies $\phi(xy)=\phi(x) \phi(y)$.
In the case where $\cR$ is a semiring, we will consider two cases
of semiring-valued characters
\begin{enumerate}
\item {\em Semiring maps:} 
$$ \phi: \cH^{semi} \to \cR $$
defined on a subdomain $\cH^{semi}$ of $\cH$ that is a 
commutative semiring, with $\phi$ a morphism of commutative semirings.
\item {\em Maps on cones:} assuming that $\cH$ is defined over the field $\R$, we consider maps
$$ \phi: \cH^{cone} \to \cR $$
where the subdomain $\cH^{cone}$ of $\cH$ is a cone, closed under convex linear combinations
and under multiplication in $\cH$, with $\phi$ compatible with convex combinations and with
products,  $\phi(xy)=\phi(x)\odot \phi(y)$, with $\odot$ the semiring product.
\end{enumerate} }
\end{defn}

\smallskip

In physics such datum $\phi: \cH \to \cR$ describes the Feynman
rules for computing Feynman integrals in an assigned
regularization scheme (given by the Rota-Baxter datum).
In our setting, the map $\phi: \cH \to \cR$ is some
map from syntactic objects to a semantic space, which
includes the possibility of meaninglessness, when a
consistent semantics cannot be assigned. By ``consistent" here
we mean that assignment of semantic values to larger hierarchical
structures has to be compatible with assignments to sub-structures:
this is exactly the same consistency requirement that is
used in the physics of renormalization and that determines
the required algebraic structure.  The
multiplicativity condition here just means that, in a
workspace containing many different syntactic
objects, the image of each of them in the
semantic model $\cR$ is independent of the others. 
Of course, when different syntactic objects are
assembled together by the action of Merge, these
different images need to be compared for consistency:
this is indeed the crucial part of the interpretive process, that
corresponds to the {\em compositionality} requirement, 
number (4) on our list of desired properties for the
syntax-semantics interface. 

\smallskip

\begin{rem}\label{phialg}{\rm 
It is important to stress the fact that a character
$\phi: \cH \to \cR$ is only a map of algebras: it does
not know anything about the fact that $\cH$ also
has a coproduct $\Delta$ and that $\cR$ also has
a Rota-Baxter operator $R$. In particular, the target
$\cR$ does not carry a coproduct operation and $\phi$
is {\em not} a morphism of Hopf algebras.}
\end{rem}

The observation made in Remark~\ref{phialg}
will play an important role in our
linguistic model. It is in fact closely related to
the statement we made at the beginning of this
paper: the computational structure of syntax
--which as we explained in \cite{MCB} depends on
the coproduct structure of $\cH$--does not 
require an analogous computational counterpart
in semantics. We will discuss this point in
more detail in the following sections, where we will
show that, in our model, the compositional properties of semantics
are entirely governed by the computational structure of
syntax, along with the topological nature of semantics 
(as a classifier of proximity relations). This is a very
strong statement on the relative roles of syntax
and semantics, presenting what can be viewed as a strong ``syntax-first" model. 
While several of the examples we present in this paper 
will be simplified mathematical models aimed at 
illustrating the fundamental algebraic properties, we will discuss 
at some length how the principle we state here
can be understood in the case of Pietroski's model
of semantics, that we compare with our framework in
\S \ref{PietSec}.

\smallskip

In fact, in the physics setting as well as in our linguistics model,
the interaction between the two additional data, $\Delta$ and $R$, is
used to implement {\em consistency} across substructures (our desired
property of compositionality).  This happens by recursively
constructing a factorization (over the grading of the Hopf algebra),
in the following way.

\begin{defn}\label{Birkhoff}{\rm
A {\em Birkhoff factorization} of a character $\phi: \cH \to \cR$ is
a decomposition  
\begin{equation}\label{BirkS}
 \phi = (\phi_- \circ S) \star \phi_+ 
\end{equation} 
with $S$ the antipode and $\star$ the convolution
product determined by the coproduct $\Delta$
$$  (\phi_1\star \phi_2) \, (x)= (\phi_1\otimes\phi_2) \, \Delta(x)\, . $$}
\end{defn}

One interprets one of the two terms $\phi_+$ as the
meaningful renormalized part and the other $\phi_-$ 
as the meaningless part that needs to be removed. 
The semiring case is similar.

\begin{defn}\label{semiBirkhoff}{\rm
A Birkhoff factorization of a semiring character
$\phi: \cH^{semi}\to \cR$ is a factorization of the form
$$ \phi_+=\phi_-\star\phi . $$ }
\end{defn}

A Birkhoff factorization as in Definition~\ref{Birkhoff} is constructed inductively
using $R$ and $\Delta$ as follows.

\begin{prop}\label{RBfact}{\rm (\cite{CoKr}, \cite{EbFKr})}
If $(\cR,R)$ is a Rota--Baxter algebra of weight $-1$ and
$\cH$ is a commutative graded connected Hopf algebra, with
$\phi: \cH \to \cR$ a character, there is (uniquely up to
normalization) a Birkhoff factorization of the form \eqref{BirkS}
obtained inductively (on the Hopf algebra degree) as 
\begin{equation}\label{AlgBfact}
 \phi_-(x)=-R ( \phi(x) +\sum \phi_-(x') \phi(x'') ) \ \ \text{ and } \ \ 
\phi_+(x) = (1-R) ( \phi(x) +\sum \phi_-(x') \phi(x'') ) 
\end{equation}
where $\Delta(x)=1\otimes x + x \otimes 1 + \sum x' \otimes x''$,
with the $x', x''$ of lower degree. 
The $\phi_\pm: \cH \to \cR_\pm$ are algebra homomorphisms
to the range of $R$ and $(1-R)$. These are subalgebras
(not just vector subspaces), because of the Rota--Baxter identity
satisfied by $R$.
\end{prop}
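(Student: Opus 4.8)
The plan is to read the two formulas \eqref{AlgBfact} as the \emph{definition} of $\phi_-$ and $\phi_+$, and then to verify that what they produce is a Birkhoff factorization with the stated multiplicativity. Since $\cH$ is graded and connected, I would start by setting $\phi_\pm(1)=1$, and for $x$ homogeneous of degree $n\ge 1$ write $\Delta(x)=1\otimes x+x\otimes 1+\sum x'\otimes x''$ with the $x',x''$ of degree strictly between $0$ and $n$; then the right-hand sides of \eqref{AlgBfact} involve $\phi_-$ only on elements of degree $<n$, so the two formulas simultaneously define $\phi_-,\phi_+:\cH\to\cR$ by induction on $n$. Introducing the Bogoliubov preparation $\bar\phi(x):=\phi(x)+\sum\phi_-(x')\phi(x'')$, one has $\phi_-(x)=-R(\bar\phi(x))$ and $\phi_+(x)=(1-R)(\bar\phi(x))=\phi_-(x)+\bar\phi(x)$, and a one-line rearrangement (using $\phi_\pm(1)=1$) gives
\[
\phi_+(x)=\phi_-(x)+\phi(x)+\sum\phi_-(x')\phi(x'')=(\phi_-\star\phi)(x),
\]
so that $\phi_+=\phi_-\star\phi$ holds identically; moreover $\phi_-(x)\in R(\cR)$ and $\phi_+(x)\in(1-R)(\cR)$ whenever $\deg x>0$, by inspection of the formulas.

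The hard part will be showing that $\phi_-$ is an algebra homomorphism; I would prove $\phi_-(xy)=\phi_-(x)\phi_-(y)$ by induction on $\deg(x)+\deg(y)$. Since $\Delta$ is an algebra map, $\Delta(xy)=\Delta(x)\Delta(y)$, which rewrites $\bar\phi(xy)$ as a sum of products $\phi_-(\cdot)\phi(\cdot)$ in strictly lower degrees; applying the inductive hypothesis together with the multiplicativity of $\phi$ and collecting terms should express $\bar\phi(xy)$ through $\bar\phi(x),\bar\phi(y),\phi_-(x),\phi_-(y)$. Applying $-R$ and invoking the weight $-1$ Rota--Baxter identity $R(a)R(b)=R(aR(b))+R(R(a)b)-R(ab)$ together with commutativity of $\cR$ then converts $\phi_-(x)\phi_-(y)=R(\bar\phi(x))R(\bar\phi(y))$ into exactly $-R(\bar\phi(xy))=\phi_-(xy)$. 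This bookkeeping — the precise matching of the coproduct identity for $xy$ against the Rota--Baxter identity — is the one genuinely non-formal step, and it is where I expect the main effort to go. Granting it, $\phi_+=\phi_-\star\phi$ is a convolution of two $\cR$-valued characters, hence again a character (this uses only that $\cR$ is commutative, \emph{not} that $\cH$ is cocommutative), and the ranges $R(\cR)$, $(1-R)(\cR)$ are subalgebras of $\cR$ by the same Rota--Baxter identity, so $\phi_\pm$ indeed land in these subalgebras.

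To finish, I would use the standard Hopf-algebraic fact that the convolution inverse of an $\cR$-valued character $\psi$ is $\psi\circ S$ (a direct consequence of the antipode axiom and multiplicativity of $\psi$). Applied to $\phi_-$, this turns $\phi_+=\phi_-\star\phi$ into $\phi=\phi_-^{\star-1}\star\phi_+=(\phi_-\circ S)\star\phi_+$, which is the factorization \eqref{BirkS}. For uniqueness up to normalization, suppose $\phi=(\psi_-\circ S)\star\psi_+$ is another factorization with $\psi_\pm$ characters, $\psi_\pm(1)=1$, and $\psi_-$ (resp. $\psi_+$) valued in $R(\cR)$ (resp. $(1-R)(\cR)$) in positive degrees. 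Then $\psi_-\star\phi=\psi_+$, and comparing with $\phi_-\star\phi=\phi_+$ degree by degree one finds that $\psi_+(x)-\psi_-(x)$ equals the same $\bar\phi(x)$ already determined in lower degrees, after which the prescribed range conditions pin down $\psi_\pm(x)$ once $R$ is a projection; the phrase ``up to normalization'' absorbs the residual freedom when $R$ is not idempotent and the choice $\phi_\pm(1)=1$. Overall this is the Connes--Kreimer construction \cite{CoKr} in the Rota--Baxter formulation of \cite{EbFKr}, and I would present it in that order, flagging the multiplicativity induction of the second paragraph as the crux.
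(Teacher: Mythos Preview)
The paper does not give its own proof of this proposition: it is stated as a cited result from \cite{CoKr} and \cite{EbFKr}, with no argument supplied. Your proposal is precisely the standard Connes--Kreimer/Ebrahimi-Fard--Kreimer argument those references contain (inductive definition via the Bogolyubov preparation, multiplicativity of $\phi_-$ from the Rota--Baxter identity combined with $\Delta(xy)=\Delta(x)\Delta(y)$, then $\phi_+=\phi_-\star\phi$ and inversion via the antipode), so it matches what the paper is invoking.
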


\smallskip

\begin{rem}\label{remBprep}{\rm
One usually refers to the
expression
\begin{equation}\label{Bprep}
 \tilde\phi(x):= \phi(x) +\sum \phi_-(x') \phi(x'') 
\end{equation} 
as the {\em Bogolyubov preparation} of $\phi$ and writes
$\phi_-=-R(\tilde\phi)$ and $\phi_+=(1-R)(\tilde\phi)$.}
\end{rem}

\smallskip

The case of semirings is similar. 

\begin{prop}\label{semiRBfact} {\rm (\cite{MarTe})}
If $(\cR,R)$ a Rota--Baxter semiring of
weight $+1$ and $\cH$ is a commutative graded 
connected Hopf algebra with a semiring character
$\phi: \cH^{semi}\to \cR$, where $\cH^{semi}$
has an induced grading, one has a factorization
\begin{equation}\label{SemiBfact}
\begin{array}{rcl}
 \phi_-(x)=&  R(\tilde\phi(x))  & =R(\phi(x) \boxdot \phi_-(x')\odot \phi(x'')) \\[3mm]
 \phi_+(x) =& (\phi_- \star \phi)(x) & =\phi(x) \boxdot \phi_-(x) \boxdot \phi_-(x')\odot \phi(x'') \\[3mm]
 =& \phi_- \boxdot \tilde\phi \, ,&  \end{array}
 \end{equation}
where the $\phi_\pm$ are also multiplicative with respect to
the semiring product, $\phi_\pm(xy)=\phi_\pm(x)\odot \phi_\pm(y)$.
\end{prop}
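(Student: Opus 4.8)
The plan is to construct the pair $(\phi_-,\phi_+)$ by recursion on the grading that $\cH^{semi}$ inherits from $\cH$, so that the three displayed identities hold \emph{by construction}, and then to establish $\odot$-multiplicativity by a separate induction. I would set $\phi_\pm(1):=1$, the unit of $\cR$, which is the normalization forced by $\phi(1)=1$. For $x\in\cH^{semi}$ of positive degree $n$, write $\Delta(x)=1\otimes x+x\otimes 1+\sum x'\otimes x''$ with the $x',x''$ of degree strictly between $0$ and $n$; here one uses that $\cH$ is connected graded, that $\Delta$ restricts to $\cH^{semi}$ (true for the forest Hopf algebra, where the coproduct \eqref{coprod} has all coefficients $1$), and that $\cH^{semi}$ is closed under the product. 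Then set $\tilde\phi(x):=\phi(x)\boxdot\phi_-(x')\odot\phi(x'')$, which is defined since every $\phi_-(x')$ has already been constructed, followed by $\phi_-(x):=R(\tilde\phi(x))$ and $\phi_+:=\phi_-\star\phi$. Unwinding the convolution through $\Delta(x)$ and using $\phi_-(1)=\phi(1)=1$ gives $\phi_+(x)=\phi(x)\boxdot\phi_-(x)\boxdot\phi_-(x')\odot\phi(x'')=\phi_-(x)\boxdot\tilde\phi(x)$, which is the second line of the statement. It then remains only to prove $\phi_\pm(xy)=\phi_\pm(x)\odot\phi_\pm(y)$.

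The heart of the proof is a ``quasi-multiplicativity'' identity for the Bogolyubov preparation, established by induction on the total degree $n=\deg(xy)$: assuming $\phi_-$ is $\odot$-multiplicative on all arguments of degree $<n$,
\[
\tilde\phi(xy)=\phi_-(x)\odot\tilde\phi(y)\ \boxdot\ \tilde\phi(x)\odot\phi_+(y).
\]
To obtain this I would expand $\Delta(xy)=\Delta(x)\Delta(y)$: besides the primitive part $xy\otimes 1+1\otimes xy$ one gets seven further groups of terms, namely $x\otimes y$, $y\otimes x$, $\sum xy'\otimes y''$, $\sum y'\otimes xy''$, $\sum x'y\otimes x''$, $\sum x'\otimes x''y$, and $\sum x'y'\otimes x''y''$, all with coefficient $1$. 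Applying $\phi_-\otimes\phi$ followed by the product of $\cR$, and rewriting each term using (i) that $\phi$ is a semiring morphism and (ii) the inductive hypothesis applied to the mixed arguments $xy'$, $x'y$, $x'y'$ (legitimate since each of these has degree strictly less than $n$), one is left with a $\boxdot$-sum that, after regrouping --- using \emph{only} associativity, commutativity and distributivity of $\boxdot,\odot$, never an additive inverse --- collects into $\phi_-(x)\odot\tilde\phi(y)$ (the terms carrying a left factor $\phi_-(x)$) plus $\tilde\phi(x)\odot\phi_+(y)$ (all the rest, once one recognizes $\phi(y)\boxdot\phi_-(y)\boxdot\sum\phi_-(y')\odot\phi(y'')=\phi_+(y)$). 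This regrouping is the one place where real bookkeeping is required.

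With the identity in hand, multiplicativity of $\phi_-$ follows by induction on $n=\deg(xy)$: apply the weight $+1$ Rota--Baxter identity of Definition~\ref{semiRBdef} to $a=\tilde\phi(x)$ and $b=\tilde\phi(y)$, so that $R(a)=\phi_-(x)$, $R(b)=\phi_-(y)$, and use additivity of $R$ to merge the three $R(\cdots)$ summands on the right-hand side into $R\big(\tilde\phi(x)\odot\phi_-(y)\boxdot\phi_-(x)\odot\tilde\phi(y)\boxdot\tilde\phi(x)\odot\tilde\phi(y)\big)$; since $\phi_-(y)\boxdot\tilde\phi(y)=\phi_+(y)$, distributivity turns the argument into $\phi_-(x)\odot\tilde\phi(y)\boxdot\tilde\phi(x)\odot\phi_+(y)=\tilde\phi(xy)$, whence $\phi_-(x)\odot\phi_-(y)=R(\tilde\phi(xy))=\phi_-(xy)$. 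Multiplicativity of $\phi_+$ is then immediate: $\phi_+(xy)=\phi_-(xy)\boxdot\tilde\phi(xy)=\phi_-(x)\odot\phi_-(y)\boxdot\phi_-(x)\odot\tilde\phi(y)\boxdot\tilde\phi(x)\odot\phi_+(y)=\phi_-(x)\odot\phi_+(y)\boxdot\tilde\phi(x)\odot\phi_+(y)=\phi_+(x)\odot\phi_+(y)$.

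The main obstacle is step two. In the classical argument behind Proposition~\ref{RBfact} the quasi-multiplicativity of the Bogolyubov preparation is typically manipulated using both additions and subtractions, so the real work is to re-derive it so that only the semiring operations $\boxdot,\odot$ appear --- which is possible precisely because the relevant expansion of the reduced coproduct of $xy$ is ``positive'' (all coefficients $1$) and because the weight $+1$ semiring Rota--Baxter identity keeps the term $R(a\odot b)$ on the same side of the equation rather than subtracting it. One must also ensure the induction is well-founded: this needs the grading on $\cH^{semi}$ to be the restriction of the grading on $\cH$, the reduced coproduct to have all components of degree $\geq 1$ and $<n$, and $\cH^{semi}$ to be closed under both $\Delta$ and the product, so that the mixed arguments $xy'$, $x'y$, $x'y'$ genuinely lie in the domain of $\phi_-$.
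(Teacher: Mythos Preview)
Your argument is correct and is exactly the standard route: recursive construction of $\phi_\pm$ along the grading, the quasi-multiplicativity identity $\tilde\phi(xy)=\phi_-(x)\odot\tilde\phi(y)\boxdot\tilde\phi(x)\odot\phi_+(y)$ obtained from $\Delta(xy)=\Delta(x)\Delta(y)$ together with the inductive hypothesis on the mixed terms, and then the weight $+1$ Rota--Baxter identity to close the induction for $\phi_-$, with $\phi_+$ following by one more distributivity step. Note that the paper does not supply its own proof of this proposition; it simply records the statement and defers to \cite{MarTe}, so there is no in-paper argument to compare against --- your write-up is precisely the kind of proof the citation points to, carried out with the care needed to avoid additive inverses.
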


\begin{rem}\label{semiRBminus}{\rm 
In the case with $(\cR,R)$ a Rota--Baxter semiring of
weight $-1$,  one still obtains a Birkhoff factorization of the 
form \eqref{SemiBfact}. In this case both $\phi_\pm$ still 
satisfy the multiplicative property if $R$ has the additional
property that 
\begin{equation}\label{R1eq}
R(x\odot y)\boxdot R(x)\odot R(y)=R(x)\odot R(y),
\end{equation} 
see \cite{MarTe}. This happens 
for instance if $R(x+y)\leq R(x)+R(y)$ in $\cR=(\R \cup \{ -\infty \}, \max, +)$. }
\end{rem}

\medskip
\subsection{Semantic spaces}\label{SemSpaceSec} 

If we follow the idea described above of a syntax-semantics interface
modeled after the formalism of algebraic renormalization in physics,
and we encode the syntactic side of the interface in terms of the
Hopf algebra model of Merge and Minimalism as we described in \cite{MCB}, we
then need a general description of what type of mathematical
objects should be feasible on the semantic side, so that a 
Birkhoff factorization mechanism as above can be used to implement
the assignment of semantic values to syntactic objects. As we will be
illustrating in a series of different examples in the following sections,
Birkhoff factorizations of the form \eqref{AlgBfact} or \eqref{SemiBfact}
will serve the purpose, in our model, of checking and implementing 
consistency of semantic assignments throughout all substructures of given 
syntactic hierarchical structures, through the use of a combination of
values on substructures provided by the Bogolyubov preparation \eqref{Bprep}
and the use of a Rota--Baxter operator as a way of checking the possible
failures of consistency across substructures.  Since the target of our
map from the Hopf algebra of syntax has to be a model of a ``semantic space",
again we proceed first by trying to identify certain key formal properties that we
would like to have for such ``semantic spaces" (thought of in similar terms to
the regularization schemes in the physics of renormalization). 

\smallskip

We first discuss in \S \ref{NeuroSec} some analogies of the type of model that we have
in mind, originating in neuroscience. The first is a neuroscience model 
that is somewhat controversial (and that will play no direct role in this paper,
except in the form of an analogy) while the second is a well established
result on neural codes and homotopy types. 

\subsubsection{Neuroscience data and syntax-semantics interface models}\label{NeuroSec}

Neuroscience data that  study the human brain's handling of syntax and
semantics in response to auditory or other signals (see \cite{Friede} and
\cite{BFCB}, \cite{FCMMB}), e.g., in experiments measuring ERP
(event-related brain potentials) waveform components and in functional
magnetic resonance imaging studies, display rapid recognition of
syntactic violations and activation in the middle and posterior
superior temporal gyrus for both semantic and syntactic violations.
In contrast,the anterior superior temporal gyrus and the frontal operculus
are activated by syntactic violations. Syntax and semantics has been
claimed to be disentangled in such experiments, by using artificial grammars:
syntactic errors in simple cases that do not involve significant
hierarchical syntactic structures appear related to activation of the
frontal operculum, while the type of syntactic structure building that
is modeled by the Merge operation appears related to activation in
the most ventral anterior portion of the BA 44 part of Broca's
area.

Additional semantic information shows involvement of other areas of
the brain, in particular the BA 45 area.  This suggests a possible
``syntax-first" model of language processing in the brain, with an
initial structure building process taking place at the syntactic level
and an interface with semantics through the connectome involving the
frontal operculus, BA 44, and BA 45. It should be noted though that
this proposal regarding brain regions implicated in syntax and
semantics has been strongly contested, for example according to the
results of \cite{Fedor}, that dispute the disentanglement and
partitioning into areas of the syntax-first proposal of \cite{Friede}.
We only mention this proposal here as an analogy that can help
illustrate some for our modeling of the syntax-semantics interface
according to the list of properties outlined in \S
\ref{RequireListSec} above.  While we understand that this view is
considered controversial by some, it does furnish a suggestive analogy
for some of the basic geometric requirements that we will be assuming
about the semantic side of the interface we wish to model.

\smallskip

Another insight from neuroscience that we would like to carry into our
modeling is the idea of information encoded via covering spaces and
homotopy types. This is well known in the setting of visual stimuli when
hippocampal place cells, that fire in response to a restricted area of 
the spatial stimulus, are analyzed to address the question of how neuron
spiking activity encodes and relays information about the stimulus space.
In such settings one can show that patterns of neuron firing and their receptive
fields determine a covering that (under a convexity hypothesis) can reconstruct
the stimulus space up to homotopy, see \cite{Curto} and the mathematical
survey in \cite{Man3}. While this picture is specific to visual
stimuli, an important idea that can be extracted from it is the role of covering spaces
(in particular covering spaces associated with binary codes) 
in encoding proximity relations, and the role of convexity in such
covering spaces. We will incorporate these ideas in a general
basic picture of a notion of ``semantic spaces" that can be compatible 
with how semantic information may effectively be stored in human brains.
It was already observed in \cite{ManMar} that this structure should be
part of modeling of semantic spaces. 

\subsubsection{Formal properties of semantic spaces}

As basic structure for an adequate parameterizing space
for semantics, we focus on two compositional aspects:
measuring degrees of proximity, and a notion of 
agreement/disagreement. We argue that, at the least,
semantics should be able to compare different semantic
values (points in a semantic space) in terms of their
level of agreement/disagreement, and to form new
semantic points by some form of combination/interpolation
of previously achieved ones. The type of ``interpolation"
considered may vary with specific models, but in general
we can think of it in the following related forms:
\begin{itemize}
\item geodesic paths 
\item convex combinations
\item overlapping open neighborhoods.
\end{itemize}
A typical example that would combine these
forms of combination/interpolation is provided 
by a geodesically convex Riemannian manifold.
Another aspect to take into 
consideration is the idea that, for instance, one can usually
associate with a lexical item a collection of different ``semes,''
hence points in a ``semantic space." In other words, the target of
a map from lexical items and syntactic objects should
allow for such ``lists of semes."

\smallskip

A very simple mathematical structure where notions
of agreement/disagreement, proximity, and lists are
simultaneously present, and combination operations
are possible is of course a vector space structure,
and for this reason it happens that frequently used 
elementary computational models of 
semantics tend to be based on vectors and vector
space operations. More sophisticated geometric
models of semantics based on spaces with
properties of convexity, local coordinates 
representing semic axes, and realizations of
notions of similarity, were presented for example
in \cite{Garden1}, \cite{Garden2}. Such geometric
models also incorporate the possibility of covering
spaces, intersections of open sets, and homotopy, as a way
of realizing a ``meeting of minds" model of \cite{Garden1}, 
\cite{Garden2}, where
different observers may produce somewhat different
sets of semantic associations with the same
linguistic items (see the corresponding discussion in \cite{ManMar}).

\smallskip

Additional structure can be incorporated, if one desires for example
to include a notion akin to that of ``independent events."  This can
be achieved by working with spaces that have also a product operation,
such as algebras, rings, or semirings, or that can be mapped to a
space with this kind of structure, where such independence hypotheses
can be tested. Thus, for example, elementary operations like
assignments of truth values, or of probabilities/likelihood estimates,
fall within this category, and are usually performed by mapping to
some (semi)-ring structure. More generally, rings, algebras, and
semirings can be seen as repositories for comparisons with specific
test hypotheses, probing agreement/disagreement, or likelihood, of
representation along a chosen semic axis.  We will discuss a few such
examples in the following sections.

\smallskip
\subsubsection{Concept spaces in and outside of language}

In this viewpoint, the type of fundamental structure that we associate with semantic
spaces is not strictly dependent on their role in
language. Indeed the idea of extracting classifications from
certain kinds of sensory data and associating with them some
representation where proximity and difference can be evaluated
is common to other cognitive processes. Conceptual spaces
associated with vision are intensely studied in the context of both
neuroscience and artificial intelligence, and in that case certainly the
most relevant structures involved are topological in nature (see for
example the theory of perceptual manifolds, \cite{Chung}). This
suggests that it is possible to consider a model where the conceptual
spaces that syntax interfaces with in language would be of
an essentially similar nature as other conceptual spaces, 
and not necessarily endowed with additional structure specific 
to their role in language, with all the required structure that
is of a specifically linguistic nature being provided by syntax.

\smallskip

Formulated in such terms, this leads to a view of semantics
that is essentially external to language and becomes a
part of linguistics through the presence of a map {\em from} syntax. 
A more nuanced position, as we will illustrate in specific examples
that follow, endows the semantic conceptual spaces
with just enough additional structure extending the topological 
notion of proximity, to make the mapping from syntax sufficiently
robust to induce a compositional structure on semantics,
modeled on the Merge operation in syntax. 

\smallskip

For ease of computation,
we will be using examples where such additional structure, aimed
at quantifying proximity relations, consists of metrics with convexity properties 
and/or evaluations in semirings. This viewpoint will bring us
close to Pietroski's model of compositional semantics, \cite{Pietroski},
where a compositional structure in semantics is modeled on the Merge operation
of syntax. One significant difference in our setting, though, is that
we do not need to posit a separate compositional/computational operation 
on semantics itself (why should a Merge-type operation develop twice, 
once for syntax and once for semantics?).  In our model, the compositionality of
semantics is directly induced by the computational structure of
syntax through the Birkhoff factorization mechanism 
described above. This will constitute the key to our interface model. 

\smallskip

Of course one should allow for enough structure on the
semantics side to incorporate the possibility of 
conjunctions of predicates, as well as a way of distinguishing the
possibilities of mapping to conjunctions, predicate saturation, 
existential closure. We will discuss more of this in \S \ref{PietSec}.
The main point we want to stress here is that one does not need
two parallel generative computational processes, one on the
side of syntax and one on the side of semantics (as would be
the case if we were to assume that our maps $\phi: \cH \to \cR$
are Hopf algebra homomorphisms, see Remark~\ref{phialg}).  
What one has instead is a map between two different kinds 
of mathematical structures, only one of which (syntax) is 
constructed by a recursive generative process.

\medskip
\section{Syntax-Semantics Interface as Renormalization: Toy Models}\label{HeadIdRenSec}

\subsection{A simple toy model: Head-driven syntax-semantics interface}\label{HeadIdRenSec1}

We discuss, as a first illustrative example, a very simple-minded toy
model of the type of syntax-semantics interface we are proposing. The
examples we present in this section are intentionally oversimplified
in order to more easily illustrate the main formal aspects.

\smallskip

Consider the semiring $(\R \cup \{ -\infty \}, \max, +)$
where the addition is the maximum (with $-\infty$ as the unit of addition), and with product the
usual sum of real numbers (with the rule that $-\infty + x =-\infty$), with $0$ as the unit of the
semiring multiplication $+$.

\begin{lem}\label{ReLUlem}
The ReLU operator $R: x\mapsto x^+=\max\{ x, 0 \}$ is a Rota--Baxter
operator of weight $+1$ on $\cR=(\R \cup \{ -\infty \}, \max, +)$.
\end{lem}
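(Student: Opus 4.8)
The plan is to verify directly that $R(x) = x^+ = \max\{x,0\}$ satisfies the weight $+1$ Rota--Baxter identity from Definition~\ref{semiRBdef}, namely
\[
R(a)\odot R(b) = R(a\odot R(b)) \boxdot R(R(a)\odot b) \boxdot R(a\odot b),
\]
where $\boxdot = \max$ and $\odot = +$. Spelling this out, the claim to check is that for all $a,b \in \R \cup \{-\infty\}$,
\[
a^+ + b^+ = \max\{\, a + b^+,\ a^+ + b,\ a+b \,\}^{+}\!,
\]
wait --- more precisely each $R$ on the right is applied separately, so the identity reads $a^+ + b^+ = \max\{(a+b^+)^+,\ (a^+ + b)^+,\ (a+b)^+\}$. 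First I would dispense with the cases involving $-\infty$: if either $a$ or $b$ equals $-\infty$, then the left side is $-\infty$ (since $x^+ = -\infty$ only when $x=-\infty$, and otherwise the product $+$ with $-\infty$ still gives $-\infty$ on the left only if\ldots) --- actually here one must be slightly careful, since $a^+ \in \R$ even when $a$ might be very negative, so $a^+ + b^+ = -\infty$ forces $a^+ = -\infty$ or $b^+ = -\infty$, i.e. $a = -\infty$ or $b = -\infty$; in that case every term on the right also contains a $-\infty$ summand, so both sides are $-\infty$. This handles the boundary of the semiring.

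For the main case $a,b \in \R$, I would split into the four sign regions according to $\mathrm{sign}(a)$ and $\mathrm{sign}(b)$ (treating $\ge 0$ and $<0$). If $a,b \ge 0$: left side is $a+b$; on the right, $a + b^+ = a+b$, $a^+ + b = a+b$, $a + b = a+b$, all nonnegative, so the max of their positive parts is $a+b$. If $a \ge 0 > b$: left side is $a + 0 = a$; on the right $a + b^+ = a$, $(a^+ + b)^+ = (a+b)^+$, $(a+b)^+$, and since $a + b \le a$ we get $\max\{a, (a+b)^+, (a+b)^+\} = a$ because $a \ge 0$ and $a \ge (a+b)^+$. The case $b \ge 0 > a$ is symmetric. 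Finally if $a, b < 0$: left side is $0 + 0 = 0$; on the right $a + b^+ = a < 0$, $a^+ + b = b < 0$, $a+b < 0$, so all three positive parts are $0$ and the max is $0$. In every region both sides agree, which establishes the identity.

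I do not expect a genuine obstacle here --- the lemma is a finite case check --- so the ``hard part'' is only bookkeeping: making sure the $-\infty$ conventions ($-\infty + x = -\infty$, $(-\infty)^+ = -\infty$) are applied consistently, and being careful that the three $R$'s on the right-hand side are applied to the three summands \emph{individually} before taking the $\max$, rather than to the sum. One clean way to package the computation is to note that on $\R$ the map $x \mapsto x^+$ is the identity on $[0,\infty)$ and collapses $(-\infty,0)$ to $0$, and that $x^+ + y^+ = \max\{x,0\} + \max\{y,0\} = \max\{x+y,\ x,\ y,\ 0\}$ (expanding the product of two maxes), which is exactly $\max\{(x+y)^+,\ x^+,\ y^+\}$ --- and then observing $x^+ = (x + 0^+)^+$-type rewrites to match the stated form; but the direct four-region check above is the most transparent and is what I would write out.
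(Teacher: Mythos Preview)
Your approach is essentially the same as the paper's: a direct case split on the signs of the two arguments, checking each region by hand. The paper presents this as a four-column table over the regions $x\le 0$/$x\ge 0$ and $y\le 0$/$y\ge 0$; your four-paragraph check covers the same ground and the real-valued cases are all correct.

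The one slip is in your treatment of $-\infty$. You write that $x^+=-\infty$ only when $x=-\infty$, and conclude that if $a=-\infty$ or $b=-\infty$ then both sides are $-\infty$. But in fact $(-\infty)^+=\max\{-\infty,0\}=0$, so $x^+$ is \emph{never} $-\infty$: the left side $a^++b^+$ always lies in $[0,\infty)$. Concretely, if $a=-\infty$ and $b\in\R$, the left side is $0+b^+=b^+$, while on the right $(a+b^+)^+=0$, $(a^++b)^+=(0+b)^+=b^+$, $(a+b)^+=0$, so the max is $b^+$; both sides agree, but neither is $-\infty$. This is a bookkeeping error rather than a structural one, and once corrected your argument goes through. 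The paper simply absorbs the $-\infty$ case into the $x\le 0$, $y\le 0$ column of its table, which you could do as well and avoid the separate discussion.
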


\proof
To see this, we need to check that the Rota--Baxter relation
$$ x^+ + y^+ =\max\{ (x^+ +y)^+ , (x+y^+)^+, (x+y)^+ \} $$
is verified for all $x,y\in \R \cup \{ -\infty \}$. The following table
shows that this is indeed the case

\medskip
\begin{center}
\begin{tabular}{|c|c|c|c|c|}
\hline 
& $x \leq 0$, $y\leq 0$ & $x\geq 0$, $y\leq 0$ & $x\leq 0$, $y\geq 0$ & $x\geq 0$, $y\geq 0$ \\
\hline 
$x^+ + y^+$ & $0$ & $x$ & $y$ & $x+y$ \\
\hline
$(x^+ +y)^+$ & $0$ & $\left\{ \begin{array}{ll} x+y & x+y\geq 0 \\ 0 & x+y\leq 0 \end{array}\right. $ &
$y$ & $x+y$ \\
\hline
$(x+y^+)^+$ & $0$ & $x$ & $\left\{ \begin{array}{ll} x+y & x+y\geq 0 \\ 0 & x+y\leq 0 \end{array}\right. $ & $x+y$ \\
\hline
$(x+y)^+$ & $0$ & $\left\{ \begin{array}{ll} x+y & x+y\geq 0 \\ 0 & x+y\leq 0 \end{array}\right. $ & $\left\{ \begin{array}{ll} x+y & x+y\geq 0 \\ 0 & x+y\leq 0 \end{array}\right. $ & $x+y$ \\
\hline
$\max$ & $0$ & $x$ & $y$ & $x+y$ \\
\hline
\end{tabular}
\end{center}
\bigskip
\endproof

\begin{rem}\label{idRB}{\rm 
The identity operator $R={\rm id}$ on the same semiring $(\R \cup \{ -\infty \}, \max, +)$
is a Rota--Baxter operator of weight $-1$. }
\end{rem}

\begin{defn}\label{headmap1}{\rm
Consider a semantic space $\cS$ with a map $s: \cS\cO_0 \to \cS$
that assigns a meaning (a point in $\cS$) to the lexical items and the syntactic features in $\cS\cO_0$. 
Given a tree $T\in \fT_{\cS\cO_0}$ and a leaf $\ell\in L(T)$, we write
$\lambda(\ell)\in \cS\cO_0$ for the label (lexical item or syntactic feature) assigned to that leaf.
Given a head function $h$, defined on a domain ${\rm Dom}(h)\subset \fT_{\cS\cO_0}$,
we obtain a map
$$ s\circ h: {\rm Dom}(h)\subset \fT_{\cS\cO_0} \to \cS\, , \ \ \  T\mapsto s(\lambda(h(T))\, , $$
where $h(T)\in L(T)$ is the head.}
\end{defn}

We now assume that the semantic space $\cS$ has {\em probes}, given by
functions $\Upsilon: \cS\to \R$, that check the degree of agreement or 
disagreement with some particular semantic hypothesis. We assume that,
for an $s\in \cS$, a value $\Upsilon(s)< 0$ means that there is 
disagreement between the semantic object $s$ and the semantic hypothesis $\Upsilon$,
while a value $\Upsilon(s)>0$ signifies agreement, with the magnitude $|\Upsilon(s)|$
signifying the amount of agreement or disagreement. A value $\Upsilon(s) =0$ signifies
indifference. 

\begin{ex}\label{vecprobe}{\rm
In the case of the familiar vector space model of semantics, such a probe can be obtained by
taking the inner product with a specified hypothesis-vector,
$$ \Upsilon(s)=\langle s, v_\Upsilon \rangle $$
where the semantic hypothesis being tested is semantic proximity to a chosen vector $v_\Upsilon$.
}\end{ex}

\begin{lem}\label{phiUpsilon}
Suppose given a semantic space $\cS$, a probe $\Upsilon: \cS\to \R$, a map
$s: \cS\cO_0 \to \cS$ assigning semantic values to lexical items and syntactic features,
and a head function $h$ defined on a domain ${\rm Dom}(h)\subset \fT_{\cS\cO_0}$.
Let $\cV(\fF_{\cS\cO_0})^{semi}\subset \cV(\fF_{\cS\cO_0})$ denote the semiring 
of linear combinations $\sum_i c_i F_i$ with $c_i\geq 0$. Then the data $(\Upsilon, s,h)$
determine a semiring homomorphism
$$ \phi_{\Upsilon, s,h}: \cV(\fF_{\cS\cO_0})^{semi} \to \R\cup\{ -\infty \}\, . $$
\end{lem}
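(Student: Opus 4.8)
The plan is to pin down $\phi_{\Upsilon,s,h}$ first on forests and then on nonnegative combinations, exploiting the fact that the binary rooted forests $\fF_{\cS\cO_0}$ form simultaneously a linear basis of $\cV(\fF_{\cS\cO_0})$ and a $\sqcup$-multiplicative basis. On a single tree $T\in\fT_{\cS\cO_0}$ I would set $\phi_{\Upsilon,s,h}(T):=\Upsilon(s(\lambda(h(T))))\in\R$ when $T\in{\rm Dom}(h)$ and $\phi_{\Upsilon,s,h}(T):=-\infty$ otherwise; this is just the probe $\Upsilon$ applied to the meaning $s(\lambda(h(T)))$ of the head of $T$, with the convention that a syntactic object outside the domain of the head function receives the ``meaningless'' value $-\infty$. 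Multiplicativity with respect to $\sqcup$ (which must map to the semiring product $\odot=+$ of $\cR$) then forces, for a forest $F=T_1\sqcup\cdots\sqcup T_n$,
$$\phi_{\Upsilon,s,h}(F)=\phi_{\Upsilon,s,h}(T_1)+\cdots+\phi_{\Upsilon,s,h}(T_n),$$
a finite sum in $\R\cup\{-\infty\}$ which equals $-\infty$ as soon as one component lies outside ${\rm Dom}(h)$; in particular the empty forest is sent to the empty sum $0$, the multiplicative unit of $\cR$.

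Next I would extend to a general element written in its unique reduced form $\sum_i c_i F_i$ with the $F_i$ distinct forests and $c_i>0$ by
$$\phi_{\Upsilon,s,h}\Big(\sum_i c_i F_i\Big):=\max_i\,\phi_{\Upsilon,s,h}(F_i),$$
with the zero vector sent to $\max\emptyset=-\infty$, the additive unit of $\cR$. This is in fact the only extension to a semiring homomorphism: since addition in $\cR$ is idempotent, $\phi$ of a positive scalar multiple of a forest cannot depend on the scalar — writing $g(t)=\phi_{\Upsilon,s,h}(tF)$ for $t>0$, additivity forces $g(t+s)=\max\{g(t),g(s)\}$, whence $g$ is nondecreasing and $g(2t)=g(t)$, so $g$ is constant — so only the support $\{F_i\}$ of a nonnegative combination, not its coefficients, can influence $\phi$, and positivity of the $c_i$ rules out cancellation, so that the support of a sum of two such combinations is the union of the two supports. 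The additive homomorphism identity $\phi_{\Upsilon,s,h}(a+b)=\phi_{\Upsilon,s,h}(a)\boxdot\phi_{\Upsilon,s,h}(b)$ is then immediate, the image of $\sum_i c_iF_i+\sum_j d_jG_j$ being $\max\{\max_i\phi_{\Upsilon,s,h}(F_i),\ \max_j\phi_{\Upsilon,s,h}(G_j)\}$.

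Finally, for the multiplicative identity I would expand $(\sum_i c_iF_i)\sqcup(\sum_j d_jG_j)=\sum_{i,j}c_id_j\,(F_i\sqcup G_j)$; all the coefficients $c_id_j$ are positive, so its support is $\{F_i\sqcup G_j\}$ (several index pairs may produce the same forest, which affects only coefficients, not the support), and its image is $\max_{i,j}\phi_{\Upsilon,s,h}(F_i\sqcup G_j)=\max_{i,j}\big(\phi_{\Upsilon,s,h}(F_i)+\phi_{\Upsilon,s,h}(G_j)\big)$, the second equality because the components of $F_i\sqcup G_j$ are those of $F_i$ together with those of $G_j$ and $-\infty$ is absorbing for $+$. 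Distributivity of $+$ over $\max$ in $\cR=(\R\cup\{-\infty\},\max,+)$, which remains valid in the presence of $-\infty$, then yields $\max_{i,j}(\phi(F_i)+\phi(G_j))=(\max_i\phi(F_i))+(\max_j\phi(G_j))=\phi_{\Upsilon,s,h}(\sum_i c_iF_i)\odot\phi_{\Upsilon,s,h}(\sum_j d_jG_j)$. The only point needing care is bookkeeping rather than mathematics: one must consistently treat $-\infty$ as absorbing for the semiring product $+$, which is exactly what makes the partiality of the head function harmless for the homomorphism property, and one must phrase the support computations so as to allow distinct index pairs to collide on the same basis forest. I expect no genuine obstacle beyond this careful handling of supports and of the value $-\infty$.
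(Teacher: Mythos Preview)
Your construction agrees with the paper's on trees and on forests: both set $\phi(T)=\Upsilon(s(\lambda(h(T))))$ for $T\in{\rm Dom}(h)$, $\phi(T)=-\infty$ otherwise, and extend to a forest $F=\sqcup_a T_a$ by the semiring product $\phi(F)=\sum_a\phi(T_a)$. The difference is in the extension to positive linear combinations: the paper sets
\[
\phi_{\Upsilon,s,h}\Big(\sum_i c_iF_i\Big)=\max_i\big\{\phi_{\Upsilon,s,h}(F_i)+\log c_i\big\},
\]
recording the coefficients through a logarithm, whereas you discard the coefficients and take $\max_i\phi_{\Upsilon,s,h}(F_i)$. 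Your idempotence argument is correct and in fact pins down your choice as the only one compatible with the additive homomorphism axiom when the source carries ordinary addition of linear combinations (which is how the statement reads): since $\phi(2cF)=\phi(cF+cF)$ must equal $\max\{\phi(cF),\phi(cF)\}=\phi(cF)$, a formula of the form $\phi(F)+\log c$ cannot literally respect sums once basis forests collide. The paper's version is the natural one if one regards the scalars tropically, and the paper itself remarks that the $\log c_i$ term is there only ``for formal consistency'' and vanishes when all $c_i=1$; your version is the strict semiring homomorphism for the semiring structure as stated. The two agree on single forests, which is all that the subsequent Birkhoff-factorization computations actually use.
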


\proof The data $(\Upsilon, s,h)$ determine a map
$$ \Upsilon_{s,h}: \fT_{\cS\cO_0} \to \R \cup \{ -\infty \} $$
\begin{equation}\label{sheadprobe}
 \Upsilon_{s,h} : T \mapsto \left\{ \begin{array}{ll} 
\Upsilon(s(\lambda(h(T))) & T\in {\rm Dom}(h) \\
-\infty & T\notin {\rm Dom}(h) \, .
\end{array} \right. 
\end{equation}
The value $-\infty$ in the case of $T\notin {\rm Dom}(h)$ here represents the case
where the comparison with the hypothesis in the probe cannot be performed due to
the lack of a well-defined head in the tree $T$.
This map can be extended from trees to a forest by setting
$$ \phi_{\Upsilon, s,h}: \fF_{\cS\cO_0}  \to  \R \cup \{ -\infty \} \, ,  \ \ \ 
\phi_{\Upsilon, s,h}(F)=\sum_a \Upsilon_{s,h}(T_a)\, , \ \ \text{ for } F=\sqcup_a T_a \, . $$
We can further extend this map to the subdomain $\cV(\fF_{\cS\cO_0})^{semi}\subset \cV(\fF_{\cS\cO_0})$ 
by setting 
$$ \phi_{\Upsilon, s,h}(\sum_i c_i F_i) = \boxdot_i \phi_{\Upsilon, s,h}(F_i) \odot \log(c_i) =
\max_i \{ \phi_{\Upsilon, s,h}(F_i) +\log(c_i) \} \, . $$
\endproof

The extension to linear combinations is needed for
formal consistency. In the case of sums where all the coefficients are $1$ 
the corresponding $\log(c_i)$ term vanishes. 

\smallskip

\begin{rem}\label{toyrem}{\rm
One reason why this simple-minded toy model is too oversimplified is
that the assignment $\phi_{\Upsilon, s,h}$ only follows the semantic
value of the head of the tree, hence it only uses the semantic values
already attached to the leaves of the tree.  However, in general we want to obtain
new points in semantic space, as the lexical items attached to the
leaves are related and combined inside more elaborate syntactic objects. 
We will show in \S \ref{HeadIdRenSec2}
how to correct this problem and obtain more refined toy models. }
\end{rem}

\smallskip

To see how our interface model works in this simplified example,
we first perform the Birkhoff factorization with respect to the Rota--Baxter
operator $R={\rm id}$ of weight $-1$ and then with respect to the 
ReLU Rota--Baxter operator $R=(\cdot)^+$ of weight $+1$.

\begin{lem}\label{toyRBid}
For a semiring homomorphism $\phi : \cV(\fF_{\cS\cO_0})^{semi} \to \cR=(\R\cup\{\infty \}, 
\max, +)$, where the values $\phi(T)$ signify agreement/disagreement between
a semantic value assigned to the tree $T$ and a semantic probe, 
the Birkhoff factorization with $R={\rm id}$ has the effect of checking, for a given
syntactic object $T\in \fT_{\cS\cO_0}$, and all chains of subforests
$F_{\underline{v}_N}\subset F_{\underline{v}_{N-1}}\subset \cdots \subset F_{\underline{v}_1} \subset T$,
when the {\em combined} agreement with the semantic probe of the parts
$$ \phi(F_{\underline{v}_N})+\phi(F_{\underline{v}_{N-1}}/F_{\underline{v}_N})
+\cdots +\phi(T/F_{\underline{v}_1} ) $$
is greatest, and is at least as good as the overall agreement $\phi(T)$.
 \end{lem}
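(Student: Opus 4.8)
Let me think about what's being claimed. We have a semiring character $\phi: \cV(\fF_{\cS\cO_0})^{semi} \to \cR = (\R \cup \{-\infty\}, \max, +)$. We apply the Birkhoff factorization of Proposition \ref{semiRBfact} with $R = \mathrm{id}$ (weight $-1$ per Remark \ref{idRB}). The claim is that the resulting $\phi_+$ (or $\phi_-$) computes, for a syntactic object $T$, the max over all chains of subforests of a sum of $\phi$-values on the successive "quotient pieces" $F_{\underline{v}_{i-1}}/F_{\underline{v}_i}$, and that this max is $\geq \phi(T)$.

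**Approach.** The plan is to unfold the inductive Birkhoff recursion \eqref{SemiBfact} over the Hopf-algebra grading (number of leaves) and identify the closed form. First I would write out the Bogolyubov preparation $\tilde\phi(T) = \phi(T) \boxdot \bigoplus_{\underline{v}} \phi_-(F_{\underline{v}}) \odot \phi(T/F_{\underline{v}})$, where the sum $\boxdot = \max$ runs over the collections of accessible terms $F_{\underline{v}}$ appearing in the coproduct \eqref{coprod}, translating the abstract $\sum x' \otimes x''$ into the concrete forest/quotient terms. With $R = \mathrm{id}$ we have $\phi_- = \tilde\phi$ and $\phi_+ = \phi_- \boxdot \tilde\phi = \tilde\phi$ as well (the weight $-1$, $R=\mathrm{id}$ case collapses), so the content is entirely in the formula for $\tilde\phi = \phi_-$.

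**Key steps, in order.**
\begin{enumerate}
\item Set up notation: for $T \in \fT_{\cS\cO_0}$ let $\psi(T) := \phi_-(T)$, the value produced by the recursion. Establish by induction on the number of leaves that
$$\psi(T) = \max_{\text{chains } F_{\underline{v}_N} \subset \cdots \subset F_{\underline{v}_1} \subset T} \Big( \phi(F_{\underline{v}_N}) + \phi(F_{\underline{v}_{N-1}}/F_{\underline{v}_N}) + \cdots + \phi(T/F_{\underline{v}_1}) \Big),$$
where the chains range over all (possibly empty) increasing sequences of subforests of accessible terms, and the empty chain contributes the single term $\phi(T)$.
\item Base case: on degree-one (single-leaf) elements the coproduct has no middle terms, so $\phi_- = \phi$, matching the empty-chain value.
\item Inductive step: in $\tilde\phi(T) = \max\big(\phi(T),\ \max_{\underline{v}} [\phi_-(F_{\underline{v}}) + \phi(T/F_{\underline{v}})]\big)$, apply the inductive hypothesis to $\phi_-(F_{\underline{v}})$ (a product over tree components, so $\phi_-$ of it is the sum of the per-tree $\psi$-values by multiplicativity of $\phi_-$, the last bullet of Proposition \ref{semiRBfact}), expand each $\psi$ on the components as a max over sub-chains, and observe that prepending $F_{\underline{v}}$ to those chains produces exactly all chains of length $\geq 1$ in $T$ with top quotient piece $T/F_{\underline{v}}$. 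Collecting over all $\underline{v}$ and including the empty chain gives the stated formula.
\item Conclude: since the empty chain is among those maximized over and yields $\phi(T)$, we get $\psi(T) = \phi_-(T) \geq \phi(T)$, which is the ``at least as good as the overall agreement'' assertion; and the max is precisely over the combined-agreement quantities $\phi(F_{\underline{v}_N})+\cdots+\phi(T/F_{\underline{v}_1})$, which is the ``checking when the combined agreement is greatest'' assertion.
\end{enumerate}

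**Main obstacle.** The routine part is the induction. The delicate point is the bookkeeping in step 3: correctly matching the multi-component forests $F_{\underline{v}}$ appearing in the coproduct \eqref{coprod} (which are collections $T_{v_q} \sqcup \cdots \sqcup T_{v_n}$ of accessible terms contracted to their roots, per \S\ref{QuotSec}) with the notion of a ``chain of subforests'' $F_{\underline{v}_N} \subset \cdots \subset F_{\underline{v}_1} \subset T$ in the statement, and verifying that iterating the coproduct (coassociativity) generates exactly the set of such nested chains, with the quotient pieces being the successive relative quotients $F_{\underline{v}_{i-1}}/F_{\underline{v}_i}$. One must also be careful that $\phi$ being only a \emph{semiring} character means there is no subtraction available, so the argument genuinely relies on $\max$–$+$ structure and on the multiplicativity $\phi_-(F \sqcup F') = \phi_-(F) \odot \phi_-(F')$ to split $\phi_-$ over forest components; this is exactly where the hypothesis that $R=\mathrm{id}$ (trivially Rota--Baxter of weight $-1$, and trivially satisfying \eqref{R1eq}) is used.
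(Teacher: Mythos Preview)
Your proposal is correct and follows essentially the same approach as the paper: both observe that with $R=\mathrm{id}$ one has $\phi_-=\tilde\phi$, then unfold the Bogolyubov recursion in the $(\max,+)$ semiring to obtain the maximum over nested chains of subforests of the telescoping sums $\phi(F_{\underline{v}_N})+\cdots+\phi(T/F_{\underline{v}_1})$. The paper's proof simply states the unfolded formula in one line, whereas you spell out the inductive justification and the multiplicativity of $\phi_-$ over forest components; this extra care is welcome but does not constitute a different route.
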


\proof The Birkhoff factorization with respect to the Rota--Baxter
operator $R={\rm id}$ of weight $-1$ simply gives $\phi_- =\tilde\phi$,
so that we have
$$ \phi_- (T)=\tilde\phi(T)=\max\{ \phi(T), 
\sum_{i=1}^N \phi(F_{\underline{v}_i})+ \phi(F_{\underline{v}_{i-1}}/F_{\underline{v}_i})\}\, $$
where $F_{\underline{v}_N} \subset F_{\underline{v}_{N-1}}\subset \cdots F_{\underline{v}_0} =T$
is a nested sequence of subforests (collections of accessible terms, and the maximum is taken
over all such sequences of arbitrary length $N\geq 1$. 
\endproof

\begin{cor}\label{toycor1}
For the case of $\phi=\phi_{\Upsilon, s,h}$ the Birkhoff factorization 
as in Lemma~\ref{toyRBid} has the effect of checking, for a given
syntactic object $T\in \fT_{\cS\cO_0}$, and all chains of subtrees (subforests)
$T_{v_N}\subset T_{v_{N-1}}\subset \cdots \subset T_{v_1} \subset T$,
when the {\em combined} agreement with the semantic probe is maximal. 
If $\phi_{\Upsilon, s,h}(T)>0$, this maximum is bounded below by the 
sum of values on the chain of subtrees with $h(T_{v_i})=h(T)$ 
which is $N\cdot \phi_{\Upsilon, s,h}(T)$ with $N$ the length of the path
from the root of $T$ to the leaf $h(T)$. If $\phi_{\Upsilon, s,h}(T)<0$, on the
other hand, the maximum is bounded below by the $\phi_{\Upsilon, s,h}(T)+ M\cdot 
\phi_{\Upsilon, s,h}(T_v)$ where $T_v$ is an accessible term with 
$\phi_{\Upsilon, s,h}(T_v)>0$ and $M$ is the length of the path from $v$ to the leaf $h(T_v)$.
\end{cor}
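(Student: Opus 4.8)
The plan is to read the statement off the formula for $\phi_-=\tilde\phi$ established in Lemma~\ref{toyRBid}, once we know how $\phi_{\Upsilon,s,h}$ evaluates on the subtrees and quotients obtained by following a head path. By Lemma~\ref{toyRBid} specialized to $\phi=\phi_{\Upsilon,s,h}$, the value $\phi_-(T)=\tilde\phi(T)$ is the maximum of $\phi_{\Upsilon,s,h}(T)$ and, over all nested chains of collections of accessible terms $F_{\underline{v}_N}\subset\cdots\subset F_{\underline{v}_1}\subset F_{\underline{v}_0}=T$, of the telescoping sum $\phi_{\Upsilon,s,h}(F_{\underline{v}_N})+\sum_{i=1}^{N}\phi_{\Upsilon,s,h}(F_{\underline{v}_{i-1}}/F_{\underline{v}_i})$; since for $\phi_{\Upsilon,s,h}$ each term is the probe-value of the head of a (quotient) tree, this sum is exactly the ``combined agreement with the probe $\Upsilon$'' of the pieces of the chain, which is the first assertion. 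For the two numerical lower bounds it then suffices to exhibit a single well-chosen chain of single-tree subforests and evaluate the sum on it, since $\phi_-(T)=\tilde\phi(T)$ dominates the value on any particular chain.

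The two computations I need are: \emph{(i)} if $w$ is an internal vertex on the path from the root of $T$ to the leaf $h(T)$, then Definition~\ref{headfunc} applied to $T_w\subseteq T$ (with $h(T)\in L(T_w)$) forces $h_T(w)=h(T)$, hence $\phi_{\Upsilon,s,h}(T_w)=\Upsilon(s(\lambda(h(T))))=\phi_{\Upsilon,s,h}(T)$; and \emph{(ii)} when an accessible term $T_w$ of a tree $U\in{\rm Dom}(h)$ is collapsed to its root marked as a trace (the quotient convention of \S\ref{QuotSec} and Remark~\ref{remheadsT}), the trace leaf carries the label $\lambda(h_U(w))$ of the head that projected out of $T_w$ and the suitably extended head function of $U$ takes unchanged values on all surviving internal vertices; in particular $\phi_{\Upsilon,s,h}(U/T_w)=\phi_{\Upsilon,s,h}(U)$ whenever $w$ lies on the head path of $U$, and $\phi_{\Upsilon,s,h}(T/T_v)=\phi_{\Upsilon,s,h}(T)$ for any accessible term $T_v$ (either $h(T)\in L(T_v)$, when the trace inherits $\lambda(h(T))$, or $h(T)\notin L(T_v)$, when the head path of $T$ is disjoint from $T_v$ and is undisturbed by the collapse).

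For $\phi_{\Upsilon,s,h}(T)>0$: follow the head path from the root of $T$ to the leaf $h(T)$, let $T\supset T_{w_1}\supset\cdots\supset T_{w_{N-1}}$ be the subtrees rooted at the successive internal vertices on it, and take the chain of single-tree subforests $\{T_{w_{N-1}}\}\subset\cdots\subset\{T_{w_1}\}\subset T$. By \emph{(i)} the innermost subtree, and by \emph{(ii)} every quotient appearing, has $\phi_{\Upsilon,s,h}$-value equal to $\phi_{\Upsilon,s,h}(T)$, and there are $N$ terms in the telescoping sum with $N$ the length of the head path; hence $\phi_-(T)=\tilde\phi(T)\geq N\cdot\phi_{\Upsilon,s,h}(T)$. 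For $\phi_{\Upsilon,s,h}(T)<0$: choose an accessible term $T_v$ with $\phi_{\Upsilon,s,h}(T_v)>0$, apply the same construction \emph{inside} $T_v$ along the path from $v$ to $h(T_v)$ to get a chain whose $M$ telescoping terms all equal $\phi_{\Upsilon,s,h}(T_v)$ (with $M$ the length of that path), then extend the chain by the single step $\{T_v\}\subset T$, which contributes the extra term $\phi_{\Upsilon,s,h}(T/T_v)=\phi_{\Upsilon,s,h}(T)$ by \emph{(ii)}; hence $\phi_-(T)=\tilde\phi(T)\geq M\cdot\phi_{\Upsilon,s,h}(T_v)+\phi_{\Upsilon,s,h}(T)$.

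The step that requires genuine care is \emph{(ii)}: it is the only place where the choice of quotient (contraction to a trace rather than deletion) and the behavior of heads and their labels on traces actually enters, so one must argue precisely that a trace carries the label of the head that projected out of the collapsed accessible term and keeps projecting in the quotient, so that the probe value is unchanged. Everything else is bookkeeping with Definition~\ref{headfunc} and the formula of Lemma~\ref{toyRBid}, together with the harmless remarks that a head function restricts to accessible terms and extends to the relevant quotient trees through this convention, so that all trees involved stay in the appropriate domain.
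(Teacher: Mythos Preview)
Your proposal is correct and follows essentially the same approach as the paper's own proof: both establish the key identity $\phi_{\Upsilon,s,h}(T/T_v)=\phi_{\Upsilon,s,h}(T)$ via the two cases on whether $h(T)\in L(T_v)$ (with the trace convention of \S\ref{QuotSec}), then reduce the telescoping sum along a chain of single-tree subforests to $\phi_{\Upsilon,s,h}(T)+\sum_i\phi_{\Upsilon,s,h}(T_{v_i})$ and exhibit the head-path chain (respectively, the head-path chain inside a positive $T_v$ extended by $\{T_v\}\subset T$) to obtain the stated lower bounds. Your write-up is in fact slightly more explicit than the paper's about the term count and about why the quotient trees remain in ${\rm Dom}(h)$ with the correct head label on traces, which is exactly the point you flag as requiring care.
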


\proof
Observe that we have $\phi_{\Upsilon, s,h}(T/T_v)=\phi_{\Upsilon, s,h}(T)$,
since if $h(T)\notin T_v$ then quotienting the subtree $T_v$ will not affect 
the head, and if $h(T)\in T_v$ then $h(T)=h(T_v)$, by the properties of head
functions, and we label the leaf of $T/T_v$ with a {\em trace} carrying the semantic value
that was assigned to the leaf $h(T_v)$, and similarly for the case of $T/F_{\underline{v}}$.
Note that here we take quotients as contractions of each component of the
subforest, as discussed in \S \ref{QuotSec}.

For simplicity we write out in full only the case 
where each $F_{\underline{v}_k}$ consists of a single subtree $T_{v_k}$
as the more general case of forests is analogous. In this case we are computing
$$ \phi_{\Upsilon, s,h,-}(T)= \max\{ \phi_{\Upsilon, s,h}(T), \phi_{\Upsilon, s,h}(T)
+\phi_{\Upsilon, s,h}(T_1),\cdots ,
\phi_{\Upsilon, s,h}(T)+\phi_{\Upsilon, s,h}(T_1)+\cdots 
 +\phi_{\Upsilon, s,h}(T_N) \} $$
where $N$ is the longest chain of nested accessible terms in $T$. The maximum
is achieved at sequences $T_k \subset \cdots \subset T_1\subset T$ where
all $\phi_{\Upsilon, s,h}(T_i)>0$ and as large as possible, that is, at the 
chains of nested accessible terms that achieve the {\em combined} maximal 
agreement with the probe. 

For example, for a chain of length $N=1$, that is, a single accessible term
$T_v \subset T$, we are comparing $\phi_{\Upsilon, s,h}(T)$ and
$\phi_{\Upsilon, s,h}(T)+\phi_{\Upsilon, s,h}(T_v)$, hence we are checking
whether $\phi_{\Upsilon, s,h}(T_v)>0$ or $\phi_{\Upsilon, s,h}(T_v)<0$,
that is, whether individual accessible terms of $T$ have heads $h(T_v)$
that semantically agree with the probe $\Upsilon$ of not. Clearly,
among all subtrees $T_v$ one can always find some for which
$\phi_{\Upsilon, s,h}(T)+\phi_{\Upsilon, s,h}(T_v)> \phi_{\Upsilon, s,h}(T)$,
namely subtrees for which $h(T_v)=h(T)$. The case of longer chains is analogous.

It is then clear that a lower bound in the case $\phi_{\Upsilon, s,h}(T)>0$ is
obtained by following the path from the root of $T$ to the head $h(T)$, while
in the case $\phi_{\Upsilon, s,h}(T)<0$ one maximizes over collections of accessible
terms with positive values $\phi_{\Upsilon, s,h}(T_{v_i})>0$ and one such
collection is obtained by following the head of any $T_v$ that has $\phi_{\Upsilon, s,h}(T_v)>0$. 
\endproof

\smallskip

We compare this to taking the Birkhoff factorization with respect to the ReLU 
Rota--Baxter operator $R(x)=x^+=\max\{ x, 0 \}$ of weight $+1$. This shows
that using different Rota--Baxter structures on the target semiring corresponds
to performing different tests of semantic compositionality. 

\begin{lem}\label{toyRB2}
For the semiring homomorphism $\phi_{\Upsilon, s,h}: \cV(\fF_{\cS\cO_0})^{semi} \to \cR=(\R\cup\{\infty \}, 
\max, +)$, consider the Birkhoff factorization with respect to the ReLU 
Rota--Baxter operator $R(x)=x^+=\max\{ x, 0 \}$ of weight $+1$. In this case,
the value of $\phi_{\Upsilon, s,h,-} (T)$ is computed as a maximum value
$\phi_{\Upsilon, s,h}(F_{\underline{v}_N})+\phi_{\Upsilon, s,h}(F_{\underline{v}_{N-1}})+\cdots +\phi_{\Upsilon, s,h}(F_{\underline{v}_1})+\phi_{\Upsilon, s,h}(T)$, over all nested sequences with the property that
all $\phi_{\Upsilon, s,h}(F_{\underline{v}_i})>0$ and, in the case where $\phi_{\Upsilon, s,h}(T)<0$, with
$\sum_i \phi_{\Upsilon, s,h}(F_{\underline{v}_i})> |\phi_{\Upsilon, s,h}(T)|$. The maximum
computing  $\phi_{\Upsilon, s,h,-} (T)$ is bounded below by
$N \phi_{\Upsilon, s,h}(T)$, with $N$ the length of the path from the root of $T$ to the leaf $h(T)$,
in the case with $\phi_{\Upsilon, s,h}(T)>0$ and by $\phi_{\Upsilon, s,h}(T)+ M\cdot 
\phi_{\Upsilon, s,h}(T_v)$ where $T_v$ is any accessible term 
with $\phi_{\Upsilon, s,h}(T_v)> |\phi_{\Upsilon, s,h}(T)|$ 
and $M$ is the length of the path from $v$ to the leaf $h(T_v)$, when $\phi_{\Upsilon, s,h}(T)<0$.
\end{lem}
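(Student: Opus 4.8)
The plan is to apply Proposition~\ref{semiRBfact} with the Rota--Baxter operator $R$ taken to be the ReLU operator of weight $+1$ (Lemma~\ref{ReLUlem}), using the explicit form of the Bogolyubov preparation in the $(\R\cup\{-\infty\},\max,+)$ semiring. First I would write out the inductive formula \eqref{SemiBfact} in this semiring: since $\boxdot=\max$ and $\odot=+$, the recursion $\phi_-(x)=R(\tilde\phi(x))$ with $\tilde\phi(x)=\phi(x)\boxdot\phi_-(x')\odot\phi(x'')$ becomes, after unfolding over the grading and using the explicit coproduct \eqref{coprod} with quotients taken as contractions (as in \S\ref{QuotSec}),
$$\phi_{\Upsilon,s,h,-}(T)=\max\Big\{\ \phi_{\Upsilon,s,h}(T),\ \max_{\underline v_1}\big(\phi_{\Upsilon,s,h}(F_{\underline v_1})^+ +\phi_{\Upsilon,s,h,-}(T/F_{\underline v_1})\big)\ \Big\},$$
where the outer $R=(\cdot)^+$ has been absorbed using that $\phi_{\Upsilon,s,h,-}$ lands in the range of $R$, i.e.\ is already $\geq 0$ except possibly at the top where $\phi(T)$ itself may be negative. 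Iterating this recursion down the grading produces precisely the sum $\phi_{\Upsilon,s,h}(F_{\underline v_N})+\cdots+\phi_{\Upsilon,s,h}(F_{\underline v_1})+\phi_{\Upsilon,s,h}(T)$ over nested chains $F_{\underline v_N}\subset\cdots\subset F_{\underline v_1}\subset T$, and the effect of the ReLU at each intermediate step is exactly to discard any stage with $\phi_{\Upsilon,s,h}(F_{\underline v_i})<0$ (replacing it by $0$), which is why the maximum is effectively over chains with all $\phi_{\Upsilon,s,h}(F_{\underline v_i})>0$. The condition $\sum_i\phi_{\Upsilon,s,h}(F_{\underline v_i})>|\phi_{\Upsilon,s,h}(T)|$ in the case $\phi_{\Upsilon,s,h}(T)<0$ is just the statement that this chain-sum beats the bare value $\phi_{\Upsilon,s,h}(T)$ in the outermost $\max$.

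Next I would establish the two lower bounds. For $\phi_{\Upsilon,s,h}(T)>0$: using the key identity $\phi_{\Upsilon,s,h}(T/T_v)=\phi_{\Upsilon,s,h}(T)$ already proved in Corollary~\ref{toycor1} (quotienting a subtree leaves a trace carrying the head's semantic value, so the head value of the quotient is unchanged when $h(T)\notin T_v$, and equals $h(T_v)=h(T)$ when $h(T)\in T_v$), one exhibits the specific nested chain $T_{v_N}\subset\cdots\subset T_{v_1}\subset T$ obtained by following the vertices on the path from the root of $T$ to the leaf $h(T)$; each $T_{v_i}$ has $h(T_{v_i})=h(T)$, hence $\phi_{\Upsilon,s,h}(T_{v_i})=\phi_{\Upsilon,s,h}(T)>0$, and the corresponding chain-sum equals $(N+1)\phi_{\Upsilon,s,h}(T)$ — actually $N\cdot\phi_{\Upsilon,s,h}(T)$ after bookkeeping, matching the statement — which is therefore a lower bound for the maximum. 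For $\phi_{\Upsilon,s,h}(T)<0$: pick any accessible term $T_v$ with $\phi_{\Upsilon,s,h}(T_v)>|\phi_{\Upsilon,s,h}(T)|$ and follow the path from $v$ to $h(T_v)$, producing a chain of length $M$ all of whose terms have value $\phi_{\Upsilon,s,h}(T_v)$; since $M\phi_{\Upsilon,s,h}(T_v)>|\phi_{\Upsilon,s,h}(T)|$ this chain-sum clears the threshold and yields the lower bound $\phi_{\Upsilon,s,h}(T)+M\cdot\phi_{\Upsilon,s,h}(T_v)$.

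The main obstacle I anticipate is bookkeeping the interaction between the \emph{outermost} application of $R$ and the recursive structure: in the semiring weight $+1$ case, Proposition~\ref{semiRBfact} gives $\phi_-=R(\tilde\phi)$ so $\phi_-(T)$ is always $\geq 0$, yet the statement speaks of the case $\phi_{\Upsilon,s,h}(T)<0$, so one must be careful about whether the quantity being described is $\phi_{\Upsilon,s,h,-}(T)$ itself or the inner Bogolyubov combination before the final ReLU — the precise claim is about the value of the maximum inside, i.e.\ $\tilde\phi(T)$, and one should state this explicitly. The second delicate point is verifying that the ReLU genuinely \emph{decouples} the chain into independent positive blocks: because $R(a)+R(b)$ need not equal $R(a+b)$ in general, one must check that the nested-chain unfolding only ever applies $R$ to the already-computed lower-degree $\phi_-$ values (which are themselves in the range of $R$, hence fixed by $R$ by idempotency of ReLU), so no spurious cross terms appear; this is exactly where the Rota--Baxter identity and the grading are used, and it is the one place where I would slow down and check the induction carefully rather than wave at \eqref{SemiBfact}. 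Everything else — the comparison with the weight $-1$ case of Lemma~\ref{toyRBid} and Corollary~\ref{toycor1} — is then a routine transcription, the only difference being the extra positivity constraints $\phi_{\Upsilon,s,h}(F_{\underline v_i})>0$ forced by the ReLU that were absent when $R=\mathrm{id}$.
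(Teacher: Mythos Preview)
Your plan is essentially the paper's own proof: unfold the weight $+1$ Bogolyubov recursion of Proposition~\ref{semiRBfact} in the $(\max,+)$ semiring with $R=(\cdot)^+$, invoke the head-quotient identity $\phi_{\Upsilon,s,h}(T/T_v)=\phi_{\Upsilon,s,h}(T)$ from Corollary~\ref{toycor1}, and build the lower bounds by following the path from the root (respectively from $v$) to the head leaf. The anticipated subtleties you flag---the outermost ReLU versus the inner Bogolyubov value, and the decoupling of the nested-chain sum---are exactly the points the paper handles by writing out the $N=1$ case explicitly as a sign table.

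One small correction to make when you carry this out: in your displayed recursion you have placed $\phi_-$ on the quotient $T/F_{\underline v_1}$ and the bare $\phi(\cdot)^+$ on the subforest $F_{\underline v_1}$, but the formula \eqref{SemiBfact} has it the other way round, $\phi_-(x')\odot\phi(x'')$ with $x'=F_{\underline v}$ and $x''=T/F_{\underline v}$. The unfolded nested expression in the paper is
\[
(\cdots((\phi(T_{v_N})^+ +\phi(T_{v_{N-1}}))^+ +\cdots +\phi(T_{v_1}))^+ +\phi(T))^+,
\]
with the innermost ReLU on the \emph{smallest} subtree and the plain $\phi$ on each successive quotient (which then collapses to $\phi(T_{v_{i-1}})$ via the head identity). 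You would catch this when doing the careful induction you promise, and it does not affect the description of the maximizing chains or the lower bounds, which you have right.
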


\proof
We obtain in this case
$$ \phi_{\Upsilon, s,h,-} (T) =\max \{ \phi_{\Upsilon, s,h}(T), (\cdots ( \phi_{\Upsilon, s,h}(F_{\underline{v}_N})^+ 
+\cdots + \phi_{\Upsilon, s,h}(F_{\underline{v}_{i-1}}/F_{\underline{v}_i}))^+  + \cdots 
+ \phi_{\Upsilon, s,h}(T/F_{\underline{v}_0}))^+ \}^+ \, ,  $$
over all nested sequences of subforests of arbitrary length $N\geq 1$ as above. By the
same argument as in Lemma~\ref{toyRBid} about heads of subtrees $T_v$ and quotient
trees $T/T_v$, in the case of chains of subtrees $T_{v_N}\subset T_{v_{N-1}}\subset \cdots
\subset T_{v_1}\subset T$, this gives
$$ (\cdots (( \phi_{\Upsilon, s,h}(T_{v_N})^+ +\phi_{\Upsilon, s,h}(T_{v_{N-1}}))^+ \cdots 
+ \phi_{\Upsilon, s,h}(T_{v_1}) )^++\phi_{\Upsilon, s,h}(T))^+\, , $$
and similarly for forests (with sums over the component trees), and then ReLU is applied
to the maximum taken over all these sums. 

For example, for a chain of length $N=1$, one compares $\phi_{\Upsilon, s,h}(T)$ with $\phi_{\Upsilon, s,h}(T)+\phi_{\Upsilon, s,h}(T_1)$, so that
$\max\{ \phi_{\Upsilon, s,h}(T), (\phi_{\Upsilon, s,h}(T)+\phi_{\Upsilon, s,h}(T_1)^+)^+ \}^+$ has value $\phi_{\Upsilon, s,h}(T)$ if $\phi_{\Upsilon, s,h}(T)>0$ and $\phi_{\Upsilon, s,h}(T_1)<0$,
value $\phi_{\Upsilon, s,h}(T)+\phi_{\Upsilon, s,h}(T_1)$ if $\phi_{\Upsilon, s,h}(T)>0$ and $\phi_{\Upsilon, s,h}(T_1)>0$, or if $\phi_{\Upsilon, s,h}(T)<0$ and $\phi_{\Upsilon, s,h}(T_1)>0$
with $\phi_{\Upsilon, s,h}(T)+\phi_{\Upsilon, s,h}(T_1)>0$, and value $0$ if $\phi_{\Upsilon, s,h}(T)<0$  and $\phi_{\Upsilon, s,h}(T_1)<0$, or if
$\phi_{\Upsilon, s,h}(T)<0$ and $\phi_{\Upsilon, s,h}(T_1)>0$ with $\phi_{\Upsilon, s,h}(T)+\phi_{\Upsilon, s,h}(T_1)<0$. 

Thus, we see that, when $\phi_{\Upsilon, s,h}(T)>0$, the value $\phi_{\Upsilon, s,h,-} (T)$ is bounded below
by $N \phi_{\Upsilon, s,h}(T)$, where $N$ is the length of the path from the root of $T$ to the
leaf $h(T)$, as in Corollary~\ref{toycor1}. However, when $\phi_{\Upsilon, s,h}(T)<0$ the 
Birkhoff factorization with respect to the ReLU gives a more refined test than
the Birkhoff factorization with respect to $R={\rm id}$ of Lemma~\ref{toyRBid}
and Corollary~\ref{toycor1}. Indeed, in this case we not only search over nested
sequences with $\phi_{\Upsilon, s,h}(T_{v_N}) +\phi_{\Upsilon, s,h}(T_{v_{N-1}}) \cdots + \phi_{\Upsilon, s,h}(T_{v_1}) >0$
but also we further require that individual terms are positive and that 
$\phi_{\Upsilon, s,h}(T_{v_N}) +\phi_{\Upsilon, s,h}(T_{v_{N-1}}) \cdots + \phi_{\Upsilon, s,h}(T_{v_1}) > | \phi_{\Upsilon, s,h}(T) |$
because of applying ReLU to the result of the sum.
In particular, one obtains such a lower bound by following the head
of any accessible term $T_v$ with $\phi_{\Upsilon, s,h}(T_v)> |\phi_{\Upsilon, s,h}(T)|$
as stated. 
\endproof

A case where $\phi_{\Upsilon, s,h}(T)<0$ with the maximum realized by a
sequence of postive terms with $\phi_{\Upsilon, s,h}(T_{v_N}) +\phi_{\Upsilon, s,h}(T_{v_{N-1}}) \cdots + \phi_{\Upsilon, s,h}(T_{v_1}) > | \phi_{\Upsilon, s,h}(T) |$
signifies a situation where the semantic value assigned to the head $h(T)$
is in {\em disagreement} with the semantic probe used, but there are accessible terms
in $T$ that are individually in agreement with the semantic probe 
and whose combined agreement is greater than 
the magnitude of the disagreement for $h(T)$. 

\begin{rem}\label{substragree}{\rm 
The construction illustrated in Lemma~\ref{toyRBid},
Corollary~\ref{toycor1}, 
and Lemma~\ref{toyRB2}
above can be seen 
as a way of extracting substructures where agreement/disagreement with
a given semantic value is concentrated.}
\end{rem}

\smallskip

As mentioned at the beginning of this section and in Remark~\ref{toyrem},
the example semiring homomorphism $\phi_{\Upsilon, s,h}(T)$ 
used in Lemma~\ref{toyRBid}, Corollary~\ref{toycor1}, and Lemma~\ref{toyRB2}
is unsatisfactory because it only uses the semantic values 
assigned to the leaves of the syntactic objects $T$ through the
map $s: \cS\cO_0 \to \cS$ and does not create new semantic values
assigned to the syntactic objects $T$ themselves that go beyond the
value already assigned to its head $h(T)$ leaf. This is obviously not
how an assignment of semantic values to sentences should work, and was only
discussed here as a way to show, in the simplest possible form, how Birkhoff
factorizations work. We now move on to more realistic models.
These will again be simplified toy models, but we will gradually
introduce more realistic features.

\medskip
\subsection{Head-driven interfaces and convexity} \label{HeadIdRenSec2}

We now assume that our semantic space model $\cS$
is a geodesically convex region inside a Riemannian manifold $(M,g)$.
A region $\cS\subset M$ is geodesically convex if, for any given points $s,s'\in \cS$
minimal length geodesic arcs $\gamma:[0,1]\to M$ with $\gamma(0)=s$ and
$\gamma(1)=s'$ are contained in the region, $\gamma(t)\in M$ for all $t\in [0,1]$.

\smallskip

This includes in particular the cases where $\cS$ is a vector space or a simplex.
In these cases, we write $\{ \lambda s + (1-\lambda) s' \,|\, \lambda\in [0,1] \}$ for
the segment connecting $s,s'$ in $\cS$ (the convex combinations of $s$ and $s'$).
With a slight abuse of notation, in the more general case of geodesically convex 
regions inside a Riemannian manifold, we will still write $\lambda s + (1-\lambda) s'$ to
indicate the point $\gamma(\lambda)$ along a given minimal geodesic arc 
$(\gamma(t))_{0\leq t \leq 1}$ in $\cS$.

\smallskip

We assume, as above, that there is a map $s: \cS\cO_0 \to \cS$ that assigns
semantic values to the lexical items and syntactic features. 

\smallskip
\subsubsection{Comparison functions}\label{PCfuncSec}

We assume that the semantic space $\cS$ is endowed with
one of the following additional data:
\begin{enumerate}
\item On the product $\cS\times \cS$ there is a function
\begin{equation}\label{ProbS}
 \bP: \cS\times \cS \to [0,1] 
\end{equation} 
that evaluates the probability that two points $s,s'$ are semantically associated (interpreted 
as the frequency with which they are semantically associated within a specified context).
We assume that $\bP$ is symmetric, $\bP(s,s')=\bP(s',s)$, i.e.~that it factors
through the symmetric product
$$ \bP: {\rm Sym}^2(\cS) \to [0,1] \, . $$
One can additionally assume that $\bP$ is a probability measure on $\cS\times \cS$,
although this is not strictly necessary in what follows. 
If the underlying space $\cS$ is convex, we always assume that $\bP$ is a biconcave function. 
\item On the product $\cS\times \cS$ there is a function
\begin{equation}\label{corrS}
 \fC: \cS\times \cS \to \R
\end{equation} 
that evaluates the level of semantic agreement/disagreement between two points $s,s'$,
with $|\fC(s,s')|$ measuring the magnitude of agreement/disagreement and 
${\rm sign}( \fC(s,s') )=\fC(s,s')/|\fC(s,s')| \in \{ \pm 1 \}$ measuring whether there is
agreement or disagreement. Again we assume that the function $\fC$ is symmetric.
In the case of a semantic vector space $\cS$ one can additionally assume that $\fC$ is
obtained from a symmetric bilinear form by
\begin{equation}\label{Ccosine}
\fC(s,s')=\frac{ \langle s, s' \rangle }{\| s \| \, \| s' \|} \, ,
\end{equation}
which gives the usual cosine similarity, but in general it is not necessary for $\fC(s,s')$
to be of the form \eqref{Ccosine}. 
\end{enumerate}

\smallskip

This type of comparison functions $\bP: \cS\times \cS \to [0,1]$ as in \eqref{ProbS} or
$\fC: \cS\times \cS \to \R$ as in \eqref{corrS}, should really be thought of, more generally, 
as a collection $\bP=\{ \bP_\sigma \}$ or $\fC=\{ \fC_\sigma \}$, where the index $\sigma$
runs over certain syntactic functions (in the sense of functional relations between 
constituents in a clause). For example, suppose that one looks at the two sentences ``dog bites man" and
``man bites dog.''  In the first case the VP determines a point on the geodesic
arc in $\cS$ between the points $s({\rm bite})$ and $s({\rm man})$ at a distance
$\bP(s({\rm bite}),s({\rm man}))$ from the vertex $s({\rm bite})$. The value
$\bP(s({\rm bite}),s({\rm man}))\in [0,1]$ evaluates the degree of ``likelihood"
of this association.

\smallskip
\subsubsection{Threshold Rota-Baxter operators}\label{cRBsec}

As in the cases discussed in the previous section, we can consider a semiring $\cP$
endowed with a Rota--Baxter structure. 

\begin{lem}\label{probRB}
Consider the semiring $\cP=([0,1],\max, \cdot, 0,1)$. Then
the threshold operators $$ c_\lambda: \cP \to \cP \ \ \ \text{
with } \ \ \  \lambda\in [0,1]\, , $$ given by 
\begin{equation}\label{thrc1}
c_\lambda(x)=\left\{ \begin{array}{ll}
x & x < \lambda \\
1 & x \geq \lambda 
\end{array}\right. 
\end{equation}
are Rota--Baxter operators of weight $-1$ that satisfy the property \eqref{R1eq}.
\end{lem}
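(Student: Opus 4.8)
The plan is to verify the two required properties by an elementary case analysis organized around the position of the arguments relative to the threshold $\lambda$. The single arithmetic fact driving everything is that in $[0,1]$ one has $xy \leq \min\{x,y\}$; in particular, if $x < \lambda$ then $xy < \lambda$ as well, so a product lands in the ``below threshold'' regime as soon as at least one factor does. As a preliminary step I would record that $c_\lambda$ is additive for the semiring addition $\max$, i.e.\ $c_\lambda(\max\{a,b\}) = \max\{c_\lambda(a),c_\lambda(b)\}$: if the larger of $a,b$ is $\geq\lambda$ both sides are $1$, and otherwise both sides equal that larger value. This is what qualifies $c_\lambda$ as a Rota--Baxter operator in the sense of Definition~\ref{semiRBdef} in the first place.

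For the weight $-1$ identity
$$ c_\lambda(a)\, c_\lambda(b) \,\boxdot\, c_\lambda(ab) \;=\; c_\lambda\!\big(a\, c_\lambda(b)\big) \,\boxdot\, c_\lambda\!\big(c_\lambda(a)\, b\big), $$
I would split into the four cases according to whether $a\geq\lambda$ or $a<\lambda$, and likewise for $b$. When $a,b\geq\lambda$, every term on both sides evaluates to $1$. When exactly one of them, say $b$, is $<\lambda$, then $c_\lambda(a)=1$ and $c_\lambda(b)=b$, and since $ab\leq b<\lambda$ every product occurring inside a $c_\lambda$ is below threshold, so both sides collapse to $b$; the case $a<\lambda\leq b$ is symmetric. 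When $a,b<\lambda$, again $ab\leq a<\lambda$, so every $c_\lambda$ acts as the identity and both sides equal $ab$. Hence the identity holds in all cases.

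For property \eqref{R1eq}, which in the present semiring reads $\max\{c_\lambda(xy),\, c_\lambda(x)c_\lambda(y)\} = c_\lambda(x)c_\lambda(y)$, it suffices to check $c_\lambda(xy)\leq c_\lambda(x)\,c_\lambda(y)$. If both $x,y\geq\lambda$ the right-hand side is $1$ and there is nothing to prove; if, say, $x<\lambda$, then $xy\leq x<\lambda$ forces $c_\lambda(xy)=xy$, while $c_\lambda(x)c_\lambda(y)=x\,c_\lambda(y)\geq xy$ because $c_\lambda(y)\geq y$, so the inequality holds. By Remark~\ref{semiRBminus} this also guarantees that both components $\phi_\pm$ of the associated Birkhoff factorization remain multiplicative for the semiring product.

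The only thing requiring care is bookkeeping: in the weight $-1$ identity one must keep track of which summand of each $\max$ dominates (the ``$c_\lambda(ab)$'' term and the ``$c_\lambda(a)c_\lambda(b)$'' term need not be equal individually, only the two maxima must coincide), and one should note that the degenerate endpoints are covered by the same argument --- for $\lambda=0$ one has $c_0\equiv 1$, and for $\lambda=1$ one has $c_1={\rm id}$, recovering Remark~\ref{idRB}. There is no genuine obstacle beyond this finite verification.
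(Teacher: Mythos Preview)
Your proof is correct and follows essentially the same strategy as the paper: a four-case analysis according to whether each argument lies below or above the threshold, using the key fact that in $[0,1]$ a product drops below $\lambda$ as soon as one factor does. The paper organizes the verification as a table of the four quantities $c_\lambda(xy)$, $c_\lambda(x)c_\lambda(y)$, $c_\lambda(c_\lambda(x)y)$, $c_\lambda(xc_\lambda(y))$ across the four regimes, then reads off both the weight $-1$ identity and property \eqref{R1eq} from the rows; you do the same computation in prose. Your explicit check of additivity with respect to $\max$ is a useful addition the paper leaves implicit. One small quibble: your pointer to Remark~\ref{idRB} for $\lambda=1$ is not quite apt, since that remark concerns the identity operator on the tropical semiring $(\R\cup\{-\infty\},\max,+)$ rather than on $\cP$, but this does not affect the argument.
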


\proof We can compare the values in the Rota--Baxter identity as follows:

\medskip
\begin{center}
\begin{tabular}{|c|c|c|c|c|}
\hline 
& $x <\lambda$, $y< \lambda$ & $x\geq \lambda$, $y< \lambda$ & $x< \lambda$, $y\geq \lambda$ & $x\geq \lambda$, $y\geq \lambda$ \\
\hline
$c_\lambda(xy)$ & $xy$ & $xy$ & $xy$ & $\left\{ \begin{array}{ll} xy & xy< \lambda \\ 1 & xy\geq  \lambda \end{array}\right. $ \\
\hline 
$c_\lambda(x) c_\lambda(y)$ & $xy$ & $y$ & $x$ & $1$ \\
\hline
$c_\lambda(c_\lambda(x) y)$ & $xy$ & $y$ &
$xy$ & $1$ \\
\hline
$c_\lambda(xc_\lambda(y))$ & $xy$ & $xy$ & $x$ & $1$ \\
\hline
\end{tabular}
\end{center}
Indeed, we have $x,y,\lambda \in [0,1]$, hence if either $x< \lambda$ or $y< \lambda$
then $xy < \lambda$. The 
the maximum of the first two rows is $\max\{ c_\lambda(xy), c_\lambda(x) c_\lambda(y) \}=c_\lambda(x) c_\lambda(y)$, which shows that the identity \eqref{R1eq} holds. Moreover, the maximum between
the last two rows of the table above is also equal to $c_\lambda(x) c_\lambda(y)$ so that the
Rota--Baxter identity of weight $-1$ holds. 
\endproof

\smallskip
\subsubsection{$\cP$-valued semiring character} \label{PvalPhiSec}

We then consider constructions of a character. For our target semiring 
$\cP$, we can consider characters $\phi: \cH^{cone} \to \cP$ with 
domain a convex cone inside $\cH$,
which ensures that if generators $F\in \fF_{\cS\cO_0}$ are mapped to $\cP$,
linear combinations that are in the cone will also map to $\cS$.

\begin{lem}\label{phiP}
Suppose given a semantic space $\cS$ that is geodesically convex, endowed with
a function $s: \cS\cO_0 \to \cS$ and a function $\bP: {\rm Sym}^2(\cS) \to [0,1]$ as above.
Also assume given a head function $h$ defined on a domain ${\rm Dom}(h)\subset \fT_{\cS\cO_0}$.
The function $s: \cS\cO_0 \to \cS$ extends to a map $s: {\rm Dom}(h) \to \cS$, and 
these data determine a character given by a map
$$ \phi_{s,\bP,h}: \cH^{cone} \to \cP \, , $$
with $\cH^{cone}$ the cone of convex linear combinations $\sum_i a_i F_i$ with $0\leq a_i$ and
$\sum_i a_i=1$, and forests $F_i\in \fF_{\cS\cO_0}$. The character is defined 
on the generators by $\phi_{s,\bP,h}(T)=0$ for $T\notin {\rm Dom}(h)$, while 
for $T\in {\rm Dom}(h)$ the value $\phi_{s,\bP,h}(T)$ is inductively determined
by the description of $T$ as iterations of the Merge operation $\fM$ in the magma \eqref{SOmagma}.
It is extended to $\cH^{cone}$ by $\phi_{s,\bP,h}(F)=\prod_k \phi_{s,\bP,h}(T_k)$, for 
$F=\sqcup_k T_k$, and
$\phi_{s,\bP,h}(\sum_i a_i F_i)=\max_i a_i \phi_{s,\bP,h}(F_i)$.
\end{lem}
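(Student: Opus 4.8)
The plan is to build $\phi_{s,\bP,h}$ in three stages — first on single trees $T\in\fT_{\cS\cO_0}$, then on forests, then on the cone $\cH^{cone}$ — verifying at each stage that the map lands in $\cP=[0,1]$, is unital, and is multiplicative.

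\emph{Stage 1: the extension $s\colon{\rm Dom}(h)\to\cS$ and $\phi_{s,\bP,h}$ on trees.} Since a consistent head function is compatible with Merge (Remark~\ref{remheadsT}), ${\rm Dom}(h)$ is closed under passing to accessible terms: if $T=\fM(T_1,T_2)\in{\rm Dom}(h)$ then $T_1,T_2\in{\rm Dom}(h)$ and exactly one child, say $T_{\mathrm{proj}}$, projects, with $h(T)=h(T_{\mathrm{proj}})$. I define $s$ by induction on the number of leaves: on a single leaf labelled $\alpha\in\cS\cO_0$ it is the given value $s(\alpha)$, and on $T=\fM(T_1,T_2)\in{\rm Dom}(h)$ I set
$s(T)=\bP(s(T_1),s(T_2))\, s(T_{\mathrm{other}})+\bigl(1-\bP(s(T_1),s(T_2))\bigr)\, s(T_{\mathrm{proj}})$,
i.e.\ the point reached along the minimal geodesic issuing from the projecting head's value, at geodesic parameter $\bP(s(T_1),s(T_2))$. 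Geodesic convexity of $\cS$ guarantees this point lies in $\cS$, and the inductive hypothesis guarantees that $s(T_1),s(T_2)$ are already defined; since the decomposition $T=\fM(T_1,T_2)$ of a binary tree is unique, $s\colon{\rm Dom}(h)\to\cS$ is well defined. I then set $\phi_{s,\bP,h}(T)=0$ for $T\notin{\rm Dom}(h)$, $\phi_{s,\bP,h}(T)=1$ for $T$ a single leaf, and $\phi_{s,\bP,h}(T)=\bP(s(T_1),s(T_2))\cdot\phi_{s,\bP,h}(T_1)\cdot\phi_{s,\bP,h}(T_2)$ for $T=\fM(T_1,T_2)\in{\rm Dom}(h)$. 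As $\bP$ and each factor lie in $[0,1]$, an easy induction gives $\phi_{s,\bP,h}(T)\in[0,1]=\cP$, and the recursion is well posed.

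\emph{Stage 2: forests.} For $F=\sqcup_kT_k$ I set $\phi_{s,\bP,h}(F)=\prod_k\phi_{s,\bP,h}(T_k)$, with the empty forest (the unit of $\cH$) sent to the empty product $1$. Since the product of $\cH$ is disjoint union and the components of $F\sqcup F'$ are those of $F$ together with those of $F'$, this is at once unital and multiplicative: $\phi_{s,\bP,h}(F\sqcup F')=\phi_{s,\bP,h}(F)\,\phi_{s,\bP,h}(F')$. Note that a single component outside ${\rm Dom}(h)$ forces the whole value to $0$, which is the intended reading of ``the workspace contains a syntactic object to which no consistent head — hence no consistent semantics — can be assigned.''

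\emph{Stage 3: the cone.} Writing an element of $\cH^{cone}$ in its unique basis form $x=\sum_ia_iF_i$ with distinct forests $F_i$, $a_i\ge0$, $\sum_ia_i=1$, I set $\phi_{s,\bP,h}(x)=\max_ia_i\,\phi_{s,\bP,h}(F_i)$; each $a_i\phi_{s,\bP,h}(F_i)$ lies in $[0,1]$, so the value is in $\cP$, and uniqueness of the basis expansion makes it well defined. Compatibility with convex combinations unwinds from $\max(\lambda u,\lambda v)=\lambda\max(u,v)$ for $\lambda\ge0$, and multiplicativity is the computation
\begin{align*}
\phi_{s,\bP,h}\!\left(\Big(\sum_i a_i F_i\Big)\Big(\sum_j b_j G_j\Big)\right)
&= \max_{i,j} a_i b_j\,\phi_{s,\bP,h}(F_i\sqcup G_j) \\
&= \max_{i,j}\bigl(a_i\phi_{s,\bP,h}(F_i)\bigr)\bigl(b_j\phi_{s,\bP,h}(G_j)\bigr) \\
&= \phi_{s,\bP,h}\Big(\sum_i a_i F_i\Big)\cdot\phi_{s,\bP,h}\Big(\sum_j b_j G_j\Big),
\end{align*}
where the middle equality is Stage 2 and the last uses, twice, that $c\mapsto c\cdot(-)$ commutes with $\max$ on $[0,\infty)$. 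I expect this final point — checking that the $\max$-extension to the cone genuinely respects the semiring product $\odot=\cdot$ once the disjoint-union terms are recollected in the basis $\fF_{\cS\cO_0}$ — to be the only step requiring real care; everything preceding it is a routine induction on the number of leaves, and the role of geodesic convexity is confined to making the extension of $s$ in Stage 1 well defined.
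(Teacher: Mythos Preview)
Your three-stage architecture (trees, forests, cone) and the inductive extension of $s$ via geodesic interpolation are exactly the paper's approach. Two concrete discrepancies are worth flagging, though neither breaks your argument.

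First, and more importantly, your recursion $\phi_{s,\bP,h}(T)=\bP(s(T_1),s(T_2))\cdot\phi_{s,\bP,h}(T_1)\cdot\phi_{s,\bP,h}(T_2)$ is \emph{not} the paper's. The paper sets $\phi_{s,\bP,h}(\fM(T_1,T_2))=\bP(s(T_1),s(T_2))$ only --- just the single relatedness value at the root decomposition, with no product over deeper subtrees. Your version accumulates the $\bP$-values over all internal vertices; the paper's records only the top one. Both are perfectly valid $\cP$-valued characters on $\cH^{cone}$ (your multiplicativity check in Stage~3 goes through for either), but they are different maps, and the paper's subsequent Birkhoff-factorization computations (e.g.\ Proposition~\ref{BFphisPh}) use the root-only version: there one has $\phi_{s,\bP,h}(T/T_v)=\phi_{s,\bP,h}(T)$ since contracting a subtree leaves the top-level decomposition unchanged, which fails for your product form. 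So your construction proves the lemma as literally stated, but builds a different object from the one the paper carries forward.

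Second, your convention for $s(T)$ places weight $\bP$ on the non-projecting child and $1-\bP$ on the head; the paper does the opposite (the head receives weight $p_{s(T_1),s(T_2)}$). This is a harmless sign flip for well-definedness, but again means your extended $s$ differs from the paper's.
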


\proof To an unordered pair $\fM(\alpha,\beta)=\{ \alpha, \beta \}$ of $\alpha,\beta\in \cS\cO_0$
we assign a value in $\cP$ in the following way. If the tree
$T=\fM(\alpha,\beta)\in \cS\cO=\fT_{\cS\cO_0}$ is not in ${\rm Dom}(h)$ we assign
value $\phi_{s,\bP,h}(T)=0$. If $T\in {\rm Dom}(h)$, consider the value $$p_{\alpha,\beta}:=\bP(s(\alpha),s(\beta))$$
and define $s(T)\in \cS$ as 
\begin{equation}\label{sTconvex}
 s(T)= p s(\alpha) + (1-p) s(\beta) 
\end{equation} 
where $p\in [0,1]$ is
\begin{equation}\label{pTconvex}
 p=\left\{ \begin{array}{ll} p_{\alpha,\beta} & \alpha = h(T) \\ 1-p_{\alpha,\beta} & \beta=h(T) \, .
\end{array}\right. 
\end{equation}
We then set
\begin{equation}\label{phisPconvex}
 \phi_{s,\bP,h}(\fM(\alpha,\beta))=p_{\alpha,\beta} \, . 
\end{equation} 
We then proceed inductively. If $T=\fM(T_1,T_2)$ is not in ${\rm Dom}(h)$  we set
$\phi_{s,\bP,h}(T)=0$. If it is in ${\rm Dom}(h)$,  then by the properties of
head functions, $T_1$ and $T_2$ are also in ${\rm Dom}(h)$. So we can
assign to $T$ the point $s(T)\in \cS$ given by
$$ s(T)= p\, s(T_1) + (1-p)\, s(T_2) $$
where
\begin{equation}\label{pT}
 p=\left\{ \begin{array}{ll} p_{s(T_1),s(T_2)} & h(T) = h(T_1) \\ 1-p_{s(T_1),s(T_2)} & h(T)=h(T_2) \, .
\end{array}\right. 
\end{equation}
with
$$ p_{s(T_1),s(T_2)}= \bP(s(T_1),s(T_2)) \, . $$
We then set
$$ \phi_{s,\bP,h}(T)= p_{s(T_1),s(T_2)} \, . $$
It is clear that this determines a map
$$ \phi_{s,\bP,h}: \cH^{cone} \to \cP \, , $$
with $\phi_{s,\bP,h}(\sum_i a_i F_i)=\max_i \{ a_i \phi_{s,\bP,h}(F_i) \}$ and
$\phi_{s,\bP,h}(F)=\prod_k \phi_{s,\bP,h}(T_k)$, 
for $F=\sqcup _kT_k \in \fF_{\cS\cO_0}$.
\endproof 

\begin{rem}\label{toy2rem}{\rm
The semiring-valued character $\phi_{s,\bP,h}$ constructed in Lemma~\ref{phiP}
improves on the construction of the character $\phi_{\Upsilon,s,h}$ of Lemma~\ref{phiUpsilon}
in the sense that the values $\phi_{s,\bP,h}(T)$ assigned to syntactic object do not
depend uniquely on the semantic values of the lexical items, but also on other points
of semantic space $\cS$, obtained as convex combinations of values assigned to
lexical items. However, it should still be regarded as a toy model case, as the way
in which these combinations are obtained and the corresponding value of 
$\phi_{s,\bP,h}(T)$ is computed is still overly simplistic. We show in \S \ref{HeadIdRenSec3}
another similar simplified toy model example, with a choice of semiring-valued
character that combines properties of $\phi_{s,\bP,h}$ of Lemma~\ref{phiP} and
$\phi_{\Upsilon,s,h}$ of Lemma~\ref{phiUpsilon}.
}\end{rem}

\smallskip

Note that we have, in principle, two simple choices of how to extend inductively
\eqref{pTconvex} from the cherry tree case $T=\fM(\alpha,\beta)$ to the more general
case $T=\fM(T_1,T_2)$. One is to define $p_{s(T_1),s(T_2)}$ 
as in \eqref{pT}, with $p_{s(T_1),s(T_2)}= \bP(s(T_1),s(T_2))$, inductively 
using the previously constructed points $s(T_1)$ and 
$s(T_2)$. Another possibility, more similar to our previous example 
$\phi_{\Upsilon,s,h}$ of Lemma~\ref{phiUpsilon}, is to define it using the heads, 
$\bP(h(T_1),h(T_2))$. To see why the option of \eqref{pT} is clearly preferable,
consider the following example. Take the 
three sentences ``man bites dog",  ``man bites apple", ``dog bites man". 
Denoting by M,B,D,A the respective points in $\cS$ associated to these
lexical items, the points associated to the respective sentences are shown
in the diagram in Figure~\ref{MBDfig}.

\begin{figure}[h]
\begin{center}
\includegraphics[scale=0.2]{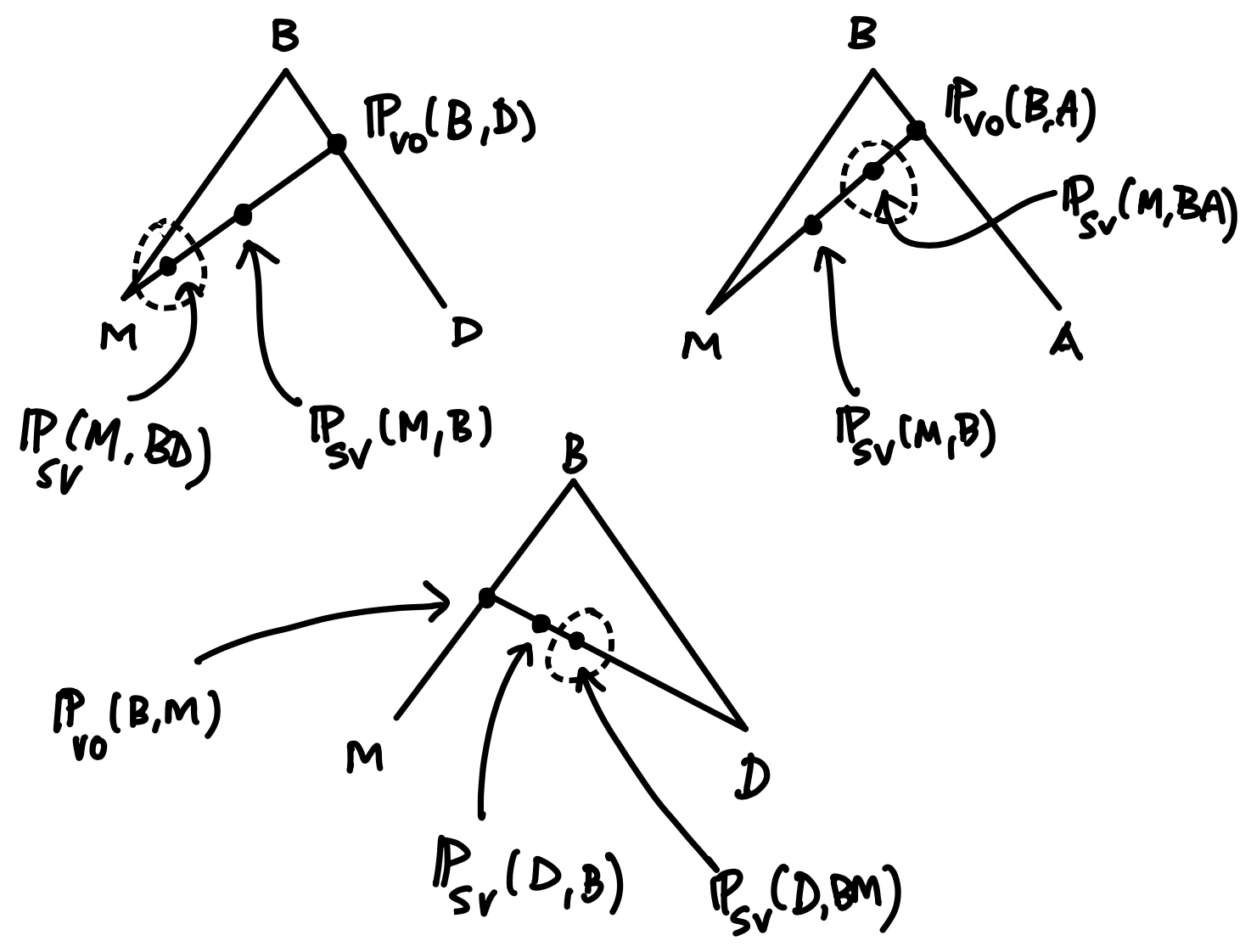}  
\caption{Sketch of different semantic points constructed by geodesic arcs for the three sentences ``man bites dog",  ``man bites apple", ``dog bites man", and 
with the two different choices of $p_{s(T_1),s(T_2)}=\bP(s(T_1),s(T_2))$ (circled) 
or $\bP(h(T_1),h(T_2))$. \label{MBDfig}}
\end{center}
\end{figure}

In the sentence ``dog bites man", the VP determines a point on the geodesic
arc in $\cS$ between the points $B$ and $M$ at a distance
$\bP_\sigma(B,M)$ from the vertex $B$, where in
this case $\sigma$ is the verb-object relation and the value
$\bP_\sigma(B,M))\in [0,1]$ expresses the degree 
of ``likelihood" of this association in the relation $\sigma$. One then considers, on
the geodesic arc in $\cS$ between this point associated to the VP phrase
and the point $D$, a new point. In the case of the choice $p_{s(T_1),s(T_2)}=\bP(s(T_1),s(T_2))$
as in \eqref{pT}, this point is located at a distance either $\bP_{\sigma'}(D,BM)$,
where we write $BM$ for the point $s(\fM(B,M))$ associated to the VP by the
procedure just described and $\sigma'$ is the subject-verb relation between $D$ and $h(\fM(B,M))$. 
In the case where we use $\bP(h(T_1),h(T_2))$, this
point is located at a distance $\bP_{\sigma'}(D,B)$ where $\sigma'$ the subject-verb relation. 
The cases of the second and third sentences are analogous as sketched in Figure~\ref{MBDfig}.
One can see in a simple example like this, why the choice $p_{s(T_1),s(T_2)}=\bP(s(T_1),s(T_2))$
is preferable to $\bP(h(T_1),h(T_2))$ by comparing the location of points in the first two cases
in Figure~\ref{MBDfig}. If one uses $\bP(h(T_1),h(T_2))$ the length of the arc of geodesic between 
$M$ and the point $BD$, respectively $BA$ is in both cases determined by the same
value $\bP_{\sigma'}(M,B)$, while in the case of $\bP(s(T_1),s(T_2))$ one has
different lengths $\bP_{\sigma'}(M,BD)<< \bP_{\sigma'}(M,BA)$. 

\subsubsection{Birkhoff factorization with threshold operators}\label{cRBfactSec}
The Birkhoff factorization of the character $\phi_{s,\bP,h}$ with respect to the
threshold Rota--Baxter operators provides a way of searching for substructures
with large semantic agreement between constituent parts. 
More precisely, we have the following.

\begin{prop}\label{BFphisPh}
The Birkhoff factorization of the character $\phi_{s,\bP,h}$ of Lemma~\ref{phiP}
with respect to the Rota--Baxter operators $c_\lambda$ of weight $-1$ 
identifies, as elements that achieve the maximum, 
those accessible terms $T_v \subset T$ with values $\phi_{s,\bP,h}(T_v)$ above a threshold $\lambda$,
identifying substructures within $T$ that carry large semantic agreement between their constituent parts.
\end{prop}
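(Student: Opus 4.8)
The plan is to unwind the inductive formula for the weight $-1$ Birkhoff factorization from Proposition~\ref{semiRBfact}, specialized to the semiring $\cP=([0,1],\max,\cdot,0,1)$ with Rota--Baxter operator $R=c_\lambda$, and to the character $\phi_{s,\bP,h}$ of Lemma~\ref{phiP}. First I would write out the Bogolyubov preparation $\tilde\phi_{s,\bP,h}(T)$ using the coproduct \eqref{coprod}: with $\boxdot=\max$ and $\odot=\cdot$, we get
$$
\tilde\phi_{s,\bP,h}(T)=\max\Big\{\phi_{s,\bP,h}(T),\ \max_{\underline v}\ \phi_{s,\bP,h,-}(F_{\underline v})\cdot \phi_{s,\bP,h}(T/F_{\underline v})\Big\},
$$
and then $\phi_{s,\bP,h,-}(T)=c_\lambda(\tilde\phi_{s,\bP,h}(T))$. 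As in the proof of Corollary~\ref{toycor1}, the key observation is that contracting an accessible subforest to its root (marked as a trace carrying the inherited semantic value, per \S\ref{QuotSec}) does not disturb the head-following computation of $\phi_{s,\bP,h}$ on the quotient; so the quotient factors $\phi_{s,\bP,h}(T/F_{\underline v})$ behave like the value on $T$ with certain substructures collapsed, and the recursion effectively ranges over all nested chains of accessible terms $T_{v_N}\subset\cdots\subset T_{v_1}\subset T$.

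Next I would analyze what $c_\lambda$ does at each stage of the recursion. Because $c_\lambda$ sends any value $x\geq\lambda$ to $1$ (the multiplicative unit) and leaves values $x<\lambda$ unchanged, applying $c_\lambda$ to $\phi_{s,\bP,h,-}(F_{\underline v})$ has the effect: a sub-forest whose (recursively computed) semantic-agreement value meets the threshold $\lambda$ contributes the neutral factor $1$ and so does not penalize the product, whereas a sub-forest below threshold contributes its actual (damping) value $<\lambda$. Hence, in the $\max$ defining $\tilde\phi_{s,\bP,h}(T)$, the chains that realize the maximum are precisely those built out of accessible terms $T_v$ with $\phi_{s,\bP,h}(T_v)\geq\lambda$ (after the recursive thresholding), since along such chains every internal factor is promoted to $1$ and the product is not diminished. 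I would make this precise by an induction on the Hopf-algebra degree (number of leaves): for a cherry $T=\fM(\alpha,\beta)$ the claim is the base case, and the inductive step uses that $\phi_{s,\bP,h,-}$ on proper subforests already identifies their above-threshold sub-substructures, together with multiplicativity $\phi_{s,\bP,h,-}(xy)=\phi_{s,\bP,h,-}(x)\odot\phi_{s,\bP,h,-}(y)$ guaranteed by Lemma~\ref{probRB} (the operator $c_\lambda$ satisfies \eqref{R1eq}, so Remark~\ref{semiRBminus} applies and $\phi_\pm$ are semiring homomorphisms).

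Finally I would conclude that the elements $T_v\subset T$ that achieve the maximum in the Birkhoff factorization are exactly those accessible terms with $\phi_{s,\bP,h}(T_v)\geq\lambda$, i.e.\ those subtrees whose constituent parts $T_{v,1},T_{v,2}$ have semantic values $s(T_{v,1}),s(T_{v,2})\in\cS$ with likelihood of association $\bP(s(T_{v,1}),s(T_{v,2}))\geq\lambda$; these are precisely the substructures within $T$ carrying large semantic agreement between constituent parts, which is the assertion. The main obstacle I anticipate is bookkeeping in the inductive step: one must carefully track how nested quotients $T/F_{\underline v}$ interact with the head function and the trace-marking of contracted roots so that the recursive identity for $\phi_{s,\bP,h}$ on quotients is exactly the one needed, and one must handle the general forest case $F_{\underline v}=T_{v_1}\sqcup\cdots\sqcup T_{v_n}$ (not just single subtrees) where $\phi_{s,\bP,h,-}$ multiplies the per-component values and $c_\lambda$ acts on the product rather than componentwise — here the hypothesis \eqref{R1eq} from Lemma~\ref{probRB} is what rescues multiplicativity and keeps the argument clean.
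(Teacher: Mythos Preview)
Your proposal is correct and follows essentially the same approach as the paper: unwind the weight $-1$ Birkhoff recursion in $(\cP,\max,\cdot)$ with $R=c_\lambda$, use that the trace convention of \S\ref{QuotSec} gives $\phi_{s,\bP,h}(T/T_v)=\phi_{s,\bP,h}(T)$, and observe that $c_\lambda$ promotes above-threshold factors to the multiplicative unit $1$ so that the maximizers are the accessible terms $T_v$ with $\phi_{s,\bP,h}(T_v)\geq\lambda$. The paper's proof is slightly more direct than your inductive scheme---it computes the $N=1$ case explicitly and then disposes of longer chains in one line by noting that the intermediate quotient factors $\phi_{s,\bP,h}(F_{\underline v_{i-1}}/F_{\underline v_i})<1$ can only diminish the product---whereas you propose a full induction on degree; both routes yield the same conclusion with the same key inputs.
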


\proof If we perform the Birkhoff factorization of the character $\phi_{s,\bP,h}$ using
the Rota--Baxter operator $c_\lambda$ of weight $-1$, we obtain
$$ \phi_{s,\bP,h,-}(T)=c_\lambda (\tilde\phi_{s,\bP,h}(T))= $$
$$ c_\lambda (\max\{ \phi_{s,\bP,h}(T), c_\lambda (\cdots
c_\lambda(\phi_{s,\bP,h}(F_{\underline{v}_N})) 
\phi_{s,\bP,h}(F_{\underline{v}_{N-1}}/F_{\underline{v}_N})) \cdots \phi_{s,\bP,h}(T/F_{\underline{v}_0}) \}) $$
over nested chains of subforests of all possible lengths $N$, as before.
Again we can look for simplicity at the case of subtrees, as the value on forests is the
semiring product of the values on the tree components. When
we look at chains of length $N=1$ with subtrees, we are comparing
$\phi_{s,\bP,h}(T)$ to the value
$c_\lambda(\phi_{s,\bP,h}(T_v))\cdot \phi_{s,\bP,h}(T/T_v)$. Arguing as above, we
have $$c_\lambda( \max\{ \phi_{s,\bP,h}(T), c_\lambda(\phi_{s,\bP,h}(T_v))\cdot \phi_{s,\bP,h}(T/T_v)\})=$$
$$c_\lambda(\max\{ p_{s(T_1),s(T_2)}, c_\lambda(p_{s(T_{v,1}) s(T_v,2)})\cdot p_{s(T_1),s(T_2)} \})=
c_\lambda(p_{s(T_1),s(T_2)})$$ where this time the maximal value is realized by all the terms  
$T_v \subset T$ that have $p_{s(T_{v,1}) s(T_v,2)}\geq \lambda$ and $p_{s(T_1),s(T_2)}\geq \lambda$. 
Note that longer
sequences will have products with intermediate terms 
$\phi_{s,\bP,h}(F_{\underline{v}_{i-1}}/F_{\underline{v}_i})<1$ hence will 
not achieve the same maximum. Thus, the maximizers are accessible terms
that carry large semantic agreement between their constituent parts.
\endproof

\smallskip

For example, suppose that we consider again the two sentences 
``dog bites man" and ``man bites dog". As shown above, the resulting semantic
points associated to these two sentences are, as they should be, in different locations in $\cS$.
Moreover, the fact that one will have $\bP_{\sigma'}(M,BD) << 
\bP_{\sigma'}(D,BM)$ when $\sigma'$ is the subject-verb relation,
implies that the threshold operators $c_\lambda$ discussed in the previous section will
filter out the second sentence before the first. 
\subsubsection{From geodesic arcs to convex neighborhoods}\label{GeodNeighSec}

The construction of the character $\phi_{s,\bP,h}$
of Lemma~\ref{phiP} is also a toy model. It is better
than the initial oversimplified toy model of Lemma~\ref{phiUpsilon} (see Remark~\ref{toyrem}),
because it does not use only the points in the semantic space $\cS$ associated to
the head leaf, but it still uses only geodesic arcs in the semantic space $\cS$. 
Passing from a zero-dimensional
to a one-dimensional  representation of syntactic relations is an improvement, and as we
will discuss in \S \ref{SyntImageSec} it is already sufficient to obtain an embedded image
of syntax inside semantics (in essence because the syntactic objects are themselves 
1-dimensional tree structures). However, this representation
can be improved by considering, along with geodesic arcs, higher
dimensional convex structures like simplexes and geodesic neighborhoods of points.
While we will not expand this approach in the present paper, it is worth mentioning
some ideas that relate to some of what we will be discussing in the following sections. 
Given a syntactic object $T\in \cS\cO$ with $T\in {\rm Dom}(h)$, a 
geodesically convex semantic space $\cS$, and a mapping $s: {\rm Dom}(h)\to \cS$
constructed as in Lemma~\ref{phiP}, we can consider the points $s(T_v) \in \cS$
associated to all the accessible terms of $T$. (See \S \ref{SyntImageSec} below,
for the embedding properties of this map.) Now consider geodesic balls $B_v(\epsilon)$
in $\cS$ centered at the points $s(T_v)$ with radius $\epsilon>0$. Here by geodesic ball
we mean the image under the exponential map of a ball in the tangent space. We assume
the injectivity radius of $\cS$ is larger than the maximal distance between the points $s(T_v)$
for all $v\in V(T)$. In terms of the semantic space, a geodesic neighborhood
around a given point $s\in \cS$ represents all the close semantic associations to the
semantic point $s$ recorded in $\cS$. We can then vary the scale $\epsilon$ of the geodesic 
balls and form simplicial complex (a Vietoris--Rips complex) associated to the 
intersections of these geodesic balls (see Figure~\ref{VRcxfig}). As the scale $\epsilon>0$
varies, one obtains a filtered complex, according to the familiar construction of {\em persistent topology}
(see \cite{EdelHar}). The scale $\epsilon$ provides another form of filtering that generalizes  what we
previously described in terms of the threshold operators $c_\lambda$. In this case, the persistent
structures that arise can be seen as detecting ``collections of substructures that carry 
higher semantic relatedness" inside the given hierarchical structure $T$.

\begin{figure}[h]
\begin{center}
\includegraphics[scale=0.2]{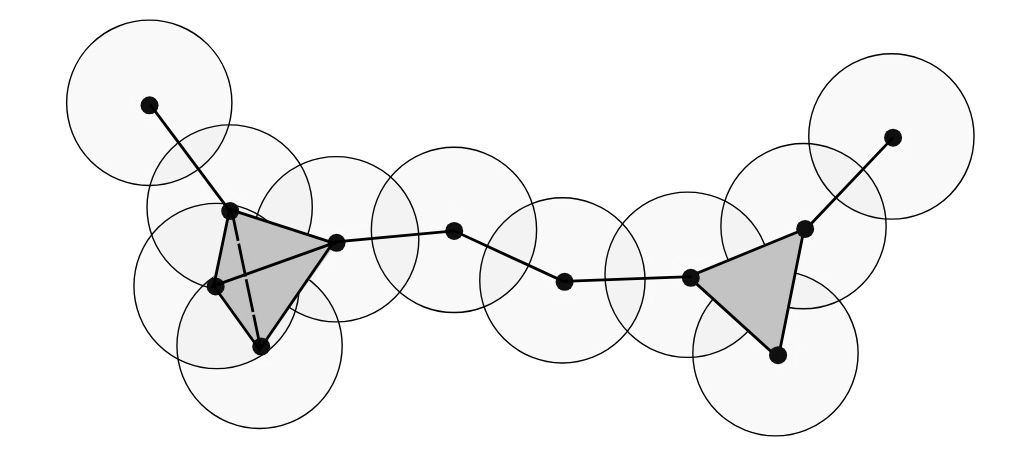}  
\caption{Example of a Vietoris--Rips complex. \label{VRcxfig}}
\end{center}
\end{figure}

\subsection{Head-driven interfaces and vector models}
\label{HeadIdRenSec3}

Consider now the case where the semantic space $\cS$ is modeled by a
vector space, and assume that it is endowed with a function
$\fC: \cS\times \cS  \to \R$ that describes the level of semantic agreement,
as in \S \ref{PCfuncSec}.
This may be based on cosine similarity or on other methods: the detailed
form of $\fC$ is not important in what follows, beyond the basic property
described in \S \ref{PCfuncSec}.

\smallskip
\subsubsection{Max-plus-valued semiring character}

We discuss an example where we consider again the max-plus semiring 
$\cR=(\R\cup \{ -\infty \}, \max, +)$ and a semantic comparison function
of the form $\fC: \cS\times \cS  \to \R$ as discussed in \S \ref{PCfuncSec}.

\begin{lem}\label{phiCsh}
Consider the semiring $\cR=(\R\cup \{ -\infty \}, \max, +)$. The data of a
function $\fC: \cS\times \cS  \to \R$ as above, a function
$s: \cS\cO_0\to \cS$ and a head function defined on a domain
${\rm Dom}(h)\subset \fT_{\cS\cO_0}$ determine a semiring-valued
character
$$ \phi_{s,\fC,h}: \cH^{semi} \to \cR\, , $$
with $\cH^{semi}$ the semiring of linear combinations $\sum_i a_i F_i$ with $a_i\geq 0$.
\end{lem}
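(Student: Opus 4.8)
The plan is to mimic closely the construction of $\phi_{s,\bP,h}$ in Lemma~\ref{phiP}, replacing the probability function $\bP$ by the agreement function $\fC$ and the target semiring $\cP=([0,1],\max,\cdot)$ by $\cR=(\R\cup\{-\infty\},\max,+)$. First I would define, recursively on the magma structure \eqref{SOmagma}, both a point $s(T)\in\cS$ and the value $\phi_{s,\fC,h}(T)\in\R\cup\{-\infty\}$ for every $T\in{\rm Dom}(h)$, setting $\phi_{s,\fC,h}(T)=-\infty$ when $T\notin{\rm Dom}(h)$ (here $-\infty$ is the additive unit of $\cR$, playing the role that $0$ played in Lemma~\ref{phiP}). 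The base case is the cherry tree $T=\fM(\alpha,\beta)$ with $\alpha,\beta\in\cS\cO_0$: if $T\in{\rm Dom}(h)$, put $\phi_{s,\fC,h}(T)=\fC(s(\alpha),s(\beta))$, and define $s(T)$ as a point on the geodesic arc between $s(\alpha)$ and $s(\beta)$, biased toward the head $h(T)$ — for instance using a fixed monotone reparametrization of $\fC$ into $[0,1]$ to choose the geodesic parameter, exactly as \eqref{sTconvex}–\eqref{pTconvex} did with $p_{\alpha,\beta}$. The inductive step for $T=\fM(T_1,T_2)\in{\rm Dom}(h)$ uses that, by the defining property of head functions (Definition~\ref{headfunc}), $T_1,T_2\in{\rm Dom}(h)$ as well, so $s(T_1),s(T_2)$ are already defined; set $\phi_{s,\fC,h}(T)=\fC(s(T_1),s(T_2))$ and let $s(T)$ be the corresponding head-biased point on the geodesic between $s(T_1)$ and $s(T_2)$, as in \eqref{pT}.

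Next I would extend this assignment from single trees to forests and then to the semiring $\cH^{semi}$ of nonnegative linear combinations. On a forest $F=\sqcup_k T_k\in\fF_{\cS\cO_0}$, multiplicativity with respect to the semiring product forces $\phi_{s,\fC,h}(F)=\sum_k\phi_{s,\fC,h}(T_k)$ (the $\cR$-product is $+$), with the convention that the sum is $-\infty$ as soon as one factor is $-\infty$, which is consistent with the rule $-\infty+x=-\infty$ stated for $\cR$ in \S\ref{HeadIdRenSec1}. On a linear combination $\sum_i a_i F_i$ with $a_i\geq 0$, I would set
$$ \phi_{s,\fC,h}\Big(\sum_i a_i F_i\Big)=\boxdot_i\big(\phi_{s,\fC,h}(F_i)\odot\log(a_i)\big)=\max_i\{\phi_{s,\fC,h}(F_i)+\log(a_i)\}\, , $$
exactly the normalization used in Lemma~\ref{phiUpsilon}, with $\log 0=-\infty$ absorbing any zero coefficient. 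The remaining verification is that this map really is a morphism of commutative semirings in the sense of case (1) of Definition~\ref{charHR}: additivity $\phi_{s,\fC,h}(x\boxdot y)=\phi_{s,\fC,h}(x)\boxdot\phi_{s,\fC,h}(y)$ is immediate since $\boxdot=\max$ and the definition on linear combinations is a max; multiplicativity $\phi_{s,\fC,h}(x\odot y)=\phi_{s,\fC,h}(x)\odot\phi_{s,\fC,h}(y)$ follows from disjoint union of forests going to sums of values together with the additive behavior of the $\log(a_i)$ term under the product $(\sum_i a_i F_i)\sqcup(\sum_j b_j G_j)=\sum_{i,j} a_i b_j (F_i\sqcup G_j)$, using $\log(a_i b_j)=\log a_i+\log b_j$.

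The main obstacle is not any hard inequality but rather pinning down the recursive definition so that it is genuinely well defined and the statement is not circular: one has to be slightly careful that the point $s(T)$ depends only on $T$ (not on a choice of decomposition $T=\fM(T_1,T_2)$), which holds because in the non-associative commutative magma \eqref{SOmagma} every non-leaf $T$ has a unique unordered pair of immediate descendants, and that the head-bias rule \eqref{pT} is unambiguous because $h(T)$ equals exactly one of $h(T_1),h(T_2)$ again by Definition~\ref{headfunc}. A secondary subtlety is the treatment of $-\infty$: one should check that once a subtree falls outside ${\rm Dom}(h)$ the value $-\infty$ propagates correctly through both the forest product and the $\max$ over linear combinations, which is exactly the role played by $-\infty$ being the additive unit and an absorbing element for the product in $\cR$. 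I expect the proof to be short, essentially a transcription of the proof of Lemma~\ref{phiP} with $(\cP,\cdot,\prod,\max)$ replaced by $(\cR,+,\sum,\max)$ and $\bP$ replaced by $\fC$, and I would remark (as Remark~\ref{toy2rem} does for $\phi_{s,\bP,h}$) that this character still produces new semantic points $s(T)$ beyond those attached to leaves, while combining the agreement-measuring flavor of $\phi_{\Upsilon,s,h}$ with the geodesic-interpolation flavor of $\phi_{s,\bP,h}$.
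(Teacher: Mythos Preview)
Your proposal is correct as a proof of existence of a semiring character, and the overall architecture (recursive definition on trees via the magma structure, $-\infty$ outside ${\rm Dom}(h)$, sum over forest components, $\max$ over linear combinations) matches the paper exactly. However, the specific construction you give differs from the paper's in two substantive ways.

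First, the paper exploits that in \S\ref{HeadIdRenSec3} the semantic space $\cS$ is a \emph{vector space}, not merely geodesically convex: for $T=\fM(T_1,T_2)$ it sets $s(T)=t_T\,s(T_1)+(1-t_T)\,s(T_2)$ on the full affine \emph{line} through $s(T_1),s(T_2)$, with $t_T=\fC(s(T_1),s(T_2))$ or $1-\fC(s(T_1),s(T_2))$ depending on the head. No reparametrization of $\fC$ into $[0,1]$ is used; the point $s(T)$ may lie outside the segment, which is the intended effect (a negative $\fC$ moves the head away from the other constituent). Your geodesic-arc version with a monotone squashing of $\fC$ into $[0,1]$ loses this sign information, which is what the subsequent hyperplane-arrangement Lemmas~\ref{hyparrSO0}--\ref{hyparrCsh} rely on.

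Second, the paper takes $\phi_{s,\fC,h}(T)=t_T$, which is asymmetric in $T_1,T_2$ via the head, whereas you set $\phi_{s,\fC,h}(T)=\fC(s(T_1),s(T_2))$, which is symmetric. Both give legitimate characters, but the paper's choice records which constituent projects, and this is what drives the chamber-crossing description in Lemma~\ref{hyparrCsh}. (A minor point: the paper writes the extension to linear combinations as $\max_i\{a_i\,\phi(F_i)\}$ rather than your $\max_i\{\phi(F_i)+\log a_i\}$ from Lemma~\ref{phiUpsilon}; either convention yields a semiring map, so this is cosmetic.)
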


\proof For any tree $T\notin {\rm Dom}(h)$ we set $\phi_{s,\fC,h}(T)=-\infty$.
We then consider only trees that are in ${\rm Dom}(h)$.
As in Lemma~\ref{phiP} we start by considering the case of a tree
of the form $T=\fM(\alpha,\beta)=\{ \alpha, \beta \}$ with $\alpha,\beta\in \cS\cO_0$.
We assign to this tree a value in $\cR$ obtained by computing
$\fC(s(\alpha),s(\beta))\in \R$ and considering the line $L_{\alpha,\beta}$, in the vector space $\cS$,
through the points $s(\alpha)$ and $s(\beta)$,
$$ L_{\alpha,\beta}=\{  t \alpha + (1-t) \beta =\beta + t(\alpha-\beta) \,|\, t\in \R \} \, , $$
if $\beta=h(T)$ (exchanging $\alpha$ and $\beta$ if $\alpha=h(T)$, that is, replacing $t$ with $1-t$). 
We then define 
$$ t_{\alpha,\beta}=\fC(\alpha,\beta) $$
\begin{equation}\label{sTvec}
 s(T):=\beta + t_{\alpha,\beta}  (\alpha-\beta)\in L_{\alpha,\beta} \, . 
\end{equation} 
This has the effect of creating a new point $s(T)$ which moves the value $s(h(T))$
along the line $L_{\alpha,\beta}$ in the direction $\alpha$ (or in the opposite direction)
depending on the agreement/disagreement sign of  $\fC(\alpha,\beta)$.
We then set
$$ \phi_{s,\fC,h}(\fM(\alpha,\beta))=\left\{ \begin{array}{ll} \fC(\alpha,\beta) & \beta=h(T) \\
1-\fC(\alpha,\beta) & \alpha=h(T) \end{array} \right. $$
We can then proceed inductively, setting, for $T=\fM(T_1,T_2)\in {\rm Dom}(h)$
$$ t_T =\left\{ \begin{array}{ll} \fC(s(T_1),s(T_2))  & h(T)=h(T_2) \\
1-\fC(s(T_1),s(T_2))  & h(T)=h(T_1) \end{array} \right. $$
$$ s(T) = t_T s(T_1) + (1-t_T) s(T_2) $$
$$ =\left\{ 
\begin{array}{ll} s(T_2) +t_T (s(T_1)-s(T_2)) & h(T)=h(T_2) \\
s(T_1) +t_T (s(T_2)-s(T_1)) & h(T)=h(T_1)
\end{array} \right. $$
$$ \phi_{s,\fC,h}(T=\fM(T_1,T_2))=t_T \, .  $$
Setting $\phi_{s,\fC,h}(F)=\sum_k \phi_{s,\fC,h}(T_k)$ for $F=\sqcup_k T_k$ and
$\phi_{s,\fC,h}(\sum_i a_i F_i)=\max\{ a_i \phi_{s,\fC,h}(F_i) \}$ then completely determines
$\phi_{s,\fC,h}$ on $\cH^{semi}$. 
\endproof

\smallskip
\subsubsection{Hyperplane arrangements}

The following observation follows from Lemma~\ref{phiCsh},
rephrased in a more geometric way.

\begin{lem}\label{lincombCsh}
Let $S_\fC$ denote the multiplicative subsemigroup  of
$\R^*$ generated by the set of non-zero elements in $\fC(s(\cS\cO_0)\times s(\cS\cO_0))$.
For $T\in \fT_{\cS\cO_0}$ in ${\rm Dom}(h)$, let  $L(T)$ be the set of
leaves of the tree. We write, for simplicity of notation, $s(L(T))$ for
the set of vectors $s(\lambda(L(T))) \subset \cS$.
Let $S_T\subset S_\fC\subset \R^*$ be the multiplicative semigroup
generated by the set $\R^*\cap \fC(s(L(T))\times s(L(T)))$. 
The vector $s(T)$ of \eqref{sTvec} is in the linear span of the set 
$s(L(T))$ with coefficients in $S_T$.
\end{lem}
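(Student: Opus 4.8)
The plan is to prove the statement by induction on the number of internal (non-leaf) vertices of $T$, using the inductive definition of the vector $s(T)$ from the proof of Lemma~\ref{phiCsh}. It is convenient to read ``coefficients in $S_T$'' in the sense that every coefficient of the expansion of $s(T)$ along $s(L(T))$ lies in the subring $\Z[S_T]\subset\R$ generated by $S_T$, since besides products of $\fC$-values (the elements of $S_T$) the construction also produces terms of the form $1-t$; the multiplicative semigroup $S_T$ records the multiplicative part of these coefficients, and the degenerate scalars $0,1$ that occur when some $\fC$-value vanishes already lie in $\Z[S_T]$. For the base case $T=\fM(\alpha,\beta)$ with $\alpha,\beta\in\cS\cO_0$, formula \eqref{sTvec} reads $s(T)=t_{\alpha,\beta}\,s(\alpha)+(1-t_{\alpha,\beta})\,s(\beta)$ (with $\alpha,\beta$ interchanged if $\alpha=h(T)$), where $t_{\alpha,\beta}=\fC(s(\alpha),s(\beta))$; hence $s(T)$ lies in the span of $s(L(T))=\{s(\alpha),s(\beta)\}$, and if $t_{\alpha,\beta}\neq 0$ it is a generator of $S_T$, so both coefficients lie in $\Z[S_T]$.

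For the inductive step I would write $T=\fM(T_1,T_2)$ and note that, since $T\in{\rm Dom}(h)$, the defining property of head functions forces $T_1,T_2\in{\rm Dom}(h)$. Applying the inductive hypothesis to each non-leaf $T_i$ (the case of a single leaf being trivial) gives $s(T_i)=\sum_{\ell\in L(T_i)}c^{(i)}_\ell\,s(\lambda(\ell))$ with all $c^{(i)}_\ell\in\Z[S_{T_i}]$, where $S_{T_i}$ is defined as in the statement with $T_i$ in place of $T$, and $\Z[S_{T_i}]\subseteq\Z[S_T]$ because $L(T_i)\subseteq L(T)$. By the construction in Lemma~\ref{phiCsh}, $s(T)=t_T\,s(T_1)+(1-t_T)\,s(T_2)$ with $t_T$ equal to $\fC(s(T_1),s(T_2))$ or to $1-\fC(s(T_1),s(T_2))$. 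Since $L(T)=L(T_1)\sqcup L(T_2)$, expanding yields $s(T)$ as a linear combination of the $s(\lambda(\ell))$, $\ell\in L(T)$, in which every coefficient is a sum of terms $t_T\,c^{(1)}_\ell$ and $(1-t_T)\,c^{(2)}_{\ell'}$. So everything reduces to showing $\fC(s(T_1),s(T_2))\in\Z[S_T]$.

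This last point is the crux, and it is where I expect the main obstacle. Expanding $\fC(s(T_1),s(T_2))$ through the inductive expressions for $s(T_1),s(T_2)$ and using linearity of $\fC$ in each argument --- which is available when $\fC$ comes from a symmetric bilinear form as in \eqref{Ccosine} (before normalization), the case to which one reduces here --- gives
$$\fC(s(T_1),s(T_2))=\sum_{\ell\in L(T_1),\ \ell'\in L(T_2)}c^{(1)}_\ell\,c^{(2)}_{\ell'}\,\fC(s(\lambda(\ell)),s(\lambda(\ell'))),$$
in which each $\fC(s(\lambda(\ell)),s(\lambda(\ell')))$ lies in $S_T\cup\{0\}$ by the definition of $S_T$ and each $c^{(1)}_\ell\,c^{(2)}_{\ell'}$ lies in $\Z[S_T]$ by the inductive hypothesis; hence $\fC(s(T_1),s(T_2))$, and therefore $t_T$ and every coefficient of $s(T)$ along $s(L(T))$, lies in $\Z[S_T]$, which closes the induction. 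The inclusion $S_T\subseteq S_\fC$ is immediate, since the leaf-pair $\fC$-values on $L(T)$ are among the $\fC$-values on $s(\cS\cO_0)\times s(\cS\cO_0)$. The difficulty is precisely this reduction of $\fC$ evaluated on the \emph{constructed} vectors $s(T_1),s(T_2)$ to $\fC$-values on leaf vectors: for a comparison function $\fC$ that is not (multi)linear one must instead enlarge $S_T$ to contain the $\fC$-values on the relevant linear span of $s(L(T))$, or restrict the statement to comparison functions for which such a reduction holds.
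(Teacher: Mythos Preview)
Your approach is the same inductive argument along the Merge construction that the paper uses; the paper's own proof is essentially a one-sentence sketch (``by the recursive procedure\ldots the resulting point $s(T)$ is a linear combination of the $s(\ell)$ with coefficients in $S_T$''). Your observations that the coefficients really lie in $\Z[S_T]$ because of the $1-t_T$ terms, and that reducing $\fC(s(T_1),s(T_2))$ to leaf-level $\fC$-values requires bilinearity of $\fC$, are valid clarifications that the paper leaves implicit.
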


\proof
Suppose given a binary rooted tree $T \in {\rm Dom}(h)\subset \fT_{\cS\cO_0}$, with $L(T)$ its set of leaves.
By the recursive procedure of Lemma~\ref{phiCsh}, based on the construction of $T$ by repeated
application of free symmetric Merge $\fM$, as an element in the magma \eqref{SOmagma},
the resulting point $s(T)$ in the vector space $\cS$ is a linear combination of the vectors $s(\ell)$ with
$\ell \in L(T)$ (where we write $s(\ell)$ as a shorthand notation for $s(\lambda(\ell))$,
$$ s(T)=\sum_{\ell\in L(T)} a_\ell \, s(\ell) \in {\rm span}(L(T)) $$
with coefficients $a_\ell$ in the multiplicative subsemigroup $S_T\subset S_\fC$.
\endproof

\smallskip

\begin{lem}\label{hyparrSO0}
If $\fC$ on the vector space $\cS$ is given by a cosine similarity as in \eqref{Ccosine},
then the set of vectors $s(\cS\cO_0)\subset \cS$ determines an associated hyperplane arrangement
$\cH\cA_{\cS\cO_0}$ of hyperplanes
\begin{equation}\label{arrSO0eq}
 \cH\cA_{\cS\cO_0} =\{ H_\lambda=\{ v\in \cS \,|\, \langle v,s(\lambda) \rangle=0 \}\,|\, \lambda \in \cS\cO_0, \, s(\lambda)\neq 0  \} \, , 
\end{equation} 
where the hyperplane $H_\lambda$ describes all semantic vectors that are neutral with
respect to $s(\lambda)$, namely vectors $v\neq 0$ with $\fC(v,s(\lambda))=0$. 
\end{lem}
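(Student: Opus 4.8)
The plan is to unwind the definition of cosine similarity and check that, for each $\lambda$ with $s(\lambda)\neq 0$, the ``neutrality locus'' $\{v\,:\,\fC(v,s(\lambda))=0\}$ is exactly the linear hyperplane $H_\lambda$, so that collecting these loci over all such $\lambda$ yields a (central) hyperplane arrangement.

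First I would fix $\lambda\in\cS\cO_0$ with $s(\lambda)\neq 0$ and examine the equation $\fC(v,s(\lambda))=0$. Using the expression \eqref{Ccosine}, for any $v\neq 0$ we have
$$\fC(v,s(\lambda))=\frac{\langle v,s(\lambda)\rangle}{\|v\|\,\|s(\lambda)\|},$$
and since both norms are strictly positive, this quantity vanishes if and only if $\langle v,s(\lambda)\rangle=0$. The origin trivially satisfies $\langle 0,s(\lambda)\rangle=0$ as well, so the full solution set of the linear equation $\langle v,s(\lambda)\rangle=0$ is precisely the set $H_\lambda$ of \eqref{arrSO0eq}; the set of \emph{nonzero} semantic vectors neutral with respect to $s(\lambda)$ is then $H_\lambda\setminus\{0\}$, as stated. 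Since $s(\lambda)\neq 0$, the linear functional $v\mapsto\langle v,s(\lambda)\rangle$ on $\cS$ is nonzero, hence $H_\lambda=\ker(\langle\,\cdot\,,s(\lambda)\rangle)$ is a codimension-one linear subspace of $\cS$, i.e.\ a hyperplane through the origin.

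Next I would assemble the arrangement. Because $\cS\cO_0$ is finite, the family $\{H_\lambda\,:\,\lambda\in\cS\cO_0,\ s(\lambda)\neq 0\}$ is a finite collection of linear hyperplanes in $\cS$, which is by definition a central hyperplane arrangement; this is the object $\cH\cA_{\cS\cO_0}$. I would note in passing that the indexing need not be injective --- if $s(\lambda)$ and $s(\lambda')$ are proportional (in particular if $s$ is not injective) the hyperplanes $H_\lambda$ and $H_{\lambda'}$ coincide --- but this does not affect the conclusion, since an arrangement is simply the underlying set of hyperplanes. There is no genuine obstacle in the argument; the only points requiring a moment of care are the harmless role of the origin (the neutrality locus of semantic vectors is $H_\lambda\setminus\{0\}$, while the hyperplane itself is the full kernel $H_\lambda$) and the possible non-injectivity of $\lambda\mapsto H_\lambda$.
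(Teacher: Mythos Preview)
Your proof is correct and follows exactly the same idea as the paper, which simply observes in one sentence that the hyperplanes are the normal hyperplanes to the given vectors under the inner product defining the cosine similarity. You have spelled out the details (nonvanishing of the functional, the role of the origin, possible repetitions among the $H_\lambda$) more carefully than the paper does, but the argument is the same.
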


This is immediate, as the set of hyperplanes here is simply given by the normal hyperplanes 
to the  given set of vectors under the inner product that also defines the cosine similarity. 

\smallskip

One can then see the construction of the character $\phi_{s,\fC,h}$ of Lemma~\ref{phiCsh}
in the following way.

\begin{lem}\label{hyparrCsh}
The vectors $s(T)$, for $T\in {\rm Dom}(h)\subset \fT_{\cS\cO_0}$,
give a refinement of the hyperplane arrangement $\cH\cA_{\cS\cO_0}$ of Lemma~\ref{hyparrSO0},
with a resulting arrangement 
\begin{equation}\label{arrSOeq}
 \cH\cA_{\cS\cO} =\{ H_T=\{ v\in \cS \,|\, \langle v,s(T) \rangle=0 \}\,|\, T \in \fT_{\cS\cO_0}, \, 
 s(T)\neq 0  \} \, , 
\end{equation} 
where the values $t_{T_v}=\phi_{s,\fC,h}(T_v)$, with $v\in V(T)$ determine which 
chambers of the complement of the arrangement $\cH\cA_{\cS\cO_0}$ the hyperplane $H_T$ 
crosses. 
\end{lem}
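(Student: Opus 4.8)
The plan is to read off the statement as a direct consequence of Lemma~\ref{lincombCsh} together with the definition of the cosine-similarity comparison function. First I would recall from Lemma~\ref{lincombCsh} that for each $T\in {\rm Dom}(h)$ the vector $s(T)$ lies in ${\rm span}(s(L(T)))$, with coefficients $a_\ell$ in the multiplicative semigroup $S_T\subset S_\fC\subset \R^*$. In particular $s(T)$ is a (generically nonzero) linear combination of the generating vectors $s(\lambda)$, $\lambda\in L(T)$, so the hyperplane $H_T=\{v\in\cS\,|\,\langle v,s(T)\rangle=0\}$ is well defined whenever $s(T)\neq 0$, and it is the kernel of the linear functional $v\mapsto \sum_{\ell\in L(T)} a_\ell \langle v, s(\ell)\rangle$. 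This already exhibits $\cH\cA_{\cS\cO}$ as a superset of $\cH\cA_{\cS\cO_0}$ (the latter being the special case $T=$ a single leaf), hence a refinement in the sense of arrangements.

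The second step is the geometric identification of \emph{which} chambers of the complement of $\cH\cA_{\cS\cO_0}$ the new hyperplane $H_T$ meets, in terms of the values $t_{T_v}=\phi_{s,\fC,h}(T_v)$. Here I would unwind the recursion of Lemma~\ref{phiCsh}: at each Merge step $T=\fM(T_1,T_2)$ one has $s(T)=t_T\, s(T_1)+(1-t_T)\,s(T_2)$ with $t_T$ (or $1-t_T$) equal to $\fC(s(T_1),s(T_2))$. Since $\fC$ is the cosine similarity \eqref{Ccosine}, the sign and magnitude of $t_T$ record on which side of the relevant hyperplanes $H_{T_1},H_{T_2}$ the vector $s(T)$ falls, and a straightforward induction on the height of $T$ shows that the sequence $(t_{T_v})_{v\in V(T)}$ determines the position of $s(T)$ relative to every hyperplane $H_\lambda$, $\lambda\in\cS\cO_0$, i.e.\ the list of chambers of the complement of $\cH\cA_{\cS\cO_0}$ through which $H_T$ passes. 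Concretely, $H_T$ crosses the chamber containing a test vector $w$ precisely when $\sum_{\ell} a_\ell\langle w,s(\ell)\rangle$ changes sign as $w$ ranges over that chamber, and the $a_\ell$ are exactly the products of the $t_{T_v}$ (and their complements) accumulated along the paths from $v$ to the leaves in the recursion.

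The main obstacle I anticipate is purely bookkeeping rather than conceptual: making precise the correspondence between the nested products of $t_{T_v}$'s produced by the Merge recursion and the combinatorial data of ``which chambers $H_T$ crosses,'' and handling the degenerate cases where $s(T)=0$ or where some $\fC$-value vanishes (so that a coefficient drops out and $H_T$ degenerates or coincides with an existing hyperplane). I would dispose of the degenerate cases by the hypothesis $s(T)\neq 0$ built into \eqref{arrSOeq}, and treat the generic case by the induction sketched above, noting that the statement is essentially a reformulation of Lemma~\ref{lincombCsh} and so no genuinely new computation is required beyond tracking signs.
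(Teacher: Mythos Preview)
Your proposal is correct and follows essentially the same approach as the paper: an induction on the Merge construction, using that at each step $s(T)=t_T\,s(T_1)+(1-t_T)\,s(T_2)$ so that the signs of $t_T$ and $1-t_T$ determine the position of $H_T$ relative to $H_{T_1}$ and $H_{T_2}$, and hence inductively relative to $\cH\cA_{\cS\cO_0}$. The only difference is that you front-load Lemma~\ref{lincombCsh} and spell out the degenerate cases, whereas the paper's proof is terser and goes straight to the base case $\fM(\alpha,\beta)$ and the inductive step without invoking Lemma~\ref{lincombCsh} explicitly.
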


\proof The inductive construction of $\phi_{s,\fC,h}$ in Lemma~\ref{phiCsh} shows
that, for $\alpha,\beta\in \cS\cO_0$ the value $\phi_{s,\fC,h}(\fM(\alpha,\beta))=t_{\alpha,\beta}$
determines which chambers of the complement of $H_\alpha \cup H_\beta$ the hyperplane
$H_{\fM(\alpha,\beta)}$ crosses, depending on the sign of $t_{\alpha,\beta}$ and of $1-t_{\alpha,\beta}$. 
Inductively, the same applies to the role of $t_T=\phi_{s,\fC,h}(T)$ in determining 
the position of $H_T$ with respect to $H_{T_1}$ and $H_{T_2}$, hence the role of the 
values $t_{T_v}$, for the accessible terms $T_v\subset T$, in determining the position
of $H_T$ with respect to $\cH\cA_{\cS\cO_0}$.
\endproof

\smallskip
\subsubsection{ReLU Birkhoff factorization}

We then consider, in this model, the effect of taking the Birkhoff factorization
with respect to the ReLU Rota-Baxter operator of weight $+1$. Note that 
this gives an instance of a situation quite familiar from the theory of neural 
networks, where a ReLU function is applied 
to certain linear combinations and an optimization is performed over the result. 

\smallskip

\begin{prop}\label{ReLUphiCsh}
The Birkhoff decomposition of the character $\phi_{s,\fC,h}$ of Lemma~\ref{phiCsh},
with respect to the ReLU Rota--Baxter operator of weight $+1$ selects, for a given
tree $T$, chains $T_{v_N}\subset T_{v_{N-1}}\subset \cdots \subset T_{v_1}\subset T$ 
of accessible terms of $T$ where each $\phi_{s,\fC, h}(T_{v_i})>0$ and of 
maximal values among all accessible terms of $T_{v_{i-1}}$, that is, every
$T_{v_i}$ optimizes the value of the character among the available accessible 
terms.
\end{prop}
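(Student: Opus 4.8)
The plan is to specialize the semiring Birkhoff factorization formula of Proposition~\ref{semiRBfact} to the max-plus semiring $\cR = (\R \cup \{-\infty\}, \max, +)$ with the ReLU operator $R(x) = x^+$, exactly as was done in Lemma~\ref{toyRB2}, and then to analyze the resulting optimization using the inductive structure of the character $\phi_{s,\fC,h}$ from Lemma~\ref{phiCsh}. Writing $\boxdot = \max$ and $\odot = +$, the Bogolyubov preparation $\tilde\phi_{s,\fC,h}(T)$ becomes the maximum, over all nested chains of subforests of arbitrary length, of iterated-ReLU sums of the form appearing in Lemma~\ref{toyRB2}, and $\phi_{s,\fC,h,-}(T) = (\tilde\phi_{s,\fC,h}(T))^+$. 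As in the earlier arguments, it then suffices to treat nested chains of single accessible subtrees $T_{v_N} \subset \cdots \subset T_{v_1} \subset T$, since the value of the character on a forest is the ordinary sum of the values on its tree components, so a general subforest-chain contribution splits as a sum of subtree-chain contributions.

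First I would dispose of the quotient terms. By the same observation used in the proof of Corollary~\ref{toycor1} --- the quotient $T/T_v$ is formed by contracting $T_v$ to its root vertex marked as a trace, and the trace retains the head $h(T_v) = h(T)$ whenever $h(T)$ lies in $T_v$ --- the values $\phi_{s,\fC,h}(T/T_v)$ and $\phi_{s,\fC,h}(F_{\underline v_{i-1}}/F_{\underline v_i})$ are controlled, so that up to the base term $\phi_{s,\fC,h}(T)$ one is effectively optimizing the iterated-ReLU sum of the chain values $\phi_{s,\fC,h}(T_{v_i})$. I would then carry out the $N=1$ case explicitly: comparing $\phi_{s,\fC,h}(T)$ with $(\phi_{s,\fC,h}(T) + \phi_{s,\fC,h}(T_v)^+)^+$ shows that the maximum over accessible terms $T_v$ is attained precisely at those with $\phi_{s,\fC,h}(T_v) > 0$, and, after the outer ReLU, at those for which $\phi_{s,\fC,h}(T_v)$ is as large as possible --- that is, at the accessible terms $T_v$ optimizing the value of the character. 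The general case follows by induction on $N$: the inner ReLU applied at the $i$-th nesting level forces $\phi_{s,\fC,h}(T_{v_i}) > 0$, while the intermediate quotient terms do not improve the maximum (as in the remark closing the proof of Proposition~\ref{BFphisPh}), so the maximizing chain is a nested sequence in which each $T_{v_i}$ optimizes $\phi_{s,\fC,h}$ among the accessible terms of $T_{v_{i-1}}$.

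I expect the main obstacle to be the bookkeeping around the quotient terms $\phi_{s,\fC,h}(T/F_{\underline v})$. Unlike the head-only character $\phi_{\Upsilon,s,h}$ of Lemma~\ref{phiUpsilon}, the character $\phi_{s,\fC,h}$ is defined recursively through Merge and records the parameters $t_T = \fC(s(T_1),s(T_2))$ (or $1 - \fC(s(T_1),s(T_2))$), so one must check carefully that contracting an accessible term $T_v$ to a trace does not disturb the recursion above $v$ --- that the trace inherits the semantic point $s(T_v)$, so that the parameters $t_w$ at the vertices $w$ above $v$ are unchanged --- and hence that the quotient contributions behave as in the head-driven toy models. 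Once this compatibility of $\phi_{s,\fC,h}$ with the trace-contraction quotient is established, the optimization argument is a routine adaptation of Lemma~\ref{toyRB2} and Proposition~\ref{BFphisPh}.
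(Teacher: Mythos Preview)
Your proposal is correct and follows essentially the same route as the paper: unwind the semiring Birkhoff recursion in $(\R\cup\{-\infty\},\max,+)$ with $R=(\cdot)^+$ as in Lemma~\ref{toyRB2}, reduce to chains of subtrees, handle the $N=1$ case explicitly, and induct. The paper disposes of the quotient terms by observing that $h((T/T_v)_1)=h(T_1)$ and $h((T/T_v)_2)=h(T_2)$, hence $\phi_{s,\fC,h}(T/T_v)=\phi_{s,\fC,h}(T)$; your more careful formulation---that the trace must inherit the semantic point $s(T_v)$ so that the recursion defining $t_w$ at vertices $w$ above $v$ is unchanged---is exactly what is needed to justify that step, and is in fact stated more precisely than in the paper's own argument.
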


\proof As in Lemma~\ref{toyRB2}, we consider
$$ \phi_{s,\fC, h,-} (T) =\max \{ \phi_{s,\fC, h}(T), (\cdots ( \phi_{s,\fC, h}(F_{\underline{v}_N})^+ +\cdots +
\phi_{s,\fC, h}(F_{\underline{v}_{i-1}}/F_{\underline{v}_i}))^+ +
\cdots )^+ + \phi_{s,\fC, h}(T/F_{\underline{v}_0}) \}^+ \, , $$ over
all nested sequences of subforests of arbitrary length $N\geq 1$.  For
chains of length $N=1$, considering the case of subtrees
$T_v \subset T$, we are comparing $\phi_{s,\fC, h}(T)$ and
$\phi_{s,\fC, h}(T_v)^+ + \phi_{s,\fC, h}(T/T_v)$.  Again we have
$h((T/T_v)_1)=h(T_1)$ and $h((T/T_v)_2)=h(T_2)$, with
$T/T_v=\fM((T/T_v)_1, (T/T_v)_2)$, so that
$\phi_{s,\bP,h}(T/T_v)=\phi_{s,\bP,h}(T)$. Thus, the maximum
$\max\{ \phi_{s,\fC, h}(T), \phi_{s,\fC, h}(T_v)^+ + \phi_{s,\fC,
  h}(T/T_v) \}^+= (\phi_{s,\fC, h}(T_v)^+ + \phi_{s,\fC, h}(T/T_v))^+$
is achieved at the largest positive value $\phi_{s,\fC, h}(T_v)$ over
all accessible terms $T_v\subset T$.  The next step then compares this
maximal value with the values
$(\phi_{s,\fC, h}(T_w)^+ + \phi_{s,\fC, h}(T_v))^+ + \phi_{s,\fC,
  h}(T)$ over all accessible terms $T_w\subset T_v$ and the maximum is
again realized at the largest positive $\phi_{s,\fC, h}(T_w)$ among
these. This shows that the overall maximum is achieved at the longest
chain
$T_{v_N}\subset T_{v_{N-1}}\subset \cdots \subset T_{v_1}\subset T$ of
accessible terms where each $T_{v_i}$ has $\phi_{s,\fC, h}(T_{v_i})>0$
and of maximal values among all accessible terms of $T_{v_{i-1}}$.
\endproof

\subsection{Not a tensor-product model of semantic compositionality} \label{TensorSemSec} 

While the examples of characters, Rota--Baxter structures, and
Birkhoff factorizations considered above are just a simplified model,
they are already good enough to illustrate some important points.
Consider for example the property, mentioned in Remark~\ref{phialg},
that characters are {\em not} morphisms of coalgebras, but only
morphisms of algebras (or semirings).  This has important
consequences, such as the fact that we are {\em not} dealing here with
what is often referred to as ``tensor product based" connectionist
models of computational semantics, such as \cite{Smol}. The
compositional structure of such tensor product models has in our view been rightly
criticized (see for instance \cite{MarDou}) for not being compatible
with human behavior.  Indeed one can easily see the problem with such
models: the idea of ``tensor product based" compositionality is that,
given vectors $s(\alpha), s(\beta)\in \cS$ for lexical items
$\alpha,\beta$, one would assign to a {\em planar} tree
$T=\fM_{nc}(\alpha,\beta)$ a vector
$s(\alpha)\otimes s(\beta)\in \cS\otimes \cS$ and correspondingly
evaluate cosine similarity between $T$ and another
$T'=\fM_{nc}(\gamma,\delta)$ in the form
$\fC(\alpha,\gamma)\cdot \fC(\beta,\delta)$.

There are several obvious problems with such a proposal. In a simple
example with lexical items $\alpha=\gamma =${\em light} and
$\beta=${\em blue} and $\delta=${\em green}, the planar trees $T=${\em
  light blue} and $T'=${\em light green} should have {\em closer}
semantic values $s(T)$ and $s(T')$ than the values $s(\beta)$ and
$s(\delta)$ (since both colors share the property of being light), but
a measure of similarity of the product form
$\fC(\alpha,\gamma)\cdot \fC(\beta,\delta)$ would just be equal to
$\fC(\beta,\delta)$.  A further issue with these tensor-models, from
our perspective, is that this type of model would require previous
planarization of trees and cannot be defined at the level of the
products of free symmetric Merge.

In contrast, in the type of model we are discussing these issues do
{\em not} arise.  While we have described in \cite{MCB} and \cite{MBC}
the Merge operation on workspaces in terms of a coproduct on a Hopf
algebra of binary rooted forests, that maps to a tensor product
$\Delta: \cH \to \cH\otimes \cH$ (since comultiplication has two
outputs), the characters used for mapping to semantic spaces have no
requirement of compatibility with coproduct structure.  Indeed, in our
setting we would {\em not} assign to a tree $T=\fM(\alpha,\beta)$ a
tensor product of vectors and a product of cosine similarities, but a
{\em linear combination} $s(T)=t_T s(\alpha)+ (1-t_T)s(\beta)$, that
is indeed seemingly more directly compatible with the empirically
observed human behavior, as described in \cite{MarDou}.

\subsection{Boolean semiring}\label{BooleSec}

As a final example of a simple toy syntax-semantics interface model, in preparation for the
discussion of \S \ref{HeadIdRenSec2} we consider the simplest choice
of semiring, namely the Boolean semiring
\begin{equation}\label{BooleanB}
\cB=(\{ 0,1 \}, \vee, \wedge)=(\{0,1 \}, \max, \cdot)\, . 
\end{equation}
Assignments of values in the Boolean semiring can be regarded as a form of truth-valued
semantics, where one assigns a $0/1$ (F/T) value to (parts of) sentences
or to syntactic objects. 

\smallskip

A map $\phi: \fT_{\cS\cO_0} \to \cB$ is an assignment of truth values,
extended to $\phi: \fF_{\cS\cO_0}\to \cB$ by $\phi(F)=\prod_i \phi(T_i)$ for $F=\sqcup_i T_i$.
We use the identity as Rota--Baxter operator.

The Bogolyubov preparation $\tilde\phi$ is then given by
\begin{equation}\label{BprepareB}
 \tilde\phi(T)=\max\{ \phi(T), \phi(F_{\underline{v}})\phi(T/F_{\underline{v}}), \ldots ,
\phi(F_{\underline{v}_N}) \phi(F_{\underline{v}_{N-1}}/F_{\underline{v}_N})\cdots 
\phi(T/F_{\underline{v}_1}) \} \, , 
\end{equation}
with the maximum taken over all chains of nested forests of accessible terms.
Thus, $\tilde\phi$ detects, in cases where the truth value assigned to $T$ may be 
False ($\phi(T)=0$), the longest chains of decompositions into accessible terms and
their complements which separately evaluate as True, hence identifying where the
truth value changes from T to F when substructures are combined into the full structure. 

\smallskip

While we will not include in this work a specific discussion of truth conditional semantics,
we can use the example above to illustrate some known difficulties with that model and
possibly some way of reconsidering some of the issues involved. We look at a
simple example, mentioned in the criticism of truth conditional semantics in
Pietroski's work \cite{Pietro}, that consists of the observation that, while
the truth conditions of ``{\em France is a republic}" and ``{\em France is hexagonal}"
are satisfied, the sentence ``{\em France is a hexagonal republic}" seems weird, 
due to the semantic mismatch in the expression
``{\em hexagonal republic}".

We view this example in the light of an assignment $\phi: \cH \to \cB$
and the corresponding Birkhoff factorization with the identity
Rota--Baxter operator as written above.  We can assume that $\phi$
assigns value $\phi(T)=1$ when $T$ has a well determined associated
truth condition and $\phi(T)=0$ when it does not. Thus, the trees
corresponding to ``{\em France is a republic}" and ``{\em France is
  hexagonal}" would have value $1$, because a country can be a
republic and can have a certain type of shape on a map, while the tree
corresponding to ``{\em hexagonal republic}" would have value $0$ if
we agree that a polygonal shape is not one of the attributes of a form
of state governance. The tree $T$ that corresponds to ``{\em France is
  a hexagonal republic}" contains an accessible term $T_v$ that
corresponds to ``{\em hexagonal republic}" and accessible terms (in
this case leaves) $\ell$ and $\ell'$ that correspond to the lexical
items ``{\em hexagonal}'' and ``{\em republic}". Each accessible term
$T_v$ has a corresponding quotient $T/T_v$. The Bogolyubov preparation
$\tilde\phi$ of \eqref{BprepareB} then takes the form
$$  \tilde\phi( \Tree [ a [ b  [ c  d ] ] ])=\max\{  \phi( \Tree [ a [ b  [ c  d ] ] ]),  \,\,
\phi(a) \phi(\Tree[b [ c d ]]) , \,\, \phi(c) \phi(\Tree[a [ b d ]]), \,\, \phi(d) \phi(\Tree[a [ b c ]]), $$
$$ \,\, \phi(\Tree[ b [ c d ]] ) \phi(a),\,\,
\phi(\Tree[ c d ]) \phi(\Tree[ a b ]), \ldots \}\, , $$
where the $\ldots$ stand for the remaining terms of the coproduct that involve a forest
of accessible terms rather than a single one, which can be treated similarly.
Thus, while one would have $\phi(T)=0$, the value of $\tilde\phi(T)=1$ detects the presence of
substructures (the third and fourth among the explicitly listed terms on the right-hand-side of the formula above) that do have well defined
truth conditions.

This more closely reflects the fact that, when parsing the original
sentence for semantic assignments, one does indeed detect the presence
of the two substructures that have unproblematic truth conditions, and
the fact that these do not combine to assign a truth condition to the
full tree $T$, causing a mismatch between the values of $\phi(T)$ and
$\tilde\phi(T)$.  This manifests itself in the weird impression resulting
from the parsing of the full sentence.

\medskip

\section{The image of syntax inside semantics} \label{SyntImageSec}

The examples illustrated above demonstrates one additional property of
this model of syntax-semantics interface: syntactic objects are
mapped, together with their compositional structure under Merge,
inside semantic spaces and so are, at least in principle,
reconstructible from this syntactic ``shadow" projected on the model
used for the representation of semantic proximity relations. This
observation is in fact of direct relevance to the current controversy
about the relationship between large language models and generative
linguistics, as we discuss more explicitly below in \S \ref{TransfSec}
below.  For now, let us add some additional detail to this picture.

\smallskip

Consider again the setting of Lemma~\ref{phiP} above.

\begin{prop}\label{treesinS}
Let $\cS$ be a semantic space that is a geodesically convex Riemannian manifold, 
endowed with a semantic proximity function
$\bP: {\rm Sym}^2(\cS)\to [0,1]$ with the property that, for $s\neq s'$ one has $\bP(s,s')\in (0,1)$, 
and a map $s: \cS\cO_0 \to \cS$ that assigns semantic
values to lexical items and syntactic features. Let $h$ be a head function with domain
${\rm Dom}(h)\subset \fT_{\cS\cO_0}$. These data determine embeddings of
trees $T\in {\rm Dom}(h)$ {\em inside} the semantic space $\cS$.
\end{prop}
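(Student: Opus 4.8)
The plan is to make precise the sense in which the recursive assignment $T_v\mapsto s(T_v)$ of Lemma~\ref{phiP} realizes the abstract tree $T$ as a geodesic $1$-complex sitting inside $\cS$, and then to verify that this realization is injective and has no spurious self-intersections. First I would record that the construction of Lemma~\ref{phiP} produces, along with $s(T)$, a point $s(T_v)\in\cS$ for \emph{every} vertex $v\in V(T)$: for a leaf $\ell$ it is $s(\lambda(\ell))$, and for an internal vertex $v$ with children $u,u'$, so $T_v=\fM(T_u,T_{u'})$, it is $s(T_v)=p\,s(T_u)+(1-p)\,s(T_{u'})$, a point on the minimal geodesic arc joining $s(T_u)$ and $s(T_{u'})$, where $p=\bP(s(T_u),s(T_{u'}))$ or $1-\bP(s(T_u),s(T_{u'}))$ according to which child projects. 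As in \S\ref{GeodNeighSec} I invoke the standing assumption that the injectivity radius of $\cS$ exceeds the mutual distances of the finitely many points $s(T_v)$, so each such minimal arc is unique. The geometric realization $\iota_T\colon |T|\to\cS$ then sends each vertex $v$ to $s(T_v)$ and each edge $\{v,u\}$, with $u$ a child of $v$, to the geodesic sub-arc of $[\,s(T_u),s(T_{u'})\,]$ running from $s(T_v)$ to $s(T_u)$.

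The essential use of the hypothesis $\bP(s,s')\in(0,1)$ for $s\neq s'$ is a \emph{strict betweenness} statement: at every internal vertex $v$, provided $s(T_u)\neq s(T_{u'})$, the coefficient $p$ lies in the open interval $(0,1)$, so $s(T_v)$ is in the relative interior of the geodesic segment joining its two children's images, hence distinct from both endpoints and not an endpoint of either incident edge. One then argues, by induction on the number of leaves, that all the $s(T_v)$, $v\in V(T)$, are pairwise distinct. In the vector-space or simplex model this follows from the expansion of Lemma~\ref{lincombCsh}: since every coefficient $p$ lies in $(0,1)$, the barycentric expansion $s(T_v)=\sum_{\ell\in L(T_v)}a_\ell\,s(\ell)$ has \emph{all} coefficients $a_\ell$ strictly positive; and in a tree two distinct vertices always have distinct leaf-supports (proper nested inclusion for comparable vertices, disjointness for incomparable ones), so for $s$ in general position on $\cS\cO_0$ distinct supports force distinct points. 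In particular siblings at every depth receive distinct points, so the strict-betweenness step does apply at every Merge. The general geodesically convex Riemannian case is handled by the same convexity device used throughout \S\ref{HeadIdRenSec2}, replacing affine combinations by geodesic ones.

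Finally, to upgrade injectivity on vertices to an embedding of $|T|$, I would check that edge images meet only where the tree structure dictates: the two child-edges at $v$ together sweep out the segment $[\,s(T_u),s(T_{u'})\,]$ and, by strict betweenness on an injective minimal geodesic, meet only at $s(T_v)$; the parent-edge at $v$ lies along $[\,s(T_v),s(T_{v''})\,]$ ($v''$ the sibling of $v$) and meets the segment carrying the child-edges only at $s(T_v)$ when the two are transverse there; non-adjacent edges have supporting geodesic segments contained in the geodesically convex hulls of disjoint or properly nested leaf-subsets and, for $s$ in general position, are disjoint away from shared vertices; and each edge is itself an embedded arc since it is a minimal geodesic arc between distinct points within the injectivity radius.

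I expect the genuine obstacle to be precisely this last bundle of \emph{global} non-collision checks, rather than the local statement where $\bP\in(0,1)$ enters. Strict betweenness only prevents a vertex from collapsing onto a child; it does not by itself prevent two far-apart portions of the image from accidentally crossing, nor does it prevent the collinear degeneracies (e.g.\ $\fM(\alpha,\alpha)$ collapsing onto its leaf) that arise when the vectors $s(\cS\cO_0)$ fail to be distinct or in general position. The cleanest route is therefore to prove the embedding property in the affine (vector space or simplex) model using the positive barycentric coordinates above, and then transport it to the geodesically convex Riemannian setting via the convexity and injectivity-radius assumptions, with the conclusion understood to hold for $s$ chosen in general position, which is the natural reading of the hypothesis $\bP(s,s')\in(0,1)$ in this context.
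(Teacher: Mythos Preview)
Your geometric realization $\iota_T$ agrees with the paper's construction $\cI(T)$: both build the image of $T$ inductively as the union of the images of the two subtrees together with the geodesic arc joining $s(T_{v,1})$ and $s(T_{v,2})$, the root being the intermediate point $s(T_v)$ on that arc. The role you assign to the hypothesis $\bP(s,s')\in(0,1)$ (strict betweenness, so that $s(T_v)$ does not collapse onto either child) is also exactly how the paper uses it.

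Where you diverge from the paper is precisely at the point you yourself flag as the genuine obstacle: the global non-collision checks. You attempt to prove that $\iota_T$ is an embedding \emph{outright}, via positive barycentric coordinates, distinct leaf-supports, and a general-position hypothesis on $s|_{\cS\cO_0}$. The paper does not do this. Instead it concedes that the map as constructed is in general only an \emph{immersion}: leaves carrying the same lexical item land on the same point of $\cS$, and distinct geodesic arcs may cross at points that are not tree vertices. The paper then disposes of both failures by a perturbation argument: since $\cS$ is high-dimensional (dimension larger than two suffices), strand crossings can be removed by an arbitrarily small perturbation, and coincident leaf images can be separated by a small displacement along the incident arcs, interpreted semantically as the contextual shift induced by the different subtrees in which the repeated item occurs. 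This deforms the immersion to an embedding.

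So your approach is more ambitious but also more fragile: the convex-hull separation argument you sketch for non-adjacent edges is not quite enough even in the affine model (nested leaf-supports do not automatically force disjointness of the relevant arcs), and you are forced to import a general-position hypothesis that is not in the statement. The paper's route trades that for a dimension hypothesis ($\dim\cS>2$) and a soft perturbation step, which is cheaper and matches the informal level at which the proposition is meant. If you want to retain your direct argument, the cleanest fix is to assume the images $s(\lambda(\ell))$, $\ell\in L(T)$, are affinely independent in a vector-space model, in which case your barycentric argument does go through; but that is a stronger hypothesis than the paper needs.
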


\proof 
Arguing as in Lemma~\ref{phiP}, we can use the convexity property of $\cS$
and the function $\bP$ to extend $s: \cS\cO_0 \to \cS$ to a function $s: {\rm Dom}(h) \to \cS$,
inductively on the generation via Merge of objects $T\in \fT_{\cS\cO_0}$, by setting, for
$T\in {\rm Dom}(h)$
\begin{equation}\label{sT}
 s(T)= p\, s(T_1) + (1-p)\, s(T_2) \ \ \  \text{ for } \ \ \   T = \Tree[ $T_1$  $T_2$ ] 
\end{equation} 
\begin{equation}\label{pT1T2}
 p=\left\{ \begin{array}{ll} p_{s(T_1),s(T_2)} & h(T) = h(T_1) \\ 1-p_{s(T_1),s(T_2)} & h(T)=h(T_2) 
\end{array}\right. 
\ \ \ \text{ with } \ \ \   p_{s(T_1),s(T_2)}= \bP(s(T_1),s(T_2)) \, . 
\end{equation}
We can then obtain an embedding $\cI(T)$ of $T$ inside $\cS$ in the following way.
First the function $s: \cS\cO_0 \to \cS$ determines a position $s(\lambda(\ell))$ in $\cS$ for
every leaf of $T$, with $\lambda(\ell)$ the label in $\cS\cO_0$ assigned to the leaf $\ell\in L(T)$.
Note that the same lexical item $\lambda\in \cS\cO_0$ may be
assigned to more than one leaf in $L(T)$ so that this assignment $\ell \mapsto s(\lambda(\ell))$
is not always an embedding of $L(T)$ in $\cS$. For each pair $\ell,\ell' \in L(T)$ that are adjacent in $T$
the syntactic object 
$$ T_{v_{\ell,\ell'}} = \Tree[ $\ell$ $\ell'$ ] $$
with $v_{\ell,\ell'}$ the vertex above the leaves $\ell, \ell'$, 
is in ${\rm Dom}(h)$, since $T$ is, and \eqref{sT} assigns to it a point in $\cS$ on the
geodesic arc between $s(\lambda(\ell))$ and $s(\lambda(\ell'))$, where these two points
are distinct since $T_{v_{\ell,\ell'}}\in {\rm Dom}(h)$. We then obtain embeddings of all
the subtrees $T_{v_{\ell,\ell'}}$ in $\cS$ by taking the image $\cI(T_{v_{\ell,\ell'}})$
to consist of the geodesic arc $t s(\lambda(\ell)) + (1-t) s(\lambda(\ell'))$ with $t\in [0,1]$
with root at the point $s(T_{v_{\ell,\ell'}})$.

We proceed similarly for the subsequent steps
of the construction of $T$ in the $\cS\cO$ magma, by obtaining the image $\cI(T_v)$ of
a subtree $T_v$ as the union of the images $\cI(T_{v,1})$ and $\cI(T_{v,2})$, where
$T_v =\fM(T_{v,1}, T_{v,2})$, and the geodesic arc between $s( T_{v,1} )$ and $s( T_{v,2} )$
with root vertex at $s(T_v)$. 
The images $\cI(T)$ of trees $T\in {\rm Dom}(h)$ constructed are
in general immersions rather than simply embeddings because of the possible
coincidence of the points assigned to some of the leaves, as well as because of
possible intersections of the geodesic arcs at points that are not tree vertices.
Both of these issues can be readily resolved to obtain embeddings. Indeed,
the semantic space $\cS$ will be in general high dimensional. As long as it
is of dimension larger than two, crossings of strands of a diagram can be
eliminated by a very small perturbation. In the case of leaves carrying the
same lexical item, one can argue that the different context (in the sense
of the different subtree) in which the item appears will naturally slightly modify
its semantic location in $\cS$. This can be modeled by a small movement
of the endpoints of the geodesic arc to the interior of the arc (which functions
as modifier of the semantic proximity relations). This deforms the
immersions to embeddings. 
\endproof

\smallskip

The assumption that the function $\bP$ that measures semantic relatedness
has values $0< \bP(s,s') <1$ whenever $s\neq s'$ means that we model a situation
where different points in the semantic space $\cS$ are never completely semantically
disjoint or entirely coincident. In such a semantic space model, even an apparently
``nonsensical" pair would not score $0$ under the function $\bP$, so that,  for example,  different 
locations in $\cS$ would distinguish ``colorless green" from ``colorless red", 
as different (mental) images of (absence of) green rather than red color. The fact that the expression
is semantically awkward would correspond to a small (but non-zero) value of $\bP(s,s')$
that affects the (metric) shape of the resulting image tree (that in the geometric
setting we describe in \S \ref{AssocSec} below will end up located very near a 
boundary stratum of the relevant moduli space).

On the other hand, if we allow for the possibility that
$\bP(s,s')=0$ or $\bP(s,s')=1$, for some pairs $s\neq s'$, the construction of Proposition~\ref{treesinS}
would no longer yield an embedding, since for  lexical items mapped to such pairs the
root of the associated Merge tree would map to one or the other leaf rather than to an
intermediate point. Such models will result in certain syntactic trees being mapped to
degenerate image trees in $\cS$, that are located not just near, but on the boundary strata of the
moduli spaces we introduce in \S \ref{AssocSec} below. 
Such cases should also be taken into consideration. (We will see the
relevance of this in the context of Pietroski's semantics in \S \ref{PietSec} below.)
Here we focus on models where this situation can be avoided.

\smallskip

It is important to note that the image of the syntactic trees 
$T\in  {\rm Dom}(h)\subset \fT_{\cS\cO_0}$ inside the semantic space $\cS$ 
is like a static photographic image, rather than a dynamical computational
process. Indeed, {\em all} computational manipulations of syntactic objects
are performed by Merge on the syntax side of the interface, not inside
the space $\cS$, which does not have on its own a computational structure.
The only property of $\cS$ that is used to obtain an embedded
copy of the syntactic tree are proximity relations (here realized in the form of geodesic
convexity). 

\smallskip

In particular, given that the construction above determines an embedding of 
syntactic trees in semantic spaces, one can consider the {\em inverse
problem} of reconstructing syntactic objects and the action of Merge from
their image under this embedding. In other words, given enough measurements 
of semantic proximities in text, can we reconstruct the underlying generative process of
syntax?  Since the computational mechanism of syntax is not directly acting on
semantic spaces, and one is only able to see the embedding of the syntactic
objects, it is reasonable to expect that this inverse problem (reconstructing
the map $\tilde\phi$ of the syntax-semantics interface from the embedding $\cI$)
could be, and we suspect probably is, computationally hard. (See
\cite{Manning} for recent work that in a certain sense attempts to
solve this problem, but not within the explicit framework we describe here.)
We will return to discuss another instance
of this problem, in the context of large language models, in section
\S \ref{TransfSec}. 
In the next section, we further discuss the image of syntax inside semantics
and its relation to the Externalization of free symmetric Merge.

\medskip

\section{Head functions, moduli spaces, associahedra, and Externalization}\label{AssocSec}

We now revisit the simple model of \S \ref{PvalPhiSec}, with the recursive construction
of semantic values associated with trees in the domain of a head function. We view
here the same construction in terms of points in a moduli space of metric trees
introduced in \cite{DevMor}, related to moduli spaces of real curves of genus
zero with marked points. We will show that this viewpoint provides further
insight into the geometry of an Externalization process that introduces language-dependent
planarization of the syntactic trees, and the interaction between the core generative
process of free symmetric Merge and the Conceptual-Intentional system (the syntax-semantics
interface), and an Externalization mechanism that interfaces the same core
computational process with the Articulatory-Perceptual or Sensory-Motor system. 
This will provide a more careful and elaborate explanation of the viewpoint we sketched 
in the Introduction regarding independence of the syntax-semantics interface and
Externalization. The relation between these two mechanisms can also be approached
in a geometric form. 

\smallskip
\subsection{Preliminary discussion}\label{PrelimAssocSec}

In the formulation of Minimalism in terms of the free symmetric Merge
as the core computational mechanism, as presented in \cite{CSBFHKMS}
and formalized mathematically in our previous work \cite{MCB},
\cite{MBC}, the generative process of syntax produces hierarchical
structures through syntatic objects and the action of Merge on
workspaces (formalized in \cite{MCB} in terms of the Hopf algebra
$\cH$ of binary rooted forests with no assigned planar structure). A
mechanism of {\em Externalization} takes place after this generative
process. This mechanism describes the connection to the Sensory-Motor
system, that due to its physical and physiological nature externalizes
language in the form of a temporally ordered sequence of words,
realized as sounds or signs or writing (or, inversely, for
parsing). The necessity of temporal ordering in the Externalization of
language requires a {\em planarization} of the binary rooted trees
(syntactic objects), as the choice of a planar structure is equivalent
to the choice of an ordering of the leaves. This choice of
planarization is subject to language-dependent constraints, through
the syntactic parameters of languages.  In \cite{MCB} we proposed a
mathematical formalism for Externalization based on a suitable notion
of correspondences.

\smallskip

In the previous sections of this paper, we have analyzed possible models (some
of them highly simplified) of how the products of the free Merge generative process
of syntax can be mapped to semantic spaces, where the main property of
semantic space we have used is a notion of topological/metric 
proximity. This type of mapping of syntax to semantics is designed to directly
apply to the hierarchical structures produced by the free symmetric Merge, without
having to first pass through the choice of a planar structure as is done in the
externalization process. This mapping to semantic spaces represents the
interaction between the core computational mechanism of Merge with the 
Conceptual-Intentional system.

\smallskip

These two mechanisms are illustrated as the two top arrows depicted in Figure~\ref{SemExtFig}.
This part of the picture corresponds to property (3) on the list in \S \ref{RequireListSec}, that
semantic interpretation is, to a large extent, independent of Externalization.  However, obviously
the Externalization process and the mapping to semantic spaces need to be compatibly combined,
as figure Figure~\ref{SemExtFig} suggests. The goal of the rest of this section is to introduce the
mathematical framework in which both processes simultaneously coexist. 

\smallskip

\begin{figure}[h]
\begin{center}
\includegraphics[scale=0.2]{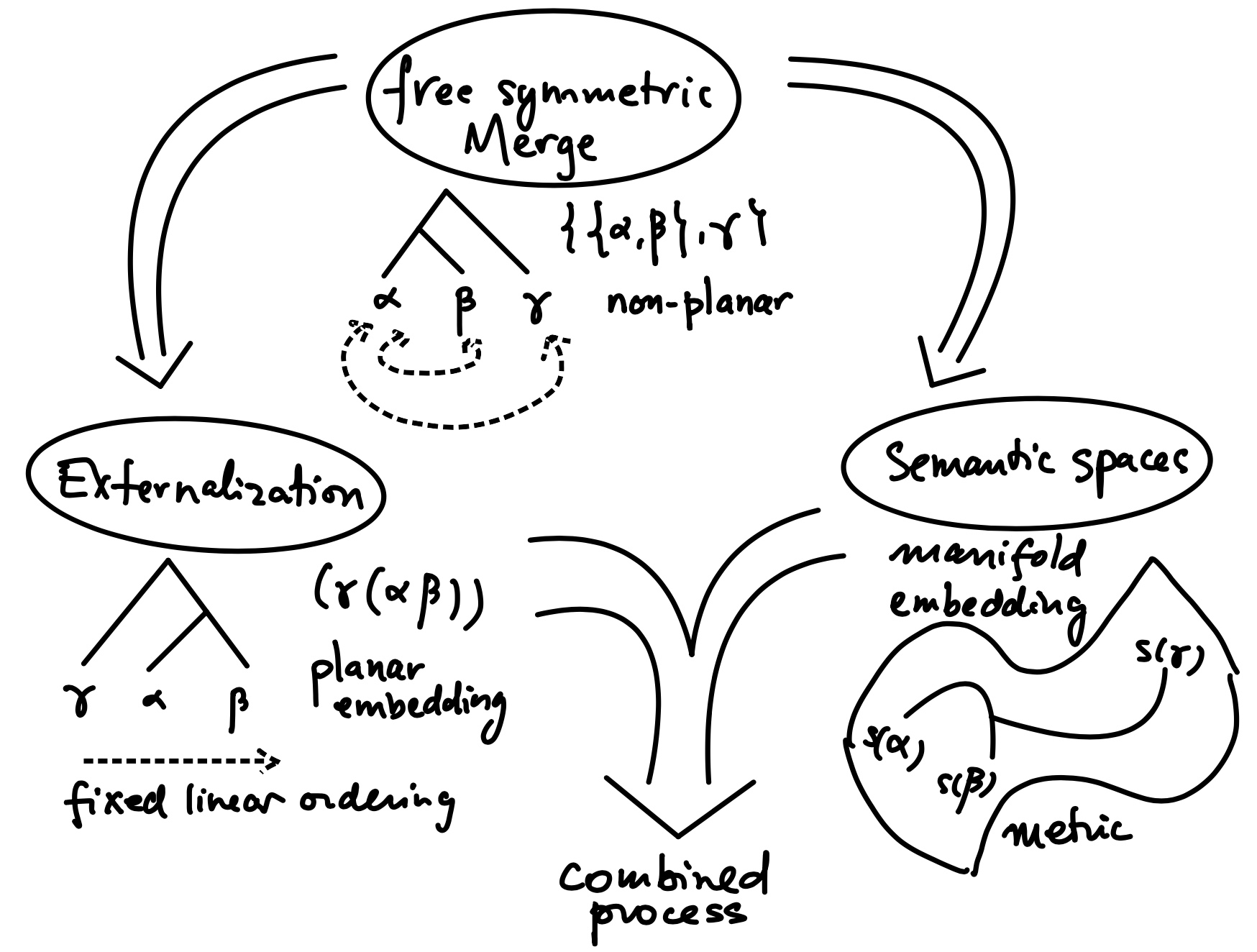} 
\caption{Free symmetric Merge, Externalization, and Semantic Spaces. \label{SemExtFig}}
\end{center}
\end{figure}

\smallskip

We proceed in the following way. First we introduce a framework designed for the
comparison of different planar structures on the same abstract binary rooted tree.
Since the planarization of Externalization is language dependent, we need a space
where different planarization can be cosidered. Such a space is well studied in
mathematics and is called the {\em associahedron}. We recall its properties in \S \ref{ModSpSec}.
At the same time, we want to keep track of the fact that the hierarchical structures produced
by the free symmetric Merge have also acquired a metric structure through its mapping
to semantic spaces, where this metric structure keeps track of information about
semantic relatedness, across substructures. This assignment of metric data on (non-planar)
binary rooted trees is also described by a well known mathematical object, the BHV moduli
space, that we also discuss in \S \ref{ModSpSec}. 

\smallskip

Thus, we present a formulation where, taken separately (as in the top arrows of Figure~\ref{SemExtFig})
the Externalization and the mapping to semantic spaces result, respectively, in the assignment to
a given syntactic object $T\in \cS\cO$ with $n$ leaves of a vertex in the $K_n$ associahedron,
and of a point in the BHV$_n$ moduli space. 

\smallskip

\begin{figure}[h]
\begin{center}
\includegraphics[scale=0.2]{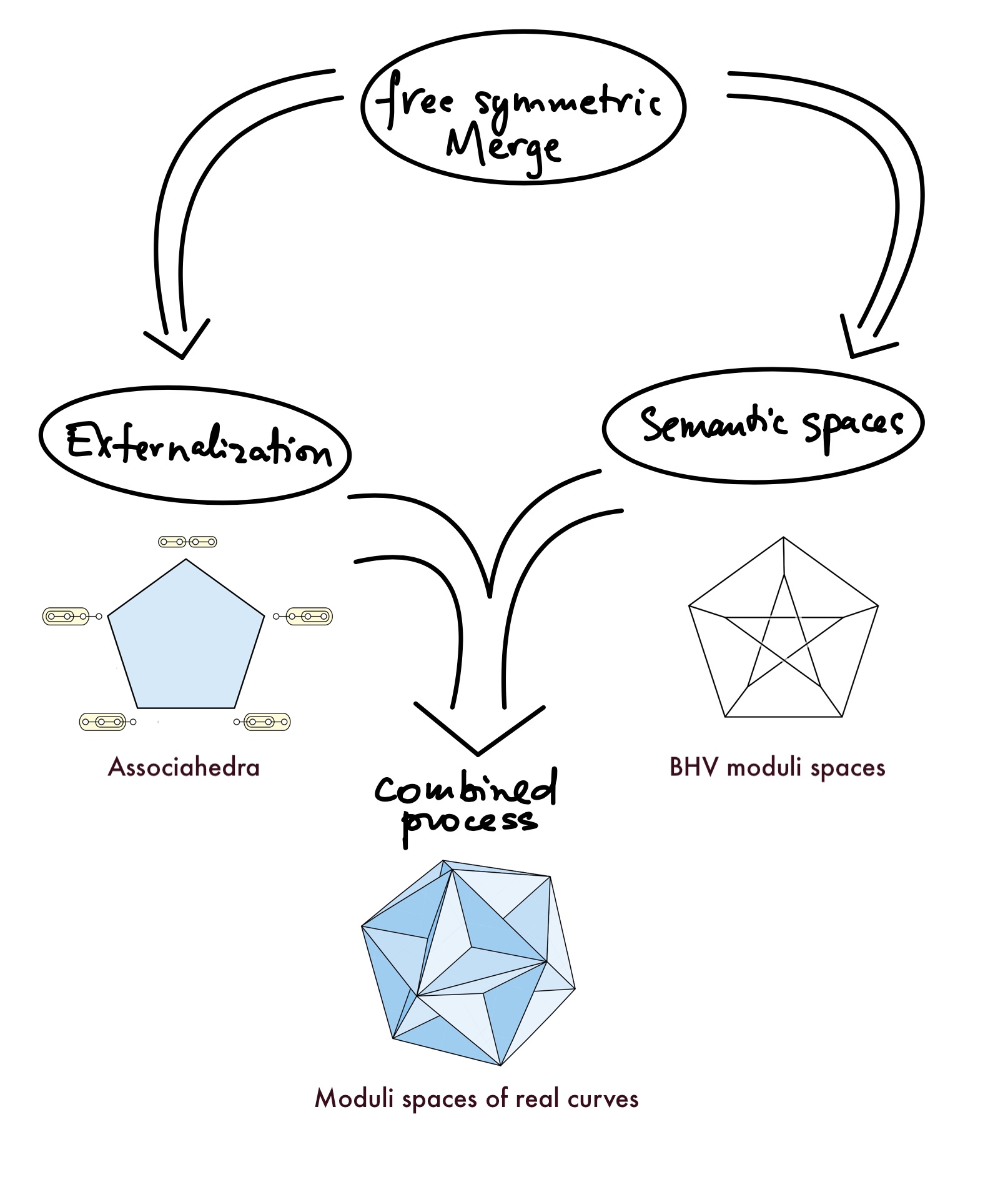} 
\caption{Free symmetric Merge, Externalization, and Semantics, and the respective moduli spaces. \label{SemExtFig2}}
\end{center}
\end{figure}

\smallskip

These two geometric objects, the associahedron and the BHV moduli space, naturally combine
into another space, which accounts for what happens when we enrich the combinatorial
associahedron with metric data. This is again a geometric object that is very well
known in mathematics, where it is identified with a certain moduli space of curves, $\bar M_{0,n}^{or}(\R)$.
We review in \S \ref{ModSpSec} the relation between these three fundamental spaces 
$K_n$, BHV$_n$, and $\bar M_{0,n}^{or}(\R)$, see Figure~\ref{SemExtFig2}.

\smallskip

In the subsequent sections \S \ref{PvalTModSpSec} and
\S \ref{OrigamiSec} we explain more in detail how the mapping to semantic spaces and
Externalization can be seen in this perspective. We include a discussion of
how Kayne's LCA algorithm, Cinque's sbtract functional lexicon, and constraints 
implemented by syntactic parameters appear in this formulation.


\smallskip
\subsection{Associahedra and moduli spaces of trees and curves} \label{ModSpSec}

We recall here some general facts about moduli spaces of abstract and
planar binary trees, and their relation to the moduli space of genus zero real curves with
marked points. For a more detailed account we refer the reader to \cite{BHV}, \cite{BoVo},
and \cite{DevMor}.

\smallskip

The Stasheff associahedron $K_n$ is a convex polytope of dimension $n-2$, where
the vertices correspond to all the balanced parentheses insertions on an ordered string 
of $n$ symbols (equivalently, all planar binary rooted trees on $n$ leaves) 
and the edges are given by a single application of the associativity rule.
For example the $1$-dimensional associahedron $K_3$ is the graph
with a single edge and two vertices 
$$ ((ab)c) \longleftrightarrow (a(bc)) \, . $$
The $2$-dimensional associahedron $K_4$ is similarly a pentagon, while 
the $3$-dimensional associahedron $K_5$ is illustrated in Figure~\ref{AssocFig}.
Faces of the associahedron $K_n$ are products of lower dimensional associahedra. 
These strata $K_{n_i}$ correspond to the degeneration of a binary tree where some
of the internal vertices acquire higher valencies. The description in terms of planar
binary rooted trees has an equivalent formulation in terms of triangulations of an
$n+1$-gon by drawing diagonals.

\begin{figure}[h]
\begin{center}
\includegraphics[scale=0.35]{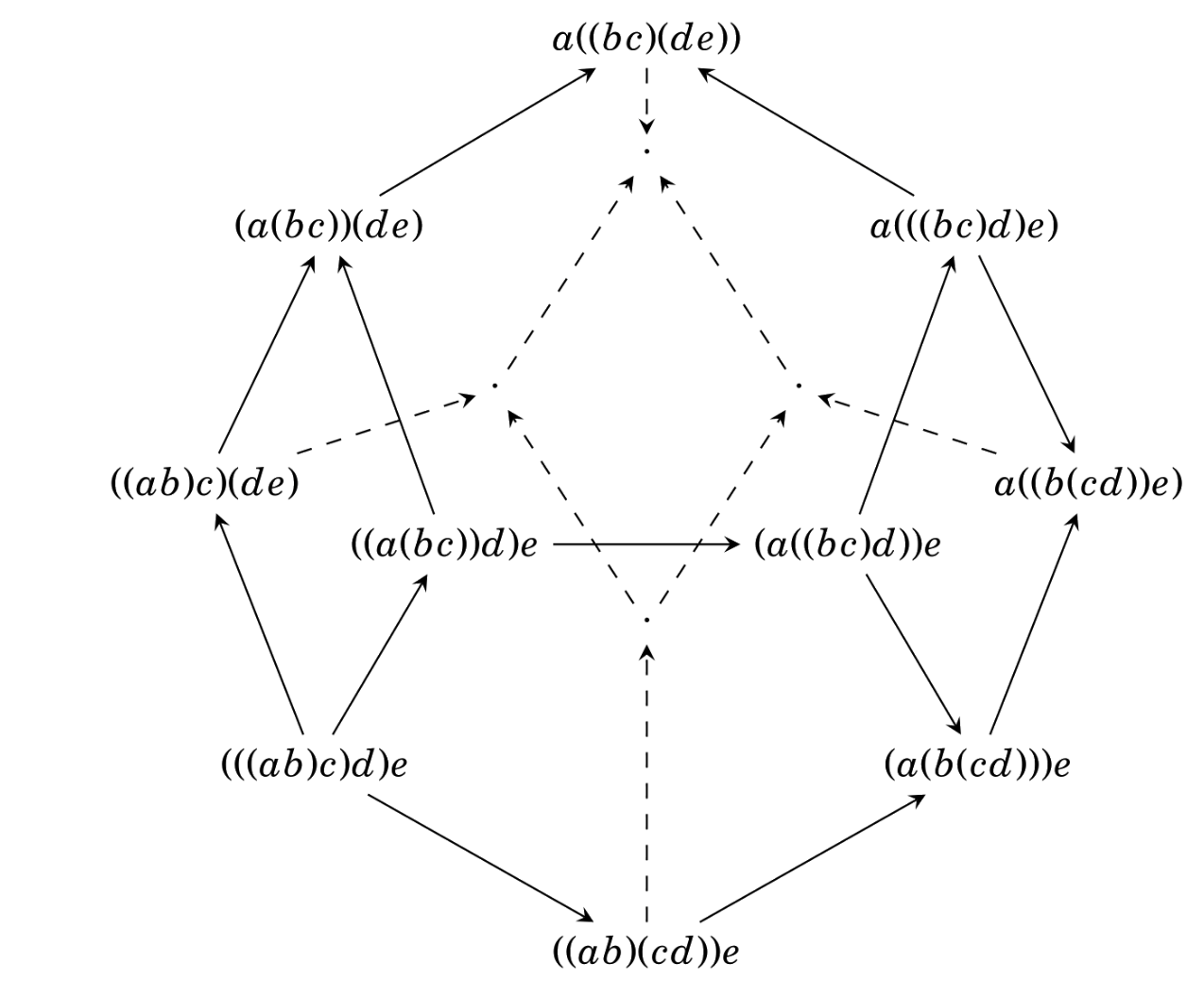} 
\includegraphics[scale=0.35]{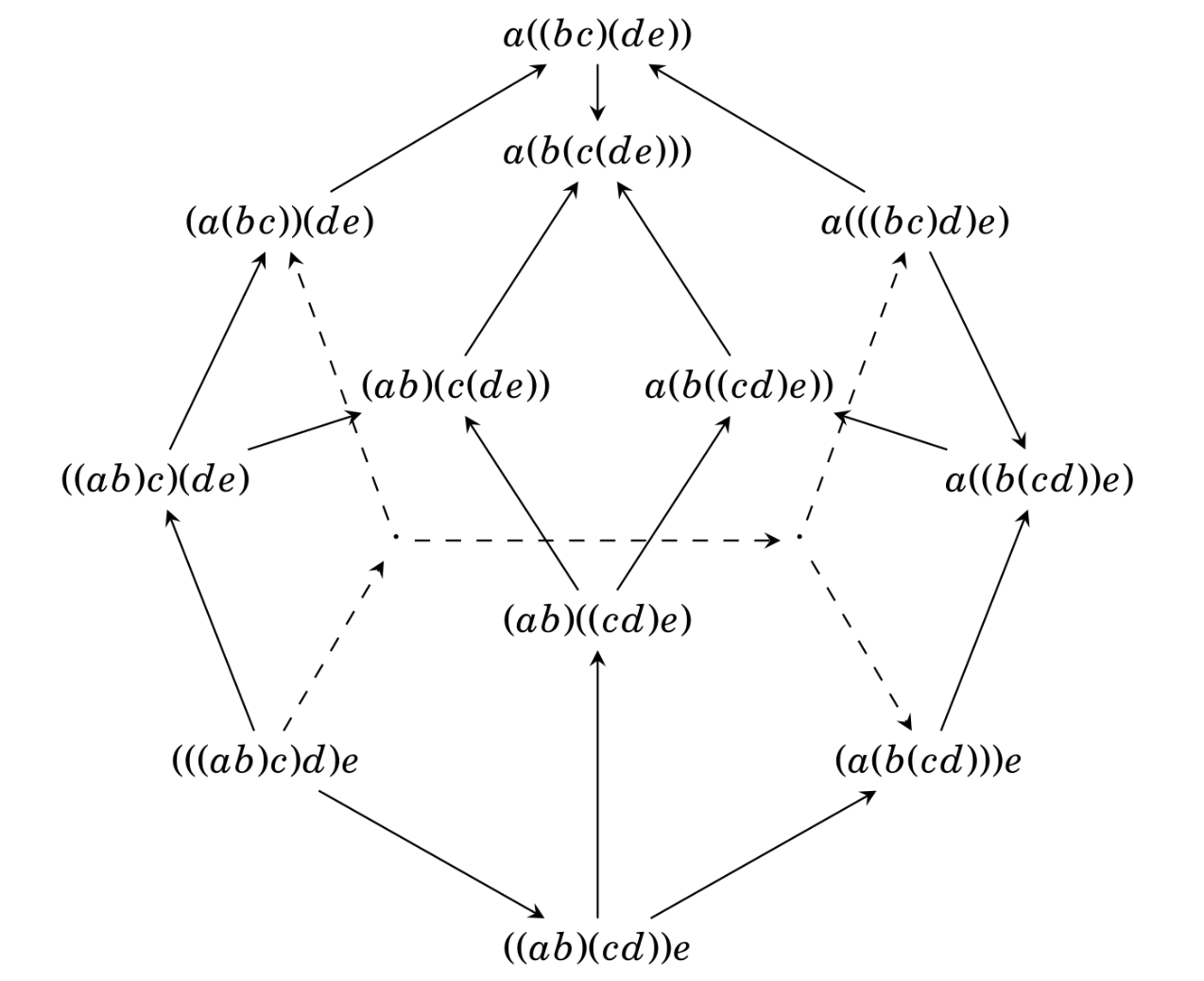} 
\caption{The Stasheff associahedron $K_5$, front and back view. \label{AssocFig}}
\end{center}
\end{figure}

\smallskip

Boardman and Vogt \cite{BoVo} showed that the associahedron $K_n$ can
be decomposed into $C_{n-1}$ cubes of dimension $n-2$, where $C_{n-1}$
is the Catalan number 
$$ C_{n-1} = \frac{1}{n} \binom{2n-2}{n-1} \, . $$
The decomposition of the associahedron $K_4$ is illustrated in Figure~\ref{FigK4}.

\smallskip

Each vertex of the associahedron can be identified with a {\em planar} binary rooted tree.
A way to interpret the polytope points here is as metric structures on planar binary rooted trees
that assign weights in $\R_\geq 0$ to the internal edges of the tree,
with degeneracies along the faces and vertices of the cubic decomposition, see 
Figure~\ref{FigK4} for $K_4$. Each cube in the decomposition parameterizes the
(normalized) choices of weights for the internal edges for the planar tree structure 
associated to that cube, and the faces are glued according to the transitions from
one tree structure to an adjacent one, as dictated by the associahedron structure. 

\begin{figure}[h]
\begin{center}
\includegraphics[scale=0.5]{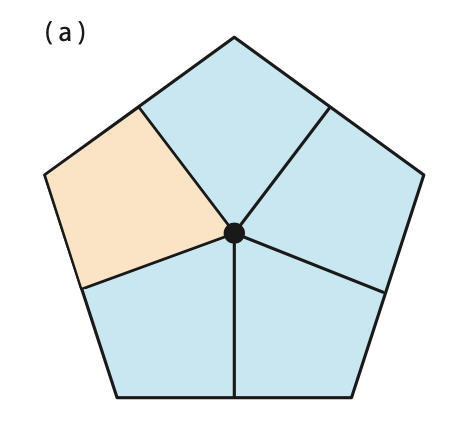} 
\includegraphics[scale=0.5]{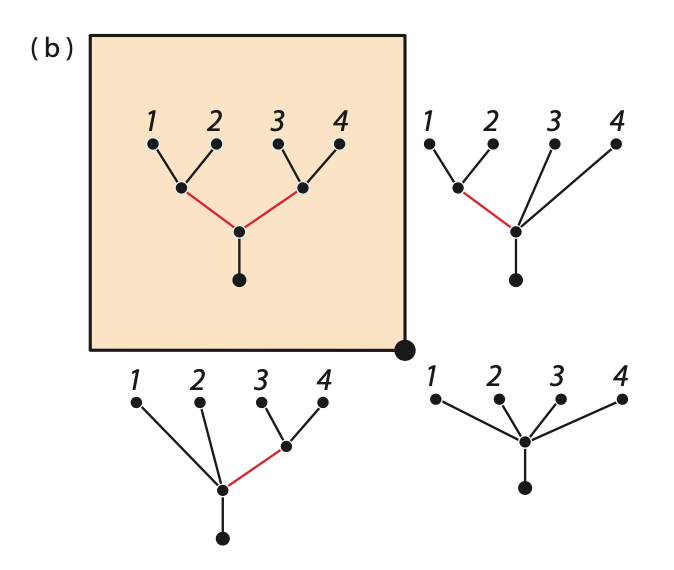} 
\caption{The Stasheff associahedron $K_4$ with its cubic decomposition (a), and parameterization 
of planar metric binary rooted trees (b), figures by Satyan Devadoss from \cite{DevMor}. \label{FigK4}}
\end{center}
\end{figure}

It was further shown by Devadoss and Morava \cite{DevMor} that
the parameterization of planar binary rooted trees with weights on the
internal edges in terms of the (open cells of the) associahedron and its cubic decomposition
can then be related to compactifications $\overline{M}_{0,n+1}(\R)$ of moduli spaces
of real curves of genus zero with $(n+1)$-marked points. 
The key idea here is that the ordered leaves of a planar binary rooted trees can be embedded
as an ordered set of points in the real line, where the coordinates of the points are obtained
from the weights assigned at the internal edges of the tree as a function $e^{-W}$ of the sum $W$ of
the weights along the path from the root to one of the leaves (see the example in Figure~\ref{treesM0nFig}).  
Note that, while the open cells of the associahedron correspond to binary trees, the boundary strata
of these cells contain trees with higher valences (corresponding to the limits of binary trees
when one or more of the edge lengths go to zero). Since the trees coming from syntax
are binary (see \cite{MCB} for our discussion on why Merge operators with higher arity
are excluded) the image from syntax will lie inside the open cells. The boundary structure
is still important though, because boundaries of cells in the associahedron encode all the
possible structural changes to the underlying hierarchical structures (syntactic objects).

\begin{figure}[h]
\begin{center}
\includegraphics[scale=0.5]{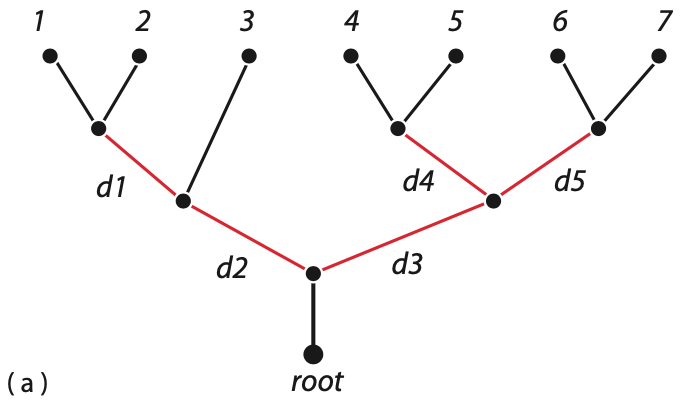}  \\ \bigskip
\includegraphics[scale=0.5]{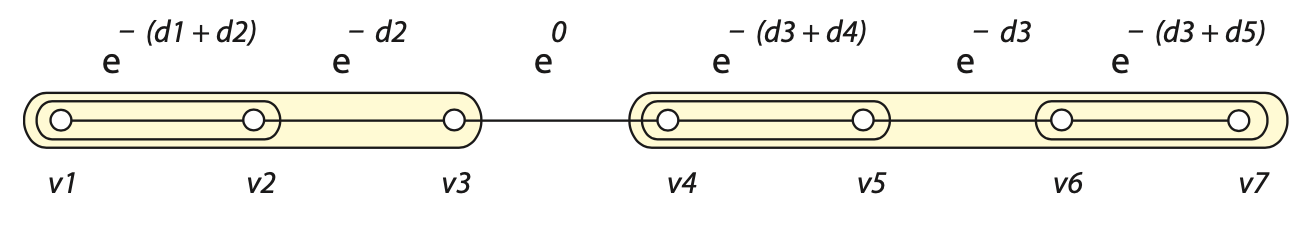} 
\caption{A planar binary rooted tree with weighted internal edges (a), and the
associated ordered configuration of points on the real line, figures by Satyan Devadoss from \cite{DevMor}. \label{treesM0nFig}}
\end{center}
\end{figure}

As shown by Devadoss in \cite{Deva}, the orientation double cover 
$\overline{M}^{or}_{0,n+1}(\R)$ of $\overline{M}_{0,n+1}(\R)$ can
be decomposed into a collection of $n!$ copies of the associahedron $K_n$,
where the $n!/2$ associahedra of $\overline{M}_{0,n+1}(\R)$ correspond
to the permutations of the $(n+1)$ points on the real line preserving the cyclic
order of $\{ 0, 1, \infty \}$, with gluings corresponding to certain twist operations
on the triangulated $(n+1)$-gons (see Figure~\ref{M04Fig} for the example of $n=3$.
Note that for $n\leq 3$, the moduli space $\overline{M}_{0,n+1}(\R)$ is orientable so
one does not see the role of the orientation double cover; see
\cite{DevMor} for a more
detailed discussion of the more general case). 

\begin{figure}[h]
\begin{center}
\includegraphics[scale=0.65]{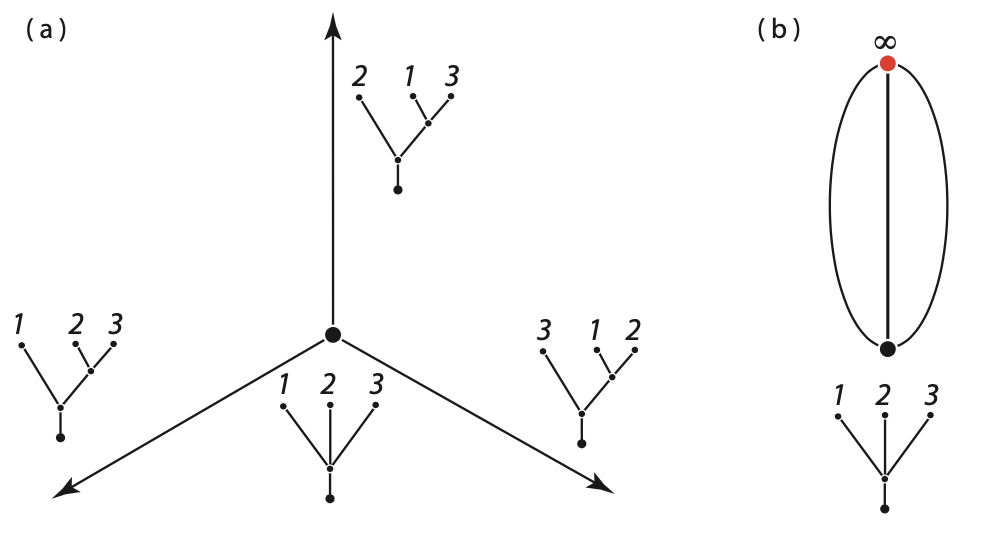}  
\caption{The moduli space ${\rm BHV}_3$ of abstract binary rooted trees 
and its one-point compactification ${\rm BHV}_3^+$, figure by Satyan Devadoss from \cite{DevMor}. \label{BHVmodFig}}
\end{center}
\end{figure}

One can also consider the moduli space ${\rm BHV}_n$ of abstract binary rooted trees with 
$n$ leaves (with no assigned planar structure) along with weighted internal edges, and their
one-point compactification ${\rm BHV}^+$, constructed by Billera, Holmes, and Vogtmann, \cite{BHV}.
The moduli space ${\rm BHV}_n$  is obtained by considering all the $(2n-3)!!$ abstract
binary rooted trees with $n$ labeled leaves. All these trees have $n-2$ internal edges.
For each tree, one considers an orthant $\R_{\geq  0}^{n-2}$, which represents all
the possible choices of a weight (length) for the internal edges. These orthants are
glued along the common faces (which correspond to shrinking one of the internal edges)
and this gives the space ${\rm BHV}_n$.
The link $\cL_n$ of the origin in ${\rm BHV}_n$ is an $(n-3)$-dimensional
simplicial complex. In the case $n=3$ it consists of three points. For $n=4$ it is
the Peterson graph of Figure~\ref{L4Fig}. In general, there are $(2n-3)!!$  
top $(n-3)$-dimensional simplexes  of $\cL_n$ (e.g.~15 edges in the case of $\cL_4$)
that correspond to the different trees, and two of them share a face when the corresponding
trees give rise to the same quotient tree when contracting an internal edge. 

\begin{figure}[h]
\begin{center}
\includegraphics[scale=0.35]{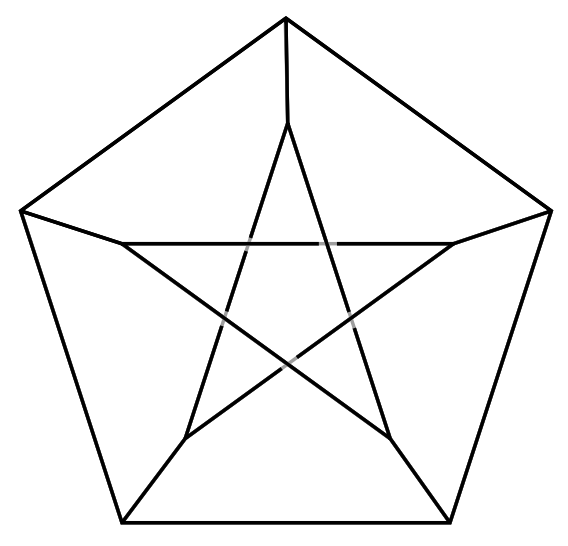}  
\caption{The Peterson graph is the link $\cL_4$ of the origin in ${\rm BHV}_4$. \label{L4Fig}}
\end{center}
\end{figure}

It is shown in \cite{DevMor} that there is a projection map between these moduli spaces,
\begin{equation}\label{projPin}
 \Pi_n: \overline{M}^{or}_{0,n+1}(\R) \twoheadrightarrow {\rm BHV}_n^+\, , 
\end{equation} 
with a finite projection that is generically $2^{n-1}$-to-$1$, obtained by an origami
folding of the cubes of the cubical decomposition of the associahedra in 
$\overline{M}^{or}_{0,n+1}(\R)$, according to the formula
$$ n! \cdot C_{n-1} = 2^{n-1} \cdot (2n-3) !! \, , $$
where the left-hand-side lists the $C_{n-1}$ cubes of the $n!$ associahedra of
$\overline{M}^{or}_{0,n+1}(\R)$, and the right-hand-side lists the $(2n-3)!!$ 
 simplexes of dimension $(n-3)$ of $\cL_n$, and $2^{n-1}$ is the multiplicity of
the generic fibers of the projection map. Note that $2^{n-1}$ is the number of
different planar structures for a given abstract binary rooted tree on $n$ leaves,
since such a tree has $n-1$ non-leaf vertices and the total number of
planar embeddings can be obtained by choosing one of two possible
planar embeddings (left/right) for each pair of edges below a given non-leaf vertex.
The origami folding quotient takes each $(n-2)$-dimensional cube and folds it in half in
each direction, obtaining $2^{n-2}$ foldings, with $2$ copies of each cube in the
orientation double cover, so that one obtains $2^{n-1}$ points in each general fiber.
We will see this more explicitly in \S \ref{PvalTModSpSec}, applied to our setting.

\smallskip
\subsection{Head functions, convex semantic spaces, and metric trees} \label{PvalTModSpSec}

With these facts in hand, now consider again the setting we discussed
in our simple example of \S \ref{PvalPhiSec}.

\smallskip

Consider the
set of all $(2n-3)!!$ abstract binary rooted trees with $n$ labeled leaves. Suppose that
the leaves are labeled by a given (multi)set $\{ \lambda_i \}_{i=1}^n$ of lexical items
and syntactic features in $\cS\cO_0$. If this is a multiset instead of a set, we still
interpret the multiple copies of a given item in $\cS\cO_0$ as {\em repetitions}, not
as {\em copies}, in the sense that they can play different roles in structure formation
via applications of Merge--hence we will still regard them as distinct labels. Thus,
we have the following geometric description of our data.
\begin{itemize}
\item For any choice of the lexical items associated to the leaves, we obtain a corresponding
copy of the moduli space ${\rm BHV}_n$. 
\item The link of the origin $\cL_n\subset {\rm BHV}_n$ can
be seen as an assignment of weights to the internal edges that is normalized
(for example by requiring that the total sum of weights is equal to $1$).
\item We write ${\rm BHV}_n(\Lambda)$ for $\Lambda=\{ \lambda_i \}_{i=1}^n$ for the
copy of ${\rm BHV}_n$ that corresponds to the given choice $\Lambda$ of the lexical
items assigned to the leaves. 
\item We similarly write $\cL_n(\Lambda)$ for the
associated copy of $\cL_n$.
\end{itemize}

\smallskip

\begin{prop}\label{headComplex}
The choice of a head function $h$ determines simplicial subcomplexes
$\cL_n(\Lambda,h) \subset \cL_n(\Lambda)$, ${\rm BHV}_n(\Lambda,h) \subset {\rm BHV}_n(\Lambda)$,
$M_n(\Lambda,h) \subset \overline{M}^{or}_{0,n+1}(\R)$, compatible with the
maps relating these moduli spaces, and a lift of $\cL_n(\Lambda,h)$ and ${\rm BHV}_n(\Lambda,h)$
inside $M_n(\Lambda,h)$, determined by the planar structure $\pi_h$ associated to the head function.
\end{prop}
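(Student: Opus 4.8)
The plan is to run the argument through the three combinatorial dictionaries recalled in \S\ref{ModSpSec}, reading off each subcomplex from the set of binary trees on which $h$ is defined. Recall that the top-dimensional simplices of $\cL_n(\Lambda)$ --- equivalently, the maximal orthants of ${\rm BHV}_n(\Lambda)$ --- are indexed by the $(2n-3)!!$ abstract binary rooted trees with leaves labelled by the (multi)set $\Lambda$; that for such a tree $T$ the fibre of the origami projection $\Pi_n$ of \eqref{projPin} over the cell of $T$ consists of exactly $2^{n-1}$ cubes of the cubical decomposition of the associahedra inside $\overline M^{or}_{0,n+1}(\R)$; and that these $2^{n-1}$ cubes are in bijection with the $2^{n-1}$ planar embeddings of $T$. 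The bridge to head functions is the equivalence recalled after Definition~\ref{headfunc} (from \cite{MBC}): a head function $h_T$ on a binary tree $T$ with $n$ leaves is the same datum as a planar embedding $\pi_{h_T}$ of $T$, and the count $2^{\# V^o(T)}=2^{n-1}$ of head functions on $T$ matches the generic multiplicity of $\Pi_n$.

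With this in hand I would first define the subcomplexes. Let $\cB(\Lambda,h)$ be the set of abstract binary rooted trees $T$ with leaf multiset $\Lambda$ that lie in ${\rm Dom}(h)$. Let $\cL_n(\Lambda,h)\subset\cL_n(\Lambda)$ be the subcomplex spanned by the top simplices $\sigma_T$ with $T\in\cB(\Lambda,h)$, that is, those simplices together with all their faces; this is a simplicial subcomplex automatically, since a union of closed simplices of a simplicial complex is closed under passing to faces. Let ${\rm BHV}_n(\Lambda,h)\subset{\rm BHV}_n(\Lambda)$ be the corresponding cone, namely the union of the closed orthants over the trees in $\cB(\Lambda,h)$, so that ${\rm BHV}_n(\Lambda,h)$ is the cone on $\cL_n(\Lambda,h)$. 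Finally let $M_n(\Lambda,h)\subset\overline M^{or}_{0,n+1}(\R)$ be the union, over $T\in\cB(\Lambda,h)$, of the closed cube $C_{(T,\pi_{h_T})}$ that the planar embedding $\pi_{h_T}$ attached to $T$ by $h$ singles out in the cubical decomposition; as a union of closed cells it is again a subcomplex (cubical, or simplicial after the standard subdivision of each cube). The counting identity $n!\cdot C_{n-1}=2^{n-1}\cdot(2n-3)!!$ of \S\ref{ModSpSec} is the consistency check that picking one sheet of $\Pi_n$ over each cell produces the expected cell count.

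Next I would verify the compatibility with the maps relating $K_n$, ${\rm BHV}_n^+$ and $\overline M^{or}_{0,n+1}(\R)$. The cone/link correspondence ${\rm BHV}_n(\Lambda,h)\leftrightarrow\cL_n(\Lambda,h)$ is immediate from the definitions, and intersecting $M_n(\Lambda,h)$ with each of the $n!$ associahedra gives a cubical subcomplex of that associahedron's decomposition. The substantive point --- where the real work sits --- is the interaction with $\Pi_n$: the origami folding of \cite{DevMor} collapses the planar data, sending the cube $C_{(T,\pi)}$ for \emph{any} planar embedding $\pi$ of $T$ onto the closed cell of $T$ in ${\rm BHV}_n^+$; hence $\Pi_n\bigl(C_{(T,\pi_{h_T})}\bigr)$ is exactly the cell of $T\in\cB(\Lambda,h)$, and $\Pi_n$ restricts to a surjection $M_n(\Lambda,h)\twoheadrightarrow{\rm BHV}_n(\Lambda,h)$ (inside ${\rm BHV}_n^+$). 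Because $h$ selects just one of the $2^{n-1}$ cubes over each cell, this restricted projection is moreover a bijection on the interiors of the top cells: over the dense open locus of ${\rm BHV}_n(\Lambda,h)$ where all internal edge lengths are positive, $\Pi_n$ admits the continuous section $s_h$ picking out $C_{(T,\pi_{h_T})}$, and its closure is the asserted lift of $\cL_n(\Lambda,h)$ and ${\rm BHV}_n(\Lambda,h)$ inside $M_n(\Lambda,h)$; it is \emph{determined by $\pi_h$} precisely because the choice of sheet of $\Pi_n$ is the choice of planar embedding.

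I expect the main obstacle to be the honest bookkeeping at the boundary strata rather than anything at the level of top cells. Two distinct trees $T,T'\in\cB(\Lambda,h)$ can share a face of ${\rm BHV}_n(\Lambda)$ --- a degenerate tree obtained by edge contraction --- while the cubes $C_{(T,\pi_{h_T})}$ and $C_{(T',\pi_{h_{T'}})}$ do \emph{not} meet in a common face of $\overline M^{or}_{0,n+1}(\R)$, because $\pi_{h_T}$ and $\pi_{h_{T'}}$ need not induce the same planar structure on the common contraction (this already occurs for $n=3$). Consequently $\Pi_n|_{M_n(\Lambda,h)}$ is in general only generically one-to-one and $s_h$ is a genuine section only over the open top strata; the clean way to phrase the ``lift'' part of the statement is therefore over this dense open locus, taking closures, and to check the diagram of \S\ref{ModSpSec} by restricting each map cell by cell rather than seeking a global homeomorphism. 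A last point to dispatch with a line is that ${\rm Dom}(h)$ is closed under passing to accessible terms (Remark~\ref{remheadsT}), which makes the induced planar embeddings $\pi_{h_T}$ mutually coherent on the subtrees shared by different trees in $\cB(\Lambda,h)$, so that $M_n(\Lambda,h)$ is unambiguously defined.
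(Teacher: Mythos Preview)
Your construction of $\cL_n(\Lambda,h)$ and ${\rm BHV}_n(\Lambda,h)$ matches the paper exactly, and your overall strategy via the dictionaries of \S\ref{ModSpSec} is the same. The one substantive discrepancy is in what you call $M_n(\Lambda,h)$. The paper defines $M_n(\Lambda,h)$ as the \emph{full preimage} $\Pi_n^{-1}({\rm BHV}_n(\Lambda,h)^+)$ (this is equation~\eqref{MnLambdah} in the proof), which over each tree $T\in{\rm Dom}(h)$ with leaf set $\Lambda$ contains all $2^{n-1}$ cubes corresponding to every planar embedding of $T$, not just the single cube $C_{(T,\pi_{h_T})}$. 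What you have defined as $M_n(\Lambda,h)$ is precisely what the paper calls the \emph{lift} of ${\rm BHV}_n(\Lambda,h)$ inside $M_n(\Lambda,h)$. With the paper's convention the phrase ``a lift inside $M_n(\Lambda,h)$'' is non-trivial (one sheet out of a $2^{n-1}$-to-$1$ cover); with yours it collapses to a tautology. This matters downstream: Corollary~\ref{phisigma} and \S\ref{OrigamiSec} speak of \emph{different} sections $\sigma_{\cS,n}$ and $\sigma_{L,n}$ of $\Pi_n$, both landing in $M_n(\Lambda,h)$, and of comparing them via covering transformations --- which presupposes that $M_n(\Lambda,h)$ contains several sheets.

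Your discussion of boundary strata and the possible failure of the cubes $C_{(T,\pi_{h_T})}$ and $C_{(T',\pi_{h_{T'}})}$ to glue along a common face is more careful than the paper, which does not address this point. Once $M_n(\Lambda,h)$ is redefined as the full preimage the difficulty partly dissolves: the preimage of a subcomplex under the cellular map $\Pi_n$ is automatically a subcomplex, and the lift then need only be a section over the open top strata, specified pointwise by $\pi_h$, exactly as you propose.
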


\smallskip

\proof
The choice of the head function $h$ selects, for each of these copies 
$\cL_n(\Lambda)\subset {\rm BHV}_n(\Lambda)$, a simplicial subcomplex 
$\cL_n(\Lambda,h) \subset \cL_n(\Lambda)$ and the associated cone
${\rm BHV}_n(\Lambda,h) \subset {\rm BHV}_n(\Lambda)$, where the set of top 
$(n-3)$-dimensional simplexes of $\cL_n(\Lambda,h)$ corresponds 
to the subset of the given $(2n-3)!!$ trees that belong to ${\rm Dom}(h)$. 

\smallskip

Let $M_n(\Lambda,h) \subset \overline{M}^{or}_{0,n+1}(\R)$ denote the
locus in $\overline{M}^{or}_{0,n+1}(\R)$ obtained as a pre-image under
the projection map of the image ${\rm BHV}_n(\Lambda,h)^+$ of the cone 
${\rm BHV}_n(\Lambda,h)$ in the one-point compactification ${\rm BHV}_n^+$,
\begin{equation}\label{MnLambdah}
M_n(\Lambda,h)  := \Pi_n^{-1} ( {\rm BHV}_n(\Lambda,h)^+ ) \, .
\end{equation}

\smallskip

A point in ${\rm BHV}_n(\Lambda,h)$ is a pair $(T,\underline{\ell})$ of an
abstract binary rooted tree on $n$ leaves labeled by the points of $\Lambda$
together with a set $\underline{\ell}=(\ell_k)_{k=1}^{n-2}$ of weights $\ell_i\in \R_{\geq 0}$
assigned to the internal edges of $T$. The $2^{n-1}$ points in the fiber 
$\Pi_n^{-1}(T,\underline{\ell}) \subset M_n(\Lambda,h)$ are given by the points $(T^\pi, \underline{\ell})$,
where $T^\pi$ ranges over all the possible planarizations $\pi$ of $T$ and the lengths of the
internal edges stay the same. 

\smallskip 

We have seen that the choice of a head function $h$ determines
an associated planar structure $\pi_h$ for all trees $T\in {\rm Dom}(h)$. 
Thus, the choice of a head function determines a lift of the subcomplex
${\rm BHV}_n(\Lambda,h)$ (and in particular of $\cL_n(\Lambda,h) \subset \cL_n(\Lambda)$)
to a subcomplex of $M_n(\Lambda,h) \subset \overline{M}^{or}_{0,n+1}(\R)$. 
\endproof

\smallskip

Consider then, as in  \S \ref{PvalPhiSec}, a semantic space $\cS$ that is a 
geodesically convex subspace of a Riemannian manifold, together with
a map $s: \cS\cO_0 \to \cS$. Assume that, for points in $\cS$, we can 
evaluate the frequency of semantic relatedness in a specified context in terms 
of a biconcave function $\bP: {\rm Sym}^2(\cS)\to [0,1]$.  

\begin{prop}\label{ellfromh}
Let $T \in {\rm Dom}(h)\subset \fT_{\cS\cO_0}$ be a tree with $n$ leaves. 
The data $(s: \cS\cO_0 \to \cS, \bP)$ determine a set 
$\underline{\ell}=(\ell_k)_{k=1}^{n-2} \in \R^{n-2}_{\geq 0}$ 
of weights assigned to the internal edges of $T$. Thus the data 
$(s: \cS\cO_0 \to \cS, \bP)$ determine a point $(T, \underline{\ell}^{(h,s,\bP)}) \in \cL_n(\Lambda,h)$
and a point in the corresponding fiber of the projection from $M_n(\Lambda,h)$.
\end{prop}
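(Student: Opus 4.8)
The plan is to feed the recursive construction of the points $s(T_v)\in\cS$ from Lemma~\ref{phiP} and Proposition~\ref{treesinS} into the orthant/cubical description of the moduli spaces recalled in \S\ref{ModSpSec} and Proposition~\ref{headComplex}, reading off one nonnegative real weight at each internal edge of $T$ from the geodesic geometry of $\cS$, and then using the lift already provided by Proposition~\ref{headComplex} to pick out a point in the fiber.

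First I would recall that, since $T\in{\rm Dom}(h)$, every accessible term $T_v\subset T$ is also in ${\rm Dom}(h)$ by the defining property of head functions, so the inductive rule of Lemma~\ref{phiP} assigns a point $s(T_v)\in\cS$ to every vertex $v\in V(T)$: the point $s(\lambda(\ell))$ to a leaf $\ell$, and the convex combination $s(T_w)=t\,s(T_{w,1})+(1-t)\,s(T_{w,2})$ to an internal vertex $w$ with $T_w=\fM(T_{w,1},T_{w,2})$, where $t$ is $\bP(s(T_{w,1}),s(T_{w,2}))$ or $1-\bP(s(T_{w,1}),s(T_{w,2}))$ according to which child carries the head; in particular $s(T_w)$ lies on the minimal geodesic arc joining $s(T_{w,1})$ and $s(T_{w,2})$ (taking the injectivity radius large enough, as in \S\ref{GeodNeighSec}). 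I would then identify the internal edges of $T$ with the non-root internal vertices: a binary rooted tree on $n$ leaves has $n-1$ internal vertices and $2n-2$ edges, of which $n$ are leaf edges, so the internal edges are exactly the edges $e=\{v,w\}$ with $w$ the parent of $v$ and both internal, and $e\mapsto v$ is a bijection onto the $n-2$ non-root internal vertices. To such an edge I would assign the weight
$$ \ell_e \ :=\ d_g\big(s(T_v),s(T_w)\big)\ \in\ \R_{\geq 0}, $$
the geodesic distance in $(M,g)$; since $s(T_v)$ is one of the two endpoints of the geodesic arc carrying $s(T_w)$, this equals the length of the sub-arc from $s(T_v)$ to $s(T_w)$, hence equals $t\,d_g(s(T_{w,1}),s(T_{w,2}))$ or $(1-t)\,d_g(s(T_{w,1}),s(T_{w,2}))$, a well-defined nonnegative number.

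Collecting the $\ell_e$ over the $n-2$ internal edges gives $\underline{\ell}^{(h,s,\bP)}=(\ell_e)_e\in\R_{\geq 0}^{n-2}$, a point in the orthant of combinatorial type $T$ inside ${\rm BHV}_n(\Lambda)$; since $T\in{\rm Dom}(h)$, this orthant is one of the cones of ${\rm BHV}_n(\Lambda,h)$ from Proposition~\ref{headComplex}, and dividing by $\sum_e\ell_e$ yields the asserted point $(T,\underline{\ell}^{(h,s,\bP)})\in\cL_n(\Lambda,h)$ (under the hypothesis $\bP(s,s')\in(0,1)$ for $s\neq s'$ of Proposition~\ref{treesinS}, together with pairwise distinctness of the $s(T_v)$, every $\ell_e$ is strictly positive and the point lies in the open top-dimensional simplex labelled by $T$; otherwise it lies on a boundary face). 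Finally, the point in the fiber is supplied directly by Proposition~\ref{headComplex}: the head function $h$ fixes the planar structure $\pi_h$ on $T$, and among the $2^{n-1}$ preimages in $\Pi_n^{-1}(T,\underline{\ell}^{(h,s,\bP)})$ this selects the single point $(T^{\pi_h},\underline{\ell}^{(h,s,\bP)})\in M_n(\Lambda,h)$.

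The argument is essentially bookkeeping once Lemma~\ref{phiP}, Proposition~\ref{treesinS} and Proposition~\ref{headComplex} are in hand, so I do not anticipate a serious obstacle; the only points requiring care are the identification of internal edges with non-root internal vertices (hence the dimension count $n-2$) and the passage from the cone ${\rm BHV}_n(\Lambda,h)$ of unnormalized weights to the link $\cL_n(\Lambda,h)$, where one normalizes by the total weight and notes that the all-zero weight vector — which would occur only if every leaf mapped to the same point of $\cS$ — is excluded by a mild genericity assumption. One may also remark that replacing $\ell_e$ by the raw parameter $\bP(s(T_{w,1}),s(T_{w,2}))$ would land in the same orthant, so the combinatorial type recorded in $\cL_n(\Lambda,h)$ is insensitive to this choice.
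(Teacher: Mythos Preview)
Your argument is correct, but it takes a different route from the paper's own proof. The paper assigns to the internal edge with lower endpoint $v$ the weight $\ell_v := p_v = \bP(s(T_{v,1}),s(T_{v,2}))$, computed directly from the $\bP$-value at the vertex $v$ using its two children; no geodesic distances in $\cS$ are invoked at all. You instead set $\ell_e := d_g(s(T_v),s(T_w))$, which depends on the $\bP$-value at the \emph{parent} vertex $w$ and on the geodesic length $d_g(s(T_{w,1}),s(T_{w,2}))$. Both choices are legitimate realizations of the proposition (which only asserts that the data determine \emph{some} point in $\cL_n(\Lambda,h)$), and both place the point in the same top-dimensional cell of the orthant complex; but they are not the same point, and your closing remark about ``replacing $\ell_e$ by the raw parameter $\bP(s(T_{w,1}),s(T_{w,2}))$'' still differs from the paper's formula, since the paper reads off $\bP$ at $v$, not at $w$. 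The paper's choice is the more economical one for the purposes of the section --- it matches the character $\phi_{s,\bP,h}$ of Lemma~\ref{phiP} edge by edge (cf.\ Remark~\ref{Accposition} and Corollary~\ref{phisigma}) and avoids importing the Riemannian metric --- whereas your choice has the virtue of being the intrinsic metric induced on the embedded image $\cI(T)\subset\cS$ of Proposition~\ref{treesinS}, so it ties the BHV point directly to that geometric picture.
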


\proof
To see this, we proceed as
in  \S \ref{PvalPhiSec}. For each of the $n-2$ vertices $v$ of $T$ that are
neither the root nor one of the leaves, consider the two subtrees $T_{v,1}$ and
$T_{v_2}$ that have root vertices $v_1, v_2$ immediately below $v$, and 
compute $p_v:= \bP(s(T_{v,1}), s(T_{v,2}))$, where $h(T_{v,i})$ is the head
leaf of the subtree $T_{v,i}$. We label the $(n-2)$ internal edges by the target vertex $v$
(where the tree is oriented away from the root) and we take $\ell_v=p_v$.  

\smallskip

Thus, we have that, for a tree $T \in {\rm Dom}(h)\subset \fT_{\cS\cO_0}$ on $n$ leaves labeled
by $\Lambda$, the choice of a head function $h$, together with
the choice of a map $s: \cS\cO_0 \to \cS$ and of the function 
$\bP: {\rm Sym}^2(\cS)\to [0,1]$ determines, after an overall normalization 
of the weights, a point $(T, \underline{\ell}^{(h,s,\bP)}) \in \cL_n(\Lambda,h)$,
and a corresponding point $(T^{\pi_h}, \underline{\ell}^{(h,s,\bP)}) \in \cL_n(\Lambda,h)$
in the fiber above $(T, \underline{\ell}^{(h,s,\bP)})$ in $M_n(\Lambda,h)\subset \overline{M}^{or}_{0,n+1}(\R)$.
\endproof

\smallskip

\begin{rem}\label{Accposition}{\rm 
Note that in  \S \ref{PvalPhiSec} we used the same coordinates $\bP(s(T_{v,1}), s(T_{v,2}))$
to assign points $s(T_v)=p_v s(T_{v,1}) + (1-p_v) s(T_{v,2})$ or
$s(T_v)=p_v s(T_{v,2}) + (1-p_v) s(T_{v,1})$ (according to whether the
head $h(T_v)$ matches the head of either of the two subtrees). Thus, according to this
construction, the weight of an internal edges of $T$ obtained as in Proposition~\ref{ellfromh} 
reflects the positions in the semantic space $\cS$ of the accessible term below that edge.}
\end{rem}

\smallskip

As a result, we can view the construction of the character $\phi_{s, \bP, h}$ of  \S \ref{PvalPhiSec} 
equivalently as the construction of a section.

\begin{cor}\label{phisigma}
The construction of the character $\phi_{s, \bP, h}$ of  \S \ref{PvalPhiSec} is 
equivalent to the construction of a partially defined section 
\begin{equation}\label{sigmaSsec}
 \sigma_{s, \bP, h,n} : {\rm BHV}_n \to \overline{M}^{or}_{0,n+1}(\R) 
\end{equation} 
which is defined over
$$ {\rm Dom}(\sigma_{s, \bP, h,n})={\rm BHV}_n(\Lambda,h)\, , $$
and a partially defined map 
\begin{equation}\label{sPhmap}
s_{\bP, h}: \fT_{\cS\cO_0} \to \cup_n \cL_n(\Lambda,h)
\end{equation}
with ${\rm Dom}(s_{\bP, h})={\rm Dom}(h)$. The construction of the character $\phi_{s, \bP, h}$ 
of  \S \ref{PvalPhiSec} is equivalent to the construction of the composite map
$\sigma_{s, \bP, h}  \circ s_{\bP, h}$. 
\end{cor}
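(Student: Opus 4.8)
The plan is to deduce the statement directly from Propositions~\ref{headComplex} and~\ref{ellfromh}, since the geometric content is already in place there; what remains is to package it as the observation that a single recursion drives both the character $\phi_{s,\bP,h}$ and the composite of the two maps. First I would construct $s_{\bP,h}$. By Proposition~\ref{ellfromh}, for each $T\in{\rm Dom}(h)$ with $n$ leaves labeled by a multiset $\Lambda$, the data $(s:\cS\cO_0\to\cS,\bP)$ produce internal-edge weights $\underline{\ell}^{(h,s,\bP)}=(\ell_v)_v$ with $\ell_v=p_v=\bP(s(T_{v,1}),s(T_{v,2}))$, hence, after the overall normalization, a point $(T,\underline{\ell}^{(h,s,\bP)})\in\cL_n(\Lambda,h)$. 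Letting $n$ and $\Lambda$ vary and taking the disjoint union of the $\cL_n(\Lambda,h)$ gives the partially defined map $s_{\bP,h}:\fT_{\cS\cO_0}\to\cup_n\cL_n(\Lambda,h)$ of \eqref{sPhmap}, with ${\rm Dom}(s_{\bP,h})={\rm Dom}(h)$.

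Next I would build the section. By Proposition~\ref{headComplex}, the head function $h$ determines, for every $T\in{\rm Dom}(h)$, a planarization $\pi_h$, and for any assignment $\underline{\ell}$ of weights to the internal edges of $T$ this selects one of the $2^{n-1}$ points of the fiber $\Pi_n^{-1}(T,\underline{\ell})\subset M_n(\Lambda,h)$, namely $(T^{\pi_h},\underline{\ell})$. Because this choice is coherent across the faces of the cubical decomposition --- degenerating an internal edge of $T$ degenerates the corresponding edge of $T^{\pi_h}$ compatibly --- the assignment $(T,\underline{\ell})\mapsto(T^{\pi_h},\underline{\ell})$ is a genuine section $\sigma_{s,\bP,h,n}$ of the projection $\Pi_n$ over ${\rm BHV}_n(\Lambda,h)$, which is \eqref{sigmaSsec}. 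Restricting $\sigma_{s,\bP,h,n}$ to the link $\cL_n\subset{\rm BHV}_n$ and composing with $s_{\bP,h}$ sends $T$ to $(T^{\pi_h},\underline{\ell}^{(h,s,\bP)})$, exactly the point produced at the end of the proof of Proposition~\ref{ellfromh}.

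Finally I would verify the equivalence with $\phi_{s,\bP,h}$. By construction the coordinates of $(T^{\pi_h},\underline{\ell}^{(h,s,\bP)})$ are the numbers $p_v=\phi_{s,\bP,h}(T_v)$, for $v$ ranging over the non-root, non-leaf vertices of $T$, i.e., over the nontrivial accessible terms $T_v\subset T$; by Remark~\ref{Accposition} these are precisely the values entering the recursive definition of $\phi_{s,\bP,h}$ in Lemma~\ref{phiP} (including the passage to the points $s(T_v)\in\cS$). Hence the composite $\sigma_{s,\bP,h}\circ s_{\bP,h}$ records on each tree the full list of character values on its accessible terms, and conversely this list reconstitutes the recursion. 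Extending by $\phi_{s,\bP,h}(F)=\prod_k\phi_{s,\bP,h}(T_k)$ for $F=\sqcup_k T_k$ and by $\phi_{s,\bP,h}(\sum_i a_i F_i)=\max_i a_i\phi_{s,\bP,h}(F_i)$ on $\cH^{cone}$ then shows that knowing the composite on all of ${\rm Dom}(h)$ is the same datum as knowing $\phi_{s,\bP,h}$.

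The main obstacle I anticipate is bookkeeping rather than substance: making the word \emph{equivalent} precise requires care about the overall normalization in passing to $\cL_n$ --- the link remembers the weights only up to a positive scalar, so one should either carry the scale as an extra $\R_{>0}$-factor or phrase the equivalence at the level of the defining data $(s,\bP,h)$ together with the recursion they drive --- and about the compatibility of the planarization $\pi_h$ with the boundary strata where several edge weights vanish and internal vertices merge, where one needs that $h$ still induces a coherent planar structure on the degenerate trees. The latter follows from the inductive clause of Definition~\ref{headfunc}, but it is the point that should be spelled out rather than taken for granted.
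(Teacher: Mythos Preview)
Your proposal is correct and follows the same approach implicit in the paper: the corollary is stated there without proof, as an immediate consequence of Propositions~\ref{headComplex} and~\ref{ellfromh} together with Remark~\ref{Accposition}, which is precisely the route you take. Your write-up is in fact more explicit than the paper's treatment, and your cautionary remarks about the normalization passage to $\cL_n$ and the behavior of $\pi_h$ at boundary strata are appropriate points of care that the paper leaves tacit.
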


\smallskip

For the case $n=3$, the projection maps are illustrated in Figure~\ref{M04Fig}.

\begin{figure}[h]
\begin{center}
\includegraphics[scale=0.65]{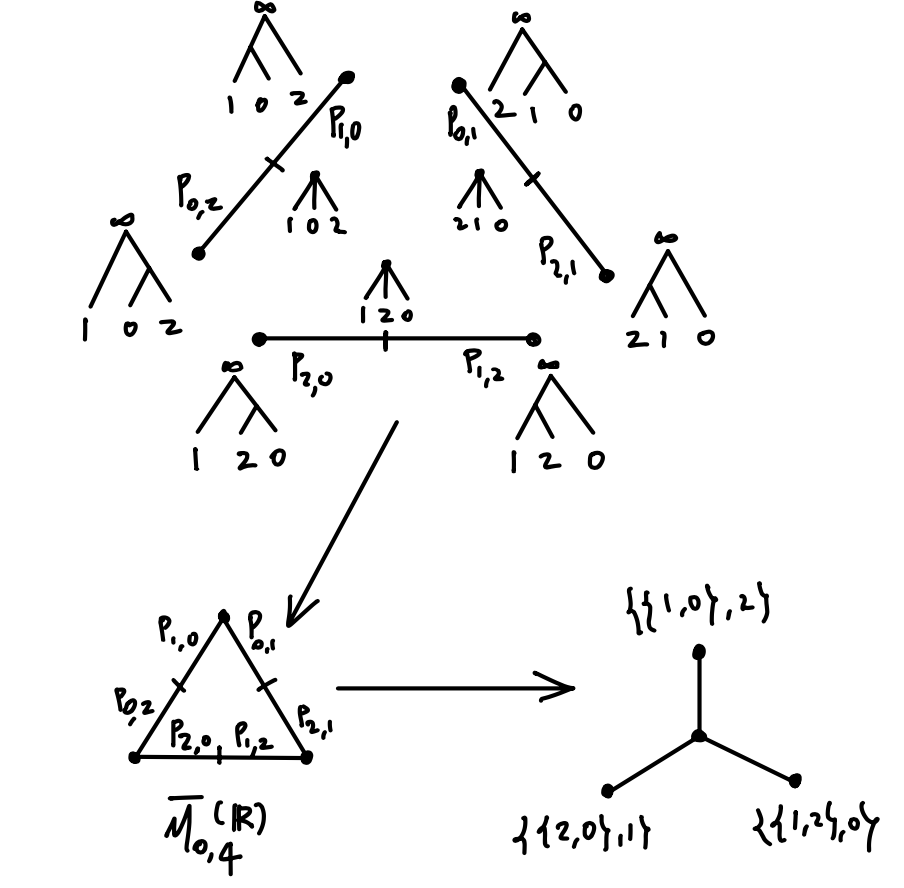}  
\caption{The projections from three associahedra $K_3$ to the moduli space $\overline{M}_{0,4}(\R)$ and to
the ${\rm BHV}_3^+$ moduli space and the embedding map $\cI$ of syntactic trees to semantic space $\cS$ seen
from the point of view of moduli spaces.\label{M04Fig}}
\end{center}
\end{figure}

We have described the construction here in terms of the simple model of assignment
of semantic values to syntactic objects described in \S \ref{PvalPhiSec}. This can be
adapted to other models, so that we can incorporate, as part of the modeling of
the syntax-semantics interface, the construction of a partially defined section
\begin{equation}\label{semasec}
 \sigma_{\cS,n} : {\rm Dom}(\sigma_\cS)\subset {\rm BHV}_n \to \overline{M}^{or}_{0,n+1}(\R) 
\end{equation} 
which depends on the model of semantic space $\cS$ used and on its  properties.
Similarly, the map \eqref{sPhmap} can be generalized as a map
\begin{equation}\label{sShmap}
s_{\cS, h,n}: \fT_{\cS\cO_0} \to  \cL_n\cap {\rm Dom}(\sigma_{\cS,n})\, . 
\end{equation}

\smallskip
\subsection{Origami folding and Externalization}\label{OrigamiSec}

In \cite{MCB} we gave an account of Externalization as a section of
the projection from planar to abstract binary rooted trees, where the
section is language dependent and is chosen so that the resulting
planar structure is compatible with certain syntactic parameters, through
the effect these have on word order. 

\smallskip

In terms of the geometry of moduli spaces described here, one can
similarly view Externalization as the choice of a {\em section}
(depending on a specified language $L$ through its syntactic parameters)
\begin{equation}\label{sigmaExt}
\sigma_{L,n} : {\rm BHV}_n \to \overline{M}^{or}_{0,n+1}(\R) \, .
\end{equation}
This section is defined at the level of the combinatorial trees,
as a choice of a section $\sigma_{L,n}: \fT_{\cS\cO_0,n} \to \fT_{\cS\cO_0,n}^{pl}$
that assigns a planar structure, as discussed in \cite{MCB}, and extended
to metric trees as the identity on the metric datum $\underline{\ell}$, since 
Externalization is decoupled from the metric structure, reflecting our initial
assumption on independence of semantic values from Externalization.
This independence assumption only affects this independence of $\sigma_{L,n}$
on the metric structure. It does not mean that there would be {\em no} interaction
with the semantics channel. One way to model such interaction is by comparing
the two sections $\sigma_{L,n}$ and $\sigma_{\cS,n}$ on the subdomain 
${\rm Dom}(\sigma_{\cS,n})\subset {\rm BHV}_n$ where both are defined
and in particular on the target of the map $s_{\cS, h,n}$ of \eqref{sShmap}.

\smallskip
\subsection{An example}

All the above discussion on the relation between Externalization and
the syntax-semantics interface in terms of moduli spaces is quite
abstract, so let us illustrate what is happening with a very simple
example. Consider a sentence such as
$$ \Tree[ [ yellow flowers ] [ bloom early ] ] =\Tree[ [ $\alpha$ $\beta$ ] [ $\gamma$ $\delta$ ] ] = ((\alpha \,\, \beta)\,(\gamma \,\, \delta)) $$
In the form depicted, this is represented by a planar binary rooted tree on four leaves
labeled by the lexical items in the set $\Lambda=\{ \alpha, \beta, \gamma, \delta \}$. 
The tree does not contain exocentric constructions and has a well defined syntactic head.
Thus, we have the associated data $(\Lambda, h)$ as above. The underlying syntactic
object, as produced by a free symmetric Merge. is the {\em non-planar} abstract binary rooted tree
$$ \Tree[ [ $\alpha$ $\beta$ ] [ $\gamma$ $\delta$ ] ] =\Tree[ [ $\beta$ $\alpha$  ] [ $\gamma$ $\delta$ ] ] =\Tree[ [ $\alpha$ $\beta$ ] [ $\delta$  $\gamma$ ] ] =
\Tree[ [ $\gamma$ $\delta$ ] [ $\alpha$ $\beta$ ]] = \{ \{ \alpha, \beta\} , \{ \gamma, \delta \} \} \, . $$
The planar tree $((\alpha \,\, \beta)\,(\gamma \,\, \delta))$ corresponds to a vertex of the associahedron $K_4$, as in Figure~\ref{AssocK4VFig}.
The associahedron considered is one of the $4!=24$ associahedra that correspond to the $4!$ permutations of the leaves' labels.
This assignment of a vertex on one of the $24$ associahedra corresponds to left arrow (free symmetric Merge to Externalization) 
in the top part of Figure~\ref{SemExtFig2}, for this example. 

\begin{figure}[h]
\begin{center}
\includegraphics[scale=0.3]{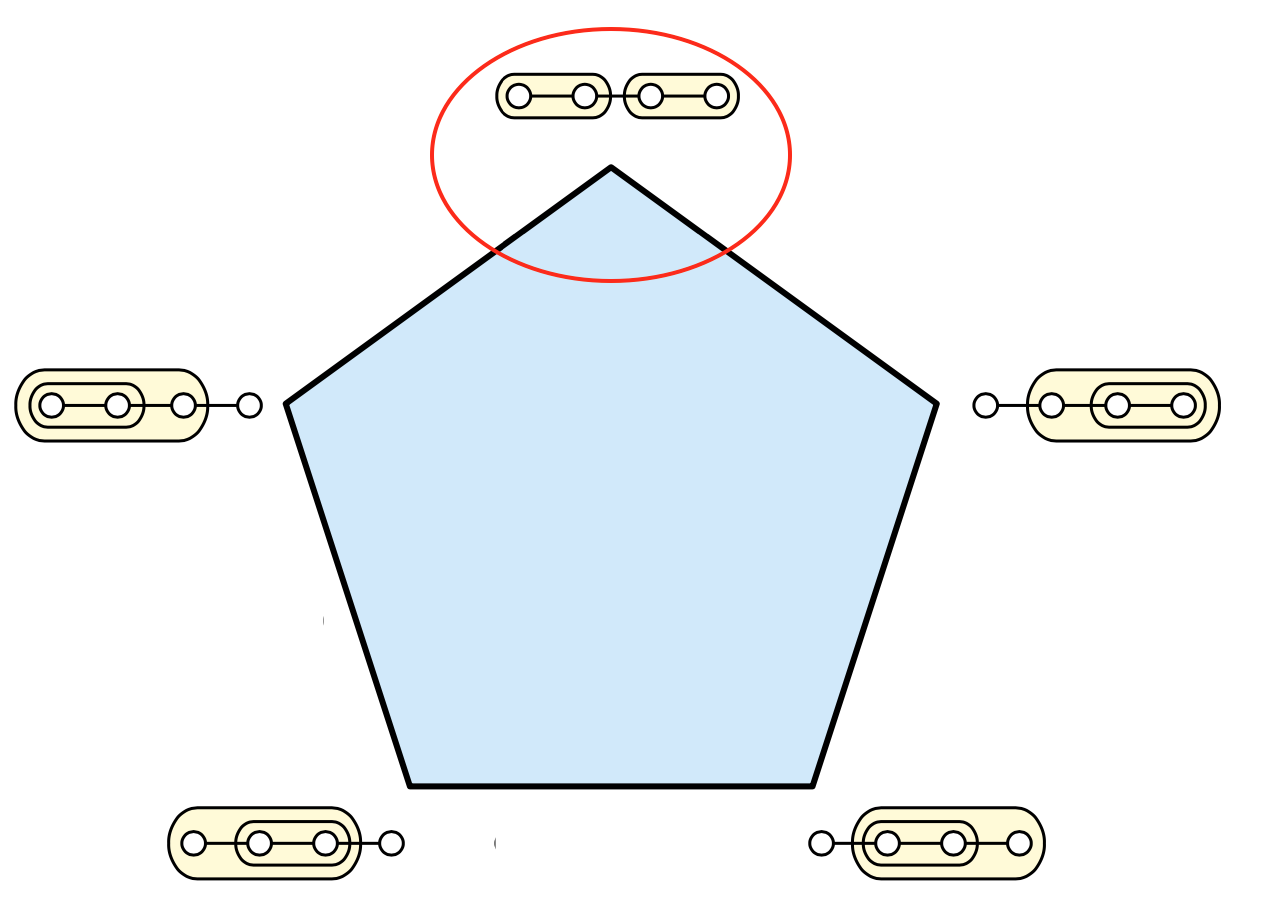}  
\caption{The selected vertex of the associahedron $K_4$ corresponding to the planar tree $((\alpha \,\, \beta)\,(\gamma \,\, \delta))$ (modified 
figure by Satyan Devadoss from \cite{DevMor}). \label{AssocK4VFig}}
\end{center}
\end{figure}

The abstract tree $\{ \{ \alpha, \beta\} , \{ \gamma, \delta \} \}$ produced by the free symmetric Merge,  on the other hand, 
is one of the $15 =(2n-3)!!$, for $n=4$, possible abstract binary rooted trees on four labeled leaves. These $15$ possible
trees correspond to the $15$ edges of the link $\cL_4$ of the origin in the moduli space $BHV_4$. Thus, the syntactic
object $\{ \{ \alpha, \beta\} , \{ \gamma, \delta \} \}$ selects one of these edges, see Figure~\ref{L4graphEFig}.

\begin{figure}[h]
\begin{center}
\includegraphics[scale=0.36]{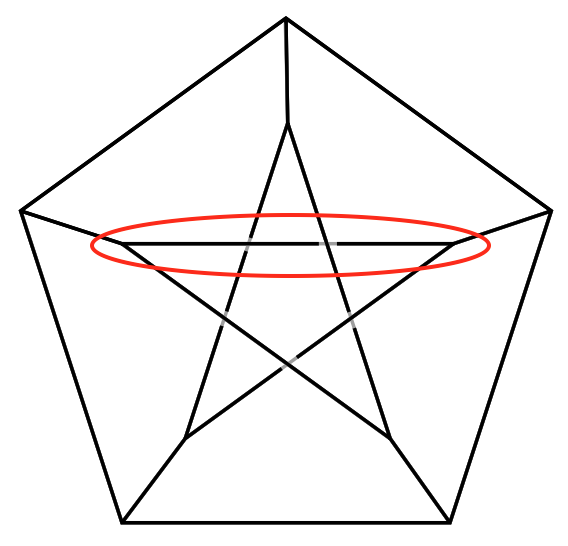}  
\caption{The selected edge in the link $\cL_4$ of the origin in the moduli space BHV$_4$ corresponding to the abstract tree $\{ \{ \alpha, \beta\} , \{ \gamma, \delta \} \}$. \label{L4graphEFig}}
\end{center}
\end{figure}

Now suppose we have chosen a semantic space $\cS$ (for simplicity of
discussion, consider using a vector space model, though
it is not necessary for $\cS$ to be of this kind).  Each of the four
lexical items has a representation
$s(\alpha), s(\beta), s(\gamma), s(\delta) \in \cS$. The two semantic
relatedness measures $u_1=\bP(s(\alpha), s(\beta))$ (relating
``yellow" and ``flower") and $u_2=\bP(s(\gamma), s(\delta))$ (relating
``blooming" and ``early") in $\cS$ provide two real coordinates
associated with the accessible terms $\{ \alpha, \beta\}$ and
$\{ \gamma, \delta \}$, respectively. These two coordinates fix
a point $(u_1,u_2)\in [0,1]^2$ in a square (see
Figure~\ref{L4graphSqFig}).  The selected edge of
Figure~\ref{L4graphEFig} corresponds to the diagonal of the square
given by $u_1+u_2=1$. Thus, the mapping of the result of the free
symmetric Merge to semantic space determines a point in the moduli
space $BHV_4^+$.  This completes the right arrow (free symmetric Merge
to Semantic Spaces) in the top part of Figure~\ref{SemExtFig2}, for
the example of this simple sentence.

\begin{figure}[h]
\begin{center}
\includegraphics[scale=0.36]{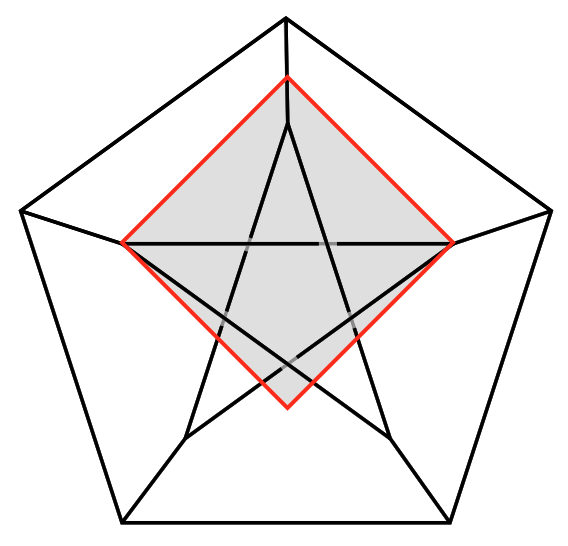}  
\caption{The square and the selected edge in the link $\cL_4$ for corresponding to the abstract tree $T=\{ \{ \alpha, \beta\} , \{ \gamma, \delta \} \}$: 
the mapping of $T$ to semantic space selects a point in this square. \label{L4graphSqFig}}
\end{center}
\end{figure}

\smallskip

We next  see in this example the bottom part of Figure~\ref{SemExtFig2}, that describes the compatibility between Externalization
and the syntax-semantics interface. First note that the associahedra $K_4$ are tiled with squares (quadrangles), as in Figure~\ref{AssocSqFig}.
The two vertices of the square adjacent to the marked vertex of the pentagon corresponds to degenerate trees where one or the
other of the internal edges as shrunk to zero length, while the other has normalized length one. Thus, we see that we can map this
square to the square of Figure~\ref{L4graphSqFig} through the same coordinates $(u_1,u_2)$ describing the lengths of the
two internal edges (compare with Figure~\ref{M04Fig} for the case $n=3$). 

\begin{figure}[h]
\begin{center}
\includegraphics[scale=0.3]{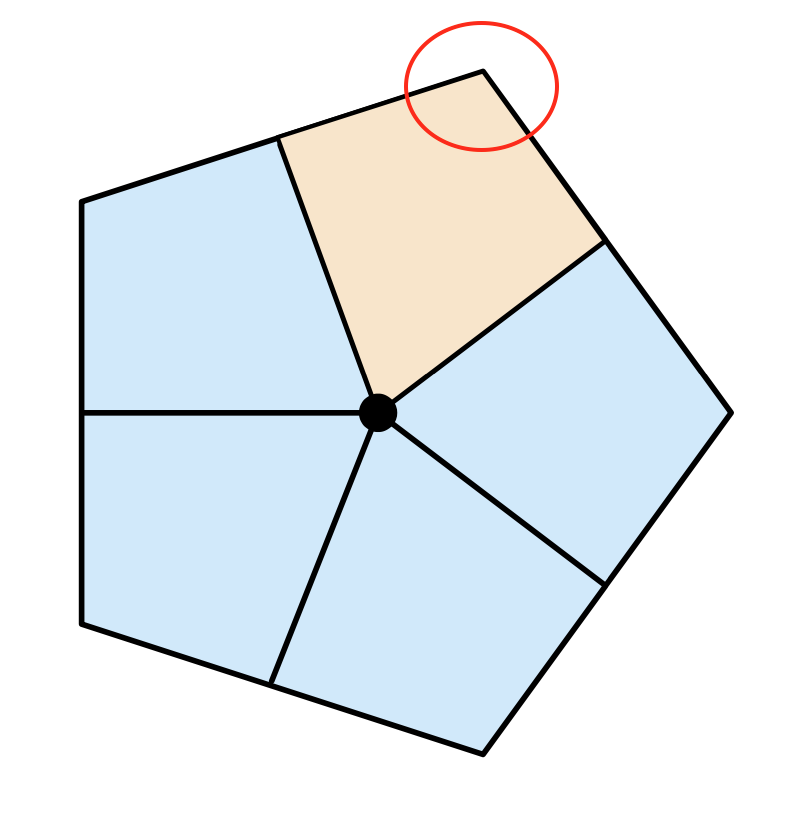}  
\caption{The associahedron $K_4$ tiled with squares (quadrangles), with the selected vertex associated to $((\alpha \,\, \beta)\,(\gamma \,\, \delta))$ (modified
figure by Satyan Devadoss from \cite{DevMor}). \label{AssocSqFig}}
\end{center}
\end{figure}

This lifting of the point associated to $T$ in the square of 
Figure~\ref{L4graphSqFig} to a corresponding point in the square of Figure~\ref{L4graphSqFig} is the effect of the section $\sigma_{L,4}$
described in \eqref{sigmaExt}. To see this, we need to take into consideration the fact that the $24$ associahedra combine
together into a single geometric space, obtained by gluing them along their boundaries. This is done in two steps: first $12$
associahedra are glued along their boundaries as in the left-hand-side of Figure~\ref{AssocGlueFig}, 
forming the space $\bar M_{0,5}(\R)$. Then the orientation double cover is formed: in a self-intersecting $3$-dimensional
visualization, this resulting space $\bar M^{or}_{0,5}(\R)$ can be identified with the {\em great dodecahedron} in the right-hand-side 
of Figure~\ref{AssocGlueFig}. It is not easy to see from its $3D$ representation as great dodecahedron, but the space $\bar M^{or}_{0,5}(\R)$
is a genus $4$ hyperbolic surface, and can be seen more directly from its description in terms of fundamental domain given in \cite{Apery}, 
as in Figure~\ref{HypDodecaFig} below.

\begin{figure}[h]
\begin{center}
\includegraphics[scale=0.2]{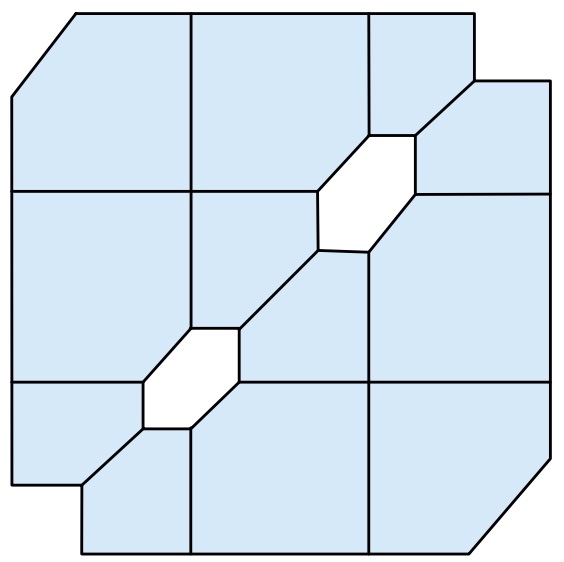}  \ \ \ \ \ \ \ \ 
\includegraphics[scale=0.2]{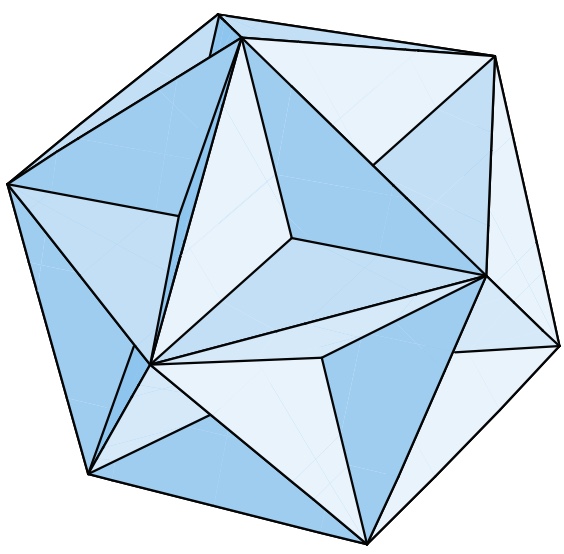} 
\caption{Twelve associahedra $K_4$ assemble into the space $\bar M_{0,5}(\R)$ and its orientation double cover gives $24$ associahedra assembled into the space
$\bar M^{or}_{0,5}(\R)$ identified with the great dodecahedron (figure by Satyan Devadoss from \cite{DevMor}). \label{AssocGlueFig}}
\end{center}
\end{figure}

\begin{figure}[h]
\begin{center}
\includegraphics[scale=0.2]{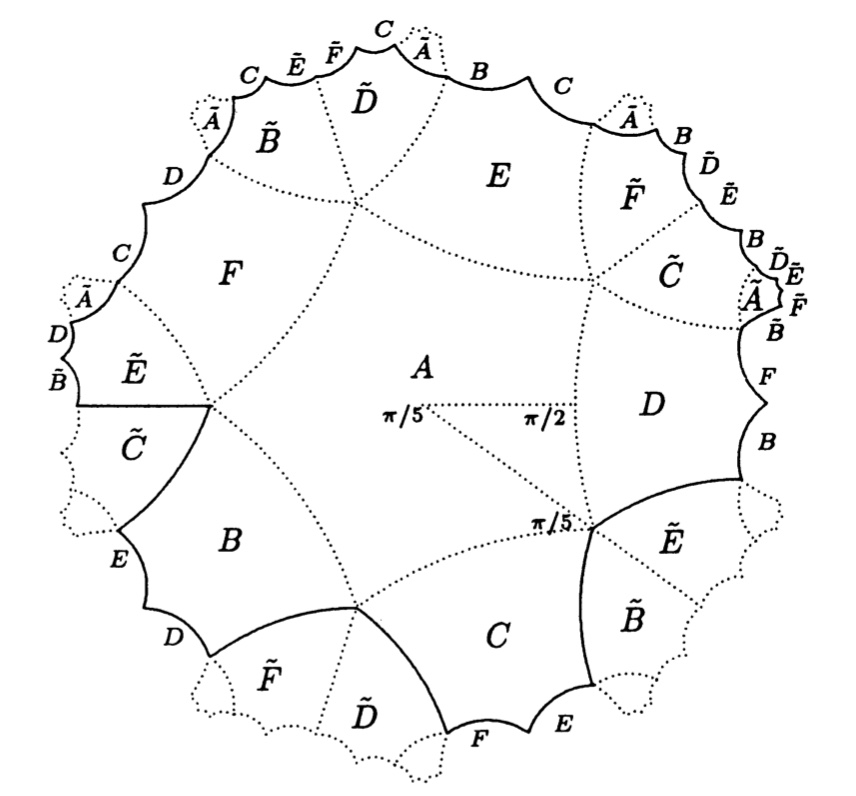}  \ \ \ \ 
\includegraphics[scale=0.3]{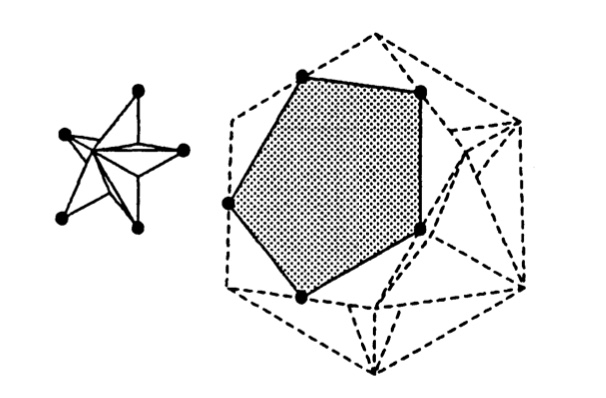} 
\caption{The great dodecahedron $\bar M^{or}_{0,5}(\R)$ as a hyperbolic genus $4$ surface, and the two different forms 
of the $24$ associahedron tiles (figure from \cite{Apery}). \label{HypDodecaFig}}
\end{center}
\end{figure}

The projection map $\Pi_4: \overline{M}^{or}_{0,5}(\R) \twoheadrightarrow {\rm BHV}_4^+$
of \eqref{projPin} folds together and identifies $8$ squares in $\overline{M}^{or}_{0,5}(\R)$ to each
square in ${\rm BHV}_4^+$. Thus, when we lift to $\overline{M}^{or}_{0,5}(\R)$ the point assigned 
to the tree $T$ in one of the squares of ${\rm BHV}_4^+$ by the mapping of $T$ to semantic space,
the lifted point lies on one of the $8$ preimages of the given square of ${\rm BHV}_4^+$. This
choice of one ut of the $8$ preimages is the choice of planar structure of the syntactic object
determined by externalization and this gives indeed the section $\sigma_{L,4}$
described in \eqref{sigmaExt}, where here $L=\, $English. 

\begin{figure}[h]
\begin{center}
\includegraphics[scale=0.2]{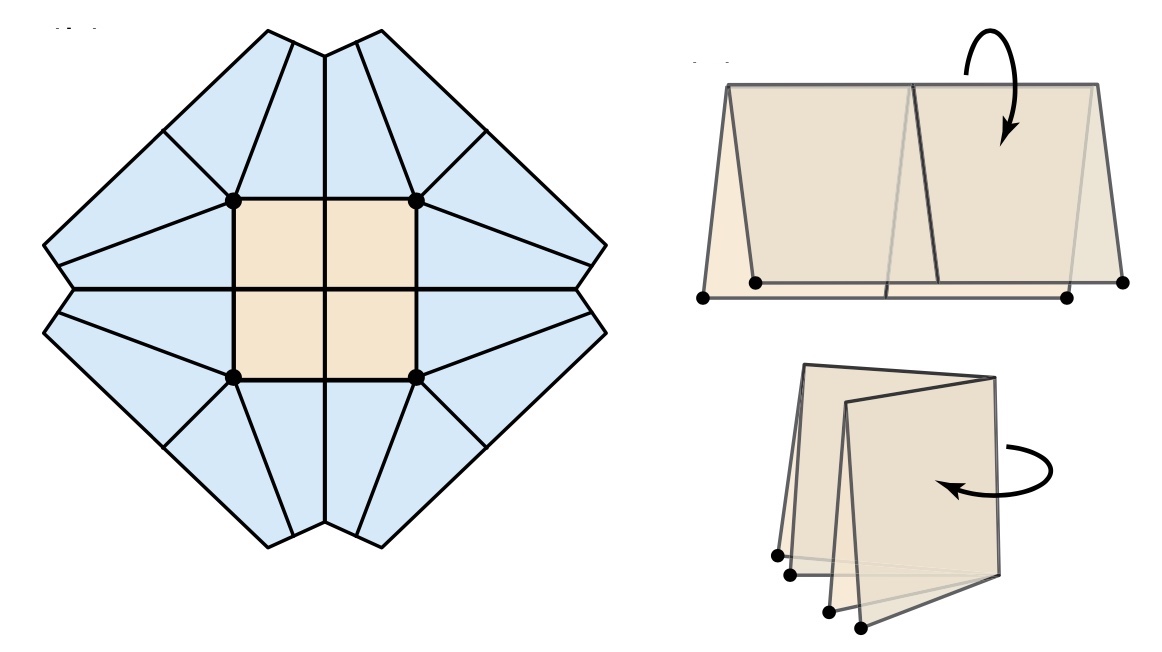}  
\caption{Four squares in adjacent associahedra $K_4$ are folded together (origami folding) in the projection
to ${\rm BHV}_4^+$ so that $8$ squares in the double cover $\overline{M}^{or}_{0,5}(\R)$ are identified in the projection
$\Pi_4: \overline{M}^{or}_{0,5}(\R) \twoheadrightarrow {\rm BHV}_4^+$ 
 (figure by Satyan Devadoss from \cite{DevMor}). \label{OriFoldFig}}
\end{center}
\end{figure}

One can then see in this same simple example, that if instead of taking the planarization
$T^{\pi_L}=((\alpha \,\, \beta)\,(\gamma \,\, \delta))$ of the syntactic object
$T=\{\{ \alpha, \beta\},\,\{ \gamma , \delta\}\}$, one would take the planarization $T^{\pi_h}$
determined by the head function, one would end up with the {\em different} planar tree
$$ T^{\pi_h}= \Tree[ [ $\gamma$ $\delta$ ]  [$\beta$ $\alpha$ ] ]     =(( \gamma \delta )(\beta \alpha)) \, .$$
This means that one ends up on a different one of the $24$ associahedra and a square inside that
associahedron, that is still one of the $8$ squares that project to the same (unchanged) square in 
${\rm BHV}_4^+$. The same point in this square in ${\rm BHV}_4^+$ determined by mapping $T$
to semantic space is then lifted to a corresponding point in a different square inside $\overline{M}^{or}_{0,5}(\R)$.
This means that we are considering a different section of the projection $\Pi_4$. This is the section
$\sigma_{\cS,4}$ described in \eqref{semasec}. The difference between these two sections is
measured by a transformation $\gamma_{L,4} \circ \sigma_{L,4} = \sigma_{\cS,4}$, where
applied to our syntactic object $T$ this gives the permutation 
$\gamma_{L,4}(T) = (3421)$.
As in \eqref{gammaLn}, this transformation $\gamma_{L,4}$ is Kayne's LCA algorithm
for this very simple example.

\smallskip

We have considered a very simple example with $n=4$ where the geometry
is straightforward to visualize. The spaces
$\overline{M}^{or}_{0,n+1}(\R)$ and ${\rm BHV}_n^+$ grow significantly
in combinatorial complexity as $n$ becomes larger, but they are still
very well understood and widely studied geometric spaces. Other more
complicated geometries are likely to arise if the mapping of syntactic
objects to semantic spaces is done in a more sophisticated and
informative way than the very simple type of mappings we considered in
this paper as illustrative examples.


\subsection{Geometric view of some planarization questions}

We conclude this section by briefly commenting on how certain
frameworks where the question of planarization of syntactic
objects arises can be also seen in terms of the geometry
described above. We discuss briefly 
Kayne's Linear Correspondence Axiom and Cinque's
 Abstract Functional Lexicon, and we also
 outline how one can describe the role of syntactic parameters
 in this geometric setting.

\smallskip
\subsubsection{Kayne's LCA algorithm} \label{LCAsec}

When the planar structure assigned by the section $\sigma_{\cS,n}$ is
the planar structure $\pi_h$ determined by a head function, as in the
case of \eqref{sigmaSsec}, 
this means comparing the planar structure $\pi_h$, for 
$h$ defined on ${\rm Dom}(h)\subset \fT_{\cS\cO_0}$,
with the planar structure $\pi_L$ defined by the section $\sigma_{L,n}$
through the constraints imposed by the syntactic parameters of the
language $L$. This comparison can be seen as a version of 
Richard Kayne's LCA (Linear Correspondence Axiom), \cite{Kayne1},
\cite{Kayne2}. As we observed in \cite{MBC}, Kayne's LCA cannot be
defined {\em globally} on $\fT_{\cS\cO_0}$, but is only partially defined on
the domain ${\rm Dom}(h)$ of a head function (the syntactic head),
hence it does not play the same role as Externalization, which
is a choice of a globally defined (non-canonical) section $\sigma_{L,n}$.
However, on the domain ${\rm Range}(s_{\cS, h,n})\subset \cL_n\cap {\rm Dom}(h)$ where
both $\sigma_{L,n}$ and $\sigma_{\cS,n}$ are defined there exists
a covering transformation $\gamma_{L,n}$ of the projection map
$$   \overline{M}^{or}_{0,n+1}(\R) \to {\rm BHV}_n^+ $$
that satisfies
\begin{equation}\label{gammaLn}
 \gamma_{L,n} \circ \sigma_{L,n} = \sigma_{\cS,n} 
\end{equation} 
at all points in ${\rm Range}(s_{\cS, h,n})$. This covering transformation $\gamma_{L,n}$
plays the role of the (partially defined) LCA algorithm. 

\smallskip
\subsubsection{Cinque's abstract functional lexicon} \label{5sec}

There are other constructions that one can fit into this geometric picture
with the projection map $\overline{M}^{or}_{0,n+1}(\R) \to {\rm
  BHV}_n^+$--partially defined sections, and covering transformations permuting the
$2^{n-1}$ points of the fibers of the projection map.  For example, 
one can view this as the abstract functional lexicon 
described by Cinque \cite{Cinque}.

\smallskip

In \cite{Cinque}, Cinque considers the problem of
comparing word order relations imposed on individual languages by
particular syntactic parameters, with a certain base ordering relation
of proximity to the verbal properties of different morphemes (in a
structural sense, rather than in terms of linear ordering), such as
mood, tense, modality aspect, and voice. In \cite{Cinque}, a general
hierarchy of functional morphemes and of adverbial classes is
identified (see (6) and (7) of \cite{Cinque}).  As observed in
\cite{Cinque}, with verbal morphemes as heads and corresponding
classes of adverbs as phrases in so-called specifier position, this hierarchy
determines a planar embedding (in the way that it appears in (6) and
(7) of \cite{Cinque}).  Syntactic parameters, on the other hand, also
determine a planar embedding.  While this could be {\em a priori} arbitrary,
the variability across languages is far less than the space of
combinatorial possibilities would allow. Also, different word order
constraints appear not to be independent, but to exhibit a significant
degree of relatedness. This can be seen both at the theoretical level
(see \cite{Greenberg}) and at the level of database analysis
of syntactic parameters (see \cite{Ort}, \cite{Park}, \cite{Port}).

\smallskip

In terms of the geometry of moduli
spaces described above, one can describe the difference between
the ordering (planar structure) described by Cinque in \cite{Cinque}
and the deviation from it in the word order of specific languages  
in terms of {\em covering tranformations} $\gamma_{L,n}$ of the projection
map $\overline{M}^{or}_{0,n+1}(\R) \to {\rm BHV}_n^+$ that act as
permutations of the planar structures, and are language specific. 
The degree to which word order constraints deviate from the
base structural hierarchy described in Cinque can then be
measured in terms of how far the $\gamma_{L,n}$ are from the identity
in the group of covering transformations of the projection map.

\smallskip
\subsubsection{A geometric view of syntactic parameters} \label{SyntParamSec} 

Syntactic parameters fix constraints on the planar structure of
Externalization. For an extensive recent account of syntactic
parameters see \cite{Roberts}.  In \cite{MCB} we interpreted the role
of syntactic parameters as constraints on the choice of a
language-dependent section $\sigma_{L,n}$ for the Externalization of free
symmetric Merge.

\smallskip

The discussion above shows that in our setting we can also interpret
the role of syntactic parameters in a geometric way, as the choice,
for a given language $L$, of a collection
$L \mapsto \{ \gamma_{L,n} \}_n$ of covering transformations of the
projections $\overline{M}^{or}_{0,n+1}(\R) \to {\rm BHV}_n^+$, as in
\eqref{gammaLn}. Comparison of syntactic parameters across languages
can be formulated in various computational forms.  This includes the
difficult problem of understanding the relation among parameters, as
well as the much lower dimensional space occupied by actual languages
inside the high-dimensional space of possible values of the hundreds
of parameters currently studied (see for example \cite{GuaLong},
\cite{LongTre}, \cite{Mar}, \cite{KKohl}, \cite{Port}, \cite{ShuMar}). In
particular, one can focus on the effect of syntactic parameters on
word order constraints.  In this case, using the framework we consider
here, one can view this comparison across languages as the comparison
between sections $\sigma_{L,n}$ for different languages $L$, or
equivalently as the properties of the collection of elements
$\gamma_{L,n} \gamma_{L',n}^{-1}$, for $L\neq L'$ in the group of
covering transformations of
$\overline{M}^{or}_{0,n+1}(\R) \to {\rm BHV}_n^+$.

\section{Birkhoff factorization and (semi)ring parsing}\label{BirkSemiParseSec}

We now extend the setting introduced above to more refined descriptions of
the characters and factorization, that incorporate more detailed
properties of semantic parsing and compositionality.  We first focus on
{\em semiring parsing}, as in \cite{Goodman}, while in \S \ref{PietSec} we analyze
how our model relates to Pietroski's theory of {\em minimalist meaning}, \cite{Pietro}. 
These two settings represent very
different models, where semiring semantics incorporates the idea of
truth-values and generalizes it to values in arbitrary (semi)rings, not
necessarily Boolean, while Pietroski's approach provides an alternative that bypasses the
idea of truth-values entirely and is based on a compositional structure
modeled on the Minimalism's Merge operation.

\smallskip

In this section we analyze (semi)ring parsing, introducing a version
that is adapted to Minimalism formulated in terms of free symmetric
Merge. 

\smallskip

Since this is the most mathematically-heavy section in this paper,
we provide a preliminary outline of the content and a more heuristic
explanation of what is covered in the various subsections, before
starting to discuss the more precise details.

\smallskip
\subsection{Preliminary discussion}\label{ParsePrelimSec}

The relation between grammars and semirings was first 
observed by Chomsky--Sch\"utzenberger in \cite{ChoSchu}.
Semiring parsing (see for instance \cite{Goodman}),
when formulated in the setting of context-free grammars,
considers deduction rules of the form
$$ \frac{A_1 \ldots A_k}{B} C_1\ldots C_\ell \, , $$
where the terms $A_i$ (main conditions) are rules $R$ of the grammar
or input nonterminals and the $C_i$ are (non-probabilistic) Boolean
side conditions and the fraction notation means that if the numerator
terms hold then the denominator term also does. To the main condition
terms one assigns values in a semiring, combined with the semiring
operations, to obtain a value for the deduced output. The target
semiring varies according to the parsing algorithm considered. The
main choices include the Boolean semiring, the tropical semiring, the
probability semiring (that is, the familiar case of Viterbi parsing),
as well as the non-commutative derivation forest semiring, that
collects all the possible derivations, with concatenation as
multiplication and union as semiring addition.  Often, parsing with
values in other semirings factor through the derivations
semiring. This setting is specifically constructed in the formal
language context, and specifically for context-free grammars, though
some generalizations exist in mildly context-sensitive classes like
those produced by tree-adjoining grammars (TAGs).

\smallskip

A natural question arises about what type of algebraic
structure replaces this form of semiring parsing in the
setting of Minimalism, and more specifically the form of
Minimalism based on free symmetric Merge.

\smallskip

The main goal of this section is to provide an answer to
this question, in a form that is again based on the
Birkhoff factorization procedure, that we present 
throughout this paper as a natural formalism for 
different forms of assignments of semantic values
in the context of a free symmetric Merge model of syntax.

\smallskip

Developing this form of ``semiring parsing" (where semirings
will in fact be replaced by more general algebraic objects) 
requires several steps, that we now briefly summarize.

\smallskip

In \S \ref{MSsec} we introduce a ring of Merge derivations, formed by
considering chains of Merge operations, given by the action of Merge
on workspaces. These are assembled into a ring structure, where the
linear structure is obtained by taking the vector space spanned by the
derivations (that is, including formal linear combinations) and the
multiplication operation is the union of the workspaces with the
corresponding Merge actions. These are the same operations on the
algebra part of the Hopf algebra of workspaces that we introduced in
\cite{MCB} and used earlier in this paper. We only
consider the product structure on this ring and not the coproduct,
as we have on the Hopf algebra of workspaces.  However, this does not
lead to a loss of structure in this case, because the coproduct is
built into the Merge operation on workspaces, so it it still encoded
into the data of this ring of Merge derivations.

\smallskip

In order to illustrate more clearly the properties of this
ring of Merge derivations, in \S \ref{MSsec} we return
to discuss the notion of Minimal Search in the Merge 
model of syntax. In \cite{MCB} we gave an account of
how Minimal Search is implemented as extraction of
leading order term in the action of Merge on workspaces.
This leading order term contains Internal and External
Merge, while it excludes other presumptively unwanted forms of 
Merge (Sideward and Countercyclic). The idea of
extraction of the leading order term is closely related to
Birkhoff factorization, as originally observed in the
context of the renormalization in physics. 

Here we show that in fact, after extending the ring of Merge
derivations to a ring of Laurent series with coefficients in this
ring, one can indeed show that Minimal Search is exactly a Birkhoff
factorization in this ring--for a character from the Hopf algebra of
workspaces that assigns to a workspace its Merge derivation and a
power that counts the effect of that derivation on the size of the
workspace. The Birkhoff factorization separates out, on one side, the
unwanted forms of Merge, while retaining on the other side only
fundamental ones, namely External and Internal Merge.  This case of
Birkhoff factorization happens to be the one that is closest to the
original form used in physics.

\smallskip

This result on Minimal Search as Birkhoff factorization
in \S \ref{MSsec} is not required for the following
parts of this section, and is included to provide
some more direct understanding of the ring of Merge
derivations and to connect it to our original formulation in \cite{MCB}. 
This section can be skipped (except for Definition~\ref{DMring})
by the readers interested in directly accessing the discussion
of how to extend the semiring parsing framework. 

\smallskip

The main construction for the generalization of semiring parsing
starts in \S \ref{OidSec}. The main viewpoint here is that, in order
to formulate semiring parsing for Merge derivations based on the
action of Merge on workspaces, one needs to replace the
setting of Hopf algebras and semirings, that we used in the previous
sections of this paper to describe simple models of syntax-semantics
interface, with a slightly more flexible form, where the algebraic
structures of Hopf algebra and semiring are replaced by their
``categorified" form, which we refer to, respectively, as 
Hopf algebroids and semiringoids. The reason for this
extension is very simple. Merge derivations given by actions 
of Merge on workspaces only compose when the target workspace
of one derivation agrees with the source workspace of the next. 
This differs from the situation we considered in the previous sections
where we only considered the Hopf algebra of workspaces,
where the product is the disjoint union (combination of workspaces)
which is always defined without source/target matching conditions. 

\smallskip

Thus, just as in passing from the multiplication in a group
to the multiplication in a groupoid, that precisely accounts for the
fact that arrows compose only when the target of the first is the
source of the second, one can obtain similar generalizations
of the structures of Hopf algebra and Rota--Baxter algebra 
(or semiring) that we used in the formulation of mapping
from syntax to semantics as Birkhoff factorization in the 
previous sections. 

\smallskip

An extension of the notion of Hopf algebra that
accommodates for the need for source/target matching conditions
in the product was developed in the context of algebraic
topology with the notion of (commutative) Hopf algebroid
and bialgebroid.
We take that as the starting point in \S \ref{OidSec}, by 
constructing a bialgebroid associated to the ring
of Merge derivations introduced in \S \ref{MSsec}. 
This bialgebroid replaces the Hopf algebra of workspaces
on the syntax side, by encoding the Merge derivations
in syntax. 

\smallskip

In \S \ref{RBoidSec} we then consider the other side of
the Birkhoff factorization, namely the semantics side, where
we wish to replace the algebraic datum of a Rota-Baxter algebra
or Rota-Baxter semiring with an analogous categorified version.
We use a notion of algebroid that is compatible with the
notion of Hopf algebroids and bialgebroids introduced in \S \ref{OidSec}
and we show that the notion of algebroid we consider is dual to
directed graphs, with the cases of bialgebroids being
dual to directed graphs that are
reflexive and transitive (small categories) and Hopf
algebroids being dual to groupoids. 

\smallskip

We also extend the generalization of algebras to algeboids
to an analogous generalization of semirings to a similar
categorified structure of semiringoid. (Note that other different
notions of algebroids and semiringoids exist in the mathematical
literature that should not be confused with the version adopted here.)

\smallskip

In \S \ref{RBoidSec2} we describe how the notion of Rota--Baxter operator
of weight $-1$ on an algebra can be generalized to the case of an
algebroids and similarly in \S \ref{RBoidSec3} we show the analogous
generalization of Rota--Baxter semirings of weight $\pm 1$ to semiringoids.

\smallskip

With this, we have both sides of the mapping ready for the case of Merge 
derivations with their composition structure. We prove in \S \ref{BfactOidSec}
the existence of Birkhoff factorizations of characters from Hopf algebroids
to Rota--Baxter algebroids and from bialgebroids to Rota--Baxter semiringoids. 
The characters and the factorization can here be described dually in terms
of maps of directed graphs. 

\smallskip

We conclude in \S \ref{ParSemSec}
by showing that, with the algebraic setting constructed in the previous
subsections, one obtains a form of semiring(oid) parsing that 
simultaneously generalizes the various semiring parsings of \cite{Goodman}
and the Birkhoff factorizations that we described in \S \ref{HeadIdRenSec}.


\smallskip
\subsection{Minimal Search as Birkhoff factorization}\label{MSsec}

In \cite{MCB} we presented a way to implement Minimal Search and
eliminate unwanted forms of Merge (Sideward and Countercyclic
Merge) and retain only the Internal and External forms of Merge.
In the formulation we presented in \cite{MCB}, Minimal Search is
implemented by extracting the leading order term with respect to
a specific grading function imposed on the terms of the coproduct of the
Hopf algebra. We show here that there is another natural way of
thinking about Minimal Search, by formulating it as a Birkhoff 
factorization, very similar in form to the one used in quantum
field theory, with respect to a character with values in a Laurent series.

\smallskip
\subsubsection{Effect of Merge on workspaces}\label{MergeWSsize}

For consistency with \cite{MCB}, and since here it is not important
to keep track of traces in the effect of Internal Merge, we consider
the quotients $T/F_{\underline{v}}$ in the coproduct as in \cite{MCB}
rather than as in \S \ref{QuotSec}. As a result,
we have the same counting of the effect of Merge on the
various measures of workspace size (number of components, number
of accessible terms, number of vertices, etc). as described in \cite{MCB}.

\smallskip

The different cases of Merge are given by External Merge (EM), 
Internal Merge (IM), Sideward Merge (SM), and
Countercyclic Merge (CM):
\begin{itemize}
\item[EM:] $F=T\sqcup T' \sqcup \hat F\mapsto F'= \fM(T,T')\sqcup \hat F$
\item[IM:] $F=T\sqcup \hat F\mapsto F'=\fM(T_v,T/T_v) \sqcup \hat F$
\item[SM(i):] $F=T\sqcup T' \sqcup \hat F\mapsto F'= \fM(T_v,T'_w)\sqcup T/T_v \sqcup T'/T'_w \sqcup \hat F$
\item[SM(ii):] $F=T\sqcup T' \sqcup \hat F\mapsto F'= \fM(T,T'_w) \sqcup T'/T'_w \sqcup \hat F$
\item[CM(i):] $F=T\sqcup \hat F\mapsto F'=\fM(T_v, T_w) \sqcup T/T_v \sqcup \hat F$
\item[CM(ii):] $F=T\sqcup \hat F\mapsto F'=\fM(T_v, T_w) \sqcup T/T_w \sqcup \hat F$
\item[CM(iii):] $F=T\sqcup \hat F\mapsto F'=\fM(T_v, T_w) \sqcup T/(T_v\sqcup T_w) \sqcup \hat F$
\end{itemize}
where $\hat F$ denotes the part of the workspace that is not affected, and
$$ \fM(T,T')= \Tree [ T  T' ] \, .$$
The effect of these Merge operations on size counting is summarized in
the following table from \cite{MCB}, with $b_0(F)$ the number of
connected components of a workspace $F\in \fF_{\cS\cO_0}$ (number of
syntactic objects), $\alpha(F)$ the number of accessible terms in $F$
(the total number of non-root vertices), $\sigma(F)=b_0(F)+\alpha(F)=\# V(F)$
the total number of vertices, and $\hat\sigma(F)=b_0(F)+\sigma(F)$.
We introduce here a combined variable
$$ \delta = -\Delta (3 b_0 + \alpha) = -\Delta (2 b_0 + \# V) \, , $$
as this will be used in the construction of \S \ref{MergeLaurentSec} below.

\begin{center}
\begin{tabular}{| l ||c|c|c|c|c|}
\hline 
& $\Delta b_0$ & $\Delta \alpha$ & $\Delta \sigma$ & $\Delta \hat\sigma$ & $\delta$ \\
\hline 
EM & $-1$ & $+2$ & $+1$ & $0$ & $1$ \\
\hline
IM & $0$ & $0$ & $0$ & $0$ & $0$ \\
\hline
SM(i)  &  $+1$ & $0$  & $+1$  & $+2$ & $-3$  \\
\hline
SM(ii) &  $0$ & $+1$  & $+1$   & $+1$ & $-1$  \\
\hline
CM(i) &  $+1$ & $\# Acc(T_{a,w_a})$   & $\sigma( T_{a,w_a} )$ & $ \sigma( T_{a,w_a} ) +1$  & $\leq -2$ \\
\hline
CM(ii) & $+1$  & $\# Acc(T_{a,v_a})$   & $\sigma( T_{a,v_a})$  &  $\sigma( T_{a,v_a})  + 1$ & $\leq -2$ \\
\hline
CM(iii) & $+1$  & $-2$  & $-1$  & $0$  & $\leq -1$ \\
\hline
\end{tabular} 
\end{center}
Note that values of $\delta \geq 0$ eliminate all the ``undesirable" forms of Merge (Sideward
and Countercyclic), leaving only Internal and External Merge. (We put
aside here the question as to whether these excluded forms of Merge
are indeed undesirable, and simply assume that this is so.)

We show here that the elimination of the forms of Merge, described in
terms of Minimal Search, can also be formulated as a Birkhoff factorization where
one eliminates divergences as in the physical setting.

\smallskip
\subsubsection{Laurent series ring of Merge derivations}\label{MergeLaurentSec}

We introduce a ring that organizes derivations in the Minimalist generative
grammar defined by free symmetric Merge, weighted by their effect on the
workspace. 

\begin{defn}\label{DMring}  {\rm 
The algebra of free Merge derivations $\cD\cM$ is the commutative associative $\Q$-algebra 
with the underlying $\Q$-vector space spanned by elements of the form 
$\varphi_A$ where $A \subset \cS\cO\times \cS\cO$ 
is a set of pairs $(S,S')$ of syntactic objects, and
\begin{equation}\label{FMF}
 \varphi_A = ( F \stackrel{\fM_A}{\rightarrow} F' ) 
\end{equation} 
consists of all possible chains of Merge operations 
\begin{equation}\label{FMFchain}
 F \stackrel{\fM_{S_1,S_1'}}{\rightarrow} F_1 \rightarrow \cdots F_{N-1}
\stackrel{\fM_{S_N,S_N'}}{\rightarrow} F' 
\end{equation}
with $(S_i,S_i')\in A$. Since the source and target workspaces are assigned,
there are finitely many such possible chains. The algebra multiplication is given by the operation 
\begin{equation}\label{DMprod}
 \varphi_A \sqcup \varphi_B =( F \sqcup \tilde F \stackrel{\fM_{A\sqcup B}}{\rightarrow} F' \sqcup \tilde F'  ) \, , 
\end{equation} 
for $ \varphi_A = ( F \stackrel{\fM_A}{\rightarrow} F' ) $ and $ \varphi_B = ( \tilde F \stackrel{\fM_B}{\rightarrow} \tilde F' ) $, with unit given by the empty forest mapped to itself. 
Let $\cD\cM[ t^{-1} ][[t]]$ denote the associative commutative $\Q$-algebra  
of Laurent power series with coefficients in $\cD\cM$.}
\end{defn}

\smallskip

The meaning of the product \eqref{DMprod} is to perform in parallel different Merge 
operations that affect different parts of a workspace. Such operations, if
conducted sequentially, would commute with each other hence would be independent
of the order of execution (unlike operations
that affect the same components of the workspace), so that composition can
be regarded as simultaneous and parallel rather than sequential, and can be
grouped together as a single operation.

\smallskip

The following fact is well known (see \cite{CoKr}, \cite{CoMa}, \cite{EbFKr}, \cite{EbFM}).

\begin{prop}\label{RBLaurent}
Given a commutative associative algebra $\cA$ and the algebra of Laurent series
$\cA[t^{-1}][[t]]$, the linear operator $R:\cA[t^{-1}][[t]] \to \cA[t^{-1}][[t]]$ that projects
onto the polar part, 
\begin{equation}\label{RBpolar}
 R( \sum_{i=-N}^\infty a_i t^i) =\sum_{i=-N}^{-1} a_i t^i \, , 
\end{equation} 
makes $(\cA[t^{-1}][[t]],R)$ a Rota--Baxter algebra of weight $-1$. 
\end{prop}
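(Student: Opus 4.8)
The statement is a classical fact about Rota--Baxter algebras of weight $-1$, so the proof is a direct verification that the polar-part projection $R$ on $\cA[t^{-1}][[t]]$ satisfies the weight $-1$ Rota--Baxter identity $R(a)R(b) = R(aR(b)) + R(R(a)b) - R(ab)$. The plan is to reduce to monomials by bilinearity, then compute both sides on $a = \alpha t^m$ and $b = \beta t^n$ with $\alpha,\beta \in \cA$ and $m,n \in \Z$, splitting into cases according to the signs of $m$ and $n$.

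\textbf{Key steps.} First I would note that $R$ is $\cA$-linear (indeed $\Q$-linear) and continuous for the $t$-adic topology on the coefficient side, so it suffices to check the identity on pairs of monomials $a = \alpha t^m$, $b = \beta t^n$; the general case follows by (possibly infinite) linear combination, all sums being well-defined since a Laurent series has only finitely many polar terms. For monomials, $R(\alpha t^m) = \alpha t^m$ if $m < 0$ and $0$ if $m \geq 0$, and similarly $ab = \alpha\beta\, t^{m+n}$. I would then tabulate the four terms $R(a)R(b)$, $R(aR(b))$, $R(R(a)b)$, $R(ab)$ in each of the cases $m<0,n<0$; $m<0,n\geq0$; $m\geq0,n<0$; $m\geq0,n\geq0$, observing that the answer is controlled by whether each of $m$, $n$, $m+n$ is negative. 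In the case $m<0,n<0$: LHS $=\alpha\beta t^{m+n}$, and on the RHS $R(aR(b)) = \alpha\beta t^{m+n}$, $R(R(a)b)=\alpha\beta t^{m+n}$, $R(ab)=\alpha\beta t^{m+n}$ (since $m+n<0$), so RHS $= 2\alpha\beta t^{m+n} - \alpha\beta t^{m+n} = \alpha\beta t^{m+n}$, matching. In the mixed case, say $m<0,n\geq0$: LHS $=0$, $R(aR(b)) = R(0) = 0$, $R(R(a)b) = R(\alpha\beta t^{m+n})$, and $R(ab) = R(\alpha\beta t^{m+n})$, so RHS $= 0 + R(\alpha\beta t^{m+n}) - R(\alpha\beta t^{m+n}) = 0$, matching regardless of the sign of $m+n$. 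The case $m\geq0,n<0$ is symmetric, and in the case $m\geq0,n\geq0$ every term is $0$. This exhausts all cases.

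\textbf{The main obstacle.} There is no real obstacle here; the only thing requiring a moment's care is the bookkeeping when passing from monomials to general Laurent series, namely justifying that $R$ commutes with the infinite sums involved. This is handled by the observation that each side of the identity, applied to fixed $a,b$, has a well-defined value in $\cA[t^{-1}][[t]]$: for each power $t^k$ only finitely many monomial pairs $(\alpha t^m, \beta t^n)$ with $m+n=k$ contribute (indeed $R$ of anything of nonnegative degree vanishes, and products of Laurent series are well-defined), so the identity, being $\Q$-bilinear and verified on monomials, extends by linearity and continuity. I would state this reduction explicitly at the start and then present the monomial computation, perhaps compressed into a short case table rather than written out in full.
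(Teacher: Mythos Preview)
Your argument is correct: the reduction to monomials by bilinearity followed by the four-case check is the standard way to verify the weight $-1$ Rota--Baxter identity for the polar-part projection, and your case computations are all right. The paper itself does not supply a proof of this proposition; it simply records it as a well-known fact with references to \cite{CoKr}, \cite{CoMa}, \cite{EbFKr}, \cite{EbFM}, so your write-up is more detailed than what appears there and is exactly the kind of verification those references contain.
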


\smallskip

\begin{prop}\label{lemDMchar}
Consider the map $\phi: \cH \to  \cD\cM$, 
\begin{equation}\label{DMchar}
\phi(F)=(L(F) \stackrel{\fM_{A(L(F),F)}}{\longrightarrow} F )\, ,
\end{equation}
that assigns to a forest $F$ the set $A(L(F),F)$ of all Merge derivations from the
(multi)set of individual lexical items and syntactic features that form the
set of leaves $L(F)$, to the forest $F$ (the generative process for $F$).
This defines a character (a morphism of commutative algebras)
from the Merge Hopf algebra $\cH$ of non-planar binary rooted forests to 
the algebra of free Merge derivations $\cD\cM$.
The assignment
\begin{equation}\label{DMLchar}
\phi_t(F)=(L(F) \stackrel{\fM_{A(L(F),F)}}{\longrightarrow} F )\,\, t^{\delta(\fM_{A(L(F),F)})}\, ,
\end{equation}
then defines a morphism of commutative algebras 
$\phi_t: \cH \to \cD\cM[t^{-1}][[t]]$.
\end{prop}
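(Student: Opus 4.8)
The map $\phi$ is prescribed on the algebra generators of $\cH$ — the forests $F\in\fF_{\cS\cO_0}$ — and extended $\Q$-linearly, so to prove it is a character it suffices to verify that it is well defined on generators, unital, and multiplicative for the two $\sqcup$-products; the assertion about $\phi_t$ then follows by keeping track of the exponent. The plan thus has three stages: well-definedness and unitality, multiplicativity of $\phi$, and the passage to the $t$-graded map.

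For well-definedness, $L(F)$ — the multiset of leaf labels of $F$, read as a workspace of single-vertex syntactic objects — is determined by $F$, and the generative process of $F$ consists of all Merge chains whose fixed source is $L(F)$ and whose fixed target is $F$; by the finiteness remark in Definition~\ref{DMring} there are only finitely many such chains, so $\varphi_{A(L(F),F)}$ is a well defined element of $\cD\cM$. Unitality is immediate: the unit of $\cH$ is the empty forest, whose generative process is the empty derivation, i.e.\ the unit of $\cD\cM$.

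The substance of the proof is the multiplicativity $\phi(F\sqcup F')=\phi(F)\sqcup\phi(F')$. Here $L(F\sqcup F')=L(F)\sqcup L(F')$, and I would show that the generative process of a forest is the parallel composition of the generative processes of its connected components: every Merge chain from $L(F)\sqcup L(F')$ to $F\sqcup F'$ decomposes, after reordering commuting steps, into a chain from $L(F)$ to $F$ and a chain from $L(F')$ to $F'$ acting on disjoint parts of the workspace, and conversely any such pair interleaves to a chain with target $F\sqcup F'$. This identifies $A(L(F\sqcup F'),F\sqcup F')$ with $A(L(F),F)\sqcup A(L(F'),F')$, which together with the defining product \eqref{DMprod} of $\cD\cM$ — introduced precisely to encode order-independent parallel composition, per the discussion following \eqref{DMprod} — gives the claim. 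I expect this step to be the main obstacle: one must carefully track which of the local Merge operations (EM, IM, SM, CM) acts on which sub-workspace along a chain whose target is disconnected, and check that no step genuinely links the two parts, so that the splitting and the matching of sources and targets is forced; this is where the conventions of \S\ref{QuotSec} and \S\ref{MergeWSsize} on quotients and workspace-size bookkeeping, together with the componentwise reading of $A(L(F),F)$, must be used with care.

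For the $t$-graded statement, I would first observe that the exponent $\delta(\fM_{A(L(F),F)})$ is unambiguous: along any chain $F_0\to F_1\to\cdots\to F_N$ the total change of $3b_0+\alpha$ telescopes to $(3b_0+\alpha)(F_N)-(3b_0+\alpha)(F_0)$, so $\delta$ depends only on the endpoints; explicitly $\delta(\fM_{A(L(F),F)})=(3b_0+\alpha)(L(F))-(3b_0+\alpha)(F)$, a single integer, so $\phi_t(F)$ is a single monomial and a fortiori lies in $\cD\cM[t^{-1}][[t]]$. Since $b_0$ and $\alpha$, hence $3b_0+\alpha$, are additive under disjoint union of forests, $\delta$ is additive over parallel composition, so $t^{\delta(\fM_{A(L(F\sqcup F'),F\sqcup F')})}=t^{\delta(\fM_{A(L(F),F)})}\cdot t^{\delta(\fM_{A(L(F'),F')})}$; combined with the multiplicativity of $\phi$ this yields $\phi_t(F\sqcup F')=\phi_t(F)\sqcup\phi_t(F')$, while the empty derivation has $\delta=0$, so $\phi_t$ is unital. (The Laurent series ring, with the polar-projection Rota--Baxter operator of Proposition~\ref{RBLaurent}, is needed not for $\phi_t$ itself but for its Birkhoff factorization in the next subsection.)
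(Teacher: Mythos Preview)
Your proposal is correct and follows essentially the same route as the paper: the paper's proof also reduces to checking $\phi(F\sqcup F')=\phi(F)\sqcup\phi(F')$ by arguing that any Merge step $\fM_{T_v,T_w}$ in a chain ending at $F\sqcup F'$ must have both $L(T_v)$ and $L(T_w)$ contained in the same $L(F)$ or $L(F')$ (else a mixed component would persist to the target), and then handles $\phi_t$ via the endpoint formula $\delta=(3b_0+\alpha)(\text{source})-(3b_0+\alpha)(\text{target})$ together with additivity of $3b_0+\alpha$ under $\sqcup$. Your treatment is somewhat more detailed on well-definedness, unitality, and the telescoping of $\delta$, but the core argument is identical.
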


\proof It suffices to check that $\phi(F\sqcup F')=\phi(F)\sqcup \phi(F')$,
namely that
$$ (L(F) \sqcup L(F') \stackrel{\fM_{A(L(F) \sqcup L(F') ,F\sqcup F')}}{\longrightarrow} F\sqcup F' )=
(L(F) \sqcup L(F') \stackrel{\fM_{A(L(F),F)\sqcup A(L(F'),F')}}{\longrightarrow} F\sqcup F' ) \, . $$
This is the case since, if
the end result of a chain of Merge operations contains
a disjoint union $F \sqcup F'$ of two trees, then all the individual Merge operations
$\fM_{T_v,T_w}$ in the chain will use syntactic objects $T_v$ $T_w$ where both
sets of leaves $L(T_v)$ and $L(T_w)$ are subsets of $L(F)$ or where both are
subsets of $L(F')$ as otherwise $\fM_{T_v,T_w}$ would create a connected component
$T$ with $L(T)\cap L(F)\neq \emptyset$ and $T\cap L(F')\neq \emptyset$ so that the
end result would not contain $F \sqcup F'$. Moreover, 
$$ \delta( F\stackrel{\fM_A}{\longrightarrow} F' ) = (3b_0+\alpha)(F)-(3b_0+\alpha)(F') $$
and $(3b_0+\alpha)(F\sqcup \tilde F)=(3b_0+\alpha)(F)+(3b_0+\alpha)(\tilde F)$ so that
\eqref{DMLchar} is also an algebra homomorphism.
\endproof

\smallskip

As we will see in Lemma~\ref{phitreg}, the character $\phi_t: \cH \to \cD\cM[t^{-1}][[t]]$
of Proposition~\ref{lemDMchar} is not good enough to detect the difference between
Internal/External Merge and Sideward/Countercyclic Merge. However,
one can consider similar characters more suitable for this purpose. A simple modification
of $\phi_t$ that works can be obtained in the following way, where the statement follows
as in Proposition~\ref{lemDMchar}.

\begin{cor}\label{lemDMchar2}
For $T\in \fT_{\cS\cO_0}$ let $\cF_T\subset \fF_{\cS\cO_0}\times \fF_{\cS\cO_0}$ denote the set of
pairs $(F,F')$ of forests $F$ with $L(F)=L(F')=L(T)$ that are intermediate derivations for $T$, namely such that
there exists a chain of free symmetric Merge derivations
$$ L(T) \stackrel{\fM_{S_1,S_1'}}{\longrightarrow} \cdots \stackrel{\fM_{S_i,S_i'}}{\longrightarrow} F 
\stackrel{\fM_{S_{i+1},S_{i+1}'}}{\longrightarrow} \cdots \stackrel{\fM_{S_j,S_j'}}{\longrightarrow} F'
\stackrel{\fM_{S_{j+1},S_{j+1}'}}{\longrightarrow} \cdots \stackrel{\fM_{S_m,S_m'}}{\longrightarrow} T\, ,$$
for some $m\geq 1$, including the case with $F=L(T)$ and $F'=T$.
Consider the assignment
\begin{equation}\label{DMLchar2}
\psi_t(T)=\sum_{(F,F')\in \cF_T}  (F \stackrel{\fM_{A(F,F')}}{\longrightarrow} F' )\,\, t^{\delta(\fM_{A(F,F')})}\, ,
\end{equation}
where $\fM_{A(F,F')}$ is the set of all possible Merge derivations from $F$ to $F'$. 
This determines a morphism of commutative algebras 
$\psi_t: \cH \to \cD\cM[t^{-1}][[t]]$.
\end{cor}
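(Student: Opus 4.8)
The plan is to follow the proof of Proposition~\ref{lemDMchar} essentially verbatim, with intermediate derivations in place of full ones. Concretely, I would define $\psi_t$ on a forest $F=\sqcup_k T_k$ by $\psi_t(F)=\sqcup_k\psi_t(T_k)$ and then show that this agrees with applying formula \eqref{DMLchar2} directly to $F$, the only substantive point being the identity $\psi_t(T\sqcup T')=\psi_t(T)\sqcup\psi_t(T')$ for two trees $T,T'$. Before doing that I would record two preliminary remarks. First, the exponent $\delta(\fM_{A(F,F')})$ is well defined for a pair of forests $(F,F')$: as in the proof of Proposition~\ref{lemDMchar}, one has $\delta(F\stackrel{\fM_{A(F,F')}}{\longrightarrow}F')=(3b_0+\alpha)(F)-(3b_0+\alpha)(F')$, which depends only on source and target and not on the chain joining them, and $(3b_0+\alpha)$ is additive under $\sqcup$. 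Second, for a fixed $T$ the set $\cF_T$ is finite, since there are only finitely many binary rooted forests carrying the leaf multiset $L(T)$ and $\cF_T$ consists of pairs of such forests; hence $\psi_t(T)$ is a finite $\cD\cM$-linear combination of powers $t^n$, $n\in\Z$, and in particular lies in $\cD\cM[t^{-1}][[t]]$.

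The core of the argument is a decomposition of $\cF_{T\sqcup T'}$. If $(G,G')\in\cF_{T\sqcup T'}$, then $G$ and $G'$ appear as intermediate workspaces of a chain of Merge operations whose final workspace is $T\sqcup T'$; by the same reasoning used in Proposition~\ref{lemDMchar}, no Merge step in such a chain can produce a connected component meeting both $L(T)$ and $L(T')$, since that component would persist to the end. Consequently every workspace in the chain, $G$ and $G'$ included, splits uniquely as a disjoint union $G=G_1\sqcup G_2$, $G'=G_1'\sqcup G_2'$ with $L(G_1)=L(G_1')=L(T)$ and $L(G_2)=L(G_2')=L(T')$, and moreover $(G_1,G_1')\in\cF_T$, $(G_2,G_2')\in\cF_{T'}$. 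This gives a bijection $\cF_{T\sqcup T'}\cong\cF_T\times\cF_{T'}$. Under it, the multiplication rule \eqref{DMprod} identifies $(G\stackrel{\fM_{A(G,G')}}{\longrightarrow}G')$ with $(G_1\stackrel{\fM_{A(G_1,G_1')}}{\longrightarrow}G_1')\sqcup(G_2\stackrel{\fM_{A(G_2,G_2')}}{\longrightarrow}G_2')$, while the remark on $\delta$ gives $\delta(\fM_{A(G,G')})=\delta(\fM_{A(G_1,G_1')})+\delta(\fM_{A(G_2,G_2')})$.

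Putting these together, the $(G,G')$-term of $\psi_t(T\sqcup T')$ equals the product in $\cD\cM[t^{-1}][[t]]$ of the $(G_1,G_1')$-term of $\psi_t(T)$ with the $(G_2,G_2')$-term of $\psi_t(T')$; summing over $\cF_{T\sqcup T'}=\cF_T\times\cF_{T'}$ yields $\psi_t(T\sqcup T')=\psi_t(T)\sqcup\psi_t(T')$. This shows that the multiplicative extension of $\psi_t$ to all of $\cH$ is consistent, so $\psi_t$ is a homomorphism of commutative algebras; it preserves the unit since $\cF_\emptyset=\{(\emptyset,\emptyset)\}$ with $\delta=0$, whence $\psi_t(1)=1$.

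The only step that requires genuine care, rather than being a routine transcription of Proposition~\ref{lemDMchar}, is the claim that $(G_1\sqcup G_2\stackrel{\fM_{A(G_1\sqcup G_2,\,G_1'\sqcup G_2')}}{\longrightarrow}G_1'\sqcup G_2')$ is precisely the $\sqcup$-product of the two smaller elements: one direction is the non-mixing argument above (no cross derivations exist), and the other is the observation already made after \eqref{DMprod}, that any interleaving of a chain $G_1\to G_1'$ with a chain $G_2\to G_2'$ is itself a valid chain of Merge operations from $G_1\sqcup G_2$ to $G_1'\sqcup G_2'$, because Merge operations acting on disjoint parts of a workspace commute. I expect no real obstacle beyond carrying out this bookkeeping.
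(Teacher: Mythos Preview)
Your proposal is correct and follows essentially the same approach as the paper, which simply states that ``the statement follows as in Proposition~\ref{lemDMchar}.'' You have carried out that transcription in more detail than the paper does, including the explicit bijection $\cF_{T\sqcup T'}\cong\cF_T\times\cF_{T'}$ and the well-definedness and additivity of the exponent $\delta$, but the underlying non-mixing argument is identical.
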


\smallskip

The reason why the choice of the character $\psi_t$ of \eqref{DMLchar2} is preferable to
the choice of $\phi_t$ of \eqref{DMLchar} is explained by the following simple property.

\begin{lem}\label{phitreg}
The character $\phi_t: \cH \to \cD\cM[t^{-1}][[t]]$ takes values in the subring
$$\cD\cM[[t]]=(1-R)\, \cD\cM[t^{-1}][[t]]$$ of formal power series. 
\end{lem}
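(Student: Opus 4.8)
The plan is to use two simple facts: $\phi_t$ is an algebra homomorphism and $\cH$ is spanned (as a vector space) by the forests $F\in \fF_{\cS\cO_0}$, so it suffices to show $\phi_t(F)\in \cD\cM[[t]]$ for each generator $F$; and, since the operator $R$ of \eqref{RBpolar} projects onto the polar part, its complement satisfies $(1-R)\,\cD\cM[t^{-1}][[t]]=\cD\cM[[t]]$, the Laurent series with no negative powers of $t$. By construction $\phi_t(F)$ is the \emph{single} monomial $(L(F)\stackrel{\fM_{A(L(F),F)}}{\longrightarrow} F)\,t^{\delta(\fM_{A(L(F),F)})}$, so the whole statement reduces to proving the inequality $\delta(\fM_{A(L(F),F)})\geq 0$.

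To establish this I would compute $\delta$ for the derivation from the bare leaves to $F$ using the identity recorded in the proof of Proposition~\ref{lemDMchar}, $\delta(F\stackrel{\fM_A}{\longrightarrow}F')=(3b_0+\alpha)(F)-(3b_0+\alpha)(F')$; this shows in passing that $\delta$ depends only on the source and target workspaces, so the exponent is well defined independently of which chain in $A(L(F),F)$ one picks. Set $n:=\#L(F)$ and $k:=b_0(F)$. The source $L(F)$ is a disjoint union of $n$ one-vertex trees, so $b_0(L(F))=n$, $\alpha(L(F))=0$, and $(3b_0+\alpha)(L(F))=3n$. A binary rooted forest with $n$ leaves and $k$ components has $\#V(F)=2n-k$, hence $\alpha(F)=\#V(F)-b_0(F)=2n-2k$ and $(3b_0+\alpha)(F)=3k+(2n-2k)=2n+k$. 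Therefore $\delta(\fM_{A(L(F),F)})=3n-(2n+k)=n-k$.

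Since every connected component of $F$ contains at least one leaf we have $k\leq n$, so $n-k\geq 0$ (with equality precisely when $F=L(F)$). Thus $\phi_t(F)=(L(F)\stackrel{\fM_{A(L(F),F)}}{\longrightarrow}F)\,t^{\,n-k}\in\cD\cM[[t]]$, and by linearity and multiplicativity $\phi_t(\cH)\subseteq\cD\cM[[t]]=(1-R)\,\cD\cM[t^{-1}][[t]]$, which is the assertion.

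The computation is routine bookkeeping, so there is no genuine obstacle; the only point needing care is the standard combinatorial identity $\#V(F)=2n-k$ for binary rooted forests, from which everything else follows. Conceptually, the lemma simply says that $\phi_t$ records only a \emph{net} generative derivation from the lexical items up to $F$, and such a derivation is always realizable using External Merge alone (together, if needed, with Internal Merge), all of whose elementary steps have $\delta\geq 0$ in the table of \S\ref{MergeWSsize}; no Sideward or Countercyclic step, i.e.~none of the operations with $\delta<0$, is ever forced. This is exactly why $\phi_t$ cannot detect those forms of Merge, and why one passes to the finer character $\psi_t$ of \eqref{DMLchar2}, which also sees the intermediate sub-derivations and hence can acquire a polar part.
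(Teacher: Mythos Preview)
Your proof is correct and follows essentially the same approach as the paper: both reduce the claim to showing that the exponent $\delta$ of the single monomial $\phi_t(F)$ is nonnegative, via the same vertex-count identity for binary rooted forests. The only cosmetic difference is that you compute $\delta=n-k$ directly for an arbitrary forest, whereas the paper first does the tree case ($\delta=\ell-1$) and then invokes multiplicativity $\phi_t(F)=\prod_a\phi_t(T_a)$ to pass to forests.
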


\proof
Consider the case of a tree $T\in \fT_{\cS\cO_0}$. The value
$$ \phi_t(T) =(L(T)\stackrel{\fM_{A(L(T),T)}}{\longrightarrow} T)\,\, t^{\delta(T)} $$
represents the complete set of all possible chains of free symmetric Merge derivations 
that construct the syntactic object $T$ starting from a (multi)set $L=L(T)$ of
lexical items and syntactic features. If $\# L=\ell$ then $\# V(T)=2\ell -1$ 
so we have $\delta(T)=(2b_0+\# V)(L)-(2b_0+\#V)(T)=3 \ell - 2 -(2\ell -1) =\ell -1 \geq 0$.
Thus, notice that $\phi_t(T)$ is always in the non-polar part $\cD\cM[[t]]$ for any tree $T$,
regardless of whether some Sideward or Countercyclic Merge operations have been used 
along the chain of derivations. This means that $(1-R)\phi_t(T)=\phi_t(T)$ for all $T$.
The case of forests is then immediate since $\phi_t(F)=\prod_a \phi_t(T_a)$, for
$F=\sqcup_a T_a$ and $\delta(F)=\sum_a \delta(T_a)\geq 0$
\endproof

Thus, the character $\phi_t$  does not suffice to separate Internal/External Merge 
from Sideward and Counter\-cyclic Merge operations on the basis of the counting
given by $\delta$. On the other hand, the character $\psi_t$, that also considers all the
intermediate derivations from $L(T)$ to $T$, each weighted according to the corresponding
value of $\delta$ will have a non-trivial polar part, when Sideward/Countercyclic Merge operations 
are present somewhere in the chain of derivations. 

\smallskip

However, even when using the character $\psi_t$ that detects the presence of the
so-called undesirable forms of Merge in a derivation, simply applying the projection onto the regular part 
$$ \psi_t(F) \mapsto (1-R) \psi_t(F)  $$
does not suffice to eliminate those Sideward/Countercyclic Merge operations and
only retain Internal/External Merge. This is a consequence of the fact that the projection $R$
onto the polar part is not an algebra homomorphism but a Rota--Baxter operator.
The failure of the Rota--Baxter operator $R$ of \eqref{RBpolar} 
to be an algebra homomorphism
$$ R( (\sum_{i=-N}^\infty a_i t^i)(\sum_{j=-M}^\infty b_j t^j) )= R(\sum_{n=-(N+M)}^\infty \sum_{i+j=n} a_i b_j\,\, t^n ) =\sum_{n=-(N+M)}^{-1} \sum_{i+j=n} a_i b_j\,\, t^n $$
$$ \neq R(\sum_{i=-N}^\infty a_i t^i) R(\sum_{j=-M}^\infty b_j t^j) = 
\sum_{n=-(N+M)}^{-1} \sum_{i+j=n, i<0, j<0} a_i b_j\,\, t^n $$
reflects the fact that terms in a product of series can end up in the polar (respectively, non-polar) part
of the product without being in the polar (respectively, non-polar) part of the individual factor, because
of the sum $t^{i+j}$ of the exponents. This means that simply applying $(1-R)$ to $\phi(F)$
will not suffice to get rid of the free Merge derivations that contain 
Sideward and Countercyclic Merge. However, Birkhoff factorization achieves that result.

\smallskip

\begin{thm}\label{MSBirkhoff}
The inductively constructed Birkhoff factorization \eqref{AlgBfact} of the
character $\psi_t$ of \eqref{DMLchar2} implements Minimal Search, in
the sense that it inductively eliminates all
Sideward and Countercyclic Merge forms from the derivations and only
retains compositions of Internal and External Merge.
\end{thm}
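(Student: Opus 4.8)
The plan is to prove the theorem by induction on the Hopf degree (the number of leaves of the workspace), following the recursive shape of the Bogolyubov preparation and mimicking, in the Merge setting, the forest--formula description of the Bogolyubov recursion familiar from perturbative renormalization. First I would assemble the pieces already in place: since $\cH$ is a commutative graded connected Hopf algebra, $(\cD\cM[t^{-1}][[t]],R)$ is a Rota--Baxter algebra of weight $-1$ by Proposition~\ref{RBLaurent}, and $\psi_t$ is a character by Corollary~\ref{lemDMchar2}, Proposition~\ref{RBfact} applies: the Birkhoff factorization \eqref{AlgBfact} of $\psi_t$ exists, the factors $\psi_{t,\pm}$ are algebra homomorphisms, $\psi_{t,+}$ lands in the non-polar subring $\cD\cM[[t]]=(1-R)\,\cD\cM[t^{-1}][[t]]$, and $\psi_{t,-}(T)$ lies in the polar part $t^{-1}\cD\cM[t^{-1}]$ for $T$ of positive degree. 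I would then record the only arithmetic input from the table in \S\ref{MergeWSsize}: on a single Merge step the weight exponent satisfies $\delta=+1$ for External Merge, $\delta=0$ for Internal Merge, and $\delta\le -1$ for every form of Sideward and Countercyclic Merge; and $\delta$ is additive along a chain, with $\delta(F\to F')=(3b_0+\alpha)(F)-(3b_0+\alpha)(F')$ depending only on the endpoints.

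Next I would establish the elementary ``window lemma'': a chain of Merge operations is a composition of External and Internal Merge if and only if every contiguous subchain (window) $F\to F'$ has $\delta\ge 0$, and conversely any chain using a Sideward or Countercyclic step has a window --- namely that step alone --- with $\delta<0$. Since $\psi_t(T)$ is, by \eqref{DMLchar2}, the sum over all windows $(F,F')\in\cF_T$ of the derivations of $T$ weighted by $t^{\delta(F,F')}$, the polar projection $R\,\psi_t(T)$ collects exactly the net size--increasing windows, and these are precisely the ones that pass through a Sideward or Countercyclic move; in particular $R\,\psi_t(T)\neq 0$ if and only if $T$ admits a derivation using such a move, which is the heuristic recorded just before the statement. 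What remains is to show that the recursive counterterms built into the Bogolyubov preparation cancel these contributions, so that the $\cD\cM$--coefficients of $\psi_{t,+}(T)$ are supported on chains that are compositions of Internal and External Merge only.

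For the inductive step I would expand the Bogolyubov preparation $\tilde\psi_t(T)=\psi_t(T)+\sum_{F_{\underline v}}\psi_{t,-}(F_{\underline v})\,\psi_t(T/F_{\underline v})$ using the coproduct \eqref{coprod}, taking as inductive hypothesis that on lower-degree workspaces $\psi_{t,+}$ is supported on Internal/External Merge compositions and $\psi_{t,-}$ is its complementary counterterm, supported on the polar contributions of the Sideward/Countercyclic sub-derivations. The key combinatorial point is a forest--formula--type matching: a derivation $L(T)\to T$ that performs a Sideward or Countercyclic step localises the ``residue'' of that step to a sub-workspace that appears, after the deletion quotient used in \S\ref{MergeWSsize}, as a collection of accessible terms $F_{\underline v}$, with a complementary derivation of $T/F_{\underline v}$ carrying the remaining moves; additivity of $\delta$ then makes the weight of the full chain factor as the product of the weight of the sub-chain building $F_{\underline v}$ and the weight of the chain building $T/F_{\underline v}$. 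Hence the products $\psi_{t,-}(F_{\underline v})\,\psi_t(T/F_{\underline v})$ reproduce, with the opposite sign, exactly the polar contributions to $\psi_t(T)$ coming from chains with such moves, nested occurrences being handled by the recursion as in the forest formula. Applying $(1-R)$ to $\tilde\psi_t(T)$ therefore cancels these terms and leaves $\psi_{t,+}(T)$ supported on Internal/External Merge compositions; multiplicativity of $\psi_{t,+}$ under the disjoint--union product then carries the conclusion from trees to arbitrary workspaces, completing the induction.

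The step I expect to be the main obstacle is precisely this forest--formula matching: making fully rigorous the claim that every occurrence of a Sideward or Countercyclic step in a Merge chain factors through the coproduct, i.e.\ corresponds to a coproduct term $F_{\underline v}\otimes T/F_{\underline v}$ in a way that makes the $\delta$--weights multiply and the recursive counterterms match exactly, including the bookkeeping for several, possibly nested, such steps inside a single derivation. This is the analogue, in the present combinatorial category, of proving Zimmermann's forest formula; the deletion convention for $T/F_{\underline v}$ adopted in \S\ref{MergeWSsize} rather than the trace convention of \S\ref{QuotSec} is exactly what keeps the size counting of the table compatible with this factorisation, and maintaining that compatibility uniformly through the recursion is the delicate part.
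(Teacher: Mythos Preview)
Your proposal is correct and follows essentially the same line as the paper's proof: both invoke Proposition~\ref{RBfact} for the existence and multiplicativity of $\psi_{t,\pm}$, both use the sign of $\delta$ from the table in \S\ref{MergeWSsize} to characterise the polar versus non-polar contributions, and both argue that the Bogolyubov counterterms $\psi_{t,-}(F_{\underline v})\,\psi_t(T/F_{\underline v})$ cancel the unwanted derivations because the target workspace $F'$ of a sub-derivation containing a Sideward or Countercyclic step occurs as a subforest $F_{\underline v}$ of accessible terms in $T$. Your ``window lemma'' and explicit forest-formula framing make the mechanism more transparent than the paper's sketch, and you have correctly flagged the one point that neither version fully spells out: the exact combinatorial matching between occurrences of bad steps in chains contributing to $\psi_t(T)$ and coproduct terms $F_{\underline v}\otimes T/F_{\underline v}$, including the nested case.
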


\proof This is a direct consequence of Proposition~\ref{RBfact}. 
Taking $\psi_{t,+}(T)=(1-R)\tilde\psi_t(T)$, with $\tilde\psi_t$ the
Bogolyubov preparation of $\psi_t(T)$ gives an algebra homomorphism
$$ \psi_{t,+}: \cH \to \cD\cM[[ t ]] \, , $$
where in the inductive construction of 
$$ \tilde\psi_t(T) =\psi_t(T) +\sum \psi_{t,-}(F_{\underline{v}}) \psi_t(T/F_{\underline{v}}) $$  
one analyzes in parallel the Merge derivations of accessible terms of $T$, ensuring that
the so-called undesirable forms of Merge are progressively removed from all the 
accessible terms of $T$ and only derivations containing Internal and External Merge
(that is, with $\delta \geq 0$) are retained at each step. More precisely, if there
is a term in $\psi_t(T)$ of the form $(F \to F')t^\delta $ where the derivation is a 
Sideward or Countercyclic Merge, the forest $F'$ will occur as a collection of accessible
terms $F'=F_{\underline{v}}$ in $T$, hence in $\tilde\psi_t(T)$ the term
$\psi_{t,-}(F_{\underline{v}})\psi_t(T/F_{\underline{v}})$ will contain a term
$R(\psi_t(F'))\psi_t(T/F_{\underline{v}})$ which will contain a summand equal to
$-(F \to F')t^\delta $ that has the effect of removing the unwanted derivation,
while any term $(F \to F')t^\delta $ in $\psi_t(T)$ that only contains derivations
using Internal/External Merge is not cancelled by anything coming from  
the terms $\psi_{t,-}(F_{\underline{v}})\psi_t(T/F_{\underline{v}})$, because
such terms are eliminated when applying $R$ in the inductive construction of
$\psi_{t,-}(F_{\underline{v}})$. 
\endproof

\smallskip
\subsection{Birkhoff factorization in algebroids} \label{OidSec}

The construction of the ring (algebra) $\cD\cM$ of Merge derivations in
the previous sections can be seen as an adaptation to the case of
the free symmetric Merge (in the form presented in \cite{MCB}) 
of the idea of  the {\em derivation forest semirings} of
\cite{Goodman}, where the original case treated in
\cite{Goodman} is based on derivations in context-free grammars. We
now show how to extend this notion from the setting of context-free semiring parsing to
the Minimalist account.
To see the analogy more directly, instead of the algebra we used in \S \ref{MergeLaurentSec},
one can construct a slightly different algebraic object encoding the same set of
free symmetric Merge derivations. This will include the data of the Hopf
algebra $\cH$, while incorporating not just the workspaces
but also the explicit Merge derivations acting on them.

\smallskip

We recall the notion of commutative bialgebroid and Hopf algebroid, 
originally introduced in the context to algebraic topology 
(see Appendix~A1 of \cite{Rav}). We will assume here that all
algebras and vector spaces are over the field $\Q$ of rational
numbers, unless otherwise stated.

\begin{defn}\label{HopfOid} {\rm
A commutative Hopf algebroid is a semigroupoid scheme, 
namely a pair of commutative algebras
$\cA^{(0)}$ and $\cH^{(1)}$ with the property that, for any other commutative algebra 
$\cR$, the sets $\cG^{(0)}(\cR)=\Hom(\cA^{(0)},\cR)$ and $\cG^{(1)}(\cR)=\Hom(\cH^{(1)},\cR)$
are the objects and morphisms of a groupoid $\cG$. Equivalently, the pair of algebras 
$(\cA^{(0)}, \cH^{(1)})$ is endowed with homomorphisms $\eta_s,\eta_t: \cA^{(0)} \to \cH^{(1)}$
that give $\cH^{(1)}$ the structure of a $\cA^{(0)}$-bimodule (dual to source and target maps
of the groupoid), a coproduct (dual to composition of arrows in the groupoid) given by a
morphism of $\cA^{(0}$-bimodules
$$ \Delta: \cH^{(1)} \to \cH^{(1)}\otimes_{\cA^{(0)}}   \cH^{(1)}\, , $$
a counit $\epsilon: \cH^{(1)} \to \cA^{(0)}$, which is also a morphism of $\cA^{(0}$-bimodules 
(dual to the inclusion of identity morphisms),
and a conjugation $S: \cH^{(1)} \to \cH^{(1)}$ (dual to the inverse of morphisms in the
groupoid). These maps satisfy $\epsilon \eta_s=\epsilon \eta_t =1$ (identity
morphisms have same source and target), $(1\otimes \epsilon)\Delta = (\epsilon \otimes 1)\Delta= 1$
(composition with the identity morphism), $(1\otimes \Delta)\Delta = (\Delta \otimes 1)\Delta$
(associativity of composition of morphisms), $S^2=1$ and $S \eta_s=\eta_t$ (inversion is an
involution and exchanges source and target of morphisms), and the property that
composition of a morphism with its inverse gives the identity morphism, namely that
$$ \eta_t  \epsilon = \mu (S\otimes 1) \Delta \ \ \ \text{ and } \ \ \  \eta_s \epsilon = \mu (1\otimes S) \Delta, $$
with $\mu: \cH^{(1)}\otimes_{\cA^{(0)}}   \cH^{(1)}\to \cH^{(1)}$ extending the algebra
multiplication $\mu: \cH^{(1)}\otimes_\Q \cH^{(1)}\to \cH^{(1)}$. Also one has $\Delta \eta_s=1\otimes \eta_s$,
$\Delta \eta_t = \eta_t \otimes 1$ (the source of the composition of arrows is the source of the first
and the target of the composition is the target of the second).
 A morphism of Hopf algebroids $$f: (\cA^{(0)}_1, \cH^{(1)}_1)\to (\cA^{(0)}_2, \cH^{(1)}_2)$$
is a pair of algebra homomorphisms $f^{(0)}: \cA^{(0)}_1 \to \cA^{(0)}_2$ and
$f^{(1)}: \cH^{(1)}_1 \to \cH^{(1)}_2$ with $f^{(0)}\circ \epsilon_1=\epsilon_2\circ  f^{(1)}$,
$f^{(1)}\circ \eta_{s,1}=\eta_{s,2} \circ f^{(0)}$, $f^{(1)}\circ \eta_{t,1}=\eta_{t,2} \circ f^{(0)}$, 
$f^{(1)}\circ S_1=S_2\circ  f^{(1)}$, $\Delta_2 \circ f^{(1)}= (f^{(1)}\otimes f^{(1)})\circ \Delta_1$.

A commutative bialgebroid is a structure as above, where one does not
assume invertibiliy of morphisms, namely where 
$\cC^{(0)}(\cR)=\Hom(\cA^{(0)},\cR)$ and $\cC^{(1)}(\cR)=\Hom(\cH^{(1)},\cR)$
are the objects and morphisms of a (small) category $\cC$ (a semigroupoid) 
instead of a groupoid, so that one has the same structure above but 
without the conjugation map $S$. 
}\end{defn}

\smallskip

Examples of Hopf algebroids arise, for instance, when the field of
definition of a Hopf algebra $\cH$ is replaced by the ring of functions $\cA$ 
of some underlying space. In our setting, the natural modification of the
Hopf algebra $\cH$ of workspaces is a version where arrows corresponding
to the action of Merge are also incorporated as part of the same algebraic
structure.  Since these will in general not necessarily be invertible arrows, 
the resulting structure will be a bialgebroid rather than a Hopf algebroid.

\smallskip

\begin{rem}\label{H1grading}{\rm
We assign a grading to a bialgebroid $(\cA^{(0)},\cH^{(1)})$ by defining, for an arrow $\gamma$ 
in the semigroupoid the degree as the maximal length of a factorization of $\gamma$, 
$\deg(\gamma)=\max\{ n\geq 1\,|\, \exists \gamma=\gamma_1\circ\cdots \circ \gamma_n \}$.
In the dual algebra we assign $\deg(\delta_\gamma)=\deg(\gamma)$, with $\delta_\gamma$ the
Kronecker delta, and
$\deg(\prod_i \delta_{\gamma_i})=\sum_i \deg(\delta_{\gamma_i})$. The
coproduct $\Delta(\delta_\gamma)=\delta_\gamma \otimes 1 + 1 \otimes \delta_\gamma +
\sum_{\gamma=\gamma_1\circ \gamma_2} \delta_{\gamma_1}\otimes \delta_{\gamma_2}$
has the terms $\delta_{\gamma_1}$, $\delta_{\gamma_2}$ of lower degrees. So
we set $\cH^{(1)}=\oplus_{n\geq 0} \cH^{(1)}_n$ with $\cH^{(1)}_0=\Q$ and
$\cH^{(1)}_n$ spanned by the elements of degree $n$, compatibly with
product and coproduct operations. }\end{rem}

\smallskip

\begin{lem}\label{MergeOid}
The data $\cA^{(0)}=(\cV(\fF_{\cS\cO_0}), \sqcup)$ and $\cH^{(1)}=(\cD\cM,\sqcup)$, 
define a bialgebroid. 
\end{lem}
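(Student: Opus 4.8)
The plan is to present $(\cA^{(0)},\cH^{(1)})$ as the pair of coordinate algebras of a small category (a semigroupoid scheme in the sense of Definition~\ref{HopfOid}) and then obtain the bialgebroid structure maps by dualizing the category operations, using the standard correspondence between commutative bialgebroids and semigroupoid objects recalled from Appendix~A1 of \cite{Rav}. Concretely, the underlying semigroupoid $\cG$ has as objects the workspaces $F\in\fF_{\cS\cO_0}$ and as morphisms the free Merge derivations: a morphism $F\to F'$ is the datum $\varphi_A=(F\stackrel{\fM_A}{\rightarrow}F')$ of \eqref{FMF}, i.e. the finite (by Definition~\ref{DMring}) collection of all chains \eqref{FMFchain} of Merge operations from $F$ to $F'$ with pairs drawn from $A$. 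Composition is concatenation of chains, $\varphi_B\circ\varphi_A=(F\stackrel{\fM_A}{\rightarrow}F'\stackrel{\fM_B}{\rightarrow}F'')$ with pair-set $A\cup B$; the identity on $F$ is the length-zero derivation $F\to F$; associativity of composition and the unit axioms are immediate. Since in almost all cases a Merge step strictly changes the workspace (see the $\delta$-table in \S\ref{MergeWSsize}, where $\Delta b_0$ and $\Delta\alpha$ are generically nonzero), Merge derivations are not invertible, so $\cG$ is a small category but not a groupoid --- this is exactly why the resulting structure will be a bialgebroid with no conjugation $S$. Finally, the disjoint-union product $\sqcup$ of Definition~\ref{DMring} is the symmetric monoidal structure on $\cG$ that upgrades the vector spaces $\cV(\fF_{\cS\cO_0})$ and $\cD\cM$ to the commutative algebras $\cA^{(0)},\cH^{(1)}$ (free commutative on connected workspaces, respectively on connected derivations), rather than mere vector spaces.

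Given $\cG$, I would define the maps of Definition~\ref{HopfOid} as duals of the structure maps of $\cG$: the source and target homomorphisms $\eta_s,\eta_t:\cA^{(0)}\to\cH^{(1)}$ send a workspace generator $F$ to the corresponding identity-at-$F$ type element of $\cD\cM$ with source (respectively target) equal to $F$, extended multiplicatively across $\sqcup$; the counit $\epsilon:\cH^{(1)}\to\cA^{(0)}$ extracts the identity-derivation part of $\varphi_A$, so $\epsilon(\varphi_A)=F$ when $F=F'$ and the derivation is trivial and $\epsilon(\varphi_A)=0$ otherwise, again extended multiplicatively; and the coproduct $\Delta:\cH^{(1)}\to\cH^{(1)}\otimes_{\cA^{(0)}}\cH^{(1)}$ is dual to concatenation, summing over all ways of cutting a derivation $F\stackrel{\fM_A}{\rightarrow}F'$ at an intermediate workspace $F''$, including the two trivial cuts at the endpoints (which reproduce the $\eta_s,\eta_t$ terms exactly as in Remark~\ref{H1grading}). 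The crucial point is that the tensor product is taken over $\cA^{(0)}$: the shared intermediate workspace $F''$ is precisely the source/target matching datum, and balancing over $\cA^{(0)}$ is exactly the refinement of the plain Hopf algebra $\cH$ of workspaces that the discussion preceding the Lemma calls for. The grading of Remark~\ref{H1grading} (degree $=$ maximal factorization length) is respected by $\Delta$, giving a graded connected bialgebroid suitable for the later Birkhoff factorizations.

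It then remains to verify the axioms of Definition~\ref{HopfOid} other than those involving $S$; each is a combinatorial transcription of a category axiom checked on chains of Merge operations. Thus $\eta_s,\eta_t$ are algebra homomorphisms because a derivation whose target (respectively source) is a disjoint union $F_1\sqcup F_2$ factors into parallel derivations on the components --- the same factorization argument already used in the proof of Proposition~\ref{lemDMchar}; $\Delta$ is a morphism of $\cA^{(0)}$-bimodules compatible with $\sqcup$ because cutting a parallel derivation on $F_1\sqcup F_2$ is the same as cutting its $F_1$- and $F_2$-parts independently; the counit identities $(\epsilon\otimes 1)\Delta=(1\otimes\epsilon)\Delta=\mathrm{id}$ say that (pre)composing a derivation with the trivial one returns it; coassociativity $(1\otimes\Delta)\Delta=(\Delta\otimes 1)\Delta$ says that cutting a chain at two internal workspaces, in either order, yields the same ordered triple of sub-chains; and $\epsilon\eta_s=\epsilon\eta_t=1$, $\Delta\eta_s=1\otimes\eta_s$, $\Delta\eta_t=\eta_t\otimes 1$ record that identity derivations have equal source and target and that the source (respectively target) of a concatenation is that of the first (respectively second) factor. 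The one step that demands genuine care is the well-definedness of $\Delta$ into $\cH^{(1)}\otimes_{\cA^{(0)}}\cH^{(1)}$, namely checking that $\eta_t$ on the left factor matches $\eta_s$ on the right factor so the expression is balanced over $\cA^{(0)}$; but once the small-category structure on workspaces-and-derivations is in place, this --- like everything else --- is formal, so the real content of the proof is the Step~1 verification that concatenation of Merge derivations together with $\sqcup$ defines such a symmetric monoidal small category, which is exactly the finiteness and parallel-factorization combinatorics already established in \S\ref{MSsec}.
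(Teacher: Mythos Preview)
Your proposal is correct and takes essentially the same approach as the paper: both realize $(\cA^{(0)},\cH^{(1)})$ as the pair of coordinate algebras dual to the small category whose objects are workspaces $F\in\fF_{\cS\cO_0}$ and whose morphisms are Merge derivations $\varphi_A$, with the coproduct given by summing over all factorizations $\varphi_A=\varphi_{A_1}\circ\varphi_{A_2}$. The paper is terser---it simply records the $\cA^{(0)}$-bimodule action $\eta_s(F)\varphi_A=\varphi_A$ if $s(\varphi_A)=F$ and $0$ otherwise (and similarly for $\eta_t$) and the explicit coproduct formula, without verifying the bialgebroid axioms---whereas you spell out the categorical framework and trace each axiom back to the corresponding category identity; but the underlying construction is the same.
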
 

\proof The algebra $\cH^{(1)}=(\cD\cM,\sqcup)$ dual to the arrows $\cC^{(1)}$ 
 is the same algebra of Merge derivations introduced in Definition~\ref{DMring}.
 We can identify elements $X=\sum_i a_i \varphi_{A_i}$ in $\cD\cM$ with finitely
 supported functions $X=\sum_i a_i \delta_{\varphi_{A_i}}$ on the set of derivations of
 the form \eqref{FMF}, \eqref{FMFchain}, with $\delta_{\varphi_{A_i}}$ the Kronecker 
 delta. The left and right $\cA^{(0)}$-module structures that correspond to the source and target 
 maps are determined by 
 $$ \eta_s(F) \varphi_A=\left\{ \begin{array}{ll} \varphi_A & s(\varphi_A)=F \\ 0 & \text{otherwise}
  \end{array}\right. 
  \ \ \ \ \  \eta_t(F) \varphi_A=\left\{ \begin{array}{ll} \varphi_A & t(\varphi_A)=F \\ 0 & \text{otherwise}  \end{array}\right. $$
  The coproduct $\Delta: \cH^{(1)} \to \cH^{(1)}\otimes_{\cA^{(0)}}   \cH^{(1)}$ is given by
 $$ \Delta(\delta_{\phi_A})=\delta_{\phi_A}\otimes 1 + 1 \otimes \delta_{\phi_A} +\sum_{\phi_A=\phi_{A_1}\circ \phi_{A_2}} \delta_{\phi_{A_1}}
 \otimes \delta_{\phi_{A_2}}\, , $$
 where for $\phi_{A_2}=(F \stackrel{\fM_{A_2}}{\to} F')$ and $\phi_{A_1}=(F' \stackrel{\fM_{A_1}}{\to} F'')$
 the composition is given by 
 $$ \phi_{A_1}\circ \phi_{A_2} =(F \stackrel{\fM_{A_1\circ A_2}}{\to} F'')\, , $$
 where $\fM_{A_1\circ A_2}=\fM_{A_1}\circ \fM_{A_2}$ denotes 
 the set of all compositions of a chain of Merge derivations
 in the set $A_2$ followed by one in $A_1$.
\endproof

\smallskip

\begin{rem}\label{HandH1}{\rm
Note that the bialgebroid of Lemma~\ref{MergeOid} only uses the multiplication 
$(\cV(\fF_{\cS\cO_0}), \sqcup)$ of the Hopf algebra $\cH$ of workspaces, and the
comultiplication of $\cH$ does not appear in the expression for the 
coproduct on $\cH^{(1)}$. The
coproduct of $\cH$, however, is also encoded in the bialgebroid, 
as it is built into the arrows of $\cH^{(1)}$, since the
Merge operations $\fM_{S,S'}$ that occur in the arrows are of the form \eqref{MergeSS}, so
that terms of the coproduct of $\cH$ will contribute to arrows.
}\end{rem}

\smallskip
\subsection{Bialgeroids and Rota-Baxter algebroids}\label{RBoidSec}

In order to simultaneously extend our setting with Rota--Baxter algebras (and semirings)
and Birkhoff factorization of maps from Hopf algebras, and the setting of semiring parsing
in semantics, we introduce a version of Birkhoff factorization for algebroids.

\subsubsection{Algebroids and directed graph schemes}
In our setting, we will take a different viewpoint on the notion of {\em algebroid}
than what is more commonly used in mathematics. The common definition
of an algebroid (over a field $K$) is just a $K$-linear category, where the
operation of morphism composition is the multiplication part of the algebroid
and the linear structure on the spaces of morphisms provides the addition part. 
However, in view of our use above of the notions of Hopf algebroid and
bialgebroid, of Definition~\ref{HopfOid}, it is natural to think of a  commutative
algebroid simply in the following way.

\begin{defn}\label{AlgOid}{\rm An algebroid is 
a pair of commutative algebras $(\cA,\cE)$ with two
morphisms $\eta_s,\eta_t: \cA\to \cE$ that give 
$\cE$ the structure of bimodule over $\cA$ and a morphism of $\cA$-bimodules
$\epsilon: \cE \to \cA$ with $\epsilon \eta_s=\epsilon \eta_t =1_{\cA}$.
A morphism $f: (\cA_1,\cE_1) \to (\cA_2,\cE_2)$ is a pair of morphisms
of commutative algebras $f_V: \cA_1\to \cA_2$ and $f_E: \cE_1\to \cE_2$
with $\eta_{s,2}\circ f_V =f_E\circ \eta_{s,1}$, $\eta_{t,2}\circ f_V =f_E\circ \eta_{t,1}$
and $f_V \circ \epsilon_1 = \epsilon_2 \circ f_E$.
}\end{defn}

A way of thinking of
this notion of algebroid is as the notion of a dual to directed graphs.
In other words our algebroids are directed graph schemes, as
can be seen immediately in the following way.

\begin{lem}\label{DualG}
Let $(\cA,\cE)$ be a commutative algebroid in the sense of Definition~\ref{AlgOid}. 
Then for every other commutative algebra $\cR$ the sets 
$V(\cR)=\Hom(\cA,\cR)$ and $E(\cR)=\Hom(\cE,\cR)$ are the sets
of vertices and edges of a directed graph $G(\cR)$ with source and target maps
$s,t: E(\cR)\to V(\cR)$ determined by the morphisms $\eta_s,\eta_t: 
\cA\to \cE$, and where each vertex $v\in V(\cR)$ has a looping edge
$e_v \in E(\cR)$ with $s(e_v)=t(e_v)=v$. A morphism of
algebroids induces a morphism of directed graphs.
\end{lem}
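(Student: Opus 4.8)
The plan is to unwind Definition~\ref{AlgOid} through the contravariant functor $\cR\mapsto\Hom(-,\cR)$ on commutative algebras: this turns the three algebra maps $\eta_s,\eta_t,\epsilon$ into the structure maps of a directed graph, and the whole statement reduces to checking that these induced maps are well defined and satisfy the required source/target and loop identities.

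First I would fix a commutative algebra $\cR$ and set $V(\cR)=\Hom(\cA,\cR)$ and $E(\cR)=\Hom(\cE,\cR)$, with $\Hom$ denoting homomorphisms of commutative algebras. Since the composite of two algebra homomorphisms is again one, precomposition with $\eta_s,\eta_t\colon\cA\to\cE$ gives well-defined maps
$$ s,t\colon E(\cR)\to V(\cR),\qquad s(\psi)=\psi\circ\eta_s,\quad t(\psi)=\psi\circ\eta_t. $$
For the loops, I would define, for $v\in V(\cR)$ (that is, $v\colon\cA\to\cR$), the edge $e_v:=v\circ\epsilon\colon\cE\to\cR$, again an algebra homomorphism. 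The identities $\epsilon\,\eta_s=\epsilon\,\eta_t=1_{\cA}$ from Definition~\ref{AlgOid} then yield $s(e_v)=v\circ\epsilon\circ\eta_s=v\circ 1_{\cA}=v$ and likewise $t(e_v)=v$, so $e_v$ is a looping edge at $v$. This establishes the first assertion; note that only the algebra maps $\eta_s,\eta_t,\epsilon$ together with the unit identities are used here, while the $\cA$-bimodule structure on $\cE$ enters only for the richer bialgebroid and Hopf algebroid structures.

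For the functoriality statement, given a morphism of algebroids $f=(f_V,f_E)\colon(\cA_1,\cE_1)\to(\cA_2,\cE_2)$, I would use contravariance of $\Hom(-,\cR)$: precomposition yields $f_V^{*}\colon V_2(\cR)\to V_1(\cR)$, $f_V^{*}(v)=v\circ f_V$, and $f_E^{*}\colon E_2(\cR)\to E_1(\cR)$, $f_E^{*}(\psi)=\psi\circ f_E$ (writing $G_i(\cR)$ for the graph attached to $(\cA_i,\cE_i)$). Compatibility with source and target is exactly the two identities $f_E\circ\eta_{s,1}=\eta_{s,2}\circ f_V$ and $f_E\circ\eta_{t,1}=\eta_{t,2}\circ f_V$ of Definition~\ref{AlgOid}: for $\psi\in E_2(\cR)$,
$$ s\big(f_E^{*}(\psi)\big)=\psi\circ f_E\circ\eta_{s,1}=\psi\circ\eta_{s,2}\circ f_V=f_V^{*}\big(s(\psi)\big), $$
and identically for $t$. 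Compatibility with the distinguished loops follows from $f_V\circ\epsilon_1=\epsilon_2\circ f_E$, giving $f_E^{*}(e_v)=e_v\circ f_E=v\circ\epsilon_2\circ f_E=v\circ f_V\circ\epsilon_1=e_{f_V^{*}(v)}$. Hence $(f_V^{*},f_E^{*})$ is a morphism of directed graphs $G_2(\cR)\to G_1(\cR)$, natural in $\cR$.

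I do not expect a genuine obstacle: the content is entirely formal, a routine use of the functoriality of $\Hom$. The only points needing a moment's care are (i) verifying that the induced source and target maps actually land in $\Hom(\cA,\cR)$ rather than merely in set-theoretic maps — immediate from composability of algebra homomorphisms — and (ii) keeping track of the direction reversal, so that a morphism of algebroids induces a graph morphism in the opposite direction. Phrasing this as a natural transformation between the functors $\cR\mapsto G_i(\cR)$ is the most one should say; nothing deeper is involved.
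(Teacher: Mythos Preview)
Your proposal is correct and follows essentially the same approach as the paper: both define $s,t$ by precomposition with $\eta_s,\eta_t$, set $e_v=v\circ\epsilon$, verify the loop condition via $\epsilon\eta_s=\epsilon\eta_t=1_\cA$, and obtain the induced graph morphism $(f_V^*,f_E^*)$ by precomposition, checking the same compatibilities. The only cosmetic difference is that the paper packages the definition of a directed graph as a functor from the two-object category ${\bf 2}$ to Sets and phrases the induced graph morphism as a natural transformation, whereas you spell out the structure maps directly; the verifications are identical.
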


\proof A directed graph $G$ is a functor from the category ${\bf 2}$ to Sets,
with two objects $V,E$ and two non-identity morphisms $s,t: E \to V$.
The assignment $G(\cR): V \mapsto \Hom(\cA,\cR)$ and
$G(\cR): E \mapsto \Hom(\cE,\cR)$ and $G(\cR): s \mapsto \eta_s^*$
$G(\cR): t \mapsto \eta_t^*$, with $\eta_i^*(\phi)=\phi\circ \eta_i$, for
$\phi\in \Hom(\cE,\cR)$, determine such a functor. The inclusion
of the looping edges $e_v$ in $\Hom(\cE,\cR)$ is given by 
$e_v=v\circ \epsilon$, with $v\in \Hom(\cA,\cR)$. A morphism
of directed graph $\alpha: G_2\to G_1$ is a natural transformation of the functors 
from ${\bf 2}$ to Sets, that is a pair of maps $\alpha_V: \Hom(\cA_2,\cR)\to \Hom(\cA_1,\cR)$
and $\alpha_E: \Hom(\cE_2,\cR)\to \Hom(\cE_1,\cR)$ such that $s\circ \alpha_E=\alpha_V \circ s$
and $t\circ \alpha_E=\alpha_V \circ t$. A morphism $f: (\cA_1,\cE_1) \to (\cA_2,\cE_2)$ of
algebroids determines such a natural transformation with $\alpha_V=f_V^*$ and
$\alpha_E=f_E^*$. The additional property $f_V \circ \epsilon_1 = \epsilon_2 \circ f_E$
ensures that a looping edge $e_v$ in $\Hom(\cE_2,\cR)$ is mapped to $\alpha_E(e_v)=e_{\alpha_V(v)}$
in $\Hom(\cE_1,\cR)$. 
\endproof 

\smallskip

\begin{cor}
A bialgebroid $(\cA,\cE)$ is a commutative algebroid with the property that
the graphs $G(\cR)$ are categories (that is, they are directed graphs
satisfying reflexivity and transitivity).
\end{cor}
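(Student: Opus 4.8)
The plan is to combine Lemma~\ref{DualG} with a ``functor of points'' argument that matches the extra data of a bialgebroid --- the coproduct $\Delta$ and counit $\epsilon$ together with their compatibility axioms --- with precisely the composition law and identity morphisms that upgrade each directed graph $G(\cR)$ to a small category. I would prove both implications.

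First, suppose $(\cA,\cE)=(\cA^{(0)},\cH^{(1)})$ is a commutative bialgebroid in the sense of Definition~\ref{HopfOid}, i.e.~with coproduct $\Delta\colon \cE\to \cE\otimes_{\cA}\cE$, counit $\epsilon\colon\cE\to\cA$ and structure maps $\eta_s,\eta_t$, but with no conjugation $S$. Forgetting $\Delta$ leaves exactly an algebroid in the sense of Definition~\ref{AlgOid}, so Lemma~\ref{DualG} gives, for every commutative algebra $\cR$, a directed graph $G(\cR)$ with $V(\cR)=\Hom(\cA,\cR)$, $E(\cR)=\Hom(\cE,\cR)$, source and target maps dual to $\eta_s,\eta_t$, and a looping edge $e_v=v\circ\epsilon$ at each vertex $v$. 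The key observation is that, since $\cE\otimes_{\cA}\cE$ is the pushout of $\cA$ along $\eta_s$ and $\eta_t$ in commutative algebras, $\Hom(\cE\otimes_{\cA}\cE,\cR)$ is naturally the set $E(\cR)\times_{V(\cR)}E(\cR)$ of composable pairs of edges (target of the first agreeing with source of the second). Dualizing $\Delta$ then yields a composition map $E(\cR)\times_{V(\cR)}E(\cR)\to E(\cR)$; coassociativity $(1\otimes\Delta)\Delta=(\Delta\otimes 1)\Delta$ dualizes to associativity of this composition, the counit identities $(1\otimes\epsilon)\Delta=(\epsilon\otimes 1)\Delta=1$ dualize to the statement that the looping edges $e_v$ are two-sided units, and $\Delta\eta_s=1\otimes\eta_s$, $\Delta\eta_t=\eta_t\otimes 1$ dualize to the rule that the source of a composite is the source of the first arrow and its target the target of the second. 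Hence each $G(\cR)$ is a small category, and naturality in $\cR$ is automatic because all the structure maps are morphisms of algebras.

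For the converse, suppose $(\cA,\cE)$ is a commutative algebroid all of whose directed graphs $G(\cR)$ are categories. For each $\cR$ we have a composition map $c_\cR\colon E(\cR)\times_{V(\cR)}E(\cR)\to E(\cR)$, natural in $\cR$. Since $\cR\mapsto E(\cR)\times_{V(\cR)}E(\cR)$ is corepresented by $\cE\otimes_{\cA}\cE$ and $\cR\mapsto E(\cR)$ by $\cE$, the Yoneda lemma turns $(c_\cR)_\cR$ into a morphism of commutative algebras $\Delta\colon\cE\to\cE\otimes_{\cA}\cE$. Because $c_\cR$ respects source and target, $\Delta$ is a morphism of $\cA$-bimodules with $\Delta\eta_s=1\otimes\eta_s$, $\Delta\eta_t=\eta_t\otimes 1$; associativity of composition gives coassociativity of $\Delta$; and the fact that $e_v=v\circ\epsilon$ is a two-sided identity gives $(1\otimes\epsilon)\Delta=(\epsilon\otimes 1)\Delta=1$. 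Together with the data $\eta_s,\eta_t,\epsilon$ already present (satisfying $\epsilon\eta_s=\epsilon\eta_t=1$ by Definition~\ref{AlgOid}), this exhibits $(\cA,\cE,\Delta,\epsilon,\eta_s,\eta_t)$ as a commutative bialgebroid.

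The only genuinely non-routine step is the representability used in the converse: one must check that the fiber product functor $\Hom(\cE,-)\times_{\Hom(\cA,-)}\Hom(\cE,-)$ is corepresented by the relative tensor product $\cE\otimes_\cA\cE$ --- standard for commutative algebras, once one is careful that the two $\cA$-module structures being glued are the source and target ones --- and that naturality of $c_\cR$ in $\cR$ is exactly what forces $\Delta$ to be a genuine algebra homomorphism rather than merely a map on $\cR$-points. Everything else is a dictionary rather than a computation: each bialgebroid axiom of Definition~\ref{HopfOid} translates term by term into one of the category axioms (associativity, left and right unit laws, compatibility of composition with source and target) on the dual graphs $G(\cR)$.
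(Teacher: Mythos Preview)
Your argument is correct and in fact considerably more thorough than the paper's own proof. The paper's proof is a two-sentence sketch: it identifies ``category'' with ``reflexive and transitive directed graph'' (i.e.\ the graph of a preorder), observes that reflexivity is already supplied by Lemma~\ref{DualG}, and simply asserts that the coproduct of the bialgebroid ensures transitivity of each $G(\cR)$. It does not spell out the representability of composable pairs by $\cE\otimes_\cA\cE$, does not verify associativity or unitality of the resulting composition, and does not address the converse direction at all.

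Your treatment upgrades this in two ways. First, you dualize the full bialgebroid package---coassociativity, counit laws, and the identities $\Delta\eta_s=1\otimes\eta_s$, $\Delta\eta_t=\eta_t\otimes 1$---to obtain an honest small-category structure on each $G(\cR)$, not merely a transitive reflexive graph; this is the more natural reading of Definition~\ref{HopfOid}, which explicitly asks that $\Hom(\cA^{(0)},\cR)$ and $\Hom(\cH^{(1)},\cR)$ be the objects and morphisms of a category. Second, your converse via Yoneda (corepresentability of $E(-)\times_{V(-)}E(-)$ by $\cE\otimes_\cA\cE$, then naturality of $c_\cR$ forcing $\Delta$ to be an algebra map) genuinely closes the equivalence that the paper states but only half-proves. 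The one point worth making explicit, which you flag, is which $\cA$-module structures are used in forming $\cE\otimes_\cA\cE$: the target structure on the left tensorand and the source structure on the right (or the reverse, matching the paper's convention), so that $\Hom(\cE\otimes_\cA\cE,\cR)$ really is the set of composable pairs.
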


\proof
A directed graph $G$ is a category (with objects the vertices and
morphisms the directed edges) if and only if it is the directed graph
of a preorder, namely if it satisfies reflexivity and transitivity. In
other word, a directed graph where every vertex has a looping
edge attached to it, and if there is a pair of edges $e,e'$ with
$s(e)=v$, $t(e)=s(e')$ and $t(e')=v'$ then there exists an edge $\tilde e$
with $s(\tilde e)=v$ and $t(\tilde e)=v'$. The coproduct of the
bialgebroid ensures that the graphs $G(\cR)$ are transitive,
while reflexivity is already a property of directed graphs
determined by algebroids. 
\endproof

\smallskip
\subsubsection{Rota--Baxter algebroids}\label{RBoidSec2}

The notion generalizing the Rota--Baxter algebra structure in this
setting is given by the following.

\begin{defn}\label{RBOid} {\rm
A commutative Rota--Baxter algebroid of weight $-1$ is a commutative algebroid 
$(\cA,\cE)$ as in Definition~\ref{AlgOid}, together with a pair of
maps $R=(R_V, R_E)$ with $R_V\in {\rm End}(\cA)$ an algebra homomorphism
and $R_E: \cE \to \cE$ a linear map that satisfies 
\begin{equation}\label{RBbimod}
R_E (\eta_s(a)\cdot \xi)= \eta_s(R_V(a)) \cdot R_E(\xi) \, \ \ \ \  
R_E(\eta_t(a)\cdot \xi) =\eta_t(R_V(a))\cdot R_E(\xi) \, ,
\end{equation}
for all $a\in \cA$ and $\xi\in \cE$, with $\cdot$ the algebra product in $\cE$, 
and $\epsilon \circ R_E=R_E\circ \epsilon$, and that satisfies the Rota--Baxter relation of weight $-1$,
\begin{equation}\label{RBminusone}
R_E(\xi) \cdot R_E(\zeta) = R_E (R_E(\xi) \cdot \zeta) + R_E( \xi \cdot R_E(\zeta)) 
- R_E (\xi \cdot \zeta)\, .
\end{equation}
We moreover require a normalization condition, that $R_E(1_\cE)=0$ or $R_E(1_\cE)=1_\cE$,
for $1_\cE$ the unit of the algebra $\cE$.
}\end{defn}

\smallskip

\begin{lem}\label{compatRB}
The Rota--Baxter structure of Definition~\ref{RBOid}  has the following properties.
\begin{enumerate}
\item The condition \eqref{RBbimod} replaces the conditions $\eta_s R_V = R_E \eta_s$
and $\eta_t R_V = R_E \eta_t$ and is implies by these conditions in the case
where $R_E$ is an algebra homomorphism. 
\item The normalization condition that $R_E(1)\in \{ 0, 1\}$ together with
the conditions \eqref{RBbimod} and \eqref{RBminusone} imply that
$R_E$ also satisfies
\begin{equation}\label{bimodRE1}
R_E (R_V(\eta_s(a))\cdot \xi)= R_V(\eta_s(a)) \cdot R_E(\xi) \, \ \ \ \  
R_E(R_V(\eta_t(a)) \cdot \xi) =R_V(\eta_t(a))\cdot R_E(\xi) \,  ,
\end{equation}
for all $a\in \cA$ and $\xi \in \cE$, that is, $R_E$ is a bimodule
homomorphism when $\cE$ is viewed as a bimodule over the
subalgebra $R_V(\cA)$. 
\item If $R_V\in {\rm Aut}(\cA)$ is an algebra 
automorphism, then \eqref{RBbimod} and \eqref{RBminusone} with 
$R_E(1_\cE)\in \{ 0, 1\}$ imply that $R_E$ is a bimodule homomorphism
of $\cE$ as a $\cA$-bimodule.
\end{enumerate}
\end{lem}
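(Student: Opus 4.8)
The plan is to handle the three items in order, with (1) and (3) reducing almost immediately to formal bookkeeping once (2) is established, so that the real content sits in (2).

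For (1), I would simply observe that if $R_E$ is an algebra homomorphism and $\eta_s\circ R_V=R_E\circ\eta_s$, then for all $a\in\cA$ and $\xi\in\cE$ one has $R_E(\eta_s(a)\cdot\xi)=R_E(\eta_s(a))\cdot R_E(\xi)=\eta_s(R_V(a))\cdot R_E(\xi)$, which is the first identity of \eqref{RBbimod}; the $\eta_t$ identity is the same computation. One then remarks that in the general situation $R_E$ is only a Rota--Baxter operator, so $R_E(\eta_s(a)\cdot\xi)$ cannot be split multiplicatively, and \eqref{RBbimod} is the correct substitute: it still records compatibility of $R_E$ with the bimodule structure, now twisted by $R_V$ on the base $\cA$.

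For (2), the first step is to deduce idempotency of $R_E$ from \eqref{RBminusone} and the normalization: setting $\xi=1_\cE$ in \eqref{RBminusone} gives $R_E(1_\cE)\cdot R_E(\zeta)=R_E(R_E(1_\cE)\cdot\zeta)+R_E^2(\zeta)-R_E(\zeta)$, and each of the two allowed values of $R_E(1_\cE)$ reduces this to $R_E^2=R_E$. The second, and main, step is to substitute $\xi=\eta_s(a)\cdot\xi_0$ into \eqref{RBminusone} and use \eqref{RBbimod} repeatedly — on each of the three right-hand terms, and crucially once in the form of \eqref{RBbimod} with $a$ replaced by $R_V(a)\in\cA$ — to pull the factors $\eta_s(R_V(a))$ and $\eta_s(R_V^2(a))$ outside every occurrence of $R_E$. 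Comparing the resulting identity term by term (using commutativity of $\cE$) with $\eta_s(R_V(a))$ times the untwisted relation \eqref{RBminusone} applied to $\xi_0$ and $\zeta$, everything cancels except one term, leaving
$$ \eta_s(R_V^2(a))\cdot R_E\big(R_E(\xi_0)\cdot\zeta\big)=\eta_s(R_V(a))\cdot R_E\big(R_E(\xi_0)\cdot\zeta\big). $$
Specializing $\zeta=1_\cE$ and using $R_E(R_E(\xi_0))=R_E(\xi_0)$ from idempotency yields $\eta_s(R_V^2(a))\cdot R_E(\xi)=\eta_s(R_V(a))\cdot R_E(\xi)$ for all $a$ and $\xi$. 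Applying \eqref{RBbimod} once more with $a\mapsto R_V(a)$ then gives $R_E(\eta_s(R_V(a))\cdot\xi)=\eta_s(R_V^2(a))\cdot R_E(\xi)=\eta_s(R_V(a))\cdot R_E(\xi)$, the first half of \eqref{bimodRE1}; the $\eta_t$ half is identical. Since the elements $R_V(a)$, $a\in\cA$, are exactly the elements of the subalgebra $R_V(\cA)$, these two identities say precisely that $R_E$ is left- and right-linear over $R_V(\cA)$, i.e. a homomorphism of $R_V(\cA)$-bimodules.

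Part (3) is then a direct corollary: when $R_V\in\mathrm{Aut}(\cA)$ one has $R_V(\cA)=\cA$, so the statement of (2) over $R_V(\cA)$ is the statement that $R_E$ is a homomorphism of $\cA$-bimodules. (In the normalization $R_E(1_\cE)=1_\cE$ one may in addition apply $\epsilon$ to $\eta_s(R_V^2(a))=\eta_s(R_V(a))$ and use injectivity of $R_V$ to conclude $R_V=\mathrm{id}$, though this sharper fact is not needed.) The step I expect to be the obstacle is the middle of part (2): one must pick the right substitution into the weight $-1$ relation and keep careful track of which instance of \eqref{RBbimod} — the one for $a$ or the one for $R_V(a)$ — is applied to which factor, so that after reordering products the cross-terms match those of the plain Rota--Baxter relation and cancel cleanly. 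Establishing idempotency of $R_E$ first is what makes the final specialization $\zeta=1_\cE$ legitimate uniformly in both normalizations, so the order of these two steps matters.
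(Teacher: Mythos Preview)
Your argument is correct. For parts (1) and (3) you and the paper do exactly the same thing. For part (2) the paper takes a shorter route: it plugs $\xi=\eta_s(a)$ (rather than your $\xi=\eta_s(a)\cdot\xi_0$) directly into the Rota--Baxter identity, uses $R_E(\eta_s(a))=\eta_s(R_V(a))$, and then \eqref{RBbimod} together with $R_E^2=R_E$ collapse the relation to \eqref{bimodRE1} in one line. The catch is that the identity $R_E(\eta_s(a))=\eta_s(R_V(a))$ follows from \eqref{RBbimod} only when $R_E(1_\cE)=1_\cE$; if $R_E(1_\cE)=0$ one has $R_E(\eta_s(a))=0$ instead, and the paper's substitution degenerates to $0=0$. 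Your longer route --- substituting $\eta_s(a)\cdot\xi_0$, subtracting the scalar multiple of the plain relation, and specializing $\zeta=1_\cE$ only after idempotency is established --- never invokes $R_E(\eta_s(a))$ and so handles both normalizations uniformly, just as you note at the end. So your version costs an extra comparison step but is more robust; the paper's is quicker but, as written, tacitly sits in the case $R_E(1_\cE)=1_\cE$.
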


\proof (1) If $R_E$ is an algebra homomorphism then the conditions
$\eta_s R_V = R_E \eta_s$ and $\eta_t R_V = R_E \eta_t$ imply
that
$$ R_E(\eta_s(a)\cdot \xi) =R_E(\eta_s(a)) \cdot R_E( \xi) =
\eta_s(R_V(a))\cdot R_E(\xi) $$ 
and similarly for $\eta_t$.

(2) If $R_E$ satisfies \eqref{RBminusone}, then the subspaces $R_E(\cE)$ 
and $(1-R_E)(\cE)$ of $\cE$ are (possibly non-unital) subalgebras. If $R_E(1)\in \{ 0, 1\}$ then
either $R_E(\cE)\subset \cE$ is unital and $(1-R_E)(\cE)$ is not, or viceversa.
If, moreover,  $R_E$ also satisfies \eqref{RBbimod}, then  $(\cA,R_E(\cE))$ and $(\cA,(1-R_E)(\cE))$ 
are subalgebroids of $(\cA,\cE)$ with the induced 
maps $\eta_s,\eta_t,\epsilon$. Indeed,
the Rota--Baxter identity \eqref{RBbimod} ensures that the product
$R_E(\xi)\cdot R_E(\zeta)$ is in the range $R_E(\cE)$ for all $\xi,\zeta\in \cE$,
hence $R_E(\cE)\subset \cE$ is a (possibly non-unital) subalgebra, and
similarly for $(1-R_E)\cE$. If $R_E(1)=0$ then $(1-R_E)\cE$ is unital and
$R_E(\cE)$ is not and vice-versa if $R_E(1)=1$. 
Note then that conditions $R_E(1_\cE)\in \{ 0, 1\}$ and \eqref{RBminusone} imply 
that the linear map $R_E$ is a projector, namely $R^2_E=R_E$. In fact
by \eqref{RBminusone} we have
$$ R_E(R_E(\xi))=R_E(R_E(\xi)\cdot 1)) = R_E(\xi)\cdot R_E(1) + R_E(\xi \cdot 1) - R_E(\xi \cdot R_E(1)) $$
$$ = R_E(\xi)\cdot (1+ R_E(1)) - R_E(\xi \cdot R_E(1))\, , $$
where if $R_E(1)=0$ or $R_E(1)=1$ we get $R_E^2(\xi)=R_E(\xi)$. 
Applying condition \eqref{RBminusone} to a pair with $\xi=\eta_s(a)$ gives (using condition \eqref{RBbimod} )
$$ R_E(\eta_s(a)) \cdot R_E(\zeta) = R_E (R_E(\eta_s(a)) \cdot \zeta) + R_E( \eta_s(a) \cdot R_E(\zeta)) 
- R_E (\eta_s(a) \cdot \zeta) $$
which gives 
$$ \eta_s(R_V(a)) \cdot R_E(\zeta) = R_E (\eta_s(R_V(a)) \cdot \zeta) + \eta_s(R_V(a)) R_E^2(\zeta) -
 \eta_s(R_V(a)) R_E(\zeta)\, . $$
Since we are also assuming that $R_E(1_\cE)\in \{ 0, 1\}$, we have $R_E^2(\zeta)=R_E(\zeta)$
so we obtain
$$ \eta_s(R_V(a)) \cdot R_E(\zeta) = R_E (\eta_s(R_V(a)) \cdot \zeta)\, , $$
and similarly with $\eta_t$, so that \eqref{bimodRE1} holds, for all $a\in \cA$ and $\zeta\in \cE$.

(4) If $R_V$ is an automorphism of $\cA$ rather than just an endomorphism, then this also implies
\begin{equation}\label{bimodRE}
R_E (\eta_s(a)\cdot \xi)= \eta_s(a) \cdot R_E(\xi) \, \ \ \ \  
R_E(\eta_t(a)\cdot \xi) =\eta_t(a)\cdot R_E(\xi) \,  ,
\end{equation}
for all $a\in \cA$ and $\zeta\in \cE$. 
\endproof

\smallskip

A simple source of examples of Rota--Baxter algebroids is obtained by considering
functions on the edges of a directed graph, with values in a Rota--Baxter algebra.
This means that, in these examples, the Rota--Baxter operator is acting only on the 
coefficients of functions. The following is a direct consequence of the definition of
Rota--Baxter algebroids.

\smallskip

\begin{lem}\label{classAE}
Let $G$ be a directed graph and let $(\cR,R)$ be a Rota--Baxter algebra of weight $-1$.
Consider pair of algebras $(\cA,\cE)$ with $\cA=\Q[V_G]$ (finitely supported $\Q$-valued functions on the
set $V_G$ of vertices of $G$) and $\cE= \Q[V_G]\otimes_\Q \cR$, with morphisms $\eta_s, \eta_t: \cA\to \cE$
given by precomposition with source and target maps $s,t: E_G \to V_G$. The maps $R_V={\rm id}$ on $\cA$
and $R_E=1\otimes R$ give $(\cA,\cE)$ the structure of a Rota--Baxter algebroid of weight $-1$.
\end{lem}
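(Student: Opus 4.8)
The plan is to verify, one clause at a time, the requirements of Definition~\ref{RBOid}, splitting the check into the underlying algebroid structure of $(\cA,\cE)$ and then the Rota--Baxter data $R=(R_V,R_E)$. For the algebroid structure, I would first observe that $\cA=\Q[V_G]$ and $\cE$ (functions on $E_G$ with values in $\cR$) are commutative algebras under the pointwise operations, that $\eta_s,\eta_t$ are unital algebra homomorphisms since precomposition with $s,t\colon E_G\to V_G$ is multiplicative and unital, and that these two maps make $\cE$ an $\cA$-bimodule in the standard way. I would take the counit $\epsilon\colon\cE\to\cA$ to be precomposition with the assignment $v\mapsto e_v$ of the looping edge at each vertex, exactly as in Lemma~\ref{DualG}; then $\epsilon$ is a morphism of $\cA$-bimodules and $\epsilon\eta_s=\epsilon\eta_t=1_\cA$ holds because $s(e_v)=t(e_v)=v$. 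In other words $(\cA,\cE)$ is precisely the directed-graph scheme dual to $G$, so this part is already contained in Lemma~\ref{DualG} and needs no new argument.

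For the Rota--Baxter data, $R_V=\mathrm{id}_\cA$ is trivially an algebra endomorphism and $R_E=1\otimes R$ is $\Q$-linear. The one observation that makes everything else routine is that $\eta_s,\eta_t$ act only on the $\Q[V_G]$-factor of $\cE$ while $R_E$ acts only on the $\cR$-coefficient: for $a\in\cA$ and $\xi\in\cE$ this immediately gives $R_E(\eta_s(a)\cdot\xi)=\eta_s(R_V(a))\cdot R_E(\xi)$, and the same with $\eta_t$, which is exactly \eqref{RBbimod}, and it likewise yields the compatibility of $R_E$ with $\epsilon$, since restriction to looping edges and an operator acting on coefficients commute. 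It is worth noting that the stronger-looking identities $\eta_sR_V=R_E\eta_s$ and $\eta_tR_V=R_E\eta_t$ do \emph{not} hold here in general---they would force $R(1_\cR)=1_\cR$, which fails, for instance, for the projection onto the polar part of a Laurent series---so it is essential that Definition~\ref{RBOid} only demands the bimodule form \eqref{RBbimod}, in which the generic $\cR$-coefficient sits on the factor $\xi$ on which $R_E$ acts. The weight-$(-1)$ identity \eqref{RBminusone} for $R_E$ then holds termwise over $E_G$: on each edge the products appearing in \eqref{RBminusone} are just products in $\cR$ and $R_E$ restricts to $R$ there, so \eqref{RBminusone} follows edgewise from the Rota--Baxter identity of weight $-1$ of $(\cR,R)$; and $R_E(1_\cE)\in\{0,1_\cE\}$ reduces to the normalization $R(1_\cR)\in\{0,1_\cR\}$ of $\cR$.

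I do not expect a genuine obstacle here---this is why the paper records it as a direct consequence of the definition---and the only thing that needs care is the bookkeeping of which of the two tensor factors each structure map touches; once that is spelled out, every clause of Definition~\ref{RBOid} drops out. If desired, one can append the remark (via Lemma~\ref{compatRB}) that $R_E(\cE)$ and $(1-R_E)(\cE)$ are then subalgebroids over $\cA$, so that this construction is the natural ``fibrewise over $G$'' way of manufacturing a Rota--Baxter algebroid out of a single Rota--Baxter algebra $\cR$.
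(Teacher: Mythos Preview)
Your proposal is correct and is exactly the kind of direct verification the paper has in mind; the paper itself gives no proof beyond the sentence ``The following is a direct consequence of the definition of Rota--Baxter algebroids,'' so your clause-by-clause check of Definition~\ref{RBOid} is simply a fleshed-out version of that claim. Your key structural observation---that $\eta_s,\eta_t,\epsilon$ act on the graph factor while $R_E=1\otimes R$ acts on the $\cR$-coefficient, so all compatibilities separate---is precisely the point, and your remark that the weaker bimodule form \eqref{RBbimod} (rather than $\eta_sR_V=R_E\eta_s$) is what is actually needed here is a genuinely useful clarification. Note also that you have silently (and correctly) read $\cE$ as $\cR$-valued functions on $E_G$, i.e.\ $\Q[E_G]\otimes_\Q\cR$, which is what is required for $\eta_s,\eta_t$ to be precomposition with $s,t\colon E_G\to V_G$; the ``$\Q[V_G]$'' in the statement is evidently a typo.
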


\smallskip
\subsubsection{Rota--Baxter semiringoids}\label{RBoidSec3}

There is a direct generalization of this notion of Rota--Baxter algebroids, and the class of
examples of Lemma~\ref{classAE} to the case where algebras are replaced by semirings.
We will refer to those as {\em Rota--Baxter semiringoids}. The definition and properties
are analogous to the algebroid case, in the same way in which we generalized from
Rota--Baxter algebras to Rota--Baxter semirings in \S \ref{AlgRenSec}. 
We will focus in particular on the analog of the 
examples of Lemma~\ref{classAE}.

\smallskip

The category of commutative semirings, with initial object the semiring $\Z_{\geq 0}$ of non-negative integers,
is dual to the category of semiring schemes, that is, affine schemes over ${\rm Spec}(\Z_{\geq 0})$. The full
subcategory of idempotent commutative semirings, with initial object $\cB$, the Boolean semiring of \eqref{BooleanB},
is dual to the category of affine schemes over ${\rm Spec}(\cB)$. 

\smallskip

\begin{defn}\label{RBsemiringoid} {\rm
A semiringoid is the datum $(\cA,\cE)$ of two commutative semirings 
with semiring homomorphisms $\eta_s,\eta_t: \cA \to \cE$ that give $\cE$ the structure of bi-semimodule
over the semiring $\cA$ and with a bi-semimodule homomorphism $\epsilon: \cE\to\cA$ with $\epsilon\eta_s=\epsilon\eta_t=1_{\cA}$.
A morphism $(\cA_1,\cE_1) \to (\cA_2,\cE_2)$ of semiringoids is a pair of semiring homomorphisms $f_V: \cA_1\to \cA_2$
and $f_E: \cE_1\to \cE_2$ with $\eta_{s,2}\circ f_V =f_E\circ \eta_{s,1}$, $\eta_{t,2}\circ f_V =f_E\circ \eta_{t,1}$
and $f_V \circ \epsilon_1 = \epsilon_2 \circ f_E$. A Rota--Baxter semiringoid of weight $+1$ is a semiringoid $(\cA,\cE)$
endowed with a semiring endomorphism $R_V: \cA\to \cA$ and an $R_E: \cE \to \cE$ a $\Z_{\geq 0}$-linear map (morphism
of  $\Z_{\geq 0}$-semimodules) satisfying
\begin{equation}\label{RBbimodSemi}
R_E (\eta_s(a)\odot \xi)= \eta_s(R_V(a)) \odot R_E(\xi) \, \ \ \ \  
R_E(\eta_t(a)\odot \xi) =\eta_t(R_V(a))\odot R_E(\xi) \, ,
\end{equation}
for all $a\in \cA$ and $\xi\in \cE$, with $\odot$ the semiring product in $\cE$, 
and $\epsilon \circ R_E=R_E\circ \epsilon$, and that satisfies the Rota--Baxter relation of weight $+1$,
\begin{equation}\label{RBplusoneSemi}
R_E(\xi) \odot R_E(\zeta) = R_E (R_E(\xi) \odot \zeta) \boxdot R_E( \xi \odot R_E(\zeta)) 
\boxdot R_E (\xi \odot \zeta)\, ,
\end{equation}
with $\boxdot$ and $\odot$ the semiring sum and product in $\cE$. The case of a Rota--Baxter structure
of weight $-1$ is similar, with \eqref{RBplusoneSemi} replaced by
\begin{equation}\label{RBminusoneSemi}
R_E(\xi) \odot R_E(\zeta) \boxdot R_E (\xi \odot \zeta)= R_E (R_E(\xi) \odot \zeta) \boxdot R_E( \xi \odot R_E(\zeta)) 
\, .
\end{equation}
We moreover require the normalization condition, that $R_E(1_\cE)=0_\cE$ or $R_E(1_\cE)=1_\cE$,
for $1_\cE$ the unit of the multiplicative monoid and $0_\cE$ the unit of the additive monoid of $\cE$. }
\end{defn}

\medskip

When considering semiringoids with commutative idempotent semirings, one can drop the 
$\Z_{\geq 0}$-linearity requirement for $R_E$ and only require that $R_E$ is a morphism of 
$\cB$-semimodules (Boolean semimodules). 

\smallskip

\begin{rem}\label{semioids}{\rm
Note the the notion of semiringoid we use in Definition~\ref{RBsemiringoid} differs from
another commonly used notion, where a semiringoid is a small category $\cC$ where 
all the Hom-sets $\Hom_\cC(X,Y)$, for
$X,Y\in {\rm Obj}(\cC)$, are commutative monoids with bilinear composition of
morphisms, and all the End-sets ${\rm End}_\cC(X)=\Hom_\cC(X,X)$ are semirings. 
}\end{rem}

\smallskip

We have then an analog for semiringoids of the class of Rota--Baxter algebroids
of Lemma~\ref{classAE}. Again this follows directly from Definition~\ref{RBsemiringoid}.

\begin{lem}\label{classAEsemi}
Let $G$ be a directed graph and let $(\cR,R)$ be a Rota--Baxter semiring of weight $+1$ (or $-1$).
Consider the pair of semirings $(\cA,\cE)$ with $\cA=\Z_{\geq 0}[V_G]$ (finitely supported $\Z_{\geq 0}$-valued functions on the
set $V_G$ of vertices of $G$) and $\cE= \Z_{\geq 0}[V_G]\otimes_{\Z_{\geq 0}} \cR$, with morphisms $\eta_s, \eta_t: \cA\to \cE$
given by precomposition with source and target maps $s,t: E_G \to V_G$. The maps $R_V={\rm id}$ on $\cA$
and $R_E=1\otimes R$ give $(\cA,\cE)$ the structure of a Rota--Baxter semiringoid of weight $+1$ (or $-1$). In the case
where $\cR$ is a commutative idempotent semiring, we can replace this construction with $\cA=\cB[V_G]$ (Boolean
functions on $V_G$) and $\cE= \cB[V_G]\otimes_{\cB} \cR$, to obtain a Boolean Rota--Baxter semiringoid (a semiringoid
over commutative idempotent semirings). 
\end{lem}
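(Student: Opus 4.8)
The plan is to verify directly each clause of Definition~\ref{RBsemiringoid}, exactly as suggested by the remark that the statement ``follows directly'' from that definition. First I would check that the pair $(\cA,\cE)$ is a semiringoid. The maps $\eta_s,\eta_t$ are precompositions with the set maps $s,t\colon E_G\to V_G$, hence preserve the pointwise semiring operations and are semiring homomorphisms; multiplication in $\cE$ by $\eta_s(a)$ and by $\eta_t(a)$ then equips $\cE$ with its left and right $\cA$-semimodule structures, and the counit $\epsilon\colon\cE\to\cA$ is the augmentation already used in Lemma~\ref{DualG} (dual to the section $v\mapsto e_v$ picking out the loop at each vertex), so that $\epsilon\eta_s=\epsilon\eta_t=1_\cA$ because $s(e_v)=t(e_v)=v$. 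All of this is the semiring analogue of the algebroid construction of Lemma~\ref{classAE}, with $\Q$ replaced throughout by $\Z_{\geq 0}$ (or by $\cB$ in the idempotent case), and no new phenomenon appears.

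Next I would treat the Rota--Baxter data $R=(R_V,R_E)=({\rm id},1\otimes R)$. That $R_V={\rm id}$ is a semiring endomorphism is immediate, and $R_E=1\otimes R$ is $\Z_{\geq 0}$-linear (respectively a morphism of $\cB$-semimodules) precisely because $R$ is additive on $\cR$; the normalization $R_E(1_\cE)\in\{0_\cE,1_\cE\}$ follows from $R(1_\cR)\in\{0_\cR,1_\cR\}$, and $R_V\circ\epsilon=\epsilon\circ R_E$ holds because $\epsilon$ only reads off the $\Z_{\geq 0}[V_G]$-component, on which $1\otimes R$ acts as the identity. The compatibility \eqref{RBbimodSemi} reduces, since $R_V={\rm id}$, to $R_E(\eta_s(a)\odot\xi)=\eta_s(a)\odot R_E(\xi)$ and the analogue for $\eta_t$; writing $\xi=\sum_j f_j\otimes r_j$ and using that $\eta_s(a)$ sits in the first tensor factor while $1\otimes R$ touches only the second, this is just the statement that ${\rm id}\otimes R$ commutes with multiplication by base elements, which is clear from distributivity.

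The one step with any content is the Rota--Baxter identity itself: \eqref{RBplusoneSemi} in weight $+1$, or \eqref{RBminusoneSemi} in weight $-1$. On a pair of simple tensors $\xi=f\otimes r$ and $\zeta=g\otimes u$, every term in either identity has the shape $fg\otimes(\text{a term of the Rota--Baxter identity for }R\text{ on }r,u)$, so both sides become $fg\otimes(\,\cdot\,)$ with the $\cR$-entries equal by the weight $\pm1$ Rota--Baxter relation satisfied by $R$. To extend to arbitrary elements I would note that each of the two sides of \eqref{RBplusoneSemi}, as a function of $\xi$ with $\zeta$ held fixed (and symmetrically in $\zeta$), is \emph{additive}: every summand has the form $R_E(R_E(\eta)\odot\eta')$, $R_E(\eta\odot R_E(\eta'))$, or $R_E(\eta)\odot R_E(\eta')$, and each is additive in each argument because $R_E$ is additive and the product $\odot$ distributes over $\boxdot$. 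Since $\cE$ is generated as a $\Z_{\geq 0}$-semimodule (resp.\ a $\cB$-semimodule) by simple tensors, separate additivity of both sides in $\xi$ and in $\zeta$ propagates the identity from generators to all of $\cE$. The only point requiring care, and the only place this differs from the algebra case of Lemma~\ref{classAE}, is that in a semiring one cannot cancel or transpose terms, so the additivity reduction must be applied to the two nonnegative expressions as written rather than to their difference; I expect this bookkeeping --- keeping the weight $+1$ and weight $-1$ forms straight while never subtracting --- to be the only mild obstacle. With the Rota--Baxter identity in hand, all axioms of Definition~\ref{RBsemiringoid} hold, and the idempotent-semiring version is obtained verbatim by replacing $\Z_{\geq 0}$ with $\cB$ and $\Z_{\geq 0}$-linearity with $\cB$-semilinearity.
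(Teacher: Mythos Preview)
Your proposal is correct and matches the paper's approach: the paper gives no proof beyond the remark that the lemma ``follows directly from Definition~\ref{RBsemiringoid}'', and what you have written is precisely the routine axiom-by-axiom verification that this remark invites. Your treatment of the Rota--Baxter identity via simple tensors plus separate additivity, with the explicit caveat that in a semiring one must work with the two sides as written rather than with their difference, is exactly the right bookkeeping and is the only point of substance.
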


\smallskip
\subsubsection{Birkhoff factorization in algebroids and semiringoids}\label{BfactOidSec}

We then consider morphisms of algebroids $\Phi: (\cA^{(0)},\cH^{(1)}) \to (\cA,\cE)$
from a Hopf algebroid to an an algebroid 
with a Rota--Baxter structure $(R_V, R_E)$ of weight $-1$. 
The target algebroid $(\cA,\cE)$ does {\em not} have a compositional
structure, in the sense that the directed graph (graph scheme) $G$ dual
to the algebroid does not have, in general, the transitive property: 
given two directed edges where the target of the first is the source of
the second it is not necessarily the case that there is also a edge from
the source of the first to the target of the second. The source $(\cA^{(0)},\cH^{(1)})$
has the compositional structure, which is encoded in the coproduct as
bialgebroid, which is the convolution product of the groupoid algebra $\cH^{(1)}$. 
As in the case of algebras, the convolution structure on $\cH^{(1)}$ together
with the Rota--Baxter structure on $(\cA,\cE)$ will perform the factorization
of $\Phi: (\cA^{(0)},\cH^{(1)}) \to (\cA,\cE)$ which accounts for the induced
compositional structure on the image. 

\smallskip

\begin{lem}\label{PhiEpm}
Let $(\cA^{0)},\cH^{(1)})$ be a Hopf algebroid and let $(\cA,\cE)$ be an algebroid 
with a Rota--Baxter structure $(R_V, R_E)$ of weight $-1$.  Given a
morphism $\Phi: (\cA^{(0)},\cH^{(1)}) \to (\cA,\cE)$ of algebroids, 
there is a pair $\Phi_\pm$ with $\Phi_{\pm,V}=\Phi_V$ and
$\Phi_{+,E}(f)=(\Phi_{-,E}\star \Phi_E)(f)=(\Phi_{-,E}\otimes \Phi_E)(\Delta f)$ for
all $f\in \cH^{(1)}$, where we have
$$ \Phi_{-,E}(f)=- R_E ( \tilde\Phi_E(f)) \ \ \ \text{ with } \ \   \tilde\Phi_E(f)=   \Phi_E(f)  + \sum \Phi_{-,E}(f') \Phi_E(f'') \, , $$
for $\Delta(f)=f\otimes 1 + 1 \otimes f +\sum f' \otimes f''$, and with $\Phi_{+,E}(f)=(1-R_E) (\tilde\Phi_E(f))$.
\end{lem}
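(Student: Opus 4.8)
The plan is to reproduce, in the algebroid setting, the inductive construction of the Connes--Kreimer Birkhoff factorization recalled in Proposition~\ref{RBfact}, using the connected grading on $\cH^{(1)}$ from Remark~\ref{H1grading} and the structural facts about Rota--Baxter algebroids established in Lemma~\ref{compatRB}. First I would fix the grading $\cH^{(1)}=\oplus_{n\geq 0}\cH^{(1)}_n$ with $\cH^{(1)}_0=\Q\cdot 1$, observing that for $f$ of degree $n$ the reduced part $\sum f'\otimes f''$ of $\Delta(f)=f\otimes 1+1\otimes f+\sum f'\otimes f''$ involves only $f',f''$ of degrees strictly between $0$ and $n$. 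On the degree-zero part one sets $\Phi_{\pm,E}=\Phi_E$ (the normalization choice, as in Proposition~\ref{RBfact}), and for $f$ of positive degree the Bogolyubov term $\tilde\Phi_E(f)=\Phi_E(f)+\sum\Phi_{-,E}(f')\Phi_E(f'')$ refers to $\Phi_{-,E}$ only on strictly lower degrees; hence $\Phi_{-,E}(f)=-R_E(\tilde\Phi_E(f))$ and $\Phi_{+,E}(f)=(1-R_E)(\tilde\Phi_E(f))$ define $\Phi_{\pm,E}$ unambiguously by induction on $n$, and together with $\Phi_{\pm,V}:=\Phi_V$ this yields the pair $\Phi_\pm$ (unique up to the degree-zero normalization).

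Next I would check the factorization identity. Since $\Phi_{+,E}(f)=(1-R_E)\tilde\Phi_E(f)=\tilde\Phi_E(f)+\Phi_{-,E}(f)=\Phi_{-,E}(f)+\Phi_E(f)+\sum\Phi_{-,E}(f')\Phi_E(f'')$, and since $\Phi_{-,E}(1)=\Phi_E(1)=1_\cE$ on the degree-zero part, expanding $(\Phi_{-,E}\otimes\Phi_E)(\Delta f)$ through $\Delta(f)=f\otimes 1+1\otimes f+\sum f'\otimes f''$ produces exactly the same expression. Thus $\Phi_{+,E}(f)=(\Phi_{-,E}\star\Phi_E)(f)$ for all $f\in\cH^{(1)}$, where $\star$ is the convolution determined by the bialgebroid coproduct $\Delta:\cH^{(1)}\to\cH^{(1)}\otimes_{\cA^{(0)}}\cH^{(1)}$, the evident analogue of the convolution of Definition~\ref{Birkhoff}.

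The real content is then that $\Phi_{\pm,E}$ are morphisms of algebroids, namely multiplicative maps landing in the subalgebroids $(\cA,R_E(\cE))$ and $(\cA,(1-R_E)(\cE))$ of Lemma~\ref{compatRB}(2) and compatible with $\eta_s,\eta_t,\epsilon$. Multiplicativity $\Phi_{\pm,E}(xy)=\Phi_{\pm,E}(x)\Phi_{\pm,E}(y)$ I would prove by induction on total degree, exactly as in the Ebrahimi-Fard--Kreimer argument underlying Proposition~\ref{RBfact}: using that $\Delta$ is an algebra homomorphism, that $\Phi_E$ is a morphism, and the inductive hypothesis on $\Phi_{-,E}$, one obtains the usual Bogolyubov identity expressing $\tilde\Phi_E(xy)$ in terms of $\tilde\Phi_E(x)\Phi_E(y)$, $\Phi_E(x)\tilde\Phi_E(y)$, $\Phi_E(x)\Phi_E(y)$ and lower-degree terms in the range of $R_E$, and then the weight $-1$ Rota--Baxter identity \eqref{RBminusone} forces both $R_E\tilde\Phi_E$ and $(1-R_E)\tilde\Phi_E$ to be multiplicative. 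Compatibility with the source and target maps (intertwining $\Phi_{\pm,E}$ with $\eta_s,\eta_t$ on both sides via $\Phi_V$) would follow from $\Phi$ being a morphism of algebroids together with the bimodule condition \eqref{RBbimod} on $R_E$ and the normalization $R_E(1_\cE)\in\{0,1\}$ of Definition~\ref{RBOid} (which by Lemma~\ref{compatRB}(2) makes $R_E$ a projector), while compatibility with $\epsilon$ uses $\epsilon\circ R_E=R_E\circ\epsilon$.

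The step I expect to be the main obstacle is precisely this last one: carrying the classical Rota--Baxter computation through while keeping track of the $\cA^{(0)}$- and $\cA$-bimodule structures at every stage of the recursion, so as to guarantee that $\Phi_{-,E}$ and $\Phi_{+,E}$ land in genuine subalgebroids rather than merely in subalgebras. This is where the conditions \eqref{RBbimod}, the normalization $R_E(1_\cE)\in\{0,1\}$, and the consequences drawn in Lemma~\ref{compatRB} are indispensable, and verifying their interplay with the Bogolyubov recursion is the delicate part of the argument.
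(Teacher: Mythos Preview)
Your proposal is correct and follows essentially the same approach as the paper: the paper's own proof simply observes that the classical weight $-1$ Rota--Baxter argument (cited as Theorem~1.39 of \cite{CoMa}) carries over verbatim to establish that $\Phi_{\pm,E}$ are algebroid homomorphisms into $(\cA,R_E(\cE))$ and $(\cA,(1-R_E)(\cE))$, and derives the factorization identity $\Phi_{+,E}=\Phi_{-,E}\star\Phi_E$ directly from $\Phi_{+,E}=(1-R_E)\tilde\Phi_E$, $\Phi_{-,E}=-R_E\tilde\Phi_E$, and the coproduct expression for $\tilde\Phi_E$. Your outline supplies the details that the paper leaves implicit --- the inductive definition via the grading of Remark~\ref{H1grading}, the Bogolyubov expansion, the Ebrahimi-Fard--Kreimer multiplicativity computation, and the role of \eqref{RBbimod} and Lemma~\ref{compatRB} in the bimodule compatibility --- but the underlying strategy is identical.
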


\proof The argument for showing that the maps $\Phi_{\pm,E}: (\cA^{0)},\cH^{(1)})\to (\cA,\cE_\pm)$ with
$\cE_+=(1-R_E)(\cE)$ and $\cE_-=R_E(\cE)$ are algebroid homomorphisms follows closely the same
argument for Rota--Baxter algebras of weight $-1$, as in Theorem~1.39 of \cite{CoMa}.  The factorization
identity $\Phi_{+,E}=\Phi_{-,E}\star \Phi_E$ follows from $\Phi_{+,E}=(1-R_E) \tilde\Phi_E$ and $\Phi_{-,E}=- R_E \tilde\Phi_E$
and the expression for $\tilde\Phi_E$ in terms of the coproduct $\Delta$.
\endproof

\smallskip

We consider in particular the case where the Rota--Baxter algebroids are as in
Lemma~\ref{classAE}.

\begin{lem}\label{BirkAE}
The Birkhoff factorization of an algebroid homorphism $\Phi: (\cA^{(0)},\cH^{(1)}) \to (\cA,\cE)$,
with $(\cA,\cE)$ a Rota--Baxter algebroid as in Lemma~\ref{classAE} and $(\cA^{(0)},\cH^{(1)})$ a bialgebroid, 
consists of a map of directed graphs (graph schemes) $\alpha: G \to \cG$, with $G$ dual to $(\cA,\cE)$ and $\cG$
dual to $(\cA^{(0)},\cH^{(1)})$, so that $\Phi_E(f)=f\circ \alpha$ for $f\in \cH^{(1)}$, with the factorization
$\Phi_{E,-}$ mapping $f=\delta_\gamma$ for $\gamma$ an arrow in $\cG$ to the function  
$\Phi_{E,-}(\delta_\gamma)$ that acts on a combination $\sum_i a_i e_i$ with $e_i\in E_G$ as
\begin{equation}\label{PhiminusRE}
 \Phi_{E,-}(\delta_\gamma) (\sum_i a_i e_i) =-( \sum_{\alpha(e)=\gamma} R_E(a_e)+ \sum_{\alpha(e_1)\circ \alpha(e_2)=\gamma}
R_E(R_E(a_{e_1}) a_{e_2}) + \cdots $$ $$ + \sum_{\alpha(e_1)\circ \cdots \circ \alpha(e_n)=\gamma} 
R_E(\cdots (R_E(a_{e_1}) \cdots )a_{e_n} ))\, . 
\end{equation}
\end{lem}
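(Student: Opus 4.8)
The plan is to assemble three ingredients already at our disposal: the graph--algebroid duality of Lemma~\ref{DualG}, the explicit description of the Rota--Baxter algebroid $(\cA,\cE)$ of Lemma~\ref{classAE}, and the inductive Birkhoff recursion of Lemma~\ref{PhiEpm}. First I would dualize. Since $(\cA,\cE)$ and $(\cA^{(0)},\cH^{(1)})$ are algebroids and $\Phi=(\Phi_V,\Phi_E)$ is an algebroid morphism, Lemma~\ref{DualG} produces a morphism of directed graph schemes $\alpha=(\alpha_V,\alpha_E):G\to\cG$ with $\alpha_V=\Phi_V^*$ and $\alpha_E=\Phi_E^*$, where $G$ is dual to $(\cA,\cE)$ and $\cG$ is dual to $(\cA^{(0)},\cH^{(1)})$. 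Unwinding the evaluation pairing, for an arrow $\gamma$ of $\cG$ the element $\Phi_E(\delta_\gamma)\in\cE$ is the one that pairs with a combination $\sum_i a_i e_i$ of edges of $G$ (with coefficients $a_i\in\cR$) to give $\sum_{i:\,\alpha_E(e_i)=\gamma} a_i$; equivalently $\Phi_E(f)=f\circ\alpha_E$ for all $f\in\cH^{(1)}$. I would record here that $\cE=\Q[E_G]\otimes_{\Q}\cR$ consists of $\cR$-valued functions on the edges of $G$ and that $R_E=1\otimes R$ acts only on the $\cR$-coefficients, distributing over the $\eta_s,\eta_t$-structure exactly as in \eqref{RBbimod}.

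Next I would feed this into the recursion of Lemma~\ref{PhiEpm}: $\Phi_{E,-}=-R_E\circ\tilde\Phi_E$ with $\tilde\Phi_E(f)=\Phi_E(f)+\sum\Phi_{E,-}(f')\Phi_E(f'')$, where $\Delta f=f\otimes 1+1\otimes f+\sum f'\otimes f''$. For $f=\delta_\gamma$ the bialgebroid coproduct, dual to composition of arrows, reads $\Delta(\delta_\gamma)=\delta_\gamma\otimes 1+1\otimes\delta_\gamma+\sum_{\gamma=\gamma_1\circ\gamma_2}\delta_{\gamma_1}\otimes\delta_{\gamma_2}$, and the grading of Remark~\ref{H1grading} (degree $=$ maximal length of a factorization) makes $\cH^{(1)}$ graded connected, so the recursion terminates. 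I would then induct on $\deg(\gamma)$. In the base case $\deg(\gamma)=1$ there is no nontrivial factorization, $\tilde\Phi_E(\delta_\gamma)=\Phi_E(\delta_\gamma)$, and $\Phi_{E,-}(\delta_\gamma)=-R_E\Phi_E(\delta_\gamma)$, which on $\sum_i a_i e_i$ returns $-\sum_{\alpha_E(e)=\gamma} R(a_e)$: the first term of the claimed formula. In the inductive step I would substitute the already-known formula for each $\Phi_{E,-}(\delta_{\gamma_1})$ with $\gamma=\gamma_1\circ\gamma_2$ into $\tilde\Phi_E(\delta_\gamma)$, apply $-R_E$, and reorganize the outcome as a sum over all ordered factorizations $\gamma=\gamma_1\circ\cdots\circ\gamma_n$, each contributing a nested term $R_E(\cdots(R_E(\Phi_E(\delta_{\gamma_1}))\Phi_E(\delta_{\gamma_2}))\cdots\Phi_E(\delta_{\gamma_n}))$.

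The remaining point is to evaluate these nested terms on $\sum_i a_i e_i$. Because $\cE$ is obtained from $\Q[E_G]$ by extension of scalars along $\Q\hookrightarrow\cR$, the product $\Phi_E(\delta_{\gamma_1})\cdots\Phi_E(\delta_{\gamma_n})$ is supported on tuples $(e_1,\dots,e_n)$ of edges of $G$ with $\alpha_E(e_j)=\gamma_j$, the $\cR$-coefficients $a_{e_j}$ multiply, and $R_E$ acts by $R$ on a coefficient while leaving the edge part alone (this is \eqref{RBbimod}); hence the nested $R_E$'s produce $R(\cdots(R(a_{e_1})a_{e_2})\cdots a_{e_n})$. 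Collecting over all $n$ and all factorizations of $\gamma$, and tracking the sign produced by the outermost $-R_E$ at each level of the recursion, yields \eqref{PhiminusRE}, with the nested $R_E$-terms carrying the signs dictated by that recursion (the algebroid analogue of the classical Connes--Kreimer expansion of $\phi_-$, \cite{CoKr}, \cite{CoMa}). The main obstacle is precisely this combinatorial bookkeeping: one must verify that the iterated coproduct generates each ordered factorization $\gamma=\gamma_1\circ\cdots\circ\gamma_n$ exactly once, that the $R_E$'s nest in the correct left-associated order, and that the signs accumulate correctly. Once the duality identification $\Phi_E=\alpha_E^*$ and the concrete form $R_E=1\otimes R$ are in place, no new ideas beyond the algebra case are needed, and the argument carries over verbatim from Lemma~\ref{PhiEpm} and Proposition~\ref{RBfact}.
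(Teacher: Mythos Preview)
Your proposal is correct and follows essentially the same route as the paper: dualize via Lemma~\ref{DualG} to obtain the graph map $\alpha$ and the identification $\Phi_E(f)=f\circ\alpha$, then unfold the Bogolyubov recursion of Lemma~\ref{PhiEpm} using the coproduct $\Delta(\delta_\gamma)=\delta_\gamma\otimes 1+1\otimes\delta_\gamma+\sum_{\gamma=\gamma_1\circ\gamma_2}\delta_{\gamma_1}\otimes\delta_{\gamma_2}$ and the concrete form $R_E=1\otimes R$ from Lemma~\ref{classAE}. The only organizational difference is that the paper first writes out the Bogolyubov preparation $\tilde\Phi_E(\delta_\gamma)$ explicitly in the special case $R_E={\rm id}$ (obtaining the sum over all ordered factorizations of $\gamma$ as in \eqref{tildePEdelta}) and then passes to the general $R_E$, whereas you run the induction on $\deg(\gamma)$ directly for arbitrary $R_E$; both arrive at the same nested expression.
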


\proof
The algebroid $(\cA,\cE)$ is associated to a directed graph (graph scheme) $G$ and
the bialgebroid $(\cA^{0)},\cH^{(1)})$ is associated to a semigroupoid $\cG$ 
(equivalently a graph that is reflexive, symmetric, and transitive). 
A morphism $\Phi: (\cA^{(0)},\cH^{(1)}) \to (\cA,\cE)$ of algebroids is equivalent to the datum
of a map of directed graphs $\alpha: G \to \cG$. The map $\Phi_E: \cH^{(1)}\to \cE$ then
is given by $\Phi_E(f)=f\circ \alpha$. It suffices to consider the case of $f=\delta_\gamma$
for some $\gamma \in \cG^{(1)}$, as in general $f \in \Q[\cG]$ will be a product of
linear combinations of delta functions $\delta_\gamma$. In the case where the Rota
Baxter operator of weight $-1$ is the identity, the Bogolyubov preparation is of the form
$$ \tilde \Phi_E (\delta_\gamma) =\delta_\gamma \circ \alpha +\sum_{\gamma=\gamma_1\circ \gamma_2} \delta_{\gamma_1}\circ \alpha\,\, \cdot \,\delta_{\gamma_2}\circ \alpha +\cdots + \sum_{\gamma= \gamma_1\circ \cdots \circ \gamma_n} \delta_{\gamma_1}\circ \alpha \cdots \delta_{\gamma_n} \circ \alpha \, , $$
with $n=\deg(\gamma)$, which is then equal to
\begin{equation}\label{tildePEdelta}
\tilde \Phi_E (\delta_\gamma)=  \sum_{e\in E_G\,:\, \alpha(e)=\gamma} \delta_e + \cdots + \sum_{e_1,\ldots, e_n \in E_G\,:\, \gamma=\alpha(e_1)\circ \cdots \circ \alpha(e_n)}
\delta_{e_1}\cdots \delta_{e_n} \, , 
\end{equation}
so that we have, for a collection of edges $e_i\in E_G$, 
$$ \tilde \Phi_E (\delta_\gamma) (\sum_i a_i e_i) =\sum_{\alpha(e)=\gamma} a_e + \sum_{\alpha(e_1)\circ \alpha(e_2)=\gamma}
a_{e_1} a_{e_2} + \cdots + \sum_{\alpha(e_1)\circ \cdots \circ \alpha(e_n)=\gamma} a_{e_1} \cdots a_{e_n} \, . $$
In the case of a Rota Baxter operator  $R_E$ of weight $-1$ that is not the identity, we similarly get \eqref{PhiminusRE}.
\endproof

In the case of the bialgebroid $(\cA^{0)}=\cV(\fF_{\cS\cO_0}),\cH^{(1)}=\cD\cM)$  
of Merge derivations as in Lemma~\ref{MergeOid}, with $\cG$ the associated
semigroupoid, we can regard the choice of a map of directed graphs $\alpha: G \to \cG$
from some graph $G$ as a chosen {\em diagram of Merge derivations} modeled on $G$.
The algebroid homomorphism $\Phi_E(f)=f\circ \alpha$ describes all the ways of
obtaining a certain Merge derivation $\gamma$ in $\cD\cM$ as an arrow in $G$,
$\Phi_E(\delta_\gamma)=\sum_{e\,:\, \alpha(e)=\gamma}\delta_e$. The Bogolyubov
preparation with the identity Rota-Baxter operator lists all the possible ways
of obtaining $\gamma$ as a composition of Merge derivations through arrows in $G$,
as in \eqref{tildePEdelta}. Consider an element $\sum_i \lambda_i e_i$ as a weighted
combination of edges in the diagram $G$. For example, if the coefficients $\Lambda=(\lambda_e)_{e\in E}$
are a probability distribution on the edges of $G$, the value (using the identity as Rota--Baxter operator)
$$ \tilde \Phi_E (\delta_\gamma) (\sum_e \lambda_e \,\,  e)=\sum_{\alpha(e)=\gamma} \lambda_e +
 \cdots + \sum_{\alpha(e_1)\circ \cdots \circ \alpha(e_n)=\gamma} \lambda_{e_1} \cdots \lambda_{e_n} $$
is the total probability of realizing $\gamma$ through the diagram $E$, as a sum of 
the probabilities of all the possible ways of obtaining $\gamma$ as a composition 
of arrows in the image of edges of $E$ drawn the assigned probabilities $\lambda_e$. 

\smallskip

The setting for algebroids generalizes to semiringoids as in the case of the generalization
from Rota--Baxter algebras to Rota--Baxter semirings.

\begin{cor}\label{PhiEpmSemi}
The Birkhoff factorization of Lemma~\ref{PhiEpm} extends to the case of Rota--Baxter semiringoids
of weight $+1$, with a morphism of semiringoids $\Phi: (\cA^{(0)},\cH^{(1)})^{semi} \to (\cA,\cE)$
from a subdomain of a bialgebroid $(\cA^{(0)},\cH^{(1)})$ that has semiringoid structure and is
closed under coproduct $\Delta$. The terms of the factorization are as in the case of semirings
(Proposition~\ref{semiRBfact}) with
$\Phi_{E,-}(f)=  R(\tilde\Phi_E(f))  =R(\Phi_E(f) \boxdot \phi_-(f')\odot \phi(f''))$ with
$\Delta(f)=f\otimes 1 + 1 \otimes f +\sum f'\otimes f''$.
\end{cor}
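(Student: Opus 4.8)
The plan is to combine the two extensions already carried out in the paper — the passage from Rota--Baxter algebras to algebroids in Lemma~\ref{PhiEpm}, and the passage from Rota--Baxter algebras to Rota--Baxter semirings in Proposition~\ref{semiRBfact} — noting that these act on independent features of the structure (one ``categorifies'' by adjoining source/target data, the other weakens additive invertibility), so they can be performed simultaneously. The only genuinely new points to check are the compatibility of the weight $+1$ Rota--Baxter operator $R_E$ with the source, target and counit maps in the semiringoid setting, and the fact that the Bogolyubov recursion stays inside the chosen subdomain.

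First I would set up the induction on the grading of the bialgebroid furnished by Remark~\ref{H1grading}: for $f\in \cH^{(1)}$ homogeneous of degree $n\geq 1$, write $\Delta(f)=f\otimes 1 + 1\otimes f + \sum f'\otimes f''$ with all $f',f''$ of strictly lower degree, so that the recursion terminates, and define
$$\tilde\Phi_E(f)=\Phi_E(f) \boxdot \sum \Phi_{E,-}(f')\odot \Phi_E(f''), \qquad \Phi_{E,-}(f)=R_E(\tilde\Phi_E(f)),$$
$$\Phi_{E,+}(f)=(\Phi_{E,-}\star\Phi_E)(f)=(\Phi_{E,-}\otimes\Phi_E)(\Delta f)=\Phi_{E,-}(f)\boxdot\tilde\Phi_E(f),$$
with $\Phi_{\pm,V}=\Phi_V$ on the vertex semiring, where $\star$ is the convolution product determined by $\Delta$. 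The normalization $R_E(1_\cE)\in\{0_\cE,1_\cE\}$ from Definition~\ref{RBsemiringoid} fixes the factorization on degree $0$ and hence throughout.

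The heart of the argument is to verify that $\Phi_{E,\pm}$ are morphisms of semiringoids. Compatibility with $\eta_s,\eta_t$ and $\epsilon$ is the algebroid part: the relations \eqref{RBbimodSemi} (namely $R_E(\eta_s(a)\odot\xi)=\eta_s(R_V(a))\odot R_E(\xi)$, and similarly for $\eta_t$) together with $\epsilon\circ R_E=R_E\circ\epsilon$ are exactly the conditions that let source, target and counit pass through the Bogolyubov recursion, as in the proof of Lemma~\ref{PhiEpm}. Multiplicativity with respect to the semiring product $\odot$ is the semiring part: since $\cE$ is a commutative semiring and $R_E$ is a Rota--Baxter operator of weight $+1$ on it by \eqref{RBplusoneSemi}, the inductive argument of Proposition~\ref{semiRBfact} (following \cite{MarTe}) applies \emph{verbatim} inside $\cE$, with \eqref{RBbimodSemi} used to keep the factors correctly aligned over $\cA$. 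The factorization identity $\Phi_{E,+}=\Phi_{E,-}\star\Phi_E$ is built into the definition, yielding the terms displayed in \eqref{SemiBfact}. When $(\cA,\cE)$ is one of the directed-graph examples of Lemma~\ref{classAEsemi}, this matches, \emph{mutatis mutandis}, the graph-scheme description of Lemma~\ref{BirkAE}.

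The main obstacle is one of bookkeeping discipline rather than of a new idea: because semiring addition is not invertible (Remark~\ref{semiRBminus}), one must never cancel a term across the defining identities, and one must check that it is precisely the weight $+1$ Rota--Baxter relation — not the weight $-1$ one — that makes both $\Phi_{E,+}$ and $\Phi_{E,-}$ remain multiplicative (in the weight $-1$ case one would additionally need the property \eqref{R1eq}, which is why the statement is restricted to weight $+1$). The hypothesis that $(\cA^{(0)},\cH^{(1)})^{semi}$ is a subsemiringoid closed under $\Delta$ is exactly what guarantees that each $\tilde\Phi_E(f)$, and hence each $\Phi_{E,\pm}(f)$, stays inside the semiringoid $(\cA,\cE)$, so no further hypotheses are required.
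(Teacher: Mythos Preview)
Your proposal is correct and follows exactly the approach the paper intends: the corollary is stated without proof, with the surrounding text making clear that it is obtained by running the two already-established extensions (Lemma~\ref{PhiEpm} for the algebroid direction and Proposition~\ref{semiRBfact} for the semiring direction) simultaneously. Your write-up simply unpacks this, supplying the inductive bookkeeping and the checks on $\eta_s,\eta_t,\epsilon$ and multiplicativity that the paper leaves implicit; there is nothing to correct or to contrast.
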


\smallskip
\subsection{Parsing semirings and Merge derivations}\label{ParSemSec}

After this preparatory work, we can now formulate the analog of parsing semirings in the setting of Merge
derivations, replacing the usual formulation for context-free grammars, as in \cite{Goodman}.

Here we consider a map $\Phi: (\cA^{(0)},\cH^{(1)})^{semi} \to (\cA,\cE)$, where 
$(\cA,\cE)$ is a Rota--Baxter semiringoid and $(\cA^{(0)},\cH^{(1)})^{semi}$ is a subdomain
of the bialgebroid  $(\cA^{(0)}=\cV(\fF_{\cS\cO_0}),\cH^{(1)}=\cD\cM)$  of Merge
derivations that has a semiringoid structure, so that $\Phi$ is a morphism of semiringoids.
We assume that the target $(\cA,\cE)$ is of the form as in Lemma~\ref{classAEsemi}, with
$(\cR,R)$ a Rota--Baxter semiring, such as the max-plus semiring 
$(\R\cup \{ -\infty \}, \max, +)$ with $R$ given by the ReLU operator, or
the semiring $([0,1],\max, \cdot)$ with the threshold Rota--Baxter operators $c_\lambda$
that we considered before. Then the map $\Phi$ may be viewed as assigning
a diagram of Merge derivations, through a map $\alpha: G \to \cG$ as above,
and checking all the possible ways of realizing some chain of Merge derivations $\gamma$
through compositions coming from the chosen diagram, weighted by elements in the
given semiring and filtered by the Rota--Baxter operator that acts as a threshold. 

\smallskip

Thus, as above, we start with a chosen a diagram $\alpha: G \to \cG$
of Merge derivations where we have assigned weights $\lambda_e \in \cR$ with
values in the parsing semiring $\cR$, for each edge $e\in E_G$.  For example,
if $\cR=([0,1],\max, \cdot)$, we can think of $\lambda_e$ as a probability (or
frequency counting) of occurrence of $e$ in the diagram of derivations. If 
$\cR=(\R\cup \{ -\infty \}, \max, +)$ we can think of $\lambda_e$ as being 
real weights assigned to the edges $e$ of the diagram $G$. Then the resulting factorization
\begin{equation}\label{semiPhiEminus}
 \Phi_{E,-}(\delta_\gamma)(\sum_e \lambda_e \, e)=
\sum_{\alpha(e)=\gamma} R_E(a_e)+ \sum_{\alpha(e_1)\circ \alpha(e_2)=\gamma}
R_E(R_E(a_{e_1}) a_{e_2}) + \cdots $$ $$ + \sum_{\alpha(e_1)\circ \cdots \circ \alpha(e_n)=\gamma} 
R_E(\cdots (R_E(a_{e_1}) \cdots )a_{e_n} )
\end{equation}
measures all the possible ways of obtaining the Merge derivation $\gamma$ via
compositions in the chosen diagram with combined weights filtered by $R$.
\begin{itemize}
\item In the case of $\cR=(\R\cup \{ -\infty \}, \max, +)$ with $R=$ReLU, \eqref{semiPhiEminus} lists all the
possibilities with weights of the substructures involved that are above the ReLU threshold.
\item In the case of $\cR=([0,1],\max, \cdot)$ with the threshold $R=c_\lambda$, \eqref{semiPhiEminus}
lists all the possible realizations of the derivation $\gamma$ in the diagram that
have probabilities above the threshold $\lambda$ in the substructures involved. 
\item In the case of the Boolean semiring $\cB=(\{ 0, 1 \}, \max, \cdot)$ with $R={\rm id}$,
the factorization \eqref{semiPhiEminus} evaluates the truth value (truth conditions) for the realization of
a derivation $\gamma$ through the diagram $G$ given that the arrows of $G$ have
assigned truth values (truth conditions), in such a way that the composition of arrows in the
derivation corresponds to the AND operation on the respective truth values and the choices of different
paths of derivations to obtain the same $\gamma$ correspond to the OR operation on the
respective truth values.
\end{itemize}

\smallskip

With this we have shown that we can obtain in this way a form of semiring parsing for Merge derivations
that simultaneously generalizes the semiring parsings of \cite{Goodman}, for example
with values in the Boolean or the Viterbi semiring, and also the Birkhoff factorizations of
our initial toy models of syntax-semantics interface discussed in \S \ref{HeadIdRenSec}.

\medskip

\section{Pietroski's compositional semantics}\label{PietSec}

Among the different proposed models of semantics, Pietroski's
compositional model (see for instance \cite{Pietro}, \cite{Pietroski})
is closely linked to the structure of syntax as described by Merge. We
discuss how this approach relates to our model of the syntax-semantics
interface. Our main observation here is that, in our model, it is not
necessary to assume an independent existence {\em within} semantics of
what Pietroski refers to in \cite{Pietro} as the {\em Combine} binary
operation that mimics the functioning of Merge in syntax. The type of
compositional structure postulated by Pietroski in \cite{Pietro} for
semantics {\em follows} in our case from Merge itself acting on the
syntax side of the interface, along with the map $\phi: \cH \to \cR$
together with its Birkhoff factorization.

\smallskip

To see this, we recall briefly the setting of \cite{Pietro}, focusing in particular
on the discussion of the {\em Combine} operation, that is the aspect
more directly connected to our setting.  The general principles for
the compositional structure of semantics articulated in \cite{Pietro}
include the basic idea that ``meanings are instructions to build
concepts,''  that can be articulated in the following way, adapting the
arguments of \cite{Pietro} to the terminology we have been using in
this paper. Lexical items are seen as ``instructions to fetch
concepts." This corresponds to the assumption we made in various
examples discussed in the previous sections, of the existence of a map
$s: \cS\cO_0\to \cS$ from lexical items to a semantic space $\cS$.
One then considers i-expressions, generated by I-language, as building
instructions for the construction of i-concepts, with principles that
govern the combination of i-expressions.  

\smallskip

This fits nicely with our proposal of a syntax-driven syntax-semantics
interface, where the i-expressions are provided, in our setting, by
the syntactic objects $T\in {\rm Dom}(h) \subset \fT_{\cS\cO_0}$. The
corresponding i-concepts are provided in our setting by and their
images $s(T)\in \cS$, under an extension of the map
$s: \cS\cO_0\to \cS$ from $\cS\cO_0$ to ${\rm Dom}(h)\subset \cS\cO$
as discussed in previous sections, together with the corresponding
$\phi(T) \in \cR$, where $\cR$ is an algebraic structure of
Rota-Baxter type associated to the (topological/metric) space $\cS$.
\smallskip

On the side of syntax, the free commutative non-associative magma $\cS\cO={\rm Magma}_{nc,c}(\cS\cO_0,\fM)$
of \eqref{SOmagma} is the main {\em computational structure}, with Merge $\fM$ as the main {\em binary operation}
of structure formation. The resulting hierarchical structures are the syntactic objects $T\in \cS\cO=\fT_{\cS\cO_0}$,
identified with abstract (non-planar) binary rooted trees with leaves labeled by lexical items in $\cS\cO_0$. In
Pietroski's formulation of \cite{Pietro} one considers a parallel form of binary structure formation operation, acting
on the side of semantics. 

\smallskip

The compositional rules for the building of i-concepts via
i-expressions are described in \cite{Pietro} in terms of one basic
non-commutative binary operation, {\em Combine}. This in turn consists
of the composition of two operations, {\em Combine} $=$ {\em Label}
$\circ$ {\em Concatenate}, where, given two i-concepts $\alpha,\beta$
that can be combined in the I-language, one first forms a
concatenation
 $$ \text{{\em Concatenate}}(\alpha,\beta) =\{ \alpha, \beta \} =\Tree[ $\alpha$  $\beta$ ] $$
 of the two and then labels the resulting expression by one of the constituents $\alpha$
 or $\beta$ that plays the role of the head $h(\alpha,\beta)$ of the combined expression
 \begin{equation}\label{Combine}
  \text{{\em Combine}}(\alpha,\beta) =\text{{\em Label}}\circ \text{{\em Concatenate}}(\alpha,\beta) =
 \text{{\em Label}}(\Tree[ $\alpha$  $\beta$ ])=  
 \end{equation}
 $$ \Tree[ .$h(\alpha,\beta)$ $\alpha$ $\beta$ ] $$
 The binary operation {\em Combine} is not symmetric because of the head label.
 Note that this operation is closely modeled on the Merge operation where, given two
 syntactic objects $T_1$ and $T_2$, with the property that 
 $$ T=\fM(T_1,T_2)=\Tree[ $T_1$ $T_2$ ]\in {\rm Dom}(h) \subset \cS\cO=\fT_{\cS\cO_0} \, , $$
 where $\fM$ is the free symmetric Merge and $h$ is a head function, one can assign
 to the abstract tree $T$ a planar structure $T^{\pi_h}$ determined by the head function,
 resulting in a planar tree
 $$ T^{\pi_h}=\fM^{nc}(T_{h(T)}, T') \in \fT^{{\rm planar}}_{\cS\cO_0}\, , $$
 where $T'\in \{T_1, T_2\}$ is the one that does not contain $h(T)$.
 
 \smallskip

 In \cite{Pietro}, the operation \eqref{Combine} is presented, in principle, as a compositional
 operation that takes place in the semantic space $\cS$, hence requiring this space to be
 endowed with its own computational system (at least partially defined), analogous to the Merge
 operation in syntax. As a result, we would have two systems that each
 have a ``Merge'' type operation, one for syntax and one for
 semantics. Besides an issue here with parsimony (we can get by, given
 the model presented here, with just one), this would be different from the case of other types
 of conceptual spaces, such as the perceptual manifolds associated to vision (see for instance \cite{Chung}). 

\smallskip

The most widely studied conceptual spaces and perceptual manifolds are
in the context of vision.  It should be noted that there have been
significant attempts by mathematicians at formulating a compositional
computational model for vision: among these in particular Pattern
Theory, as developed by Grenander, Mumford, et al.~(see for instance
\cite{Gren}, \cite{Gren2}, \cite{Mum}), that has found various
applications, especially in computer vision.  The original approach to
Pattern Theory was based on importing ideas from the theory of formal
languages, especially from the case of probabilistic context-free
grammars. This was further articulated in a proposed ``mathematical
theory of semantics" in \cite{Gren3}. We will not be discussing this
viewpoint in the present paper, but it is important to stress here
that it is still {\em topological and geometric} properties of the
relevant ``semantic spaces" that play a fundamental role in that
setting and that there are serious limitation to the extent to which a
generative model can be adapted to vision in comparison to language.
 
 \smallskip
 \subsection{The Combine operation in Pietroski's semantics}\label{CombineSec}
 
 In terms of the syntax-semantics interface, using the terminology in this paper, a setting
 such as that proposed by Pietroski in \cite{Pietro} would seem to correspond to
 $\phi: \cH^{nc} \to \cH_\cS$, that maps a non-commutative version of the Hopf algebra structure of $\cH$ 
 (responsible for the action of Merge on syntax) to a (possibly partially defined) 
 non-commutative Hopf algebra structure $\cH_\cS$ on the side of the semantic space $\cS$, at least if one desires a {\em Combine}
 operation that fully mimics the Merge operation, including the Internal Merge action. 
 This would be a much stronger requirement than what is needed for the desired type of
 compositionality of i-concepts to take place.

 \smallskip
 
 Indeed, one can view the construction of i-concepts postulated by \cite{Pietro} not as
 the result of a compositional structure on the semantic space $\cS$ itself, but simply as the 
 extension of the map $s: \cS\cO_0\to \cS$ to a map $s: {\rm Dom}(h) \to \cS$, built along
 the lines of what we discussed in Lemma~\ref{phiP}. 
In other words, in this formulation, the i-concept $\text{{\em Concatenate}}(\alpha,\beta)$
where $\alpha=s(T_1)$ and $\beta=s(T_2)$ is well defined if $T=\fM(T_1,T_2)\in {\rm Dom}(h)$
and in that case is simply the image
\begin{equation}\label{CombineMerge}
 \text{{\em Combine}}(\alpha,\beta) := s( \fM(T_1,T_2) ) \in \cS \, , 
\end{equation} 
where the construction of the point $s(T)$ depends on
$s(T_1), s(T_2)$, and on whether the head function satisfies
$h(T)=h(T_1)$ or $h(T)=h(T_2)$.  In other words, it depends on $\cS$
only through the existence of a geodesically convex Riemannian
structure and a semantic proximity function $\bP$, without having to
require any Merge-like computational mechanism on $\cS$ itself. It
suffices that syntax has such an operation and that $\cS$ has a
topological proximity relation (expressed in the case of the
construction we presented in Lemma~\ref{phiP} in terms of a more
specific metric property of convexity).

\smallskip
\subsubsection{The role of idempotents}\label{IdemSec}

One may worry here that the {\em Combine} operation of Pietroski appears
to behave differently from Merge itself. A simple way in which this difference manifests
itself is in the possible presence of idempotent structures. For example, one 
expects that $\text{{\em Combine}}(\alpha,\alpha) =\alpha$, while at the level of Merge
$$ \fM(T,T)=\Tree[ $T$  $T$ ] \neq T \, . $$
This in itself may not constitute an example because we also need a head function
and a structure of the form $\fM(T,T)$ might not admit a head function.
However, by the same principle, one expects cases where
$\text{{\em Combine}}(\alpha,\beta) =\alpha$ (or $\beta$), where the head function is not
an issue, and again this seems to be at odds with the fact that at the 
level of Merge this never happens since $\cS\cO$ is a {\em free} magma, so that 
for all $T,T'$ one has $\fM(T,T')\neq T$ and $\fM(T,T')\neq T'$. This, however,
does not constitute a problem, as it is taken care of in \eqref{CombineMerge}  by 
the structure of the map $s: {\rm Dom}(h) \to \cS$ from $s: \cS\cO_0\to \cS$, other 
than the one described in Lemma~\ref{phiP}. 

\smallskip

In the construction of Lemma~\ref{phiP} we have assumed that the semantic space
we work with has the structure of a geodesically convex Riemannian manifold and
that, for a syntactic object $\fM(T,T')$ the image $s(\fM(T,T'))$ is obtained as a form
of convex interpolation between the images $s(T)$ and $s(T')$. In this setting, 
the location of the point $s(\fM(T,T'))$ on the geodesic arc between 
$s(T)$ and $s(T')$ depends on a function $\bP(s(T),s(T'))$ measuring syntactic
relatedness. Depending on the nature of this function $\bP$, one expects that
there will be points $s,s'\in\cS$ for which $\bP(s,s')=0$ or $\bP(s,s')=1$, so that 
the point $s(\fM(T,T'))$ coincides with one of the endpoints $s(T)$ and $s(T')$. 
This gives rise precisely to the type of situation where one obtains
$$ \text{{\em Combine}}(\alpha,\beta) =\alpha \ \  \  \text{ or } \ \ \  \text{{\em Combine}}(\alpha,\beta)=\beta \, , $$
even though $\fM(T,T')\neq T$ and $\fM(T,T')\neq T'$. Note that the function $\bP$, that
is responsible for this difference in behavior between Merge and Combine, does not implement
any computational process itself, but is only an evaluator of topological proximity in semantics.
The only computational process is implemented by syntactic Merge.

\smallskip

This case illustrates the situation where, contrary to the case described in \S \ref{SyntImageSec}
(or the possible situation discussed in \S \ref{TransfSec} below), the image of syntax inside
semantics is {\em not} an embedding. This non-embedding situation is generally expected
when one maps to a semantic model that has a discrete topology (a Boolean assignment for
example). In the case we describe here, where the function $s: {\rm Dom}(h) \to \cS$ is
based on geodesic convexity, one could in principle entirely avoid idempotent cases and assume 
as in in \S \ref{SyntImageSec} that the semantic relatedness $\bP(s(T),s(T'))$ may be very close
to either $0$ or $1$ but not exactly equal to either (see the discussion in \S \ref{SyntImageSec}).

\smallskip
\subsubsection{An example}\label{ExPietSec}

Pietroski's {\em Combine} operation is designed to rule out
improper inferences. We consider here an example to show
how it fits with the formulation we give above.\footnote{We thank
  Norbert Hornstein for this example.}

\smallskip 

Given the sentences
``John ate a sandwich in the basement" and
``John ate a sandwich at noon", 
these two sentences clearly do {\em not} imply 
that ``John ate a sandwich in the basement at noon".

\smallskip

In our setting, consider a sentence with a series of adjuncts to a verb, such as 
``John ate a sandwich in the basement with a spoon at noon.''
We have a Merge-based inductive construction of the map
$s: {\rm Dom}(h) \to \cS$ from $s: \cS\cO_0\to \cS$, of the type 
discussed in \S \ref{SyntImageSec}. This means that, if $\tilde T$ is
the syntactic object associated to the full sentence, we can view it as a structure of the form
$$  \Tree[ $T'$  [ $T$ $\quad T_{1,\ldots,k}$ ] ]$$ 
where a VP $T$ is modified by a series of adjuncts $T_{1,\ldots,k }=\{
T_1,\ldots, T_k \}$ (for simplicity, we do not
draw the full tree structure). 

\begin{figure}[h]
\begin{center}
\includegraphics[scale=0.12]{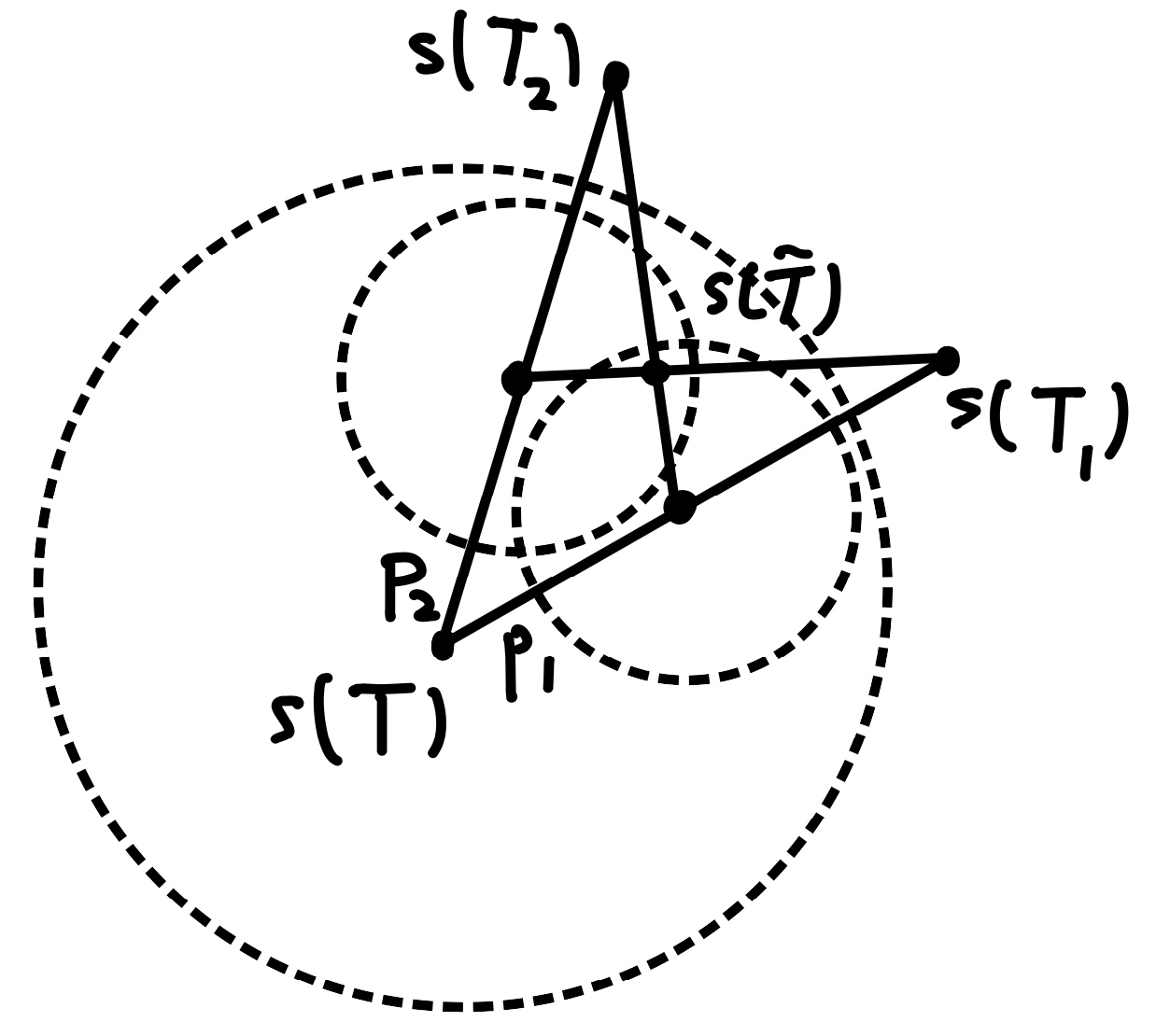}  
\caption{Example: adjuncts to verbs and semantic points. \label{PietroskiExFig}}
\end{center}
\end{figure}

With the construction of  \S \ref{HeadIdRenSec2} and \S \ref{SyntImageSec} of
the extension $s: {\rm Dom}(h) \to \cS$ of the map $s: \cS\cO_0\to \cS$ we obtain
points $s(T) \in \cS$ and $s(T_i) \in \cS$ for each $i=1,\ldots, k$. When we
consider each individual adjunct, the corresponding point
$$ s_i := s(\Tree[ $T$ $T_i$ ]) $$
lies on the geodesic arc in $\cS$ between $s(T)$ and $s(T_i)$, at a
distance $p_i=\bP_\sigma(s(T),s(T_i))$ from $s(T)$, where $\sigma$ is 
the adjunct syntactic relation. In particular, there is a convex geodesic neighborhood of 
the point $s(T)$ in $\cS$ that contains all the points $s_i$. 
When we consider the combinations $T_{i_1 , \ldots, i_r}$ of the adjuncts $T_i$,
this further determines points 
$$ s_{i_1\ldots i_r} =s(\Tree [ $T$  $\quad T_{i_1\ldots i_r}$ ]) \, . $$
These points are contained in the same neighborhood of $s(T)$ and they
are also contained in the intersection of neighborhoods around 
the points $s_i$ with $i\in \{ i_1, \ldots, i_k \}$. A sketch of this relation is
illustrated in Figure~\ref{PietroskiExFig}, with 
$$ \tilde T := \Tree [ $T$  $T_{12}$ ] \,  . $$

\smallskip

Dropping the
more refined metric/convexity structure, and
the fact that the more precise location of this
point depends on syntactic heads and 
evaluation of semantic proximity of the
lexical items involved, if we 
only retain the Boolean relations of these 
neighborhoods and their intersections,
we obtain a map to the Boolean semiring that
checks the fact that
``John ate a sandwich in the basement at noon"
implies that 
``John ate a sandwich in the basement" and that
``John ate a sandwich at noon", while the
opposite implications do not hold, as desired.

\smallskip

Here we can see, however, that the construction of the map 
$s: {\rm Dom}(h) \to \cS$ that we used in \S \ref{HeadIdRenSec2} 
and \S \ref{SyntImageSec} is only an oversimplified model, and
that it should be refined by directly including coverings by 
neighborhoods related by intersections; see the discussion in  \S \ref{GeodNeighSec}.

\subsection{Predicate saturation in Pietroski's semantics and operadic structure} \label{PredPietSec}

Other important parts of Pietroski's semantics, in addition to the {\em Combine} operation
discussed above, consists of predicate saturation and existential closure, see \cite{Pietro}.
We propose here a way to fit these aspects in our model, compatibly with the form of the
{\em Combine} operation that we just described, using the formulation of the magma $\cS\cO$
of syntactic objects in terms of {\em operads} (which we mentioned briefly in \S \ref{TAGsec}). 

\smallskip

We recall briefly the mathematical notion of an operad, introduced in \cite{May}, and
we describe how to view syntactic objects as an algebra over an operad. 

\smallskip
\subsubsection{Syntactic objects and operads}\label{SOoperadSec}

An operad (in Sets) is a collection $\cO=\{ \cO(n) \}_{n\geq 1}$ of sets of $n$-ary operations 
(with $n$ inputs and one output), with composition laws
\begin{equation}\label{operadgamma}
\gamma: \cO(n) \times \cO(k_1) \times \cdots \times \cO(k_n) \to \cO(k_1+\cdots + k_n)
\end{equation}
that plug the output of an operation in $\cO(k_i)$ into the $i$-th input of an operation in $\cO(n)$.
The composition of these operations $\gamma$ is subjects to requirements of associativity and unitarity,
which we do not write out explicitly here.  
An algebra $\cA$ over an operad $\cO$ (in Sets)
is a set $\cA$ on which the operations of $\cO$ act, namely there are maps
\begin{equation}\label{algoperad}
 \gamma_\cA: \cO(n) \times \cA^n \to \cA 
\end{equation} 
that satisfy compatibility with the operad composition, 
\begin{equation}\label{opactioncond}
\begin{array}{c}
\gamma_\cA (\gamma_\cO(T, T_1, \ldots, T_m), a_{1,1}, \ldots, a_{1,n_1}, \ldots, a_{m,1}, \ldots, a_{m,n_m}) = \\[3mm]
\gamma_\cA( T, \gamma_\cA(T_1, a_{1,1}, \ldots, a_{1,n_1}),\ldots, \gamma_\cA(T_m, a_{m,1}, \ldots, a_{m,n_m})  )\, .
\end{array}
\end{equation}
for $T\in \cO(m)$, $T_i\in \cO(n_i)$ and $\{ a_{i,j} \}_{j=1}^{n_i} \subset \cA$, and with $\gamma_\cO$ the composition
in the operad and $\gamma_\cA$ the operad action.  
This notion means that elements of 
the set $\cA$ can be used as inputs for the operations in $\cO$, resulting in
an output that is again an element in $\cA$. The category of Sets can be replaced by 
more general symmetric monoidal categories. In particular we can consider cases
where $\cA$ is a topological space, or a vector space, which are suitable for the
setting of semantic spaces. The description of the operadic composition laws that
we mentioned in \S \ref{TAGsec}, in terms of the compositions $\circ_i: \cO(n)\times \cO(m)\to \cO(n+m-1)$
is equivalent, for unitary operads, to the description in terms of the compositions \eqref{operadgamma}. 

\smallskip

In particular, we are interested here in the operad $\cM$ freely generated by a single commutative binary operation $\fM$, where
we have $\cM(1)=\{ {\rm id} \}$, $\cM(2)=\{ \fM \}$, $\cM(3)=\{ \fM\circ ({\rm id}\times \fM), \fM\circ (\fM\times {\rm id }) \}$, etc.
Consider again the set of syntactic objects $\cS\cO$. The magma structure of \eqref{SOmagma} can be reformulated as
the structure of algebra over this operad.

\begin{lem}\label{SOoperad}
The set $\cS\cO$ of syntactic objects is an algebra over the operad $\cM$ freely generated by the 
single commutative binary operation $\fM$.
\end{lem}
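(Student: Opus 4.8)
The plan is to identify the operad $\cM$ concretely and then to define its action on $\cS\cO$ by ``substituting syntactic objects into the leaves of a binary tree'' and reading off the resulting iterated Merge. First I would recall the concrete description of $\cM$: as $\cM$ is (quadratic and) freely generated by the single commutative binary operation $\fM$, an element of $\cM(n)$ is an iterated composite of copies of $\fM$ and of the unit ${\rm id}$, with two such composites identified precisely when they become equal after imposing commutativity of $\fM$; see \cite{Holt} (compare $\cM(1)=\{{\rm id}\}$, $\cM(2)=\{\fM\}$, and the description of $\cM(3)$). Thus $\cM(n)$ is naturally realised as a set of binary rooted trees with $n$ leaves, and the operad composition $\gamma$ of \eqref{operadgamma} is tree grafting --- plugging the root of each of $n$ trees into the $i$-th leaf of an $n$-leaf tree --- which carries the same data as the partial compositions \eqref{operadi} recalled in \S \ref{TAGsec}.

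Next I would define the candidate action $\gamma_{\cS\cO}:\cM(n)\times\cS\cO^{n}\to\cS\cO$ as in \eqref{algoperad}. Given $\omega\in\cM(n)$, presented as a binary rooted tree with leaves labelled $1,\dots,n$, and $T_1,\dots,T_n\in\cS\cO=\fT_{\cS\cO_0}$, let $\gamma_{\cS\cO}(\omega;T_1,\dots,T_n)$ be the syntactic object obtained by replacing the $i$-th leaf of $\omega$ by the tree $T_i$ and interpreting each internal vertex of $\omega$ as an application of $\fM$; equivalently, writing $\omega$ as an iterated composite of $\fM$'s and ${\rm id}$, substitute $T_i$ for the $i$-th input and evaluate the composite inside the magma $(\cS\cO,\fM)$ of \eqref{SOmagma}. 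The only point needing checking for well-definedness is independence of the chosen representative of $\omega$; this is immediate because $\fM$ is a \emph{commutative} operation on $\cS\cO$, so exchanging the two descendants at any internal vertex of $\omega$ does not change the value. Note that the leaves of $\omega$ are labelled \emph{distinctly} even when some $T_i$ coincide, consistent with the ``repetitions, not copies'' convention of \S \ref{TAGsec}; this is harmless, since the action is defined leaf-by-leaf.

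I would then verify the operad-algebra axioms. Unitarity is clear: $\gamma_{\cS\cO}({\rm id};T)=T$. For the compatibility \eqref{opactioncond}, both sides compute the syntactic object obtained from the grafted tree $\gamma_\cO(\omega;\omega_1,\dots,\omega_m)$ by substituting the $a_{i,j}$ into its leaves: the left-hand side first grafts the $\omega_i$ onto $\omega$ and then substitutes, while the right-hand side first substitutes the $a_{i,j}$ into each $\omega_i$ and then substitutes those $m$ results into $\omega$; grafting followed by leaf-substitution equals leaf-substitution of already-substituted subtrees --- this is exactly the associativity built into \eqref{operadi} --- so the two agree. The same bookkeeping identifies $\cS\cO$ with the free $\cM$-algebra $\coprod_n \cM(n)\times_{S_n}\cS\cO_0^{\times n}$ on $\cS\cO_0$, which recovers \eqref{SOmagma}. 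There is no genuine obstacle here; the one step requiring care is the initial concrete identification of $\cM(n)$ with labelled binary rooted trees together with the matching of operad composition with tree grafting --- i.e.\ keeping the symmetric-operad bookkeeping straight --- after which the verification of the axioms is purely formal.
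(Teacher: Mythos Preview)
Your proposal is correct and follows essentially the same approach as the paper: identify $\cM(n)$ with (leaf-labelled) abstract binary rooted trees, define the action $\gamma_{\cS\cO}$ by grafting the $T_i$ onto the leaves and reading off the iterated Merge, and observe that this recovers the magma structure \eqref{SOmagma}. You supply more detail than the paper does---explicitly checking well-definedness via commutativity of $\fM$, verifying unitarity and the associativity condition \eqref{opactioncond}, and noting the free-algebra identification---whereas the paper simply states the grafting description and remarks that it is a repeated application of $\fM$; but the underlying argument is the same. (One small slip: the ``repetitions, not copies'' convention you invoke is in \S\ref{PvalTModSpSec}, not \S\ref{TAGsec}.)
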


\proof We can identify the elements in $\cM(n)$ with the abstract binary rooted trees with $n$ leaves (with no labels on the leaves),
where each internal (non-leaf) vertex is labelled by an $\fM$ operation. 
The maps \eqref{algoperad} are simply given by taking
$\gamma(T , T_1,\ldots,T_n)$ with $T\in \cM(n)$ and $T_i \in \cS\cO$ for $i=1, \ldots , n$ to be the abstract binary rooted tree
in $\fT_{\cS\cO_0}=\cS\cO$ obtained by grafting the root of the syntactic object $T_i$ to the $i$-th leaf of $T\in \cM(n)$. 
If the syntactic objects $T_i$ have $n_i$ leaves, then the syntactic object $\gamma(T,T_1,\ldots,T_n)$ obtained in this
way has $n_1+\cdots+ n_k$ leaves. Note that this operad action is just a repeated application of the product operation $\fM$
in the magma $\cS\cO$, hence the description as algebra over $\fM$ and as magma as in \eqref{SOmagma} are equivalent.
\endproof

 \smallskip
 \subsubsection{Semantic spaces and operads}\label{SemaOperadSec}

 \smallskip
 
 The additional structure that we want to consider here, on the side of semantic spaces, is that of a partial algebra over the
 operad $\cM$.
 
 \begin{defn}\label{operadS} {\rm 
 Let $\cM$ be the operad freely generated by a single commutative binary operation $\fM$. A semantic
 space $\cS$ is a {\em compositional semantic space} if it has the following properties:
 \begin{enumerate}
 \item There is a map $s: {\rm Dom}(h) \to \cS$ extending $s: \cS\cO_0 \to \cS$.
 \item There is an action of the operad $\cM$ on $\cS$
 \begin{equation}\label{SalgoperadM}
 \gamma_\cS: \cM(n) \times \cS^n \to \cS \, .
\end{equation}  
 \item For $T\in \cM(n)$ and for $T_1, \ldots, T_n \in {\rm Dom}(h)\subset \cS\cO$  such that
 $$  \gamma_{\cS\cO} (T, T_1, \ldots, T_n)\in {\rm Dom}(h) $$ we have
 \begin{equation}\label{Soperad2}
 \gamma_\cS (T, s(T_1), \ldots, s(T_n)) = s( \gamma_{\cS\cO} (T, T_1, \ldots, T_n) ) \, .
 \end{equation}
\end{enumerate}  }
\end{defn} 

The last condition ensures that the structure of $\cS\cO$ as an algebra over the operad $\cM$ and
the structure of $\cS$ as a partial algebra over the same operad $\cM$ are compatible through the
map $s: {\rm Dom}(h) \to \cS$ from syntax to semantics.

\smallskip

One can more generally consider partial actions of an operad and a corresponding notion of {\em partial algebra over an operad}
 (introduced in \cite{KrizMay}), where the operad action \eqref{algoperad} is defined on a subdomain $\cA_0 \subset \cA$,
 \begin{equation}\label{algoperad}
 \gamma_\cA: \cO(n) \times \cA_0^n \to \cA \, .
\end{equation} 

\smallskip

For a compositional semantic space as in Definition~\ref{operadS} the predicate saturation operation of Pietroski's semantics,
in a form compatible with syntactic Merge, can be can be  interpreted as the operad action \eqref{SalgoperadM} that saturates the
arguments of an $n$-ary operation by inputs in $\cS_0$ (a concept of adicity $n$ combined with $n$ semantic arguments). 
The partial compositions $\circ_i$ correspondingly give the combinations of a concept of adicity $n$ with one 
semantic argument that give a concept of adicity $n-1$. Note, however, that there is an important difference here. In this model
the operations of adicity $n$ in $\cM(n)$ are part of the syntax core computational mechanism. They are not on the semantic
side, so they cannot directly be identified with the ``concept of adicity $n$" described in \cite{Pietro}. It is only through the
relation \eqref{Soperad2} that they acquire that role. 

\smallskip
\subsubsection{Syntax-driven compositional semantics}\label{SyntaxOperadSec}

The notion of compositional semantic space that we described in Definition~\ref{operadS}
is based on two operad actions, one (that we called $\gamma_\cS\cO$) on the side of syntax and one (that we
called $\gamma_\cS$) on the side of semantics, with the compatibility \eqref{Soperad2}. This is similar to
the formulation of Pietroski's semantics in \cite{Pietro}. However, we show now that in fact the operad
action $\gamma_{\cS\cO}$ on syntax suffices to completely determine its counterpart $\gamma_\cS$.

\begin{prop}\label{SOgammaS}
Let $\cS^+=\cS\cup \{ s_\infty \}$ be the Alexandrov one-point compactification of $\cS$, where we denote the added point with the symbol $s_\infty$. 
The action of the operad $\cM$ on syntactic objects, described in Lemma~\ref{SOoperad}, together with a function $s:{\rm Dom}(h)\to \cS$ 
uniquely determine an action of the operad $\cM$ on $\cS$ by setting
\begin{equation}\label{setgammaS}
\gamma_\cS(T, s_1, \ldots, s_n) :=\left\{ \begin{array}{ll} s(\gamma_{\cS\cO}(T, T_1, \ldots, T_n)) & \text{if }\, s_i=s(T_i) \text{ and } \gamma_{\cS\cO}(T, T_1, \ldots, T_n)\in {\rm Dom}(h) \\[3mm]
s_\infty & \text{otherwise.} 
\end{array} \right. 
\end{equation}
for $T\in \cM(n)$ and $s_1,\ldots,s_m\in \cS^+$. 
\end{prop}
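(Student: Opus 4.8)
The plan is to show that the formula \eqref{setgammaS} does define an honest operad action on $\cS^+$, and that any operad action on $\cS$ compatible with $s$ in the sense of \eqref{Soperad2} must agree with it; thus existence and uniqueness both follow. First I would verify that $\gamma_\cS$ is well defined: the only point where ambiguity could arise is if a given tuple $(s_1,\dots,s_n)$ could be written as $(s(T_1),\dots,s(T_n))$ for two different tuples of syntactic objects, one landing in ${\rm Dom}(h)$ after $\gamma_{\cS\cO}$ and one not, or landing at two different points of $\cS$. But this is not a problem: we simply declare the value to be $s_\infty$ unless \emph{every} admissible choice of preimages $(T_1,\dots,T_n)$ yields the \emph{same} point $s(\gamma_{\cS\cO}(T,T_1,\dots,T_n))\in\cS$ with $\gamma_{\cS\cO}(T,T_1,\dots,T_n)\in{\rm Dom}(h)$ — in the intended application the map $s$ on ${\rm Dom}(h)$ is built inductively from the tree structure (Lemma~\ref{phiP}), so the value is in fact determined by the tree $\gamma_{\cS\cO}(T,T_1,\dots,T_n)$ itself, making this automatic. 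I would state this as the precise meaning of ``otherwise'' in \eqref{setgammaS}.

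Next I would check the operad axioms for $\gamma_\cS$, namely unitarity and the associativity constraint \eqref{opactioncond}. Unitarity is immediate since ${\rm id}\in\cM(1)$ acts on $\cS\cO$ as the identity and $s$ is a function. For associativity, take $T\in\cM(m)$, $T_i\in\cM(n_i)$, and points $s_{i,j}\in\cS^+$. The key observation is that $\cS\cO$ \emph{is} an algebra over $\cM$ (Lemma~\ref{SOoperad}), so \eqref{opactioncond} holds for $\gamma_{\cS\cO}$. Both sides of the associativity identity for $\gamma_\cS$ evaluate to $s_\infty$ unless all the relevant points $s_{i,j}$ are of the form $s(T_{i,j})$ for syntactic objects $T_{i,j}$ and all the intermediate and final $\gamma_{\cS\cO}$-composites lie in ${\rm Dom}(h)$ — and here one uses the head-function property from Definition~\ref{headfunc}, that if a Merge composite is in ${\rm Dom}(h)$ then so are its accessible subtrees (as invoked in the proof of Lemma~\ref{phiP}), so the intermediate-domain conditions are consistent on the two sides. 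When all these conditions are met, both sides equal $s$ applied to the two sides of the operad identity \eqref{opactioncond} for $\gamma_{\cS\cO}$, which agree. Hence $\gamma_\cS$ satisfies the operad axioms and $\cS^+$ is a partial algebra over $\cM$.

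For uniqueness, suppose $\gamma'_\cS:\cM(n)\times\cS^n\to\cS$ is any operad action satisfying \eqref{Soperad2}. On tuples $(s(T_1),\dots,s(T_n))$ with $\gamma_{\cS\cO}(T,T_1,\dots,T_n)\in{\rm Dom}(h)$, \eqref{Soperad2} forces $\gamma'_\cS(T,s(T_1),\dots,s(T_n))=s(\gamma_{\cS\cO}(T,T_1,\dots,T_n))=\gamma_\cS(T,s(T_1),\dots,s(T_n))$, so the two actions agree wherever \eqref{Soperad2} constrains them; extending to $\cS^+$ by sending everything else to $s_\infty$ is then the canonical (indeed the only) way to obtain a total action on the compactification, which is the content of the ``otherwise'' clause. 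I would phrase this as: \eqref{setgammaS} is the unique extension to $\cS^+$ of the partially-defined action dictated by \eqref{Soperad2}.

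The main obstacle I anticipate is the well-definedness bookkeeping in the first step — pinning down exactly what ``$s_i=s(T_i)$'' means when $s$ is not injective and when a point of $\cS$ may be the image of many syntactic objects, some in and some out of the relevant domain. The clean way around it is to exploit that the map $s$ on ${\rm Dom}(h)$ in the intended construction factors through the tree $\gamma_{\cS\cO}(T,T_1,\dots,T_n)$, so the output depends only on that composite syntactic object and not on the chosen decomposition; I would make this dependence explicit (referring back to the inductive construction in Lemma~\ref{phiP} and Proposition~\ref{treesinS}) rather than trying to prove something for an arbitrary abstract $s$. Everything else — the operad axioms and uniqueness — then reduces mechanically to the already-established fact (Lemma~\ref{SOoperad}) that $\cS\cO$ is an algebra over $\cM$, together with the domain-closure property of head functions.
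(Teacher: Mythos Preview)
Your approach is essentially the same as the paper's: both reduce the operad-action axiom \eqref{opactioncond} for $\gamma_\cS$ to the already-known one for $\gamma_{\cS\cO}$ (Lemma~\ref{SOoperad}), invoking the head-function property that if a composite lies in ${\rm Dom}(h)$ then so do its accessible substructures, so that the ``otherwise $s_\infty$'' branch is triggered consistently on both sides. The paper's proof in fact checks only associativity and leaves unitarity and uniqueness implicit; you are more thorough in treating those and in flagging the well-definedness issue when $s$ fails to be injective --- the paper simply acknowledges that ambiguity in a remark after the proof rather than resolving it, so your explicit handling (declaring $s_\infty$ unless all preimage choices agree, and noting that in the intended construction of Lemma~\ref{phiP} the value depends only on the composite tree) is a genuine improvement rather than a deviation.
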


\proof We construct $\gamma_\cS$ using $\gamma_{\cS\cO}$ and the compatibility relation \eqref{Soperad2} using \eqref{setgammaS}, 
In order to show that \eqref{setgammaS} does indeed define an operad action on $\cS$, we need to check 
the compatibility of $\gamma_\cS$ with the operad composition $\gamma$ in $\cM$, given by the condition \eqref{opactioncond}. The left-hand-side
of \eqref{opactioncond} gives
\begin{equation}\label{condlhs}
 \gamma_\cS (\gamma_{\cM}(T, T_1, \ldots, T_m), s_{1,1}, \ldots, s_{1,n_1}, \ldots, s_{m,1}, \ldots, s_{m,n_m})\, .  
\end{equation} 
This is equal to $s_\infty$ unless both of the two conditions
\begin{itemize}
\item all the $s_{i,j}=s(T_{i,j})$ for some $T_{i,j}$ in ${\rm Dom}(h)\subset \cS\cO$; 
\item the syntactic object 
\begin{equation}\label{lhsSO}
 \gamma_{\cS\cO}(\gamma_{\cM}(T, T_1, \ldots, T_m), T_{1,1}, \ldots, T_{1,n_1}, \ldots, T_{m,1}, \ldots, T_{m,n_m}) 
\end{equation} 
is in ${\rm Dom}(h)$
\end{itemize}
are satisfied, in which case \eqref{condlhs} is equal to
\begin{equation}\label{condlhs2}
 s( \gamma_{\cS\cO}(\gamma_{\cM}(T, T_1, \ldots, T_m), T_{1,1}, \ldots, T_{1,n_1}, \ldots, T_{m,1}, \ldots, T_{m,n_m}) ) \, . 
 \end{equation}
 The compatibility of the action $\gamma_{\cS\cO}$ with the operad composition implies that \eqref{lhsSO} is equal to
 \begin{equation}\label{lhsSO2}
 \gamma_{\cS\cO}(T, \gamma_{\cS\cO}(T_1,T_{1,1}, \ldots, T_{1,n_1}),\ldots, \gamma_{\cS\cO}(T_m,T_{m,1}, \ldots, T_{m,n_m})) \, . 
 \end{equation} 
 Note that if the full composition in \eqref{lhsSO} is in ${\rm Dom}(h)$ by the properties of abstract head functions
 all the substructures $\gamma_{\cM}(T_i,T_{i,1}, \ldots, T_{i,n_i})$, $i=1,\ldots, m$, are also in  ${\rm Dom}(h)$. 
Thus, the point \eqref{condlhs2} in $\cS$ is the same as the point 
$$ s( \gamma_{\cS\cO}(T, \gamma_{\cS\cO}(T_1,T_{1,1}, \ldots, T_{1,n_1}),\ldots, \gamma_{\cS\cO}(T_m,T_{m,1}, \ldots, T_{m,n_m}))) = $$
$$ \gamma_\cS( T, s(\gamma_{\cS\cO}(T_1,T_{1,1}, \ldots, T_{1,n_1})),\ldots, s(\gamma_{\cS\cO}(T_m,T_{m,1}, \ldots, T_{m,n_m}))) = $$
$$ \gamma_\cS( T, \gamma_\cS(T_1, s_{1,1}, \ldots, s_{1,n_1}), \ldots, \gamma_\cS(T_m, s_{m,1}, \ldots, s_{m,n_m})) \, ,  $$
which gives the right-hand-side of \eqref{opactioncond}. 
\endproof

The structure of algebra over the operad $\cM$ on $\cS^+$ makes $\cS$ a partial algebra over $\cM$. 

\smallskip

Note that we are everywhere somewhat simplifying the picture, as we do not include the possibility that 
different syntactic objects in ${\rm Dom}(h)\subset \cS\cO$ may sometime map to the
{\em same} value in $\cS$ under $s: {\rm Dom}(h)\to \cS$ and also the
possibilities of {\em ambiguities} of 
semantic assignment where $s: {\rm Dom}(h)\to \cS$ may sometimes be multivalued. These possibilities
would affect the construction \eqref{setgammaS} of $\gamma_\cS$ and would require a modified argument. 

\smallskip
\subsection{Adjunction, embedded constructions, and the Pair-Merge problem} \label{RinyExSec}

Our model of the map 
$s: {\rm Dom}(h) \to \cS$, that extends the assignment $s: \cS\cO_0 \to \cS$ 
of semantic values defined on lexical items, is a very simple model built using only 
the head function and proximity relations (and geodesic distance) in semantic space $\cS$.
In particular, since we start with the syntactic objects produced by the free
symmetric Merge, the only factor that introduces asymmetry in this 
construction is coming from the head function.

\smallskip

We discuss here briefly how one can try, within the limits of such an oversimplified 
model, to address the question of misalignments between hierarchical syntax 
and compositional semantics that occur as a consequence of the particular
behavior of adjunction, and in particular what is sometimes referred to as
the invisibility of adjuncts to syntax. This question was posed to us by Riny Huijbregts. 

\smallskip

A proposal for handling this type of problem is to postulate 
the existence of an asymmetrical Pair-Merge operation accounting for 
argument-adjunct asymmetry (see \cite{ChomskyPairMerge}), in addition 
to the free symmetric Merge.
This proposal has undesirable features, as it requires the introduction of
an additional form of asymmetric Merge dealing with the peculiar behavior
of adjunction, while one expects that the computational mechanism of
syntax should just rely entirely on the free symmetric Merge. An
alternative proposal (see for instance \cite{Oseki}) involves the use of
``two-peaked" structures (see Figure~\ref{2peakFig}) 
with $\{ XP , YP \}$ an adjunction. This proposal has the
drawback that, if one considers such ``two-peaked" structures as
part of syntax, then one needs to justify them in terms of the
free Merge generative process, and this is problematic because
the elements of the magma $\cS\cO=\fT_{\cS\cO_0}$ do not
contain such structures, nor does the action of Merge on workspaces
(as can be also seen in the formalization given in our paper \cite{MCB}).
The proposal of ``two-peaked" structures in  \cite{Oseki} is based on \cite{EpKiSe},
but is not formulable within the generative process of a free symmetric Merge.
We are going to discuss briefly what
this means in terms of our model.

\smallskip

The reason why adjunction appears problematic in our setting is that
adjunction can be seen as an instance of syntactic objects $\{ XP, YP \}$
which do not have a well defined head function in the sense we have
been using above, $\{ XP, YP \} \notin {\rm Dom}(h)$. This creates
a problem with our simple model of mapping to semantics, which is 
defined only on ${\rm Dom}(h)$. We want to argue here that 
this problem can be to some
extent bypassed without the need to significantly alter the
construction of the mapping $s: {\rm Dom}(h) \to \cS$, although, of
course we expect that the naive model for this map based 
on the datum of the head function may be replaced by some
more elaborate versions. 

\smallskip

Suppose given a syntactic object of the form $\{ XP, YP \} \notin {\rm Dom}(h)$,
where both $XP$ and $YP$ are in ${\rm Dom}(h)$. 
In terms of our construction, the fact that the head function is not well 
defined on the object $\{ XP, YP \}$ implies that we do not have a
choice of orientation on the geodesic arc between $s(XP)$ and $s(YP)$
in $\cS$ and a corresponding point along this arc at a distance $\P(s(XP), s(YP))$
from the image of the head. We do still have the geodesic arc, though, 
and the measurement $\P(s(XP), s(YP))$ of syntactic relatedness 
between its endpoints. So in terms of this construction, all that a
hypothetical asymmetric Pair-Merge would provide is a choice of
orientation on the geodesic arc. Such a datum is a geometric datum in $\cS$
and does not necessarily require the existence of Pair-Merge as an additional
part of the computational structure of syntax. One can extend
$s: {\rm Dom}(h) \to \cS$ to a slightly larger domain that includes
adjunctions just by the requirements that geodesic arcs in $\cS$
whose endpoints are the two terms of an adjunction come with a
preferred choice of orientation. This choice has the same effect of
a Pair-Merge $\langle XP, YP \rangle$ signifying that the first 
element should be taken to be the ``head" while the second element is 
to be seen as an ``adjunct". Such choice of orientation then ensures
that we can extend the same construction of $s: {\rm Dom}(h) \to \cS$ also
to adjunctions $\{ XP, YP \} \notin {\rm Dom}(h)$. In general one does
not expect that this orientation requirement should be extendable to
other types of syntactic objects $\{ XP, YP \} \notin {\rm Dom}(h)$ that
are not adjunctions. 
The fact that this mechanism does not require any modification of
the syntactic generative process and only involves a metric property
in $\cS$ is consistent with the idea that adjuncts are on a ``separate plane"
(see \cite{ChomskyPairMerge}). 

\smallskip

While this approach can be accommodated within our setting,
it leaves open the question of assigning general criteria for
orientations of geodesic arcs in $\cS$ that generalize the 
choice resulting from a had function, incorporating the case
of adjunctions, but not the case of arbitrary  $\{ XP, YP \} $ objects. 

\smallskip

Thinking in terms of ``two-peaked" structures, on the other hand, 
presents another possibility for treating this problem of adjunctions
in our geometric setting. Given a syntactic object of the form 
$\{ XP, YP \} \notin {\rm Dom}(h)$, where both $XP$ and $YP$ 
are in ${\rm Dom}(h)$, consider in $\cS$ the points $s(XP)$ and
$s(YP)$ and the geodesic arc between them (now without any
preferred assignment of orientation). Suppose given also a 
syntactic object $T=\{ Z, XP \} \in {\rm Dom}(h)$. Since this is
in the domain of the head function, it defines a point $s(T)$
on the geodesic arc between $s(Z)$ and $s(XP)$, where the
geodesic arc is oriented from the end that corresponds to the
head to the other. Thus, we do indeed obtain a ``two-peaked" structure, as
in Figure~\ref{2peakFig}. Note that, while the geodesic arc
between $s(XP)$ and $s(YP)$ does not have an a priori choice
of orientation, the orientation induced by the head on the
geodesic arc between $s(Z)$ and $s(XP)$ induces a unique
consistent orientation on the arc between $s(XP)$ and $s(YP)$.

\begin{figure}[h]
\begin{center}
\includegraphics[scale=0.2]{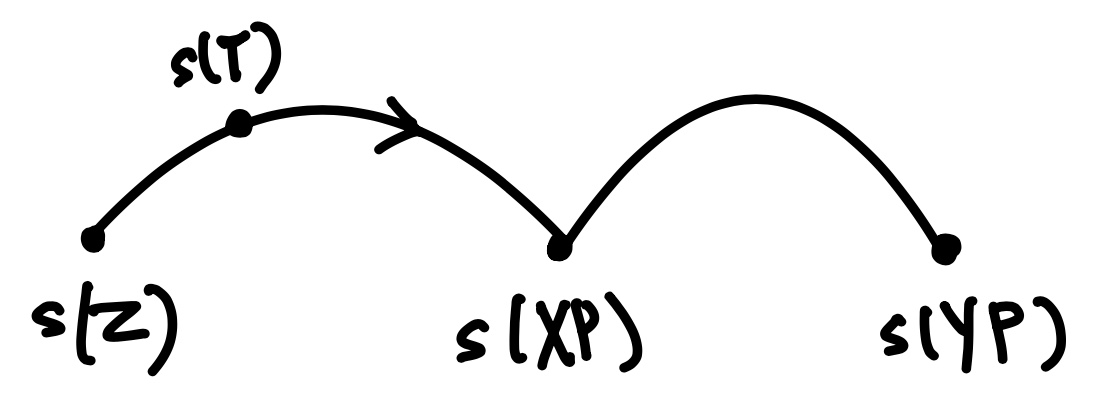}   \ \ \ 
\includegraphics[scale=0.2]{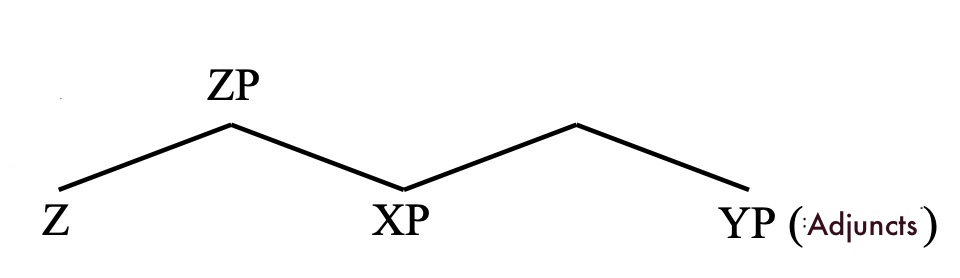}  
\caption{``Two-peaked" structures inside $\cS$. \label{2peakFig}}
\end{center}
\end{figure}

It is important to stress here the difference between the type
of ``two-peaked" structures we are describing and the proposed
 ``two-peaked" structures in the syntactic setting, as in \cite{Oseki}. 
 Here this structure does not exist in the magma $\cS\cO$, it only
 exists in the image of syntax under the map to semantic space $\cS$.
 In other word, these ``two-peaked" structures are not part of the
 computational process of syntax and do not need to be justified
 by any additional form of Merge. They exist because the images
 of syntactic objects inside $\cS$ can intersect, even though the
 resulting configuration (like the one in Figure~\ref{2peakFig})
 is not itself the image of a syntactic object.

\medskip
\section{No, they don't: transformers as characters}\label{TransfSec} 

Recently, it has become fashionable to claim that the so-called
transformer architectures underlying the functioning of many current
large language models (LLMs) somehow ``disprove'' or undermine the
theory of generative linguistics. They don't. Such claims are vacuous:
not only on account that they lack any accurate description of what is
allegedly being disproved, but also more specifically because one can
show, as we will discuss in this section, that the functioning
of the attention modules of transformer architectures fits remarkably
well within the same general formalism we have been illustrating in
the previous sections, and is consequently {\em fully compatible} with a
generative model of syntax based on Merge and Minimalism.   While this
can be discussed more at length elsewhere, we will show here briefly that
the weights of attention modules in transformer architectures can be
regarded as another (distinct from human) way of embedding an
image of syntax inside semantics, with formal properties similar to
other examples we talked about earlier in this paper. 

\smallskip

This does not mean, of course, that LLMs based on such architectures
{\em necessarily} mimic the interaction between syntax and semantics
as it occurs in human brains. In fact, most certainly that is {\em not} the
case in anything close to their present form, given well known
considerations regarding the ``poverty of the stimulus,''  in human language acquisition (see
\cite{BPYC}), compared to what one may
call the ``overwhelming richness of the stimulus'' in the training of
LLMs. Some have attempted to deal with this issue by limiting
the amount of training data to something argued to align more
with the data available to children (e.g., as in \cite{WarBow22} and
\cite{Huebner21}, among others, including an upcoming 2023CoNLL/CMCL
``Baby LMChallenge'' \cite{BabyLM23}). However, at least so far there are
  still problems with such approaches with regard to both performance
  on certain test-bed datasets, and accurately mirroring the
  developmental trajectory of human language acquisition, with respect
  to training data sample sizes. This matter is discussed in more
  detail in \cite{Vazquez23}.
  
  \smallskip

This is not the main point of the discussion here, however,
since several examples we analyzed in the previous sections are also
not meant to model how syntax and semantics realistically interact in the human
brain, but are presented simply as illustrations of the general formal
algebraic properties of the mathematical model. The point we intend to
make here is that attention modules of transformer architectures can
function as another choice of a Hopf algebra character that fits
within the same very general algebraic formalism we illustrated in the
previous sections of this paper. Therefore, transformer architectures have no intrinsic
incompatibility, at this fundamental algebraic level, with generative
syntax. Note also that we are not going to include here any discussion
with regard to the efficiency of computational algorithms, as we are interested only
in analyzing their algebraic structure. We will only make some general
comments at the end of this section, in relation to the ``inverse
problem'' of reconstructing syntax from its image inside semantics,
that we already discussed in \S \ref{SyntImageSec}.

\smallskip

For our purposes, it suffices to consider the basic fundamental
functioning of attention modules in transformers, that we recall
schematically as follows.

\smallskip

We assume, as in our previous setting in \S \ref{HeadIdRenSec}, 
a given function $s: \cS\cO_0\to \cS$ from lexical items and syntactic
features to a semantic space $\cS$ that is here assumed to be
a vector space model. Thus, we can view elements $\ell\in \cS\cO_0$ 
as vectors $s(\ell)\in \cS$. In attention modules, in the case of so-called
self-attention that we focus on here, one considers three linear
transformations: $Q$ (queries), $K$ (keys), and $V$ (values), 
$Q,K\in \Hom(\cS,\cS')$ and $V\in \Hom(\cS,\cS'')$, where $\cS'$ and $\cS''$ 
are themselves vector spaces of semantic vectors (in general of dimensions 
not necessarily equal to that of $\cS$).

One usually assumes given identifications $\cS\simeq \R^n$, $\cS'\simeq \R^m$, $\cS''\simeq \R^d$ with 
Euclidean vector spaces, with assigned bases, and one works with the corresponding
matrix representations of $Q, K \in \Hom(\R^n,\R^m)$ and 
$V\in \Hom(\R^n,\R^d)$. The target Euclidean space 
$\cS'$ is endowed with an inner product $\langle \cdot, \cdot \rangle$, that can
be used to estimate semantic similarity. 

\smallskip

The query vector $Q(s(\ell))$, for $\ell\in \cS\cO_0$, can be thought of
performing a role analogous to the {\em semantic probes} discussed in our toy models
of \S \ref{HeadIdRenSec}. As in that case, we think of queries (or probes in
our previous terminology) as elements $q \in \cS^\vee$ where $\cS^\vee$ is
the dual vector space $\cS^\vee=\Hom(\cS,\R)$, so that a query matrix can
be identified with an element in $\cS^\vee \otimes \R^m \simeq \cS^\vee \otimes \cS'
=\Hom(\cS,\cS')$, that we can regard as an $m$-fold probe $Q$ evaluated on the
given semantic vector $s(\ell)$.

In a similar way, we can think of the key vector $K(s(\ell))$, for
$\ell\in \cS\cO_0$, also as an element $K\in \Hom(\cS,\cS')$, that we
interpret in this case as a way of creating an $m$-fold probe out of
the given vector $s(\ell)$.  Thus, the space $\cS'$ of ($m$-fold)
probes plays a dual role here, as given probes to be evaluated on an
input semantic vector $s(\ell)$, and as new probes generated by the
semantic vector $s(\ell)$. This dual interpretation explains the use
of the terminology ``query'' and ``key'' for the two given linear
transformations. The values vector $V(s(\ell))$ can be viewed as a
representation of the semantic vectors $s(\ell)$ {\em inside} $\cS''$, such
as, for example, an embedding of the set $s(L)$, for a given subset
$L\subset \cS\cO_0$, into a vector space $\cS''$, of dimension lower 
than $\cS$.  One refers to $d=\dim \cS''$ as the embedding dimension.

\smallskip

Next, one considers a set $L \subset \cS\cO_0$. Usually, this is regarded as
an ordered set, a list (also called a string), that would correspond to
an input sentence. However, in our setting, it is more convenient to
consider $L$ as an unordered set. In terms of transformer models, one 
then focuses on bi-directional architectures like BERT.
To an element $\ell \in L$ one
assigns an attention operator $A_\ell : L \subset \cS \to \cS'$, given by
$$ A_\ell(s(\ell'))=\sigma( \langle Q(s(\ell)), K(s(\ell')) \rangle )\, , $$
where $\sigma$ is the softmax function
$\sigma (x)_i= \exp(x_i)/\sum_j \exp(x_j)$, for $x=(x_i)$. 

Note that for simplicity of notation, we are ignoring here the usual
rescaling factor that divides by the square root of the embedding
dimension, since that has no influence on the algebraic structure of
the model, even through it has computational significance. We write
$A_{\ell,\ell'}:=A_\ell(s(\ell'))$ and refer to it as the attention
matrix.  The matrix entries $A_{\ell,\ell'}$ are regarded as a probability
measure of how the attention from position $\ell$ is distributed
towards the other positions $\ell'$ in the set $L$.  One then assigns an
output (in $\cS''$) to the input $s(L)\subset \cS$, as the vectors
$y_\ell =\sum_{\ell'} A_{\ell,\ell'} V(s(\ell'))$, where for each
$\ell\in L$, we have $y_\ell=(y_\ell)_{i=1}^d \in \cS''\simeq \R^d$.

Observe that in writing $A$ as a matrix one uses a choice of ordering of
the set $L$, but the linear operator $A_\ell$ itself is defined independently
of such an ordering.  Compatibly with the fact that we want to use free
symmetric Merge as generator of syntactic objects, we indeed focus
here on the case of bidirectional, non-causal attention, where the
non-trivial entries of the attention matrix are not limited to items
occurring in a specified linear order (i.e.~the matrix is not
necessarily lower or upper diagonal in a chosen basis/ordering). The
resulting $y_\ell$ is symmetric in the ordering of $L$, so linear
ordering also does not play a role in the output.

\smallskip
\subsubsection{Heads and heads}\label{headsSec}

In transformer architectures, one usually has several such attention modules
running in parallel, and one refers to this setting as multi-head attention. In
this case, the vectors $Q(s(\ell))=\oplus_i Q(s(\ell))_i$, $K(s(\ell))=\oplus_i K(s(\ell))_i$,
 and $V(s(\ell))=\oplus_j V(s(\ell))_j$ are split into
blocks, that correspond to a decomposition $\cS'=\oplus_{i=1}^N \cS'_i$,
and similarly for $\cS''$, with the inner product of $\cS'$ compatible with the
direct sum decomposition, inducing inner products $\langle \cdot, \cdot \rangle_{\cS'_i}$.
One can then compute attention matrices, for $i=1,\ldots, N$, 
$$ A^{(i)}_{\ell,\ell'} =\sigma( \langle Q(s(\ell))_i ,K(s(\ell))_i \rangle_{\cS'_i}) $$
that one refers to as attention distribution with {\em attention head} $i$.

\smallskip

It is important to keep in mind that there is an unfortunate clash of
notation here, between this meaning of ``head" as ``attention head" versus
the usual syntactic meaning of ``syntactic head", represented in the present paper by
the notion of ``head function" in Definition~\ref{headfunc}. 

\smallskip

For simplicity, and to avoid confusing notation, we will not consider
here multiple attention heads, and work only with a single attention
matrix, that suffices for our illustrative purposes, while we will be
referring to the term {\em head} only in its syntactic meaning as a head function. 

\smallskip
\subsection{Maximizing attention}

Since for fixed $\ell \in L$ the values $A_{\ell,\ell'}$ give a probability
measure on $L$, we can consider characters with values in the 
semiring $\cR=([0,1],\max,\cdot)$. For example, it is natural to
look for where the attention from position $\ell$ is maximized.
Thus, we can define a character on a subdomain
$$ \phi_A: \cH^{semi} \to \cR $$
by setting 
$$ \phi_A(T) = \max_{\ell\in L(T)} A_{h(T),\ell} \, , $$
if $T\in {\rm Dom}(h)$ and zero otherwise.

\begin{rem}\label{headAell}{\rm 
Note that, in order to make $\phi$ well defined for all $T\in \fT_{\cS\cO_0}$,
we need a uniform choice of the operator $A_\ell$ for an $\ell\in L(T)$, that is
to say, we need a consistent way of extracting the choice of a leaf from
each tree. This corresponds to the choice of a head function $h$ in the sense of
Definition~\ref{headfunc}. }\end{rem}

\smallskip

Once a head function $h$ is chosen, the attention matrix determines
an associated attention vector $A_{h(T),\ell}$ for $\ell\in L(T)$. 
In particular, we can choose the head function to be the same as the
syntactic head, although this is not necessary and any choice of a 
head function will work for this purpose. Note that head functions
are not everywhere defined on $\fT_{\cS\cO_0}$. This implies that
the choice of attention vector cannot be made compatibly with 
substructures simultaneously across all trees $T\in \fT_{\cS\cO_0}$. 
There is some maximal domain ${\rm Dom}(h)\subset \fT_{\cS\cO_0}$
over which such a consistent choice can be made. This issue
does not arise in the construction of attention matrices from text,
as sentences in text will always have a syntactic head, but it can
be relevant when sentences are stochastically generated from a
template (such as those used in tests of linguistic capacities
of LLMs, as in \cite{WarBow22}, \cite{Huebner21}).

\smallskip
\subsection{Attention-detectable syntactic relations}

Recent investigation of attention modules and syntactic relations
(like c-command, see \cite{Manning}) indicate that syntactic trees and examples of specific
syntactic relations such as syntactic head, prepositional object,
possessive noun, and the like, are embedded and detectable from the
attention matrix data.  We show that this result is to be expected,
given our model.

\smallskip

We consider the problem of detection of syntactic relations in the following form.

\begin{defn}\label{detectrels} {\rm
Suppose given a syntactic relation $\rho$, which we write as a 
collection $\rho=\rho_T$ of relations $\rho_T \subset L(T)\times L(T)$,
with $\rho_T(\ell,\ell')=1$ is $\ell,\ell'\in L(T)$ are in the chosen
relation and  $\rho_T(\ell,\ell')=0$ otherwise. We say that $\rho$ is 
{\em exactly attention-detectable} if there exist query/key linear maps
$Q_\rho, K_\rho\in \Hom(\cS,\cS')$ and  there exists a head function $h_\rho$
as in Definition~\ref{headfunc} such that 
$$ \rho_T(h_\rho(T), \ell_{\max,h_\rho}) =1 $$
for all $T\in {\rm Dom}(h_\rho)$, where
$$ \ell_{\max,h_\rho}= {\rm arg max}_{\ell \in L(T)} A_{h_\rho(T), \ell}\, , $$
with $A$ the attention matrix built from $Q_\rho, K_\rho$.

The relation $\rho$ is {\em approximately attention-detectable} if there exist query/key linear maps
$Q_\rho, K_\rho\in \Hom(\cS,\cS')$ and  there exists a head function $h_\rho$
as in Definition~\ref{headfunc} such that 
$$ \frac{1}{\# \cD} \sum_{T \in \cD} \rho(h_\rho(T), \ell_{\max,h_\rho}) \sim 1 $$
for some sufficiently large set $\cD \subset {\rm Dom}(h_\rho)$ of trees.  }
\end{defn}

\smallskip

Here the existence of query/key linear maps
$Q_\rho, K_\rho$ as above is relative to a specified context, such as a corpus, 
a dataset. 

\medskip

In the case of approximately attention-detectable syntactic relations, we think of the
subset $\cD$ as being, for instance, a sufficiently large syntactic treebank corpus, or
a corpus of annotated syntactic dependencies 
(for size estimates see \cite{Manning}). Cases where the existence
of query/key linear maps $Q_\rho, K_\rho$ and a head function $h_\rho$ with
the properties required above can be ensured can be extracted from the
experiments in \cite{Manning}.

\smallskip 
\subsection{Threshold Rota-Baxter structures and attention}

Using a threshold Rota-Baxter operator $c_\lambda$ of weight $+1$, 
we obtain
$$ \phi_{A,-}(T)=c_\lambda( \max \{ \phi_A(T), c_\lambda(\phi_A(F_{\underline{v}})) \cdot \phi_A(T/F_{\underline{v}}),
\ldots, c_\lambda(\phi_A(F_{\underline{v_N}}) ) \cdot \phi_A(F_{\underline{v}_{N-1}} /F_{\underline{v_N}}) \cdots 
\phi_A(T/F_{\underline{v}_1}) \}) \, .$$
As above, for simplicity we focus on the case of chains of subtrees $T_{v_N}\subset T_{v_{N-1}} \subset \cdots \subset T_{v_1} \subset T$ rather than more general subforests.
Note that $h(T/T_v)=h(T)$ for the quotient given by contraction of the subtree, hence
$$ \max_{\ell\in L(T/T_v)} A_{h(T),\ell}\leq \max_{\ell\in L(T)} A_{h(T),\ell} \, . $$
The value $\phi_-(T)$ corresponds then to the chains of nested accessible terms of the
syntactic object $T$ for which all the values 
$$ \phi_A(T_{v_i})=\max_{\ell \in L(T_{v_i})} A_{h(T_{v_i}),\ell} $$ are above the chosen 
threshold $\lambda$ and all the complementary quotients $T_{v_{i-1}}/T_{v_i}$ have
$$ \phi_A( T_{v_{i-1}}/T_{v_i} ) = \max_{\ell\in L(T_{v_{i-1}}/T_{v_i} )} A_{h(T_{v_{i-1}),\ell}} =
\max_{\ell\in L(T_{v_{i-1}})} A_{h(T_{v_{i-1}}),\ell}  =
\phi_A(T_{v_{i-1}}) \, . $$
The first condition implies that one is selecting only chains of accessible terms inside
the syntactic object $T$ where the maximal attention from the head of each subtree
in the chain is sufficiently large, while the second condition means that, among these chains one
is selecting only those for which the recipient of maximal attention from the head of
the given subtree is located outside of the next subtree. This second condition guarantees
that when considering the next nested subtree and trying to maximize for its attention
value, one does not spoil the optimizations achieved at the previous steps for the larger
subtrees. 

A similar procedure can be obtained by additionally introducing direct implementation
of some syntactic constraints. We can see this in the following way.

A syntactic relation $\rho$ determines a character $\phi_\rho$ on trees 
$T\in {\rm Dom}(h)\subset \fT_{\cS\cO_0}$
with values in the Boolean ring $\cB=(\{ 0,1 \}, \max, \cdot)$ where
$$ \phi_\rho(T)=\max_{\ell\in L(T)} \rho(h(T),\ell)\, . $$
This Boolean character detects whether the syntactic relation $\rho$ is realized in
the tree $T$ or not. 

Using a character $$ \phi_{A,\rho}(T)=\max_{\ell\in L(T)} \rho(h(T),\ell) \cdot A_{h(T),\ell} \, , $$
with values in $\cP=([0,1],\max,\cdot)$, one maximizes the attention from the
tree head over the set of $\ell\in L(T)$ that already satisfy the chosen syntactic relation
with respect to the head of the tree.  The corresponding Birkhoff factorization with
threshold Rota-Baxter operators again identifies chains of subtrees that
maximize the attention (above a fixed threshold), in a way that is recursively compatible 
with the larger trees as before, but where now maximization is done only on the set 
where the relation is implemented. Subtrees with $\phi_\rho(T_v)=0$ do not contribute
even if their value of $\max_\ell A_{h(T),\ell}$ is sufficiently large. 

Thus, comparison between the case with character $\phi_A$ and with
character $\phi_{A,\rho}$ identify attention-detectability of the syntactic
property considered and, if detectability fails, at which level in the tree 
(in terms of chains of nested subtrees) the attention matrix maximum
happens outside of where the syntactic relation holds. 

As shown in \cite{Vazquez23}, the current performance on syntactic capacities of LLMs 
trained on small scale data modeling falls significantly short of the human performance, 
when tested on LI-Adger datasets that include sufficiently diverse syntactic phenomena. 
This suggests a good testing ground for syntactic recoverability as outlined above and 
a possible experimental testing for aspects of the inverse problem of the syntax-semantics interface. 

\smallskip
\subsubsection{Syntax as an inverse problem: physics as metaphor}

The question of reconstructing the computational process of
syntax, in LLMs based on transformer architectures, can be
seen in the same light as the situation we illustrated in a simpler
example in \S \ref{SyntImageSec}, where one views the image
of syntax embedded inside a semantic space, and considers the
inverse problem of extracting syntax as a computational process working
from these images, which live in a semantic space that is not itself endowed
with the same type of computational structure. Here, the image of
syntax is encoded in the key/query vectors that live in vector spaces that
organize semantic proximity data, and in the resulting attention matrices. 
Inverse problems of this kind are usually expected to be computationally hard.
This does not mean that the computational mechanism of syntax cannot
be reconstructible, but that a significant cost in complexity, growing rapidly
with the depth of the trees, may be involved. 

\smallskip

Early results showed that RNN language models performed poorly on
tests of grammaticality aimed at capturing syntactic structures, on
a testbed dataset of pairs of sentences that differ only in their grammaticality, 
\cite{MarvLin},  while \cite{Gulo} showed that language models based on RNNs can
perform well on predicting long-distance number agreement even
in the absence of semantic clues (that is, when tested on nonsensical but 
grammatical sentences).  Results like this appear to indicate that syntax can,
in principle, be extracted and disentangled from its image inside semantics. 
It was shown in \cite{ShenTan} that Syntactic Ordered Memory (SOM) syntax-aware language models 
outperform the Chat-GPT2 LLM in syntactic generalization tests.
However, this entire area remains a matter of contention, dependent in
part on the testbed dataset used, as described more fully in
\cite{Vazquez21} and \cite{Vazquez23}.

A more systematic comparison of different language model architectures
and their performance on syntactic tests in \cite{Hu} revealed
substantial differences in syntactic generalization performance by
model architecture, more than by size of the dataset. One can suggest
that the indicators of poor performance on syntactic tests, along with
any other difficulties, might also reflect the computational
difficulty involved in extracting syntax as an inverse problem from
its image through the semantic interface, stored across values of the
weights of attention matrices, rather than in a direct syntax-first
mapping.

\smallskip

In this paper we have used physics as a guideline for identifying
mathematical structures that can be useful in modelling the relation
between syntax and semantics. We conclude here by using physics again,
this time only as a metaphor, for describing the relation of syntax as
a generative process and the functioning of LLMs. 

\smallskip

The generative structure underlying particle physics is given by the
Feynman diagrams of quantum field theory. Disregarding epistemological
issues surrounding the interpretation of such diagrams as events of
particle creation and decay, we can roughly say that, in a particle
physics experiment, what one detects is an image of such objects
embedded into the set of data collected by detectors.  Detecting a
particle, say the Higgs boson (the most famous recent particle physics
discovery), means solving an inverse problem that identifies inside
this enormous set of data the traces of the correct diagrams/processes
involving the creation of a Higgs particle from an interaction of
other particles (such as gluon fusion or vector-boson fusion) and its
subsequent decay into other particles (such as vector-boson pairs or
photons).  The enormous computational difficulty implicit in this task
arises from the need to solve this type of inverse problem, involving
the identification of events structure (for example a Higgs decay into
photons involving top quark loop diagrams) from the measurable data,
and a search for the desired structure in a background involving a
huge number of other simultaneous events. The direct map from quantum
field theory consists of the Higgs boson production cross sections,
which are calculated from perturbative expansions in the Feynman
diagrams of quantum chromodynamics and quantum electrodynamics,
involving significant higher-order quantum corrections.
Such perturbative QFT computations are where the algebraic formalism
recalled at the beginning of this paper plays a role.  The inverse
problem, instead, consists of measuring, for various possible decay
channels, mass and kinematic information like decay angles of
detectable particles of the expected type, produced either by the
expected decay event or by the background of productions of the same
particle types due to other events, and searching for an actual signal
in this background.

\smallskip

We can use this story as a metaphor, and imagine the generative
process of syntax embedded inside LLMs in a conceptually similar way,
its image scattered across a probabilistic smear over a large number of weights and vectors,
trained over large data sets. This view of LLMs as the technological
``particle accelerators" of linguistics, where signals of linguistic
structures are detectable against a background of probabilistic noise,
suggests that such models do not invalidate generative syntax any more
than particle detectors would ``invalidate" quantum field theory;
quite the contrary in fact.

\smallskip

While LLMs do not constitute a model of language in the human brain, they can 
still, in the sense described here, provide an apparatus for the experimental
study of inverse problems in the syntax-semantic interface. Here
however it is essential to recall again the physics metaphor. Data and
technology {\em without theory} do not constitute science, understood as a
model of the fundamental laws of nature that has both strong
predictive capacity {\em and} a high level of concise conceptual
clarity in its explanatory power.  The relation between the
computational process of syntax and the topological relational nature
of semantics is a problem of a conceptual nature.  In this sense, the
large language models may contribute a technological experimental
laboratory for the analysis of some aspects of this problem, rather
than a replacement for the necessary theoretical understanding of
fundamental laws in the structure of language.

\bigskip
\bigskip

\bigskip

\medskip
\subsection*{Acknowledgments} 
The first author acknowledges support from NSF grant DMS-2104330 and
FQXi grants FQXi-RFP-1 804 and FQXi-RFP-CPW-2014, SVCF grant
2020-224047, and support from the Center for Evolutionary Science at
Caltech. We thank Riny Huijbregts, Jack Morava, Norbert Hornstein, and
Barry Schein for helpful discussions and suggestions.

\bigskip

\bigskip
\bigskip

\bigskip

\bigskip

\bigskip


\begin{thebibliography}{99}

\bibitem{Apery} F.~Ap\'ery, M.~Yoshida, {\em Pentagonal structure of the configuration 
space of five points in the real projective line}, Kyushu J. Math. 52 (1998) N.~1, 1--14.

  \bibitem{BabyLM23} \url{https://babylm.github.io/}, Association for
    Computational Linguistics, 2023.

\bibitem{BFCB} R.C.~Berwick, A.D.~Friederici, N.~Chomsky, J.J.~Bolhuis,
{\em Evolution, brain, and the nature of language}, Trends Cogn. Sci.
(2013) 17(2) 89--98.

\bibitem{BPYC} R.C.~Berwick, P.~Pietroski, B.~Yankama, N.~Chomsky,
{\em Poverty of the Stimulus Revisited}, Cognitive Science, Vol.~35 (2011) N.~7,
1207--1242. 

\bibitem{BHV} L.~Billera, S.~Holmes, K.~Vogtmann, {\em Geometry of the space 
of phylogenetic trees}, Advances in Applied Mathematics 27 (2001) 733--767.

\bibitem{BoShi} N.N.~Bogoliubov, D.V.~Shirkov, {\em The Theory of Quantized Fields}, 
Intersci. Monogr. Phys. Astron., 1959.

\bibitem{BoVo} J.~Boardman, R.~Vogt. {\em Homotopy invariant algebraic structures 
on topological spaces}, Lecture Notes in Mathematics 347, 1973.

\bibitem{Chomsky-bare} N.~Chomsky, {\em Bare phrase structure}, in 
H.~Campos, P.~Kempchinsky (eds.) ``Evolution and Revolution in Linguistic Theory", Georgetown University Press, 1995.

\bibitem{Chomsky95} N.~Chomsky, {\em The Minimalist Program}, MIT Press, 1995.

\bibitem{ChomskyPairMerge} N.~Chomsky, Noam. 2004. {\em Beyond Explanatory Adequacy}. In ``Structures and Beyond: The Cartography of Syntactic Structures, Volume 3", (ed. Adriana Bellett)i, pp.~104--131. Oxford University Press, 2004.

\bibitem{Chomsky13} N.~Chomsky, {\em Problems of projection}, Lingua, Vol.130 (2013) 33-49. 

\bibitem{Chomsky17} N.~Chomsky, {\em Some Puzzling Foundational Issues: The Reading Program},
Catalan Journal of Linguistics Special Issue (2019) 263--285.

\bibitem{Chomsky19} N.~Chomsky, {\em The UCLA lectures}, 2019. \url{https://ling.auf.net/lingbuzz/005485}

\bibitem{Chomsky21} N.~Chomsky, {\em Simplicity and the form of grammars}, Journal of Language Modelling, Vol.9 (2021) N.1, 5--15.

\bibitem{Chomsky23} N.~Chomsky, {\em Working toward the Strong Interpretation of SMT}, 2023 Theoretical
Linguistics at Keio-EMU Linguistics as Scientific Inquiry Lecture Series, \newline
\url{https://www.youtube.com/playlist?list=PLWXQYx-RCmeP7B2UtIA8OJsvAF-xvjDuZ}

\bibitem{CSBFHKMS} N.~Chomsky, T.D.~Seely, R.C.~Berwick, S.~Fong, M.A.C.~Huybregts, H.~Kitahara, A.~McInnerney, Y.~Sugimoto, {\em Merge and the Strong Minimalist Thesis}, Cambridge
Essentials, Cambridge University Press, 2023.

\bibitem{ChoSchu} N.~Chomsky, M.P.~Sch\"utzenberger, {\em The algebraic theory of context-free languages}, in 
(P.~Braffort and D.~Hirschberg, Eds.) ``Computer Programming and Formal Systems". North-Holland, 1963, pp.~118--161.

\bibitem{Chung}  S.Y.~Chung, D.D.~Lee, H.~Sompolinsky,
{\em Classification and geometry of general perceptual manifolds}, 
Phys. Rev. X 8 (2008), 031003. 

\bibitem{Cinque} G.~Cinque, {\em Cognition, universal grammar, and typological generalizations},
Lingua, Vol.~130 (2013), 50--65.  

\bibitem{CoKr} A.~Connes, D.~Kreimer, {\em Hopf algebras, Renormalization and Noncommutative geometry},
Comm. Math. Phys 199 (1998) 203--242

\bibitem{CoMa} A.~Connes, M.~Marcolli, {\em Noncommutative Geometry, Quantum Fields, and
Motives}, Colloquium Publications, Vol.55, American Mathematical Society, 2008.

\bibitem{Curto} C.~Curto, V.~Itskov, {\em Cell groups reveal structure of stimulus space}, PLoS Comput. Biol.     (2008) 4(10), e1000205 [13 pages].

\bibitem{DelMar} C.~Delaney, M.~Marcolli, {\em Dyson-Schwinger equations in the theory of computation},
in ``Feynman amplitudes, periods and motives", pp.~79--107, Contemp. Math. 648, Amer. Math. Soc., 2015.

\bibitem{Deva} S.L.~Devadoss, {\em Tessellations of moduli spaces and the mosaic operad}, in Homotopy Invariant Algebraic Structures, Contemporary Mathematics 239 (1999) 91--114.

\bibitem{DevMor} S.L.~Devadoss, J.~Morava, {\em 
Navigation in tree spaces}, Adv. in Appl. Math. 67 (2015), 75--95.

\bibitem{EbFKr} K.~Ebrahimi-Fard, D.~Kreimer, {\em The Hopf algebra approach to Feynman diagram calculations}, J. Phys. A 38 (2005), no. 50, R385--R407.

\bibitem{EbFM} K.~Ebrahimi-Fard, D.~Manchon, {\em 
The combinatorics of Bogoliubov's recursion in renormalization}, in ``Renormalization and 
Galois theories", pp.~179--207,
IRMA Lect. Math. Theor. Phys., 15, Eur. Math. Soc., Z\"urich, 2009.

\bibitem{EdelHar} H.~Edelsbrunner, J.~Harer, {\em Computational Topology: An Introduction}, American Mathematical Society, 2010. 

\bibitem{EpKiSe} S.D.~Epstein, H.~Kitahara, D.~Seely, {\em Structure building that can't be}, in ``Ways of
Structure Building" (M.~Uribe-Etxebarria, V.~Valmala, eds.), pp.~253--270, Oxford University
Press, 2012.

\bibitem{Eve} M.B.H. Everaert, M.A.C. Huybregts, N. Chomsky, R.C. Berwick, J.J. Bolhuis,
{\it Structures, Not Strings: Linguistics as Part of the Cognitive Sciences}, 
Trends in Cognitive Sciences, Vol.19 (2015) N.12, 729--743. 

\bibitem{Fedor} E.~Fedorenko, P.J.~Hsieh, A.~Nieto-Casta\~{n}\'on, S.~Whitfield-Gabrieli, N.~Kanwisher,
{\em New Method for fMRI Investigations of Language: Defining ROIs Functionally in Individual Subjects}, 
J Neurophysiol. Vol.~104 (2010) N.2, 1177--1194.

\bibitem{Friede} A.D.~Friederici,
{\em Language in Our Brain: The Origins of a Uniquely Human Capacity}, The MIT Press, 2017.

\bibitem{FCMMB} A.D.~Friederici, N.~Chomsky, R.C.~Berwick, A.~Moro, J.J.~Bolhuis,
{\em Language, mind and brain}, Nat. Hum. Behav.
(2017) 1(10) 713--722.

\bibitem{Garden1} P.~G\"ardefors, {\em Conceptual spaces: the geometry of thought}, The MIT Press, 2000.

\bibitem{Garden2} P.~G\"ardefors, {\em The geometry of meaning: semantics based on conceptual spaces},
The MIT Press, 2014.

\bibitem{Goodman} J.~Goodman, {\em Semiring parsing}, Computational
Linguistics, Vol.25 (1999) N.4, 573--605. 

\bibitem{Greenberg} J.~Greenberg,  {\em Some universals of grammar with particular 
reference to the order of meaningful elements}, in Greenberg, J. (Ed.),
``Universals of Language", MIT Press, 1963, pp. 73--113.

\bibitem{Gren} U.~Grenander,  {\em Elements of Pattern Theory}, Johns Hopkins University Press, 1996.

\bibitem{Gren2} U.~Grenander, M.I.~Miller, {\em Pattern Theory: From Representation to Inference},
Oxford University Press, 2007. 

\bibitem{Gren3} U.~Grenander, {\em Patterns in Mathematical Semantics}, Chapter 9 in ``Regular Structures: Lectures
in Pattern Theory, Vol.III", Springer, 1981, 451--538.

\bibitem{GuaLong}  C.~Guardiano, G.~Longobardi, {\em Parameter theory and parametric comparison},
in I.G.~Roberts (Ed.) ``Oxford Handbook of Universal Grammar", Oxford University Press, 2017, pp.~377--398.

\bibitem{Gulo} K.~Gulordava, P.~Bojanowski, E.~Grave,
T.~Linzen, M.~Baroni, {\em Colorless
green recurrent networks dream hierarchically}, in
Proceedings of NAACL-HLT, 2018, 1195--1205.

\bibitem{Holt}  R.~Holtkamp, {\em A pseudo-analyzer approach to formal group laws not of operad type}, 
J. Algebra, Vol. 237 (2001) 382--405.

 \bibitem{Holt2}   R.~Holtkamp, {\em Comparison of Hopf algebras on trees}, Arch. Math. (Basel) 80 (4) (2003) 368--383.
 
\bibitem{Holt3}  R. Holtkamp, {\em Rooted trees appearing in products and co-products}, in ``Combinatorics and physics", 153--169,
Contemp. Math., 539, Amer. Math. Soc., 2011.

\bibitem{Hu} J.~Hu, J.~Gauthier, P.~Qian, E.~Wilcox,
R.P.~Levy, {\em A systematic assessment
of syntactic generalization in neural language models}, 
Proceedings of the 58th Annual Meeting of the Association for
Computational Linguistics, 1725--1744.

\bibitem{Huebner21} P.~Huebner, E.~Sulem, F.~Cynthia, D.~Roth.  {\em
    BabyBERTa: Learning more grammar with small-scale child-directed
    language}, Proceedings of the 25th Conference on Computational
  Natural Language Learning, (2021), 624-–646, Online. Association for
  Computational Linguistics.

\bibitem{Kayne1} R.S.~Kayne, {\em The Asymmetry of Syntax}, MIT Press, 1994.

\bibitem{Kayne2} R.S.~Kayne, {\em Temporal/Linear order, antisymmetry and externalization}, 
  Research in Generative Grammar 45 (2023) N.2. 1--22.

 \bibitem{KKohl} K.Kohl, {\em An analysis of finite parameter learning
     in linguistic spaces}, S.M. thesis, Cambridge, MA, Massachusetts
   Institute of Technology, 1999.

\bibitem{Kock} J.~Kock, {\em Combinatorial Dyson-Schwinger equations and inductive data types},
arXiv:1512.07884. 

\bibitem{KrizMay} I.~Kri\v{z}, J.P.~May, {\em Operads, Algebras, Modules and Motives}, Ast\'erisque No. 233, 1995.

\bibitem{LongTre} G.~Longobardi, A.~Treves, {\em Grammatical Parameters from a gene-like 
code to self-organizing attractors: a research program}, preprint, 2023. 

\bibitem{Man1} Yu.I.~Manin, {\em Renormalization and computation I: motivation and background}, 
in ``OPERADS 2009", 181--222, S\'emin. Congr., 26, Soc. Math. France, Paris, 2013. 

\bibitem{Man2} Yu.I.~Manin, {\em Renormalisation and computation II: time cut-off and the Halting problem}, 
Math. Structures Comput. Sci. 22 (2012), no. 5, 729--751.

\bibitem{Man3} Yu.I.~Manin, 
{\em Neural codes and homotopy types: mathematical models of place field recognition}, 
Mosc. Math. J. 15 (2015), no. 4, 741--748.

\bibitem{ManMar} Yu.I.~Manin, M.~Marcolli, {\em Semantic Spaces}, 
Math. Comput. Sci. 10 (2016), no. 4, 459--477. 

\bibitem{Manning} C.D.~Manning, K.~Clark, J.~Hewitt, U.~Khandelwal, O.~Levy, 
{\em Energent linguistic structure in artificial neural networks trained by self-supervision}, PNAS,
Vol.~117 (2020) N.~48, 30046--30054. 

\bibitem{Mar} M.~Marcolli, {\em Information algebras and their applications}, in ``Geometric Science of Information", pp. 271--276, Lecture Notes in Comput. Sci., 9389, Springer, 2015. 

\bibitem{Mar2} M.~Marcolli, {\em Syntactic parameters and a coding theory perspective on 
entropy and complexity of language families}, Entropy 18 (2016), no. 4, Paper No. 110, 17 pp. 

\bibitem{MCB} M.~Marcolli, N.~Chomsky, R.C.~Berwick, {\em Mathematical structure of
syntactic Merge}, preprint 2023, arXiv:2305.18278. 

\bibitem{MBC} M.~Marcolli, R.C.~Berwick,  N.~Chomsky, {\em
Old and New Minimalism: a Hopf algebra comparison}, preprint 2023, arXiv:2306.10270.

\bibitem{MarPort}  M.~Marcolli, A.~Port, {\em Graph grammars, insertion Lie algebras, and quantum field theory}, 
Math. Comput. Sci. 9 (2015), no. 4, 391--408.

\bibitem{MarTe} M.~Marcolli, N.~Tedeschi, {\em Entropy algebras and Birkhoff factorization}. 
J. Geom. Phys. 97 (2015), 243--265. 

\bibitem{MarThor} M.~Marcolli,  R.~Thorngren, {\em 
Thermodynamic semirings}, 
J. Noncommut. Geom. 8 (2014), no. 2, 337--392.

\bibitem{MarDou} A.E.~Martin, L.A.A.~Doumas, {\em Tensors and 
compositionality in neural systems}, Phil. Trans. R. Soc. B 375 (2019) 20190306 [7 pages].

\bibitem{MarvLin} R.~Marvin, T.~Linzen, {\em Targeted Syntactic Evaluation of Language Models}, 
Proceedings of the 2018 Conference on Empirical Methods in Natural Language Processing, 1192--1202.

\bibitem{May} J.P.~May, {\em The Geometry of Iterated Loop Spaces}, Lecture Notes in Mathematics. Vol. 271, Springer 1972.

\bibitem{Mum} D.~Mumford, A.~Desolneux, {\em Pattern Theory: The Stochastic Analysis of Real-World Signals}, CRC Press, 2010.

\bibitem{Ort} A.~Ortegaray, R.C.~Berwick, M.~Marcolli, {\em Heat kernel analysis of 
syntactic structures}, Math. Comput. Sci. 15 (2021), no. 4, 643--660.

\bibitem{Oseki} Y.~Oseki, {\em Eliminating Pair-Merge}, 
Proceedings of the 32nd West Coast Conference on Formal Linguistics, (ed. Ulrike Steindl
et al.), pp.~303--312. Cascadilla Proceedings Project.

\bibitem{Park} J.J.~Park, R.~Boettcher, A.~Zhao, A.~Mun, K.~Yuh, V.~Kumar, M.~Marcolli, {\em
Prevalence and recoverability of syntactic parameters in sparse distributed memories},
in ``GSI 2017: Geometric Science of Information",  Lecture Notes in Computer Science, 
Vol 10589, Springer 2017, 265--272.

\bibitem{Pietro} P.M.~Pietroski, {\em Minimalist meaning: internalist interpretation},
Biolinguistics, 2 (2008) 4, 317--341.

\bibitem{Pietroski} P.M.~Pietroski, {\em Conjoining Meanings.
Semantics Without Truth Values}, Oxford University Press, 2018. 

\bibitem{Port} A.~Port, T.~Karidi, M.~Marcolli, {\em Topological analysis of syntactic 
structures}, Math. Comput. Sci. 16 (2022), no. 1, Paper No. 2, 68 pp.

\bibitem{Rav} D.~Ravenel, {\em Complex Cobordism and Stable Homotopy Groups of Spheres},
 Pure and Applied Mathematics, 121. Academic Press, Inc., Orlando, FL, 1986. xx+413 pp.
 
 \bibitem{Roberts} I.G.~Roberts, {\em Parameter Hierarchies and Universal Grammar}, 
  Oxford University Press, 2019. 

\bibitem{Scharp} K.~Scharp, {\em Replacing Truth}, Oxford University Press, 2013.

\bibitem{ShenTan} Y.~Shen, S.~Tan, A.~Sordoni, S.~Reddy, A.~Courville, 
{\em Explicitly modeling syntax in language models
with incremental parsing and a dynamic oracle}, 
Proceedings of the 2021 Conference of the North American Chapter of the
Association for Computational Linguistics: Human Language Technologies, 1660--1672.

\bibitem{ShuMar} K.~Shu, M.~Marcolli, {\em Syntactic structures and code parameters}, 
Math. Comput. Sci. 11 (2017), no. 1, 79--90.

\bibitem{Smol} P.~Smolensky, {\em 
Tensor product variable binding and the representation of symbolic 
structures in connectionist systems}, Artificial Intelligence, 
Vol.~46 (1990), N.~1–2, 159--216. 

\bibitem{Tennant} N.~Tennant, {\em A new unified account of truth and paradox}, Mind, Vol.~124 (2015) N.~494,
  571--605.

\bibitem{WarBow22} A.~ Warstadt, S.~Bowman. {\em What artificial
    neural networks can tell us about human language acquisition},
  Algebraic Structures in Natural Language (2022) 17–-60. CRC Press.

\bibitem{Vazquez21} H.~Vazquez. {\em The acceptability delta
    criterion: Testing knowledge of language using the gradience of
    sentence acceptability}, Proceedings of the Fourth BlackboxNLP
  Workshop on Analyzing and Interpreting Neural Networks for NLP,
   479-–495, Punta Cana, Dominican Republic (2021), Association for
  Computational Linguistics.

  \bibitem{Vazquez23} H.~Vazquez, A.~Heuser, C.~Yang, J.~Kodner. {\em
      Evaluating neural language models as cognitive models of
      language acquisition}, GenBench23 (2023).
  
\bibitem{Zard} E.~Zardini, {\em Truth without contra(di)ction}, Review of Symbolic Logic, Vol.~4  (2011) N.~4, 498--535. 

\end{thebibliography}
\end{document}